\title{\LARGE\bfseries Overcoming the Curse of Dimensionality in Reinforcement Learning Through Approximate Factorization}
\author{Chenbei Lu\textsuperscript{$*$},  Laixi Shi\textsuperscript{$\dagger,1$}, Zaiwei Chen\textsuperscript{$\ddagger$}, Chenye Wu\textsuperscript{$\S$}, Adam Wierman\textsuperscript{$\dagger,2$}\\
{\small
\textsuperscript{$*$}\textit{Institute for Interdisciplinary Information Sciences, Tsinghua University,} \href{mailto:clu5@caltech.edu}{\textit{lcb20@mails.tsinghua.edu.cn}}}\\
{\small
\textsuperscript{$\dagger$}\textit{Computing \& Mathematical Sciences Department, Caltech,} , \href{mailto:laixis@caltech.edu}{\textit{\textsuperscript{$1$}laixis@caltech.edu}}, \href{mailto:adamw@caltech.edu}{\textit{\textsuperscript{$2$}adamw@caltech.edu}}
}\\
{
\small
\textsuperscript{$\ddagger$}\textit{Edwardson School of Industrial Engineering, Purdue University,} \href{mailto: chen5252@purdue.edu}{\textit{chen5252@purdue.edu}}
} \\
{\small
\textsuperscript{$\S$}\textit{School of Science and Engineering, CUHK Shenzhen,} \href{mailto:chenyewu@yeah.net}{\textit{chenyewu@yeah.net}}}
}
\date{\vspace{-0.4 in}}
\begin{document}
\maketitle

\begin{abstract} 
Reinforcement Learning (RL) algorithms are known to suffer from the curse of dimensionality, which refers to the fact that large-scale problems often lead to exponentially high sample complexity. A common solution is to use deep neural networks for function approximation; however, such approaches typically lack theoretical guarantees. To provably address the curse of dimensionality, we observe that many real-world problems exhibit task-specific model structures that, when properly leveraged, can improve the sample efficiency of RL. Building on this insight, we propose overcoming the curse of dimensionality by approximately factorizing the original Markov decision processes (MDPs) into smaller, independently evolving MDPs. This factorization enables the development of sample-efficient RL algorithms in both model-based and model-free settings, with the latter involving a variant of variance-reduced Q-learning. We provide improved sample complexity guarantees for both proposed algorithms. Notably, by leveraging model structure through the approximate factorization of the MDP, the dependence of sample complexity on the size of the state-action space can be exponentially reduced. Numerically, we demonstrate the practicality of our proposed methods through experiments on both synthetic MDP tasks and a wind farm-equipped storage control problem.
\end{abstract}

\section{Introduction}\label{sec:intro}
In recent years, reinforcement learning (RL) \citep{sutton2018reinforcement} has become a popular framework for solving sequential decision-making problems in unknown environments, with applications across different domains such as robotics \citep{kober2013reinforcement}, transportation \citep{haydari2020deep}, power systems \citep{chen2022reinforcement}, and financial markets \citep{charpentier2021reinforcement}. Despite significant progress, {\em the curse of dimensionality} remains a major bottleneck in RL tasks \citep{sutton2018reinforcement}. Specifically, the sample complexity grows geometrically with the dimensionality of the state-action space of the environment, posing challenges for large-scale applications.
For example, in robotic control, even adding one more degree of freedom to a single robot can significantly increase the complexity of the control problem \citep{spong2020robot}. 

To overcome the curse of dimensionality in sample complexity, a common approach is incorporating function approximation to approximate either the value function or the policy using a prespecified function class (e.g., neural networks) \citep{sutton2018reinforcement}. While this approach works in certain applications, these methods heavily rely on the design of the function approximation class, tailored parameter tuning, and other empirical insights. Moreover, they often lack theoretical guarantees. To the best of our knowledge, most existing results are limited to basic settings with linear function approximation \citep{tsitsiklis1996analysis,bhandari2018finite,srikant2019finite,chen2023target}. A few exceptions using non-linear approximation (e.g., neural networks) \citep{fan2020theoretical,xu2020finite} typically require additional assumptions tailored for theoretical analysis, which creates a gap with practical algorithms. As a result, achieving provable sample efficiency for large-scale sequential decision-making problems remains a significant challenge in RL \citep{sutton2018reinforcement}.

Formally, an RL problem is usually modeled as a Markov decision process (MDP), where the environment is represented by an unknown transition kernel and reward function. It is important to note that, in many real-world applications, the transition probabilities and reward functions exhibit inherent factored structures, which, if properly leveraged, could mitigate the challenges of the curse of dimensionality in sample complexity.
As an illustration, consider model-based RL, where the curse of dimensionality translates to the challenge of estimating the transition probabilities that are potentially high-dimensional. To address this challenge, fortunately, some partial dependency structures are prevalent in applications such as sustainability efforts \citep{yeh2023sustaingym}, queuing systems \citep{wei2023sample}, and networked systems \citep{qu2020scalable}. Specifically, the transition dependency between the current state-action pair and the next state is not fully connected, with the next state often depending on only a subset of the current state-action variables.
For example, in power system control, the dynamics of power demand, while being stochastic, are independent of the power generation decisions.
Similarly, in multi-agent robotic control tasks, the actions of one robot often do not affect others that are far away geographically, allowing for more localized and sparse modelings and solutions. By identifying and leveraging these structures in algorithm design, it is possible to significantly reduce sample complexity.

One notable framework that captures this idea is the factored MDP framework \citep{osband2014near}, where the original MDP is decomposed into a set of smaller and independently evolving MDPs. In this case, the overall sample complexity depends on the sum (rather than the product) of the sizes of the state-action spaces of each individual MDP. As a result, the problem size no longer scales exponentially with the dimension of the problem, successfully overcoming the curse of dimensionality. While the factored MDP framework seems to be a promising approach, it relies on the assumption that the original MDP can be perfectly factorized into a set of small MDPs. Such a structural assumption limits the applicability of the factored MDP approach. In addition, to the best of our knowledge, there are no provable model-free algorithms for the factored MDP setting, which restricts the development of more flexible approaches that do not require explicit estimation of the environment model.

\subsection{Contributions} In this paper, we remove the limitations of the factored MDP framework by proposing an approximate factorization scheme that decomposes an MDP into low-dimensional components. This scheme can be viewed as an extended framework of the factored MDP, allowing for imperfectly factorized models. Building on this framework, we develop provable RL algorithms that overcome the curse of dimensionality and achieve improved sample efficiency compared to prior works in both model-based and model-free paradigms.
The main contributions are summarized below.

\paragraph{Approximate Factorization of MDP and Efficient Sampling.} To tackle the curse of dimensionality, we introduce an approximate factorization scheme that flexibly decomposes any MDP into low-dimensional components. This scheme offers more flexibility for factorization operations to find an optimal trade-off between sample complexity and optimality by identifying and exploiting the dependency structure within the environment model. Using these approximately factorized MDPs, we propose a multi-component factorized synchronous sampling approach to estimate the model, framed as a cost-optimal graph coloring problem. This method allows simultaneous sampling of multiple components from a single sample, unlike previous factored MDP methods where each sample estimates the transition of only one component. This sampling approach is key to reducing sample complexity in both our model-based and model-free algorithms.

\paragraph{Model-Based Algorithm with Approximate Factorization.} 
We develop a novel model-based RL algorithm that leverages the proposed synchronous sampling method and fully exploits the low-dimensional structures identified through approximate factorization. By breaking down the large-scale kernel estimation problem into smaller, manageable components, our algorithm achieves provable problem-dependent sample complexity (cf. Theorem \ref{thm:summary}), outperforming existing minimax-optimal bounds for standard MDPs \citep{azar2012sample}. Moreover, our approach generalizes the factored MDP framework and, when applied to this setting, improves the best-known sample complexity \citep{chen2020efficient} by up to a factor of the number of components.  The key technical challenge is to address the statistical correlation between samples in synchronous and factorized sampling. By controlling these correlations in the variance analysis, we reduce the sample complexity dependence from the sum of component sizes to the size of the largest component.

\paragraph{Model-Free Variance-reduced Q-Learning with Approximate Factorization.} We propose a model-free algorithm --- \emph{variance-reduced Q-learning with approximate factorization} (VRQL-AF) --- that incorporates the same synchronous sampling method. It achieves the same problem-dependent sample complexity guarantees (cf. Theorem \ref{thm_variance-reduced-Q-learning}) as our model-based approach (up to logarithmic terms), outperforming existing minimax-optimal algorithms for standard MDPs \citep{wainwright2019variance}. When applied to the factored MDPs (which are special cases of our setting), to the best of our knowledge, VRQL-AF is the first provable model-free algorithm that achieves near-optimal sample complexity guarantees. The improvement results from our tailored factored empirical Bellman operator for approximately factorized MDPs, integrated with a variance-reduction approach to minimize the variance of the stochastic iterative algorithm. This integration requires a refined statistical analysis to tightly control estimation errors and iteration variance across the multiple factored components.

\subsection{Related Work}
Our work contributes to a few key literatures within the RL community. We discuss each in turn below.

\paragraph{Finite-Sample Analysis for Model-Based Algorithms.} 
Our proposed algorithm (cf. Algorithm \ref{alg:Value Iteration (VI)}) follows the model-based RL approach, where the learning process  involves model estimation and planning. Model-based approaches have been extensively studied \citep{azar2012sample, agarwal2020model, gheshlaghi2013minimax, sidford2018near, azar2017minimax, jin2020provably}, achieving minimax-optimal sample complexity of $\widetilde{\mathcal{O}}({|\mathcal{S}||\mathcal{A}|}{\epsilon^{-2}(1-\gamma)^{-3}})$ in the generative model setting \citep{li2020breaking}. This minimax-optimal bound is established by considering all possible MDPs in a worst-case manner, without leveraging any additional structure in the problem. In contrast, by leveraging the structure for algorithm design, we achieve sample complexity with exponentially reduced dependency on the size of the state and action space (cf. Theorem~\ref{thm:summary}) when the MDP is perfectly factorizable.

\paragraph{Finite-Sample Analysis for Model-Free Algorithms.} 
Our proposed algorithm also aligns with model-free RL, which does not estimate the model but directly optimize the policy \citep{sutton2018reinforcement}. A vast body of literature focuses on Q-learning \citep{tsitsiklis1994asynchronous, jaakkola1993convergence, szepesvari1997asymptotic, kearns1998finite, even2003learning, wainwright2019stochastic, chen2024lyapunov,li2024q} with various sampling settings, demonstrating a minimax sample complexity of $\widetilde{\mathcal{O}}({|\mathcal{S}||\mathcal{A}|}{\epsilon^{-2}(1-\gamma)^{-4}})$.
With further advancements like variance reduction, Q-learning has been shown to achieve a minimax-optimal sample complexity of $\widetilde{\mathcal{O}}({|\mathcal{S}||\mathcal{A}|}{\epsilon^{-2}(1-\gamma)^{-3}})$ in the generative model setting \citep{wainwright2019variance}. In contrast, our work leverages the approximate factorization structure of MDPs to further enhance sample efficiency. By designing a factored empirical Bellman operator with variance reduction, we achieve exponentially reduced sample complexity with respect to the state-action space size (cf. Theorem~\ref{thm_variance-reduced-Q-learning}) with matching minimax dependence on the other parameters. 

\paragraph{Factored MDPs.} Our model generalizes the framework of factored MDPs \citep{boutilier1995exploiting, boutilier1999decision}, extending it to account for approximation errors. Most existing work on factored MDPs is set in an episodic framework and primarily analyzes regret performance \citep{guestrin2003efficient, osband2014near, xu2020reinforcement, tian2020towards, chen2020efficient}. In particular, the state-of-the-art results translate into a sample complexity of \( \widetilde{\mathcal{O}} ( \sum_{k=1}^K |\mathcal{X}_k| \epsilon^{-2} (1-\gamma)^{-3} ) \), where the complexity scales with the sum of the state-action space sizes \( |\mathcal{X}_k| \) across all factored components.
Building on this line of work, we propose a factorized synchronous sampling technique that enables simultaneous updates for multiple components using a single sample. By coupling this with refined cross-component variance analysis, we reduce the sample complexity to as low as \( \widetilde{\mathcal{O}} \left( \max_k |\mathcal{X}_k| \epsilon^{-2} (1-\gamma)^{-3} \right) \) in an instance-dependent manner, which only depends on the maximal component size rather than the sum. This result matches the lower bounds established in prior work \citep{xu2020reinforcement, chen2020efficient}, up to logarithmic factors.
Crucially, our approach does not require the MDP to exhibit perfect factorizability, allowing for broader applicability to general MDPs. 

\paragraph{RL with Function Approximation.} 
To make RL problem sample efficient for large-scale problems, a common approach is to employ function approximation \citep{sutton2018reinforcement}. Intuitively, the key idea is to limit the searching space of an RL problem to a predefined function class, in which each function can be specified with a parameter that is low-dimensional. This approach has achieved significant empirical success \citep{mnih2015human,silver2017mastering}. However, RL with function approximation is not theoretically well understood except under strong structural assumptions on the approximating function class, such as the function class being linear \citep{tsitsiklis1996analysis,bhandari2018finite,srikant2019finite,chen2023target,chen2023two}, the Bellman completeness being satisfied \citep{fan2020theoretical}, or others \citep{dai2018sbeed, wang2020reinforcement}. Also, the function approximation often targets to approximate the Q-values, instead of exploiting the inherent transition kernel and reward function structures.
In this work, we take a different approach by leveraging approximate factorization structures instead of implementing function approximation. It is also worth noting that our approach is highly flexible and can be further extended by incorporating function approximation techniques, providing an even broader framework for tackling large-scale RL problems.

\section{Model and Background}
We consider an infinite-horizon discounted MDP $M = (\mathcal{S}, \mathcal{A}, P, r, \gamma)$, where $\mathcal{S}$ is the finite state space, $\mathcal{A}$ is the finite action space, $P$ is the transition kernel, with $P(s' \mid s, a)$ denoting the transition probability from state \( s \) to \( s' \) given action \( a \), $r: \mathcal{S} \times \mathcal{A} \to [0,1]$\footnote{Since we work with finite MDPs, assuming bounded reward is without loss of generality.} is the reward function, and $\gamma \in (0,1)$ is the discount factor. The model parameters of the underlying MDP (i.e., the transition kernel $P$ and the reward function $r$) are unknown to the agent. 

At each step, the agent takes action based on the current state of the environment and transitions to the next state in a stochastic manner, according to the transition probabilities, and receives a one-stage reward. This process repeats, and the goal of the agent is to find an optimal strategy (also known as policy) for choosing actions to maximize the long-term discounted cumulative reward. More formally, a policy $\pi$ is a mapping from the state space $\mathcal{S}$ to the set of probability distributions supported on the action space. Given a policy $\pi$, its Q-function $Q^\pi\in\mathbb{R}^{|\mathcal{S}||\mathcal{A}|}$ is defined as $Q^\pi(s,a)=\mathbb{E}_{\pi,P}[\sum_{t=0}^\infty\gamma^tr(s_t,a_t)\mid s_0=s,a_0=a]$ for all $(s,a)\in\mathcal{S}\times \mathcal{A}$, where $\mathbb{E}_{\pi.P}[\,\cdot\,]$ means that the trajectories are generated by following the transition $P$ and choosing actions according to the policy $\pi$, i.e., $s_{t+1} \sim P(\cdot\mid s_t, a_t)$, $a_{t+1}\sim \pi(\cdot\mid s_{t+1})$ for all $t\geq 0$. With the Q-function defined above, the goal of an agent is to find an optimal policy $\pi^*$ such that its associated Q-function is maximized uniformly for all state-action pairs $(s,a)\in\mathcal{S}\times \mathcal{A}$. It has been shown that such an optimal policy always exists \citep{puterman2014markov}.

The heart of solving an MDP is the Bellman optimality equation, which states that the optimal Q-function, denoted by $Q^*$, is the unique solution to the fixed-point equation $Q=\mathcal{H}(Q)$, where $\mathcal{H}:\mathbb{R}^{|\mathcal{S}||\mathcal{A}|}\xrightarrow[]{} \mathbb{R}^{|\mathcal{S}||\mathcal{A}|}$ is the Bellman optimality operator defined as
\begin{align*}
    [\mathcal{H}(Q)](s,a)=\mathbb{E}_{s'\sim P(\cdot\mid s,a)}\left[r(s,a)+\gamma \max_{a'\in\mathcal{A}}Q(s',a')\right],\quad \forall\,(s,a)\in\mathcal{S}\times \mathcal{A}.
\end{align*}
Moreover, once $Q^*$ is obtained, an optimal policy can be computed by choosing actions greedily based on $Q^*$. Therefore, the problem reduces to estimating the optimal Q-function $Q^*$.

Suppose that the model parameters of the MDP are known. Then, we can efficiently find $Q^*$ through the value iteration method: $Q_{k+1}=\mathcal{H}(Q_k)$, where $Q_0$ is initialized arbitrarily. Since the operator $\mathcal{H}(\cdot)$ is a contraction mapping with respect to the $\ell_\infty$-norm \citep{bertsekas1996neuro,sutton2018reinforcement,puterman2014markov}, the Banach fixed-point theorem \citep{banach1922operations} guarantees that the value iteration method converges geometrically to $Q^*$.

In RL, the model parameters of the underlying MDP are unknown to the agent. Therefore, one cannot directly perform the value iteration method to find an optimal policy. A natural way to overcome this challenge is to first estimate the model parameters through empirical sampling and then perform the value iteration method based on the estimated model. This is called the model-based method. Another approach is to solve the Bellman optimality equation using the stochastic approximation method without explicitly estimating the model, which leads to model-free algorithms such as Q-learning \citep{watkins1992q}.

Throughout this paper, we work on a widely-used sampling mechanism setting --- generative model \citep{kearns2002sparse, kakade2003sample}. It enables us to query any state-action pair \((s, a)\) to obtain a random sample of the next state \(s'\) and the immediate reward \(r(s, a)\) according to the potentially stochastic transition kernel and reward function.

\section{Approximate Factorization of Markov Decision Processes}
\label{Sec_def_AF}

To begin with, we introduce the concept of  {\em approximate factorization}, which is the foundation of the contributions in this paper.
Suppose that each state $s \in \mathcal{S}$ is an $n$-dimensional vector, and the state space can be written as $\mathcal{S} = \prod_{i=1}^n \mathcal{S}_i$, where $\mathcal{S}_i \subseteq \mathbb{R}$ for all $i \in [n]:=\{1,2,\dots,n\}$. Similarly, suppose that each action is an $m$-dimensional vector and $\mathcal{A} = \prod_{j=1}^m \mathcal{A}_j$, where $\mathcal{A}_j \subseteq \mathbb{R}$ for all $j \in [m]$. As $n$ and $m$ increase, the total number of state-action pairs increases geometrically, which is the source of the curse of dimensionality. Throughout this paper, each component $s[i]\in\mathcal{S}_i$ of a state $s\in\mathbb{R}^n$, where $i\in [n]$, is called a substate. A subaction is defined similarly. To introduce the approximate factorization scheme, we start by introducing the following useful definitions: \emph{scope}, \emph{scope set} and \emph{scope variable}.
\begin{definition}\label{def:scope}
    Consider a factored $d$-dimensional set \( \mathcal{X} = \prod_{i=1}^d\mathcal{X}_i \), where $\mathcal{X}_i\subseteq \mathbb{R}$ for all $i\in [d]$. For any \( Z \subseteq [d] \) (which we call a \emph{scope}), the corresponding \emph{scope set} \( \mathcal{X}[Z] \) is defined as $\mathcal{X}[Z] = \prod_{i \in Z} \mathcal{X}_i$. Any element $x[Z]\in \mathcal{X}[Z]$ (which is a $|Z|$-dimensional vector) is called a \textit{scope variable}.
\end{definition}
When $Z$ contains only one element, , i.e., \( Z = \{i\} \) for some $i\in [d]$, we denote \( x[i] \) as \( x[\{i\}] \).

With the help of Definition \ref{def:scope}, now we are ready to introduce the approximate factorization scheme of the MDP $M = (\mathcal{S}, \mathcal{A}, P, r, \gamma)$. Specifically, an approximate factorization is characterized by a tuple $\omega = \{\omega^P, \omega^R\}$, involving operations on both the transition kernel and the reward function. We begin with the transition kernel.

\subsection{Approximate Factorization of the Transition Kernel}
A factorization scheme $\omega^P$ of the transition kernel is characterized by a tuple $\omega^P = \big\{ K_\omega, \{Z_k^S \subseteq [n] \mid k\in [K_\omega]\}, \{Z_k^P \subseteq [n+m] \mid k\in [K_\omega]\}\big\}$, where 
\begin{itemize}
    \item \( K_\omega \in [n] \) represents the number of components into which we partition the transition kernel;
    \item {\( \{Z_k^S\mid k\in [K_\omega]\} \) is a set of scopes with respect to the $n$-dimensional state space $\mathcal{S}=\prod_{i=1}^n\mathcal{S}_i$. Moreover,  \( \{Z_k^S\mid k\in [K_\omega]\} \) forms a partition of $[n]$, i.e., $\cup_{k=1}^{K_\omega}Z_{k}^S = [n]$ and $Z_{k_1}^S\cap Z_{k_2}^S = \emptyset$ for any $k_1\neq k_2$.} For each scope \( Z_k^S \subseteq [n] \), there is an associated scope set $\mathcal{S}[Z_k^S]$.
    \item \( \{Z_k^P\mid k \in [K_\omega]\} \) is a set of scopes with respect to the joint state-action space $\mathcal{X}:=\mathcal{S}\times \mathcal{A}=\prod_{i=1}^n\mathcal{S}_i\prod_{j=1}^m\mathcal{A}_i$. For example, consider an MDP with a $3$-dimensional state space and a $2$-dimensional action space. If $Z_k^P = \{1,2,4\}$, then its associated scope set $\mathcal{X}[Z_k^P] = \mathcal{S}_1\times \mathcal{S}_2\times\mathcal{A}_1$.
\end{itemize}

For simplicity of notation, we will use $-Z_k^S$ to denote the complement of $Z_k^S$ with respect to $[n]$, i.e., $-Z_k^S:=[n]\setminus Z_k^S$. The factorization scheme $\omega^P$ is used to characterize the component-wise dependence structure within the transition kernel. To motivate our approximate factorization scheme, we first consider the special case where the transition kernel $P$ is \emph{perfectly factorizable} with respect to $\omega^P$. That is,
\begin{align}\label{eq:marginal}
\forall x=(s,a)\in \mathcal{X}, s'\in \mathcal{S},k \in[K_\omega]: \quad P(s' \mid x) = \prod_{k=1}^{K_\omega}P_k(s'[Z_k^S] \mid x[Z_k^P]),
\end{align}
where $P_k(\cdot \mid x[Z_k^P])$ is a probability distribution for all $x[Z_k^P]\in \mathcal{X}[Z_k^P]$.
Eq. \eqref{eq:marginal} suggests that the transitions of $s'[Z_k^S]$ for different $k$ are independent and depend only on the substate-subaction pairs in \( \mathcal{X}[Z_k^P] \).
An MDP that permits both perfect transition and reward decompositions (introduced later in Section \ref{Approximate Factorization of the Reward Function}) is called a factored MDP \citep{osband2014near}. Intuitively, this means that the original MDP can be broken down into $K_\omega$ smaller components with transitions and rewards depending only on subsets of variables, rather than the full state and action space, thereby reducing the complexity of finding an optimal policy for the MDP.

As illustrated in Section \ref{sec:intro}, while the factored MDP framework is a promising approach to address the curse of dimensionality challenge, requiring the underlying MDP to be perfectly factorizable is a strong assumption. 
To remove this limitation, in this work, we propose a more general scheme—approximate factorization—that allows the transition kernel to be imperfectly factorizable, meaning Eq. (\ref{eq:marginal}) does not hold exactly. Specifically, to develop an approximate factorization scheme (i.e., to get an estimate of the ``marginal'' transition probabilities $P(s'[Z_k^S] \mid x[Z_k^P])$ in Eq. (\ref{eq:marginal})), we begin by defining the set of feasible marginal transition probabilities in the following. For any $k\in [K_\omega]$, let 
\begin{align*}
    \mathcal{P}_k=\,&\bigg\{P_k\in \mathbb{R}^{ |\mathcal{X}[Z_k^P]|\times |\mathcal{S}[Z_k^S]|}\,\bigg|\,\exists\, c\in\Sigma^{|\mathcal{X}[-Z_k^P]|}\text{ such that }P_k(s'[Z_k^S]\mid x[Z_k^P]) \nonumber\\
    &\quad =\sum_{x'[-Z_k^P]}c[x'[-Z_k^P]] P(s'[Z_k^S] \mid x[Z_k^P],x'[- Z_k^P]),\,\forall \,s'[Z_k^S]\in \mathcal{S}[Z_k^S],x[Z_k^P]\in \mathcal{X}[Z_k^P]\bigg\},
\end{align*}
where we use $\Sigma^d$ denotes the $d$-dimensional probability simplex for any $d\geq 0$.
Then, we choose $P_k\in\mathcal{P}_k$ arbitrarily for all $k\in [K_\omega]$. As we will see in later sections, the development of our theoretical results does not rely on the specific choice of $P_k\in\mathcal{P}_k$. To understand the definition of $\mathcal{P}_k$, again consider the perfectly factorizable setting, in which case $P(s'[Z_k^S] \mid x[Z_k^P],x'[- Z_k^P])=P(s'[Z_k^S] \mid x[Z_k^P],x''[- Z_k^P])$ for any $x'[- Z_k^P],x''[- Z_k^P]\in\mathcal{X}[- Z_k^P]$ because the transition of $s'[Z_k^S]$ does not depend on the substate-subaction pairs in $\mathcal{X}[- Z_k^P]$. As a result, $\mathcal{P}_k$ is a singleton set for all $k$, which is consistent with Eq. (\ref{eq:marginal}). Coming back to the case where the underlying MDP is not perfectly factorizable, choosing $P_k$ from $\mathcal{P}_k$ can be viewed as a way of approximating the ``marginal'' transition probabilities.

Given an approximation scheme $\omega^P$, we define the \emph{approximation error} with respect to the transition kernel as
\begin{align}
    \Delta^P_\omega = \sup_{P_1 \in \mathcal{P}_1, \dots, P_{K_\omega} \in \mathcal{P}_{K_\omega}}\max_{s' \in \mathcal{S},x \in \mathcal{X}} \left|\, P(s' \mid x) - \prod_{k=1}^{K_\omega} P_k(s'[Z_k^S] \mid x[Z_k^P]) \,\right|. \label{error_def}
\end{align}
To understand $\Delta_\omega^P$, observe that for any $s'\in\mathcal{S}$ and $x\in\mathcal{X}$, we have
    \begin{align*}
        &P(s' \mid x) -\prod_{k=1}^{K_\omega} P_k(s'[Z_k^S] \mid x[Z_k^P]) 
        \\
        =& \underbrace{P(s' \mid x) - \prod_{k=1}^{K_\omega} P_k(s'[Z_k^S] \mid x)}_{\Delta_1: \text{error from correlated transitions}} +\underbrace{\prod_{k=1}^{K_\omega} P_k(s'[Z_k^S] \mid x) - \prod_{k=1}^{K_\omega} P_k(s'[Z_k^S] \mid x[Z_k^P])}_{\Delta_2: \text{error from incomplete dependence}}.
    \end{align*}
    The first term \( \Delta_1 \) on the right-hand side represents the error due to ignoring the correlation between the transitions of $s'[Z_k^S]$ for different $k$, and vanishes if the transitions of substates from different scope sets are independent.
    The second term \( \Delta_2 \) on the right-hand side arises due to ignoring the dependence of the transitions of $s'[Z_k^S]$ from $x[-Z_k^P]$. Note that when the MDP is perfectly factorizable, we have $\Delta_\omega^P=0$ (cf. Eq. (\ref{eq:marginal})).

\subsection{Approximate Factorization of the Reward Function}
\label{Approximate Factorization of the Reward Function}
Moving to the reward function, the factorization is characterized by the tuple $\omega^R = \big\{\ell_\omega, \{Z_i^R\in[n+m]\mid i \in [\ell_\omega]\}\big\}$,
where
\begin{itemize}
    \item \( \ell_\omega \) is a 
    positive integer representing the number of components into which we decompose the reward function.
    \item \( \{Z_i^R\mid i \in [\ell_\omega]\} \) is a set of scopes with respect to the joint state-action space $\mathcal{X}$.
\end{itemize}
Under this factorization scheme, the global reward function \( r(x) \) is approximated by the sum of ``local'' reward functions from $\{r_i:\mathcal{X}[Z_i^R]\xrightarrow[]{}[0,1]\}_{i\in [\ell_\omega]}$, where each $r_i$ depends only on the scope variable \( x[Z_i^R]\).
Similarly, given an approximation scheme $\omega^R$ for the reward function, we define its approximation error as
\begin{align*}
    \Delta^R_\omega = \max_{x \in \mathcal{X}} \left| r(x) - \sum_{i=1}^{\ell_\omega} r_i(x[Z_i^R]) \right|,
\end{align*}
which is a measurement of the deviation from our approximation to the true reward function. The approximation error \( \Delta^R_\omega \) arises due to the fact that the reward function \( r(x) \) may depend on substate-subaction pairs outside the scope set $\mathcal{X}[Z_i^R]$. In the special case where the underlying MDP can be perfectly factorized into several small MDPs and the sum of their reward functions is equal to the original reward function, we have $\Delta_\omega^R=0$.

\subsection{An Illustrative Example}
\label{An Illustrative Example}
To provide intuition, we present an example of applying the approximate factorization scheme in a real-world application, demonstrating that it offers additional opportunities to achieve a better balance between desired solution accuracy and sample efficiency. In particular, we consider the storage control problem in wind farms. The storage controller (the agent) aims to align the real-time wind power generation output (the state) with the desired prediction values by flexibly charging or discharging the energy storage system.

This problem can be modeled as an MDP, where the state at time \(t\), \(s_t = (w_t, p_t, c_t)\), captures the wind power generation \(w_t\), electricity price \(p_t\), and the state of charge (SoC) of the storage \(c_t\). The action \(a_t\) represents the storage charging decision at time \(t\).
The transition dependence structure can be represented as a bipartite graph, as illustrated in Figure \ref{Bipartite Graph Representation}. In the graph, nodes on the left-hand side (LHS) represent the state and action at time \(t\), while nodes on the right-hand side (RHS) represent the state at time \(t+1\). A solid blue line between nodes indicates a strong dependence. For instance, the electricity price \(p_{t+1}\) strongly depends on the previous price \(p_t\). While a dashed blue line represents weak dependence. For example, the wind power generation $w_t$ can influence the next step's electricity price $p_{t+1}$ in the market, while its impact is weaker compared to the direct influence of the previous price $p_t$.

Figure \ref{Approximate Factorization K=3} illustrates an approximate factorization scheme on this MDP, where the system's dynamics are divided into three smaller components --- the dynamics of the wind power generation, electricity price, and storage levels. Strong transition dependencies are preserved within each component, but weaker dependencies, such as the influence of wind power generation on electricity price, are disregarded.
This leads to the following approximation of the transition probability:
\[
\widehat{P}(s_{t+1} \mid s_t, a_t) = P(w_{t+1} \mid w_t) \cdot P(p_{t+1} \mid p_t) \cdot P(c_{t+1} \mid c_t, a_t).
\]

This approximation simplifies the model by factoring the transition dynamics into smaller, more manageable components while retaining key dependencies. Note that this problem does not fall under a factored MDP, as the dynamics cannot be perfectly broken down into smaller components. However, our approximate factorization scheme offers a more flexible framework, allowing imperfectly factorizable transition dynamics to be divided into smaller components while maintaining sufficient model accuracy. This expands the factorization step's action space, enabling a more effective search for optimal trade-offs between solution accuracy and sample efficiency, achieved through tailored to the specific dependency structures of the problems. 

\begin{figure}[htbp]
    \centering
    \subfigure[Bipartite Graph Representation]
    {
        \includegraphics[width=0.38\linewidth]{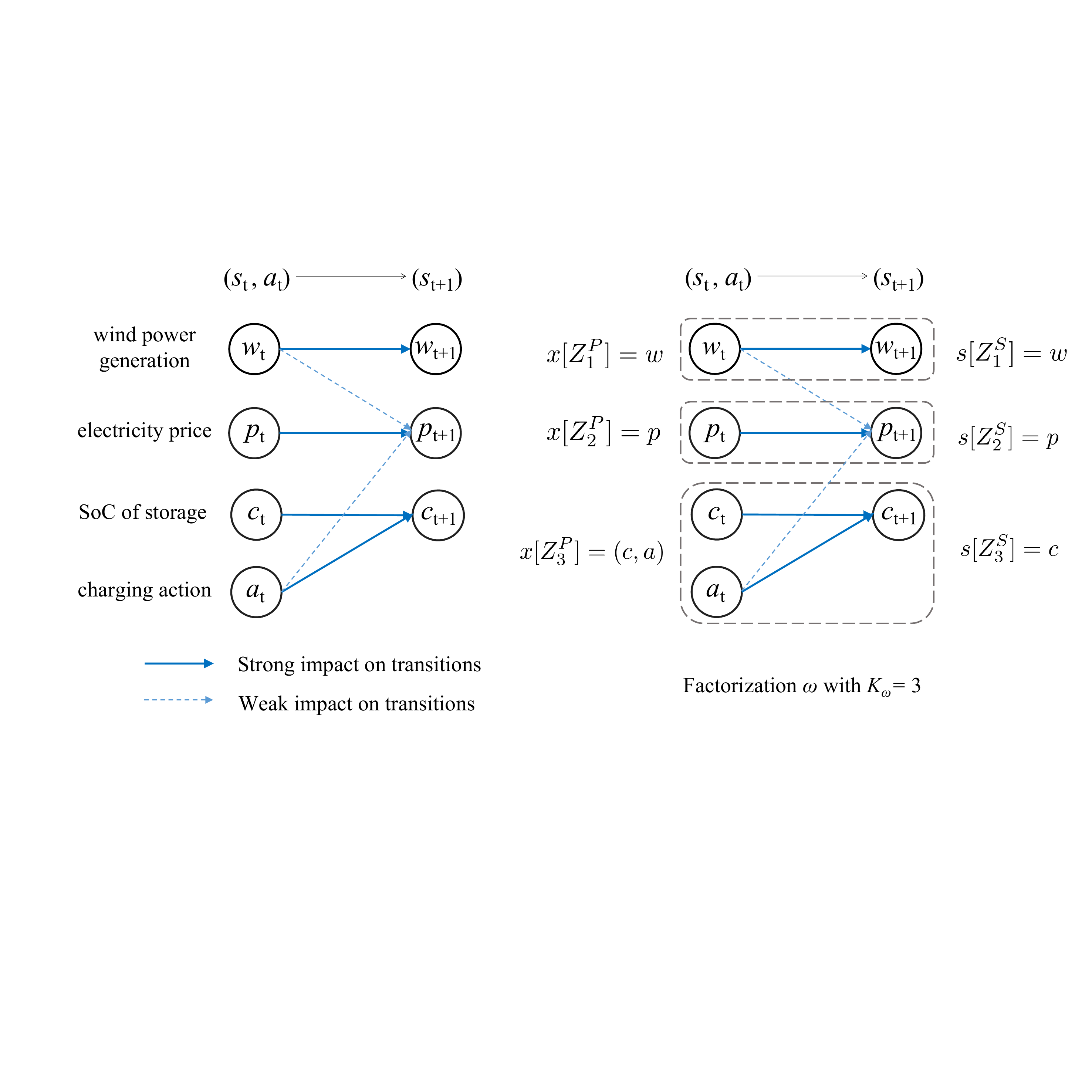}
        \label{Bipartite Graph Representation}
    }
    \subfigure[Approximate Factorization Scheme]
    {
        \includegraphics[width=0.48\linewidth]{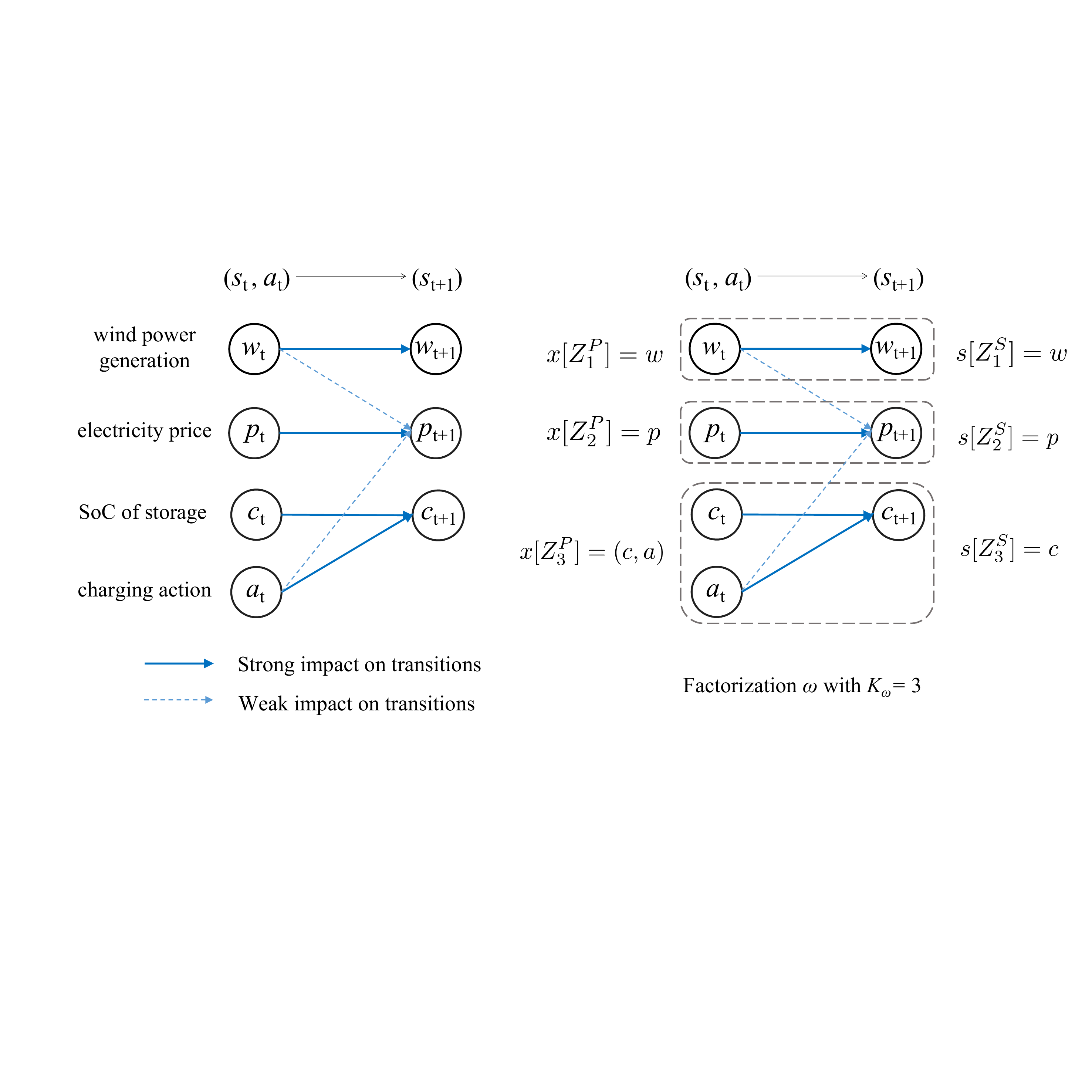}
    \label{Approximate Factorization K=3}
    }
    \caption{Bipartite Graph Representation and Approximate Factorization.}
    \label{Bipartite Graph Representation and Approximate Factorization}
    
\end{figure}

\section{Sampling through Structure: Multi-Component Factorized Synchronous Sampling}
\label{Multi-Component Factorized Synchronous Sampling}

Our main contributions propose tailored model-free and model-based approaches to solve approximately factored MDPs sample-efficiently. At the core of both of these approaches is an efficient sampling algorithm that leverages the factorization structure.  In this section, we introduce this core module and then apply it in both model-based and model-free algorithms in the sections that follow.

To illustrate the role of this sampling algorithm, consider classical model-based RL, which consists of two main steps: (1) estimating the transition kernel and the reward function through empirical sampling and (2) performing value iteration (or policy iteration) on the estimated model. The curse of dimensionality in terms of sample efficiency typically arises in the first step. Specifically, to obtain a relatively accurate estimate for the model, starting from each state-action pair, we need to independently sample the next state a sufficient amount of times, leading to a sample complexity proportional to $|\mathcal{S}||\mathcal{A}|$, which scales linearly with the size of the state-action space (hence exponentially with the dimension of the state-action space). By exploiting the approximate factorization structure, we develop a more efficient sampling algorithm. For ease of presentation, we focus on estimating the transition kernel here. The same approach applies to estimating the reward function. 

The goal is to construct an approximation $\widehat{P}$ of the original transition kernel $P$. We achieve this by constructing approximations of the low-dimensional transition probabilities \( \{\widehat{P}_k\mid k\in[K_\omega]\}\) and then compute $\widehat{P}$ according to
\begin{align}\label{eq:from_each_to_overall}
    \widehat{P}(s' \mid x) = \prod\nolimits_{k=1}^{K_\omega} \widehat{P}_k(s'[Z_k^S] \mid x[Z_k^P])
\end{align}
for all $s'\in\mathcal{S}$ and $x=(s,a)\in\mathcal{S}\times \mathcal{A}$.
In this case, instead of sampling from the global state-action space, we only need to sample from the substate-subaction spaces, which reduces the sample complexity.
To estimate $\widehat{P}_k(s'[Z_k^S] \mid x[Z_k^P])$ for any $k\in [K_\omega]$, let \( X^P_k \) be defined as $X^P_k = \{x\in\mathcal{X} \mid x[-Z_k^P]=x^{\text{default}}[-Z_k^P]\}$,
where \( x^{\text{default}}\in\mathcal{X} \) is an arbitrary (but fixed) element from $\mathcal{X}$. Sampling from \( X^P_k \) allows us to estimate the transition kernel for the \( k \)-th factorized component. Importantly, while sampling from $X_k$, we set $x[-Z_k^P]$ to $x^{\text{default}}[-Z_k^P]$ so that we do not need to cover the rest of the state-action space. For each \( x \in X^P_k \), we sample the next state \( N \) times, resulting in a set of samples \( \{s^k_{x,i}\}_{i\in [N]} \). The low-dimensional empirical transition kernel \( \widehat{P}_k \) is then computed as:
\begin{align}
    \widehat{P}_k(s'[Z_k^S] \mid x[Z_k^P]) = \frac{1}{N} \sum\nolimits_{i=1}^{N} \mathds{1}(s^k_{x,i}[Z_k^S] = s'[Z_k^S]), \quad \forall\, s'[Z_k^S]\in\mathcal{S}[Z_k^S],x\in X_k^P. \label{eq:empirical_transition}
\end{align}
The overall estimate $\widehat{P}$ of the transition kernel can then be computed using Eq. (\ref{eq:from_each_to_overall}).
Using this approach, the total number of required samples is \( \sum_{k=1}^{K_\omega} |\mathcal{X}[Z_k^P]|N \). Since the sum of the sizes of the substate-subaction space for each component, i.e., $\sum_{k=1}^{K_\omega} |\mathcal{X}[Z_k^P]|$, is usually much smaller (in particular, exponentially smaller) compared to the size of the overall state-action space $|\mathcal{S}||\mathcal{A}|$, the total number of samples required is substantially smaller than what would be needed in classical model-based RL.

\subsection{Synchronous Sampling Properties}
The sampling method described above can be further optimized to improve the sample efficiency by exploiting the structure of the factorization scheme. Previously, we used the samples \( \{ s^k_{x,i} \}_{i \in [N]} \) solely to construct the estimator \( \widehat{P}_k(s'[Z_k^S] \mid x[Z_k^P]) \) for component \( k \). However, these samples can potentially be reused to estimate other low-dimensional transition kernels, depending on the relationships between their associated scopes. To formalize this idea, we introduce two key strategies that leverage the relationships between the scope sets of the components.

\paragraph{Synchronous Sampling with Inclusive Scopes.} For any \( k_1, k_2 \in [K_\omega] \) such that \( Z_{k_1}^P \subseteq Z_{k_2}^P \), the samples used to estimate the transition kernel of component \( k_2 \) can be reused to estimate that of component \( k_1 \). Specifically, consider the sampling set \( X^P_{k_2} = \{ x \in \mathcal{X} \mid x[-Z_{k_2}^P] = x^{\text{default}}[-Z_{k_2}^P] \} \) for component \( k_2 \), where \( x^{\text{default}}[-Z_{k_2}^P] \) is a fixed arbitrary element from \( \mathcal{X}[-Z_{k_2}^P] \). By sampling from each \( x \in X^P_{k_2} \) , we obtain samples of the next state \( \{ s^{k_2}_{x,i} \}_{i \in [N]} \). Since \( Z_{k_1}^P \subseteq Z_{k_2}^P \), these samples inherently contain information about the transitions of component \( k_1 \). Therefore, we can estimate the transition probabilities for component \( k_1 \) according to Eq. (\ref{eq:empirical_transition}) with $k=k_1$ using the same samples, where the sampling set $X_{k_1}^P$ is defined based on $X_{k_1}^P$ as
\begin{align*}
    X^P_{k_1} = \{ x \in X^P_{k_2} \mid x[Z_{k_2}^P\setminus Z_{k_1}^P] = x^{\text{default}}[Z_{k_2}^P\setminus Z_{k_1}^P] \}.
\end{align*}
The reuse of samples here improves the overall sample efficiency by avoiding sampling again for component $k_1$.

\paragraph{Synchronous Sampling with Exclusive Scopes.} For any \( k_1, k_2 \in [K_\omega] \) such that their associated scopes are disjoint, i.e., \( Z_{k_1}^P \cap Z_{k_2}^P = \emptyset \), we can estimate the transitions for both components simultaneously using shared samples. 
{Specifically, define the \emph{joint sampling set} \( X^P_{k_1,k_2} \) as: \begin{align}
        X^P_{k_1,k_2} = \{(x[Z_{k_1}^P]_{(i \bmod |\mathcal{X}[Z_{k_1}^P]|+1)},x[Z_{k_2}^P]_{(i\bmod |\mathcal{X}[Z_{k_2}^P]|+1)}, x^{\text{default}}[- (Z_{k_1}^P\cup Z_{k_2}^P)]) \mid i \in [D_{\max}]\},
    \end{align}
where \( D_{\max} = \max(|\mathcal{X}[Z_{k_1}^P]|, |\mathcal{X}[Z_{k_2}^P]|) \), \( x[Z_{k_1}^P]_{(i)} \) denotes the \( i \)-th element in a fixed ordering of \(\mathcal{X}[Z_{k_1}^P] \). The modulo operation ensures that we cycle through all possible values of each component's state-action space. By sampling from each \( x \in X^P_{k_1,k_2} \) for \( N \) times, we obtain samples \( \{ s_{x,i} \}_{i=1}^{N} \), which are used to estimate the transition probabilities for both components according to Eq. (\ref{eq:empirical_transition}). Figure \ref{illu:joint sampling set} illustrates an example of synchronous sampling with exclusive scopes.
This strategy improves sample efficiency by reducing the total number of samples needed compared to independently sampling each component.

\begin{figure}[t]
\centerline{\includegraphics[width=0.98\linewidth]{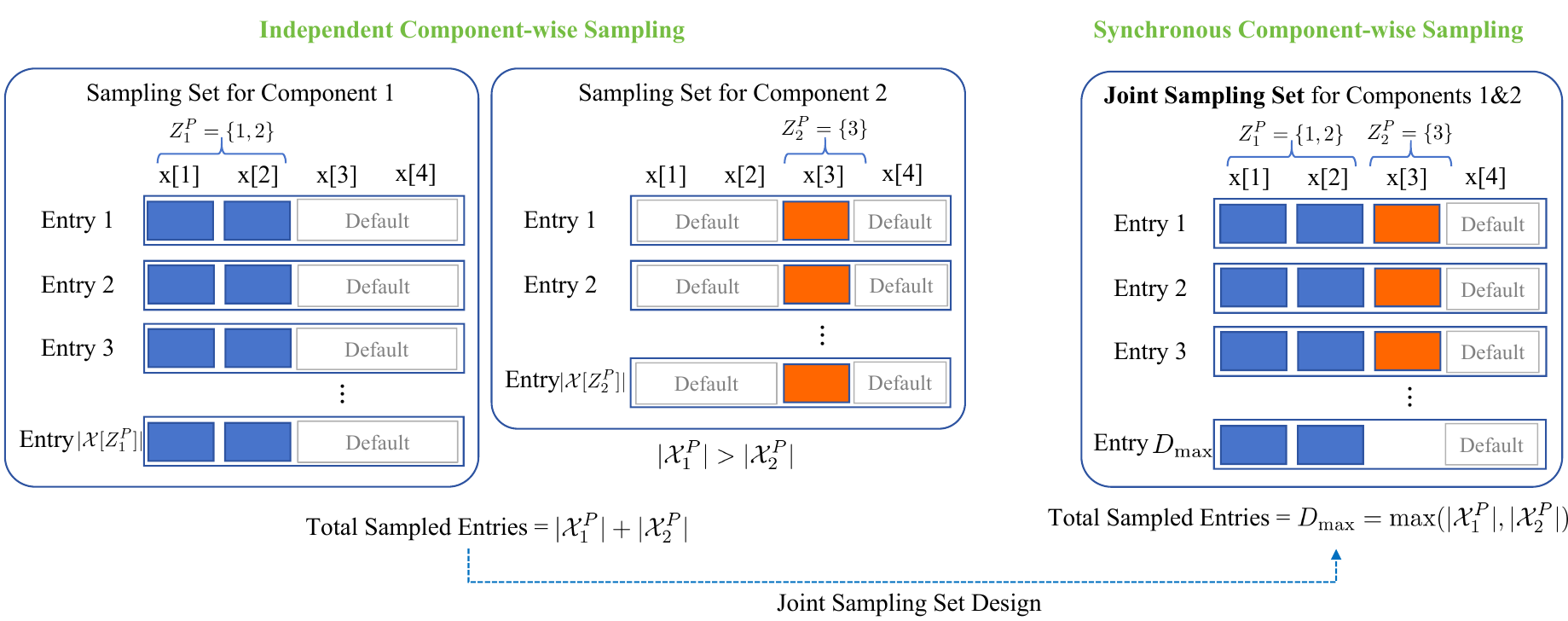}}
\caption{Synchronous Sampling with Exclusive Scopes.}
\label{illu:joint sampling set}

\end{figure}

\subsection{Cost-Optimal Synchronous Sampling Strategy}
\label{Cost-Optimal Synchronous Sampling Strategy}
Building on the aforementioned two key strategies, we next present our sampling approach. 
Recall that we have \( K_\omega \) components associated with scope sets \( Z_1^P, Z_2^P, \dotsc, Z_{K_\omega}^P \). For each $k\in [K_\omega]$, we define its corresponding sampling cost as the size of the scope set $\mathcal{X}[Z_k^P]$. Using the inclusive scope property, we first eliminate components whose scope sets are subsets of others. Let \( \mathcal{K}_\omega^* \) denote the set of remaining components: 
\begin{align*}
    \mathcal{K}_\omega^* = \{ k \in [K_\omega] \mid \nexists\, k_1 \neq k \ \text{such that} \ Z_{k}^P \subsetneq  Z_{k_1}^P \}.
\end{align*}
Next, to use the exclusive scope property, we divide \( \mathcal{K}_\omega^* \) into subsets $\{\mathcal{G}_{i}\}_{1\leq i\leq \kappa_p}$ (where $\kappa_p$ denotes the total number of subsets)
such that within each subset \( \mathcal{G}_i \), the components have disjoint scopes, i.e., \( Z_{k_1}^P \cap Z_{k_2}^P = \emptyset \) for all \( k_1,k_2\in \mathcal{G}_i \). This enables us to apply the strategy of synchronous sampling with exclusive scopes within each subset. Specifically, for each subset \( \mathcal{G}_k \) of $\mathcal{K}_\omega^*$, we construct the joint sampling sets as follows:
\begin{align*}
    X_{\mathcal{G}_k}^P = \left\{ \left\{ x[Z_j^P]_{(i \bmod |\mathcal{X}[Z_j^P]|+1)} \right\}_{j \in \mathcal{G}_k}, \ x^{\text{default}}[-(\cup_{j \in \mathcal{G}_k} Z_j^P)] \ \Big| \ i \in [D_{\max,k}] \right\},
\end{align*}
where \( D_{\max,k} = \max_{j \in \mathcal{G}_k} |\mathcal{X}[Z_j^P]| \) ensures that all scope sets associated with this subset are sampled. Therefore, the cost of sampling a joint sampling set is determined by the largest scope sets in the group:
\begin{align*}
    c_k(\mathcal{G}_k) =  D_{\max,k} = \max_{j \in \mathcal{G}_k} |\mathcal{X}[Z_j^P]|.
\end{align*}

Our objective is to minimize the total sampling cost across all groups:
\begin{align*}
    \min_{\kappa_p} \min_{\mathcal{G}_1, ..., \mathcal{G}_{\kappa_p}}\sum\nolimits_{k=1}^{\kappa_p} c_k(\mathcal{G}_k).
\end{align*}

This problem is closely related to the \emph{Graph Coloring Problem} (GCP) \citep{jensen2011graph,karp2010reducibility}. In Appendix \ref{Discussion_Cost-Optimal Sampling Problem}, we elaborate on the connection between our cost-optimal sampling problem and the GCP, and provide a proof of its computational complexity. Due to such connection, a rich array of well-established algorithms are available to tackle the problem effectively in practice. The solution yields the optimal number of groups \( \kappa_p\in[K_\omega] \) and the division scheme \( \{ \mathcal{G}_k \}_{k \in [\kappa_p]} \). Notably, \( \kappa_p \) corresponds to the \emph{chromatic number} \citep{erdHos1966chromatic} (the minimal number of colors required) in GCP. Existing literature shows that, under mild conditions, the chromatic number is often much smaller than the number of nodes in the graph \citep{khot2001improved,sopena1997chromatic,goddard2012s}. This result translates to \( \kappa_p \ll K_\omega \) in our context, leading to substantial sampling cost reductions. A more in-depth discussion on $\kappa_p$ can refer to Section \ref{model-based sample complexity}.

\subsection{MDP Model Estimation based on Synchronous Sampling }
Given \( \{\mathcal{G}_k\}_{k\in [\kappa_p]} \) and the joint sampling sets $\{X_{\mathcal{G}_k}^P\}_{k\in [\kappa_p]}$, we are ready to sample and estimate the transition kernel. Specifically, for each state \( x \in X_{\mathcal{G}_k}^P \), we perform \( N \) sampling trials to obtain next-state samples based on the generative model. Specifically, we generate \( N \) samples \( \{ s^k_{x,i} \}_{i=1}^{N} \) by sampling from \( P(\cdot \mid x) \) as detailed in Algorithm \ref{Cost-Optimal Factorized Synchronous Sampling Algorithm}. For each component \( j \in \mathcal{G}_k \), we compute the empirical transition kernel by:
\begin{align}
    \widehat{P}_j(s'[Z_j^S] \mid x[Z_j^P]) = \frac{1}{N} \sum\nolimits_{i\in[N]} \mathds{1}\big( s^k_{x,i}[Z_j^S] = s'[Z_j^S] \big), \quad \forall\, s'[Z_j^S]\in \mathcal{S}[Z_j^S].
\end{align}

\begin{algorithm}[h]
\caption{Cost-Optimal Factorized Synchronous Sampling Algorithm}
\label{Cost-Optimal Factorized Synchronous Sampling Algorithm}
\begin{algorithmic}[1]
\STATE \textbf{Input:} Approximate Factorization Scheme $\omega = (\omega^P,\omega^R)$.
\STATE {Solve the optimal division scheme represented by $\{\kappa_p, \{\mathcal{G}_k\}_{k\in [\kappa_p]}\}$;}
\STATE Construct the sampling sets $\{X^P_{\mathcal{G}_k}\}_{k\in[\kappa_p]}$;
\FOR{$k$ = $1,2,\cdots,\kappa_p$}
\STATE Sample from each state-action pair $x\in X_{\mathcal{G}_k}^P$ for $N$ times and get the samples by $\{s^k_{x,i}\}_{x\in X^P_{\mathcal{G}_k}, i\in[N]}$.
\ENDFOR
\STATE \textbf{Output:} Samples $\{s^k_{x,i}\}_{k\in [\kappa_p], x\in X^P_{\mathcal{G}_k}, i\in[N]}$.
\end{algorithmic}
\end{algorithm}
To estimate the reward function components, we can employ similar strategies, specifically the inclusive and exclusive scope strategies, to effectively sample the reward function. By constructing a sampling set tailored to these strategies, we can then proceed with learning the reward function. This approach helps reduce the overall sample complexity of learning the reward function components.

\section{Model-Based Reinforcement Learning with Approximate Factorization}
\label{Sec_model-based}
In this section, focusing on MDPs with approximate factorization and armed with the synchronous sampling approach introduced in the previous section, we propose the \emph{model-based Q-value iteration with approximate factorization} algorithm. This algorithm offers sample complexity guarantees, improving upon prior work.

\subsection{Algorithm Design}
The complete algorithm is summarized in Algorithm \ref{alg:Value Iteration (VI)}, which consists of two primary steps: estimating the model parameters and performing Q-value iteration on the estimated model.

\begin{algorithm}[h]
\caption{Model-Based Q-Value Iteration with Approximate Factorization}
\label{alg:Value Iteration (VI)}
\begin{algorithmic}[1]
\STATE \textbf{Input:} Positive integer $T$ and initialization $\widehat{Q}_0(s,a)=0$ for all $(s,a)\in\mathcal{S}\times \mathcal{A}$.
\STATE Compute empirical transition kernel $\widehat{P}$ and the reward function $\widehat{r}$ through synchronous sampling Algorithm \ref{Cost-Optimal Factorized Synchronous Sampling Algorithm} and Eqs. \eqref{kernel_combine} and \eqref{reward_combine}.
\FOR{$t=1,2,\cdots,T$}
\STATE $\widehat{Q}_{t}(s,a) = {r}(s, a) + \gamma \sum_{s'} \left[\widehat{P}(s'|s,a) \max_{a'} \widehat{Q}_{t-1}(s',a')\right]$ for all $(s,a)\in\mathcal{S}\times \mathcal{A}$.
\ENDFOR
\STATE \textbf{Output:} Estimated Q-value function $\widehat{Q}^*_{\omega} = \widehat{Q}_T$.
\end{algorithmic}
\end{algorithm}

The first step involves estimating the transition kernel and the reward function (see Algorithm \ref{alg:Value Iteration (VI)}, Line 2). Specifically, in Section \ref{Multi-Component Factorized Synchronous Sampling}, we explained how to estimate the transition probabilities \( \widehat{P}_k \) for each component \( k \in [K_\omega] \). For any \( s' \in \mathcal{S} \) and \( x \in \mathcal{X} \), the overall transition probability \( \widehat{P}(s' \mid x) \) is computed by combining the individual component estimates as follows:
\begin{align}
    \widehat{P}(s' \mid x) = \prod\nolimits_{k=1}^{K_\omega} \widehat{P}_k(s'[Z_k^S] \mid x[Z_k^P]). \label{kernel_combine}
\end{align}
Similarly, for the reward function, we aggregate the estimated rewards \( \widehat{r}_i \) for each component \( i \in [\ell_\omega] \) to obtain the overall reward function:
\begin{align}
     \widehat{r}(x) = \sum\nolimits_{i=1}^{\ell_\omega} \widehat{r}_i(x[Z_i^R]).
     \label{reward_combine}
\end{align}

The second step is to apply the value iteration method using these estimated model parameters (see Algorithm \ref{alg:Value Iteration (VI)}, Lines 3–5). In particular, Line 4 of Algorithm \ref{alg:Value Iteration (VI)} represents the empirical version of the Bellman iteration. Through value iteration, we can compute the desired optimal-\( Q \)-value function and derive the corresponding greedy policy as the final solution to the RL problem.

\subsection{Sample Complexity Guarantees}
\label{model-based sample complexity}
We now present the sample complexity guarantees of Algorithm \ref{alg:Value Iteration (VI)}; the proof can be found in Appendix~\ref{proof:them:summary}. Before proceeding, without loss of generalit, we assume that \( |\mathcal{X}[Z_1^P]| \geq |\mathcal{X}[Z_2^P]| \geq \dots \geq |\mathcal{X}[Z_{K_\omega}^P]| \) and \( |\mathcal{X}[Z_1^R]| \geq |\mathcal{X}[Z_2^R]| \geq \dots \geq |\mathcal{X}[Z_{\ell_\omega}^R]| \), i.e., the component scope sets are ordered in descending order based on their cardinality.

\begin{theorem}\label{thm:summary}
Given any approximate factorization scheme \( \omega \), let \( \mathcal{E}_\omega = \gamma(1-\gamma)^{-2} \Delta^P_\omega + (1-\gamma)^{-1} \Delta^R_\omega \). For any confidence level \( \delta > 0 \) and the desired  accuracy level \( \epsilon \in (0,1) \), with probability at least \( 1 - \delta \), {the output Q-function $\widehat{Q}_\omega^* $ from Algorithm \ref{alg:Value Iteration (VI)} after $T\geq\overline{c}_2\log(\epsilon^{-1}(1-\gamma)^{-1})$ iterations satisfies
\begin{align}
    \| \widehat{Q}_\omega^* -Q^* ||_\infty \leq \epsilon + \mathcal{E}_\omega,
\end{align}}
provided that the total number of samples, denoted by \( D_\omega \), satisfies:
\begin{align} 
    D_\omega \geq \frac{\overline{c}_0 \left( \sum_{k \in [\kappa_p]} |\mathcal{X}[Z^P_k]| \right) \log \left( \overline{c}_1 |\mathcal{X}[\cup_{k \in [K_\omega]} Z_k^P]| \delta^{-1} \right)}{\epsilon^2 (1-\gamma)^3} +  \sum\nolimits_{i \in [\kappa_r]} |\mathcal{X}[Z^R_i]|,\label{thm1_sample_complexity}
\end{align}
where $\kappa_p \in[0, K_\omega]$ and $ \kappa_r \in[0, \ell_\omega]$ are problem-dependent parameters, and \( \overline{c}_0 \), \( \overline{c}_1 \) \( \overline{c}_2 \) are absolute constants.
\end{theorem}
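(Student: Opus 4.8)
The plan is to split the total error $\|\widehat Q_\omega^* - Q^*\|_\infty$ into an optimization error, a statistical error, and an approximation (bias) error, and to control each separately. I introduce two auxiliary fixed points: $\bar Q^*$, the optimal Q-function of the \emph{idealized} factored MDP whose kernel is $\bar P(s'\mid x)=\prod_{k=1}^{K_\omega}P_k(s'[Z_k^S]\mid x[Z_k^P])$ and whose reward is $\bar r(x)=\sum_{i=1}^{\ell_\omega}r_i(x[Z_i^R])$; and $\widehat Q^*$, the fixed point of the empirical Bellman operator $\widehat{\mathcal H}$ built from the estimates $\widehat P=\prod_k\widehat P_k$ and $\widehat r=\sum_i\widehat r_i$. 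Writing $\widehat Q_\omega^*=\widehat Q_T$ and applying the triangle inequality gives
\[
\|\widehat Q_\omega^* - Q^*\|_\infty \le \underbrace{\|\widehat Q_T - \widehat Q^*\|_\infty}_{\text{(I) optimization}} + \underbrace{\|\widehat Q^* - \bar Q^*\|_\infty}_{\text{(II) statistical}} + \underbrace{\|\bar Q^* - Q^*\|_\infty}_{\text{(III) approximation}}.
\]

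Terms (I) and (III) are the routine ones. For (I), since $\widehat{\mathcal H}$ is a $\gamma$-contraction in $\|\cdot\|_\infty$ with fixed point $\widehat Q^*$ and $\widehat Q_0=0$, I get $\|\widehat Q_T-\widehat Q^*\|_\infty\le \gamma^T\|\widehat Q^*\|_\infty\le \gamma^T/(1-\gamma)$, which drops below the target error once $T\ge \bar c_2\log(\epsilon^{-1}(1-\gamma)^{-1})$, matching the stated iteration count. For (III), I subtract the Bellman equations for $Q^*$ and $\bar Q^*$: the reward mismatch contributes at most $\|r-\bar r\|_\infty/(1-\gamma)\le \Delta^R_\omega/(1-\gamma)$, while the kernel mismatch, after using that $P(\cdot\mid x)$ and $\bar P(\cdot\mid x)$ are both probability distributions (so their difference is mean-zero and may be paired with a centered value function), contributes $\gamma(1-\gamma)^{-2}\Delta^P_\omega$; together these give (III) $\le \mathcal E_\omega$. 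This term is sample-independent and captures the irreducible bias of imperfect factorization.

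The core of the argument is term (II). Using that both $\widehat{\mathcal H}$ and $\bar{\mathcal H}$ are $\gamma$-contractions, I reduce to a one-step perturbation $\|\widehat Q^*-\bar Q^*\|_\infty\le (1-\gamma)^{-1}\|(\widehat{\mathcal H}-\bar{\mathcal H})\bar Q^*\|_\infty$, and the right-hand side splits into a reward-estimation piece $\|\widehat r-\bar r\|_\infty$ and a kernel-estimation piece $\gamma\max_x\big|\sum_{s'}(\widehat P(s'\mid x)-\bar P(s'\mid x))\bar V^*(s')\big|$, where $\bar V^*(s')=\max_{a'}\bar Q^*(s',a')$. To exploit the product structure I telescope $\widehat P-\bar P=\sum_{k=1}^{K_\omega}\big(\prod_{j<k}\widehat P_j\big)(\widehat P_k-P_k)\big(\prod_{j>k}P_j\big)$, which reduces the kernel error to a sum over components of single-factor deviations $\widehat P_k-P_k$ paired with partially marginalized value functions. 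Each such deviation is a centered empirical average of the $N$ samples drawn for component $k$, so a Bernstein inequality controls it through its variance; combining the per-component bounds with a total-variance recursion \citep{azar2012sample} collapses the naive $(1-\gamma)^{-4}$ scaling to the advertised $(1-\gamma)^{-3}$. A union bound over all $x[Z_k^P]$ and next-substates produces the $\log(\bar c_1|\mathcal X[\cup_{k}Z_k^P]|\delta^{-1})$ factor, and finally accounting—through the cost-optimal coloring—for the fact that a single physical sample simultaneously serves every component in a color class converts the required per-pair count $N\gtrsim \log(\cdot)\,\epsilon^{-2}(1-\gamma)^{-3}$ into the total budget $D_\omega$ of Eq.~\eqref{thm1_sample_complexity}, with the transition cost $\sum_{k\in[\kappa_p]}|\mathcal X[Z_k^P]|$ and the separate reward cost $\sum_{i\in[\kappa_r]}|\mathcal X[Z_i^R]|$.

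I expect the main obstacle to be the variance analysis inside term (II). Because the synchronous (inclusive and exclusive) sampling scheme computes the empirical marginals within a color class from \emph{shared} samples, the telescoped error terms $\widehat P_k-P_k$ are statistically correlated rather than independent, so the usual component-wise concentration does not directly apply. Controlling these cross-component correlations—so that the final dependence is on the per-group cost $D_{\max,k}$ (hence $\sum_{k\in[\kappa_p]}|\mathcal X[Z_k^P]|$) rather than the larger $\sum_{k\in[K_\omega]}|\mathcal X[Z_k^P]|$—while keeping the Bernstein and total-variance machinery valid is the delicate technical heart of the proof and precisely where the refined statistical analysis is needed.
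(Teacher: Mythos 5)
Your overall architecture matches the paper's: the same three-way split into optimization, statistical, and approximation error, the same $\gamma^T/(1-\gamma)$ contraction bound for the iteration count, and the same Bellman-subtraction argument for the bias term (the paper's Lemma \ref{bound_error} gives exactly your $\Delta^R_\omega/(1-\gamma)+\gamma(1-\gamma)^{-2}\Delta^P_\omega$). You also correctly identify the union bound over the $|\mathcal{X}[\cup_k Z_k^P]|$ distinct rows of $(P_\omega-\widehat P_\omega)$, the coloring-based accounting that converts the per-pair count into $\sum_{k\in[\kappa_p]}|\mathcal{X}[Z_k^P]|$, and the cross-component correlation from shared samples as the technical crux. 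Your telescoping of $\widehat P-\bar P$ into single-factor deviations is a legitimate variant; the paper instead verifies unbiasedness of the whole product estimator directly (exploiting independence across color classes) and applies Bernstein to $(P_\omega-\widehat P_\omega)V^*_\omega$ as a unit.

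The genuine gap is in your reduction of term (II). The one-step perturbation bound $\|\widehat Q^*-\bar Q^*\|_\infty\le(1-\gamma)^{-1}\|(\widehat{\mathcal H}-\bar{\mathcal H})\bar Q^*\|_\infty$ collapses the error to a scalar before any variance-aware argument can act, and this is precisely the step that forfeits a factor of $(1-\gamma)^{-1}$: Bernstein controls $|(\widehat P-\bar P)\bar V^*|$ by $\sqrt{\text{Var}_{\bar P}(\bar V^*)\log(\cdot)/N}$, the pointwise bound $\text{Var}_{\bar P}(\bar V^*)\le(1-\gamma)^{-2}$ cannot be improved, and multiplying by your $(1-\gamma)^{-1}$ prefactor yields $N$ of order $\epsilon^{-2}(1-\gamma)^{-4}$, not the claimed $(1-\gamma)^{-3}$. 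The total-variance recursion you invoke to "collapse" this cannot be applied to the sup-norm of a one-step error; it is a statement about the vector quantity $(I-\gamma\widehat P^{\pi})^{-1}\sqrt{\text{Var}_{P_\omega}(V^*_\omega)}$, whose $\ell_\infty$ norm is of order $(1-\gamma)^{-3/2}$ via the identity $\Sigma^\pi_M=\gamma^2(I-\gamma^2\widehat P^\pi)^{-1}\text{Var}_{\widehat P}(\widehat V^\pi)$ of Lemma \ref{Lemma 4-basic}, beating the naive $(1-\gamma)^{-1}\cdot(1-\gamma)^{-1}$. To use it you must instead start from the two-sided simulation-lemma bounds of Lemma \ref{decomp_lemma2}, namely $\gamma(I-\gamma\widehat P^{\widehat\pi^*_\omega}_\omega)^{-1}(P_\omega-\widehat P_\omega)V^*_\omega\le Q^*_\omega-\widehat Q^*_\omega\le\gamma(I-\gamma\widehat P^{\pi^*_\omega}_\omega)^{-1}(P_\omega-\widehat P_\omega)V^*_\omega$, keep the resolvent intact, substitute the Bernstein bound so that the resolvent is paired with $\sqrt{\text{Var}_{P_\omega}(V^*_\omega)}$, and then pay for swapping $\text{Var}_{P_\omega}$ for $\text{Var}_{\widehat P_\omega}$ and $V^*_\omega$ for $\widehat V^{\widehat\pi^*_\omega}_\omega$ (the paper's Lemmas \ref{lemma a.8}--\ref{(1-gamma)^3}). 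As written, your plan either delivers the wrong horizon dependence or silently presupposes the resolvent structure it has already discarded.
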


Now we are in position to discuss further implications of our results.

\paragraph{Model Misspecification Bias.}  The parameter \( \mathcal{E}_\omega \) is called the model misspecification bias, which consists of two terms: \( \gamma(1-\gamma)^{-2} \Delta^P_\omega \) and \( (1-\gamma)^{-1} \Delta^R_\omega \), both of which are linearly dependent on the approximation errors of the transition kernel and reward function, respectively. This bias arises from the inaccuracies introduced by the factorization of the transition kernel and reward function. If the factorization were accurate, these approximation errors would vanish, and \( \mathcal{E}_\omega \) would reduce to zero, as in the factored MDP case, which is discussed below.

\paragraph{Sample Complexity Implications and Comparison to Prior Works}\label{para:sample-complexity-discussion}
For a fair comparison, we consider our results using a factorization scheme $\omega$ that is perfect with the bias \( \mathcal{E}_\omega = 0\). In this case, our sample complexity required to achieve an 
$\varepsilon$-optimal policy is of the following order:
\begin{align}
    \widetilde{O}\left( \frac{\sum\nolimits_{k \in [\kappa_p]} |\mathcal{X}[Z^P_k]|}{\epsilon^2(1-\gamma)^3} + \sum\nolimits_{i \in [\kappa_r]} |\mathcal{X}[Z^R_i]| \right). \label{eq:model-based-results}
\end{align}

Compared with the minimax-optimal sample complexity \citep{azar2012sample} for solving standard MDPs:
\begin{align}
    \widetilde{O}\left( \frac{|\mathcal{S}| |\mathcal{A}|}{\varepsilon^2 (1-\gamma)^3}\right),
\end{align}
we have identical dependence on \( \delta \), \( \epsilon \), and the effective horizon \( 1/(1-\gamma) \). The key improvement lies in the dependence on the size of the state-action space, where we improve the minimax \(  \widetilde{\mathcal{O}}(|\mathcal{S}| |\mathcal{A}|) \) dependence \citep{azar2012sample} to \( \widetilde{\mathcal{O}}(  \sum\nolimits_{k \in [\kappa_p]} |\mathcal{X}[Z^P_k]|) \leq \widetilde{\mathcal{O}}(  K_\omega \max_k |\mathcal{X}[Z^P_k]|)\). Notably, \(\widetilde{\mathcal{O}}(  K_\omega \max_k |\mathcal{X}[Z^P_k]|)\) is a problem-dependent sample complexity --- almost proportional to the sample complexity of solving the largest individual component among the factorized parts of the entire transition kernel, which is exponentially smaller than \( |\mathcal{S}| |\mathcal{A}| \) due to reduced dimensionality. For example, consider an approximate factoirzation scheme that decomposes an MDP into $10$ disjoint components with identical cardinalities. The corresponding sample complexity becomes $\widetilde{\mathcal{O}}\big([|\mathcal{S}||\mathcal{A}|]^{\frac{1}{10}}\big)$, achieving an exponential reduction in sample complexity relative to the number of factorized components.

In addition, when the bias \( \mathcal{E}_\omega = 0\), the MDPs with approximate factorization reduce to the well-known setting of factored MDPs.
Compared to the best-known sample complexity upper bound \(\widetilde{\mathcal{O}}\left(\sum_{k=1}^{K_\omega} |\mathcal{X}[Z^P_k]| \epsilon^{-2} (1 - \gamma)^{-3} + \sum_{i=1}^{\ell_\omega} |\mathcal{X}[Z^R_i]|\right)\)  for factored MDPs from \citep{chen2020efficient} \footnote{We translate their results to our setting for easy understanding and clear comparison.}, our result \(\widetilde{\mathcal{O}}\left(\sum_{k=1}^{\kappa_p} |\mathcal{X}[Z^P_k]| \epsilon^{-2} (1 - \gamma)^{-3}+\sum\nolimits_{i=1}^{\kappa^r} |\mathcal{X}[Z^R_i]|\right)\) offers strictly better sample complexity, where \(\kappa_p\in[K_\omega]\) and  \(\kappa_r\in[\ell_\omega]\) are instance-dependent parameters. Indeed, \(\kappa_p\) reflects the sparsity of the dependence structure in the MDP. In many real-world applications such as UAV swarm control \citep{campion2018uav} and power system economic dispatch \citep{chen2022reinforcement}, the dynamics of different UAVs or different power generators are highly independent, then \(\kappa_p \ll K_\omega\) and can be small constant if the dependence structure is sparse. Also, by connecting it to the \emph{Graph Coloring Problem}, $\kappa_p\ll K_\omega$ is provable under mild conditions (can refer to the discussion in Section \ref{Cost-Optimal Synchronous Sampling Strategy}). As a result, it indicates our sample complexity can improve the prior arts in factored MDPs by a factor of up to \(\mathcal{O}(K_\omega)\). Similarly, $\kappa_r$ can be significantly smaller than $\ell_\omega$ if the reward function components have such sparse dependence.

The sample complexity lower bound for factored MDPs is \(\widetilde{\mathcal{O}}(\max_{k} |\mathcal{X}[Z^P_k]| \epsilon^{-2} (1 - \gamma)^{-3}+\max_i|\mathcal{X}[Z^R_i]|)\), which is established in \citep{xu2020reinforcement, chen2020efficient}. It worth noting that, our algorithm is the first to match this lower bound in an instance-dependent manner when \(\kappa_p, \kappa_r = \mathcal{O}(1)\), demonstrating that we not only improve upon existing upper bounds but also achieve the theoretical minimum sample complexity under certain conditions.

\paragraph{Trade-Off Between Sample Complexity and Model Misspecification Bias.} There exists a trade-off between the sample complexity \( D_\omega \) and the model misspecification bias \( \mathcal{E}_\omega \). The approximate factorization scheme \( \omega \) can be viewed as a tunable hyperparameter. In general, when the number of components $K_\omega$ increases, we get a finer decomposition of $P$ and the size of each component decays exponentially with $K_\omega$, which significantly reduces the sample complexity $D_\omega$. However, the drawback of increasing $K_\omega$ arbitrarily is that it may result in a larger misspecification bias \( \mathcal{E}_\omega \) due to model mismatch.
To illustrate this trade-off, consider the example depicted in Figure \ref{Bipartite Graph Representation and Approximate Factorization}, where we decompose an MDP into three components by disregarding certain weak dependencies. If the induced error \( \mathcal{E}_\omega \) is smaller than the desired accuracy level \( \epsilon \), this imperfect factorization is sufficient to achieve the target solution. More importantly, such appropximate factorization improves sample efficiency by reducing the problem's dimensionality. While when we require higher accuracy (smaller $\varepsilon$), a more careful factorization scheme with generally fewer components should be chosen. For instance, grouping \( w_t \) and \( p_t \) within a single component will reduce the bias \( \mathcal{E}_\omega \), though at the cost of increased sample requirements. This trade-off will be revisited again in our numerical simulations in Section \ref{Synthetic MDP Tasks}. 
As a final note, when $K_\omega =1$, this theorem recovers the classical sample complexity result (without decomposition) established in \citep{azar2012sample}.

\section{Model-Free Reinforcement Learning with Approximate Factorization}
\label{Sec_model-free}
In this section, we focus on designing model-free RL algorithms to solve MDPs with approximate factorization, offering a more flexible paradigm than model-based RL that eliminates the need for explicitly estimating the environment model (i.e., transitions and reward functions). Towards this, we introduce the \emph{Variance-Reduced Q-Learning with Approximate Factorization} (VRQL-AF) algorithm, the first provably near-optimal model-free approach for both our problem and factored MDPs to the best of knowledge.

\subsection{Warn-Up: Classical Q-Learning}
Our proposed algorithm is a variant of Q-learning \citep{watkins1992q}, one of the most well-known model-free algorithms. With access to a generative model, classical Q-learning (also known as synchronous Q-learning \citep{watkins1992q,bertsekas1996neuro}) learns by approximating the optimal Q-function through stochastic approximation. Specifically, at each iteration \( t \), given the current estimate \( \widehat{Q}_t \), Q-learning samples each state-action pair \( (s, a) \) once,  obtaining the corresponding reward $r(s,a)$ and a random next state $s'$ according to the unknown transition dynamics. Using these samples, the empirical Bellman operator at time step $t$ is constructed as
\begin{align*}
    [\widehat{\mathcal{H}}_t(Q)](s, a) =    r(s,a) + \gamma \max_{a'} Q(s', a'),\quad \forall\,(s,a)\in\mathcal{S}\times \mathcal{A}, Q\in \mathbb{R}^{|\mathcal{S}||\mathcal{A}|}.
\end{align*}
The Q-function estimate $\hat{Q}_t$ is then updated using a small-stepsize version of the empirical Q-value iteration:
\begin{align*}
    \widehat{Q}_{t+1} = \widehat{Q}_{t} +\eta_t(\widehat{\mathcal{H}}_t(\widehat{Q}_{t})-\widehat{Q}_{t}),
\end{align*}
where \( \eta_t \in (0, 1) \) is the learning rate. Such an algorithm can also be viewed as a stochastic approximation algorithm for solving the Bellman optimality equation \citep{bertsekas1996neuro,tsitsiklis1994asynchronous}.

This algorithm updates the Q-value \( \widehat{Q}_t(s, a) \) by incorporating information from the empirical Bellman operator $\widehat{H}_t(\cdot)$, which estimates the expected return for each action based on sampled next states and rewards. Over time, with appropriately chosen learning rates and a sufficient number of iterations, the Q-function converges asymptotically to the optimal Q-function \( Q^* \) \citep{tsitsiklis1994asynchronous}.

\subsection{Algorithm Design}
To address the challenge of low sample efficiency, we propose \emph{Variance-Reduced Q-Learning with Approximate Factorization} (VRQL-AF), which incorporates two key modifications over classical Q-learning: (1) a factored empirical Bellman operator design and (2) a variance-reduced Q-iteration.

The factored Bellman operator is designed to replace the standard empirical Bellman operator in order to enhance sampling efficiency by leveraging the structure. Its construction is outlined in Algorithm \ref{Empirical Bellman Operator Generation}. Specifically, in vanilla Q-learning, we need to sample all state-action pairs to generate a single empirical Bellman operator, which requires $|\mathcal{S}||\mathcal{A}|$ samples and is highly inefficient. Our approach aims to reduce the number of required samples by exploiting the structure of the problem through a factorization scheme, thus enabling the repeated use of the same samples. 

\begin{algorithm}
\caption{Empirical Factored Bellman Operator Generation}
\label{Empirical Bellman Operator Generation} 
\begin{algorithmic}[1]
\STATE \textbf{Input:}  Factorization Scheme $\omega = (\omega^P,\omega^R)$.
\STATE {Solve the optimal Synchronous sampling scheme represented by $\{\kappa_p, \{\mathcal{G}_k\}_{k\in [\kappa_p]}\}$;}
\STATE Construct the sampling sets $\{X^P_{\mathcal{G}_k}\}_{k\in[\kappa_p]}$;
\FOR{$k$ = $1,2,\cdots,\kappa_p$}
\STATE Sample the transition from each state-action pair $x\in X_{\mathcal{G}_k}^P$ once, and obtain the next state by $\{s^k_{x}\}_{k\in [\kappa_p]}$.
\ENDFOR
\STATE Get empirical Bellman operator $\widehat{\mathcal{H}}$ (for any $Q$) following Eq. \eqref{Empirical Bellman Operator};
\STATE \textbf{Output:}  Empirical Bellman operator $\widehat{\mathcal{H}_i}$;
\end{algorithmic}
\end{algorithm}

Specifically, we adopt the synchronous sampling scheme \( \{\kappa_p, \{\mathcal{G}_k\}_{k\in [\kappa_p]}\} \) discussed in Section \ref{Multi-Component Factorized Synchronous Sampling}, and then obtain the corresponding $\kappa_p$ sampling sets \( \{X^P_{\mathcal{G}_k}\}_{k\in[\kappa_p]} \). For each \( k \in [\kappa_p] \), we sample over each entry \( x \) from the set \( \mathcal{X}_{\mathcal{G}_k}^P \) once, obtaining the next state $s^k_{x}$ randomly generated based on transition and corresponding reward. This gives us the samples \( \{ s^k_{x}\}_{x \in \mathcal{X}_{\mathcal{G}_k}^P, k \in [\kappa_p]} \). These samples are sufficient to construct the factored empirical Bellman operator \( \widehat{\mathcal{H}}(Q) \), which is defined as follows:
\begin{align}
    [\widehat{\mathcal{H}}(Q)](\Tilde{s}, \Tilde{a}) = r(\Tilde{s}, \Tilde{a}) + \gamma \max_{a'} Q(s'_{\Tilde{x},1}[Z^S_{1}], s'_{\Tilde{x},2}[Z^S_{2}], \dots, s'_{\Tilde{x},K_\omega}[Z^S_{K_\omega}], a') \label{Empirical Bellman Operator}
\end{align}
for all $\Tilde{x}:=(\Tilde{s}, \Tilde{a}) \in \mathcal{S} \times \mathcal{A}$,
where for each \( j \in [K_\omega] \), \( s'_{\Tilde{x},j} \) is the sample satisfying:
\begin{align*}
    s'_{\Tilde{x},j} \in \{s^{k}_{x}\ \mid \ x\in\mathcal{X}_{\mathcal{G}_k}^P, \ x[Z_j^P] = \Tilde{x}[Z_j^P]\}.
\end{align*}
This ensures that the samples from different components are used to compute the transitions for all dimensions of the Bellman operator. By using the synchronous sampling scheme, we cover the transitions for all components, allowing us to estimate the entire factored Bellman operator efficiently.

The variance reduction technique aims to reduce the variance in the update process, thereby decreasing the number of iterations required for convergence \citep{sidford2018near, sidford2023variance, wainwright2019variance}. Armed with both schemes, our proposed VRQL-AF is summarized in Algorithm \ref{Variance-Reduced Q-Learning}. Specifically, the algorithm employs a two-loop structure. In each outer epoch $\tau$, a new reference is first constructed, followed by an inner loop that iteratively updates the Q-value estimate using both the reference and real-time empirical Bellman operators.

\begin{algorithm}
\caption{Variance-Reduced Q-Learning with Approximate Factorization (VRQL-AF)}
\label{Variance-Reduced Q-Learning} 
\begin{algorithmic}[1]
\renewcommand{\algorithmicrequire}{ \textbf{Input}:}
\REQUIRE Number of epochs $T$; Epoch length $M$; Reference sample size $N_\tau$ ($\tau\leq T$); Learning rate $\eta_t$.
\renewcommand{\algorithmicrequire}{ \textbf{Output}:}
\REQUIRE $\epsilon$-accurate Q Function estimation $\overline{Q}_M$ with $1-\delta$ probability;
\end{algorithmic}
\begin{algorithmic}[1]
\STATE Initialize $\overline{Q}_0(s,a) = 0$ for all $s$ and $a$; 
\FOR{epoch $\tau = 1, 2, ..., T$}
\STATE Generate $N_\tau$ factored empirical Bellman operators $\{\widehat{\mathcal{B}_i}\}_{i=1,2,...,N_\tau}$ though Algorithm \ref{Empirical Bellman Operator Generation};
\STATE Calculate the reference Bellman operator:
\begin{align*}
    \overline{\mathcal{H}}_\tau(\overline{Q}_{\tau-1}) = \frac{1}{N_\tau}\sum\nolimits_{i=1}^{N_\tau}\widehat{\mathcal{B}}_i(\overline{Q}_{\tau-1}).
\end{align*}
\STATE Initialize $Q_0 = \overline{Q}_{\tau-1}$;
\FOR{$t = 0, ..., M-1$}
\STATE Generate factored empirical Bellman operator $\widehat{\mathcal{H}}_{t}$ though Algorithm \ref{Empirical Bellman Operator Generation};
\STATE Compute the variance-reduced update:
\begin{align}
    Q_{t+1} = Q_{t} + \eta_{t} \left(\widehat{\mathcal{H}}_{t}(Q_{t}) - \widehat{\mathcal{H}}_{t}(\overline{Q}_{\tau-1}) + \overline{\mathcal{H}}_{\tau}(\overline{Q}_{\tau-1})-Q_{t}\right); \label{variance-reduced Q update}
\end{align}
\ENDFOR
\STATE $\overline{Q}_{\tau} = Q_{M}$;
\ENDFOR
\RETURN $\overline{Q}_{T}$;
\end{algorithmic}
\end{algorithm}

In particular, in each outer loop $\tau$, we first compute a \emph{reference Bellman operator} \( \overline{\mathcal{H}}(\overline{Q}_{\tau-1}) \), which is an average of \( N_\tau \) factored empirical Bellman operators \( \widehat{\mathcal{B}}_i \) generated through a sampling process applied to the fixed Q-value function \( \overline{Q}_{\tau-1} \) (obtained from previous steps). This reference Bellman operator provides a low-variance estimate of the Q-value function.

Next, the inner loop performs variance-reduced Q-iterations. In each iteration \( t \), we generate a new factored empirical Bellman operator \( \widehat{\mathcal{H}}_{t-1} \) using new samples. In Line~\ref{variance-reduced Q update}, the update of the Q-function uses a linear combination of both the high-variance unbiased stochastic factored empirical Bellman operator $\widehat{\mathcal{H}}_{t}(Q_{t})$ and a term to reduce the variance of the stochastic process using the constructed reference. 
This combination of reference and real-time operators allows the algorithm to reduce the number of iterations required to converge to an accurate Q-value function. Meanwhile, there are four parameters to be designed: the number of epochs $T$ in the outer loop, the number of iteration $M$ in the inner loop, the number of samples $N_\tau$ to generate the reference Bellman operators in each epoch, and the learning rate $\eta_t$ at each iteration, which will be specified in the sample complexity guarantee part.

\subsection{Sample Complexity Guarantees}
We now provide a bound on the sample complexity of Algorithm \ref{Variance-Reduced Q-Learning}; the proof is provided in Appendix \ref{proof_to_thm3}:
{
\begin{theorem}
\label{thm_variance-reduced-Q-learning}
    Given any approximate factorization scheme \( \omega \), a confidence level \( \delta > 0 \), and the desired  accuracy level \( \epsilon \in (0,1) \). Let \( \mathcal{E}_\omega = \gamma(1-\gamma)^{-2} \Delta^P_\omega + (1-\gamma)^{-1} \Delta^R_\omega \), $T = c_1 \log\left((1-\gamma)^{-1}\epsilon^{-1}\right)$, $M = c_2{\log\left({6T|\mathcal{X}[\cup_{k=1}^{K_\omega} Z_k^P]|}{(1-\gamma)^{-1}\delta^{-1}}\right)}{(1-\gamma)^{-3}}$, $N_\tau = c_34^\tau{\log\left(6T|\mathcal{X}[\cup_{k=1}^{K_\omega} Z_k^P]|\right)}{(1-\gamma)^{-2}}$, and $\eta_t = {(1+(1-\gamma)(t+1))^{-1}}$. With probability at least \( 1 - \delta \), the output Q-function $\widehat{Q}_\omega^* $ from Algorithm \emph{VRQL-AF} satisfies 
    \begin{align*}
        \|Q^* - \widehat{Q}^*_\omega\|_\infty \leq \epsilon+\mathcal{E}_\omega,
    \end{align*}
    provided that the total number of samples, denoted by $D_\omega$, satisfies the following lower bound: 
    \begin{align*}
        D_\omega \geq  \frac{c_0(\sum_{k\in[\kappa_p]}|\mathcal{X}[Z^P_k]|)\log\left({c_1|\mathcal{X}[\cup_{k=1}^{K_\omega} Z_k^P]|}{(1-\gamma)^{-1}\delta^{-1}}\right)\log\left({c_2}{(1-\gamma)^{-1}\epsilon^{-1}}\right)}{{\epsilon^2(1-\gamma)^3}}+\sum_{i\in[\kappa_r]}|\mathcal{X}[Z^R_i]|,
    \end{align*}
    where $c_0, c_1, c_2 >0$ are universal constants.
\label{main_thm}
\end{theorem}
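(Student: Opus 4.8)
The plan is to follow the standard variance-reduced stochastic approximation template (as in \citep{wainwright2019variance}), but adapted to the factored empirical Bellman operator, where the key new difficulty is controlling the correlations introduced by synchronous/factorized sampling. First I would establish a decomposition of the final error into a statistical/approximation bias term and an optimization term. Since the factored empirical Bellman operator $\widehat{\mathcal{H}}$ is built from the approximately factorized kernel $\widehat{P}$, its fixed point is not $Q^*$ but rather the optimal Q-function $\widehat{Q}^*_\omega$ of the approximately factorized model. I would first invoke a perturbation bound showing $\|Q^* - \widehat{Q}^*_\omega\|_\infty \leq \mathcal{E}_\omega = \gamma(1-\gamma)^{-2}\Delta^P_\omega + (1-\gamma)^{-1}\Delta^R_\omega$, which follows by bounding how the fixed point of a $\gamma$-contraction shifts under a perturbation of size $\Delta^P_\omega$ in the kernel and $\Delta^R_\omega$ in the reward, using the $\ell_\infty$-contraction property of the Bellman operator. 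This reduces the task to showing the algorithm drives $\|\widehat{Q}^*_\omega - \overline{Q}_T\|_\infty \leq \epsilon$.

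Next I would set up the epoch-wise recursion. For a fixed epoch $\tau$, conditioned on the reference $\overline{Q}_{\tau-1}$, the inner loop is a stochastic approximation iteration whose expected operator is the \emph{population} factored Bellman operator $\widehat{\mathcal{H}}_\omega$ (the expectation of $\widehat{\mathcal{H}}_t$ over the fresh samples), which is a $\gamma$-contraction with fixed point $\widehat{Q}^*_\omega$. The variance-reduced update in Eq.~\eqref{variance-reduced Q update} replaces the raw stochastic error by the difference $\big(\widehat{\mathcal{H}}_t(Q_t) - \widehat{\mathcal{H}}_t(\overline{Q}_{\tau-1})\big) - \big(\widehat{\mathcal{H}}_\omega(Q_t) - \widehat{\mathcal{H}}_\omega(\overline{Q}_{\tau-1})\big)$ plus the reference bias $\overline{\mathcal{H}}_\tau(\overline{Q}_{\tau-1}) - \widehat{\mathcal{H}}_\omega(\overline{Q}_{\tau-1})$. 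I would bound the first (Lipschitz-in-$Q$) term by the martingale/variance analysis of the iterates, exploiting that its magnitude scales with $\|Q_t - \overline{Q}_{\tau-1}\|_\infty$ and vanishes as the iterates concentrate, and bound the reference bias uniformly using a concentration inequality over the $N_\tau$ averaged operators. The choices $N_\tau = \Theta(4^\tau)$ and the harmonic stepsize $\eta_t = (1+(1-\gamma)(t+1))^{-1}$ are exactly tuned so that each epoch contracts the error by a constant factor (say halving the effective accuracy), giving geometric decay $\|\widehat{Q}^*_\omega - \overline{Q}_\tau\|_\infty \lesssim 2^{-\tau}$ after $T = \Theta(\log(\epsilon^{-1}(1-\gamma)^{-1}))$ epochs. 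The per-epoch contraction lemma, combining the deterministic contraction of the expected operator with the controlled variance of the stochastic correction, is the technical core.

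The main obstacle I anticipate is the concentration analysis of the factored empirical Bellman operator, because the synchronous and factorized sampling scheme makes the entries of $\widehat{\mathcal{H}}_t$ across different state-action pairs \emph{statistically correlated}: a single sampled next-state is reused across many $\tilde{x}$ sharing the same scope variable $x[Z_j^P]$, and the next-state coordinate $s'_{\tilde{x},j}[Z_j^S]$ entering the $\max$ in Eq.~\eqref{Empirical Bellman Operator} is assembled from components drawn in different sampling groups $\mathcal{G}_k$. I would handle this by fixing the reference $Q = \overline{Q}_{\tau-1}$ and analyzing the variance of $[\widehat{\mathcal{H}}(Q)](\tilde{x})$ coordinatewise: for each fixed $\tilde{x}$, the quantity depends on the $K_\omega$ independent component-samples, so a bounded-differences/Hoeffding argument gives a per-coordinate deviation of order $(1-\gamma)^{-1}\sqrt{\log(\cdot)/N_\tau}$, while the union bound over $\tilde{x}$ only costs a $\log|\mathcal{X}[\cup_k Z_k^P]|$ factor rather than $\log(|\mathcal{S}||\mathcal{A}|)$ because distinct operators are determined by the (low-dimensional) joint sampling configuration. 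Crucially, the correlations across coordinates do \emph{not} inflate the $\ell_\infty$ bound, since $\ell_\infty$ concentration only requires per-coordinate control plus a union bound. Assembling these pieces—the approximation bias $\mathcal{E}_\omega$, the geometric epoch contraction, and the per-epoch sample count $N_\tau + M$ over $T$ epochs combined with the per-operator sampling cost $\sum_{k\in[\kappa_p]}|\mathcal{X}[Z_k^P]|$ from the cost-optimal synchronous sampling—yields the stated total sample complexity $D_\omega$ and completes the proof.
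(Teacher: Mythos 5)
Your overall architecture matches the paper's: you decompose the error into the factorization bias $\mathcal{E}_\omega$ (bounded by a fixed-point perturbation argument, as in Lemma \ref{bound_error}) plus the algorithmic error against the population factored fixed point $Q^*_\omega$; you run an epoch-wise contraction with $N_\tau=\Theta(4^\tau)$ and harmonic stepsizes; and you handle concentration of the factored empirical Bellman operator by per-coordinate bounds plus a union bound over the at most $|\mathcal{X}[\cup_k Z_k^P]|$ distinct entries, which is exactly how the paper controls the cross-component correlations (via the set $\mathcal{X}^*$). These pieces are all sound.

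However, there is a genuine gap in how you claim to reach the $(1-\gamma)^{-3}$ horizon dependence. As written, your per-epoch argument bounds the reference bias ``uniformly using a concentration inequality over the $N_\tau$ averaged operators,'' i.e.\ a Hoeffding-type bound of order $(1-\gamma)^{-1}\sqrt{\log(\cdot)/N_\tau}$ per coordinate, and then asserts constant-factor contraction over $T=\Theta(\log(\epsilon^{-1}(1-\gamma)^{-1}))$ epochs. Tracing the sample count through that schedule gives $\sum_\tau N_\tau \asymp 4^T\log(\cdot)/(1-\gamma)^2 \asymp \log(\cdot)/\big(\epsilon^2(1-\gamma)^4\big)$ — one factor of $(1-\gamma)^{-1}$ worse than the theorem. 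The paper avoids this with two ingredients your plan omits: (i) a two-phase epoch schedule, first running $T_1=\lceil\log_2(1/\sqrt{1-\gamma})\rceil$ coarse epochs to drive the error to $1/\sqrt{1-\gamma}$ (Lemma \ref{key_lemma_model_free}), and only then running the refined halving phase (Lemma \ref{Refined_analysis}) so that $4^{T_2}\asymp 1/\big((1-\gamma)\epsilon^2\big)$ rather than $1/\big((1-\gamma)^2\epsilon^2\big)$; and (ii) inside the refined phase, a variance-aware (Bernstein) bound on the reference error combined with the total-variance identity of Lemma \ref{Lemma 4-basic}, which yields $\|(I-\gamma \widehat{P}^{\pi})^{-1}\sqrt{\mathrm{Var}_{\widehat{P}}(\widehat{V}^{\pi})}\|_\infty\lesssim (1-\gamma)^{-3/2}$ instead of the naive $(1-\gamma)^{-2}$. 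Implementing (ii) requires introducing the intermediate fixed point $\widetilde{Q}$ of the population variance-reduced operator and comparing it to $Q^*_\omega$ via the perturbed-reward lemma (Lemma \ref{lemmaA4} and Lemma \ref{lemma_reference_B2}); none of this is sketched in your proposal, and without it the stated bound does not follow. (A minor notational slip: the fixed point of the population factored operator is $Q^*_\omega$, not the algorithm output $\widehat{Q}^*_\omega$.)
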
}

In evaluating the performance of VRQL-AF, we note that the estimation bias $\mathcal{E}_\omega$ is the same as in our model-based algorithm in Theorem \ref{thm:summary}.

Therefore, we focus our discussion on the sample complexity, in the case of perfect factorization $\mathcal{E}_\omega = 0$ for fair comparisons. In this case, our sample complexity required to achieve an 
$\varepsilon$-optimal policy is of the following order:
\begin{align}
    \widetilde{O}\left( \frac{\sum\nolimits_{k \in [\kappa_p]} |\mathcal{X}[Z^P_k]|}{\epsilon^2(1-\gamma)^3} + \sum\nolimits_{i \in [\kappa_r]} |\mathcal{X}[Z^R_i]| \right), \label{eq:eq:model-free-results}
\end{align}
which achieves the same order as our model-based algorithm in \eqref{eq:model-based-results}. This indicates that our proposed model-free VRQL-AF can also overcome the persistent curse of dimensionality challenges in standard RL, as already discussed for our model-based algorithm. For more details, please refer to Section~\ref{para:sample-complexity-discussion}.

\paragraph{New Frontiers in Model-Free Paradigm for Factorizable MDPs.}
VRQL-AF stands out as the first near-optimal model-free algorithm for our approximate factorization framework and also the well-known factored MDPs, which breaks the curse of dimensionality by effectively leveraging problem structure. The technical foundation of these results comes from two aspects: 1) We propose a tailored factored empirical Bellman operator based on the factorization structure and a synchronous sampling scheme. This operator updates the Q-value estimation in a sample-efficient manner by effectively leveraging the scope structures inherent across multiple sub-components of the transition kernel; 2) we integrate advanced variance-reduction techniques with the factored Bellman operator to minimize the oscillation of the stochastic process and accelerate convergence, which requires meticulous design to control the complex cross-dimensional correlations introduced by the factored empirical Bellman operator, ensuring that the variance during the variance-reduced Q-iteration remains manageable.

\section{Numerical Experiments}
\label{Sec_Numerical Experiments}
We now present numerical experiments to evaluate our proposed model-based and model-free algorithms based on approximate factorization with comparisons to state-of-the-art RL algorithms. We focus on two types of tasks: (i) synthetic MDP tasks and (ii) an electricity storage control problem in power system operation. 

\subsection{Synthetic MDP Tasks}
\label{Synthetic MDP Tasks}
The two synthetic MDP tasks that we consider are distinguished by their transition kernels: one with a perfectly factorizable transition kernel and another with an imperfectly factorizable one. For the former task, we consider a three-dimensional state space, where each state is represented as \( s = (s[1], s[2], s[3]) \), along with a one-dimensional action \( a \). The transitions for \( s[1] \) and \( s[2] \) are independent of other components, while \( s[3] \) is influenced by both its own state and the action \( a \). Each substate and the action space consists of 5 discrete elements. Both the transition kernel and the reward function are randomly generated.
We conduct 50 trials to compare our model-based and model-free algorithms (VRQL-AF) with approximate factorization against traditional model-based RL and a variance-reduced Q-learning method without factorization. Figures. \ref{Performance Evaluation_Q_error} and \ref{Decompose_case1} illustrate the \( l_\infty \)-norm of the \( l_\infty \)-norm of the \( Q \)-function error for both settings, respectively, as a function of the number of samples.
As shown in the two figures, our approximate factorization methods (depicted in red) consistently exhibit lower \( Q \)-function error across varying sample sizes compared to the vanilla RL approaches (depicted in blue). This demonstrates that our approach significantly reduces sample complexity, showcasing superior sample efficiency. These findings underline the effectiveness of our factorization-based framework in enhancing learning efficiency for multi-dimensional MDPs.

\begin{figure}[htb]
    \centering
    \subfigure[Model-based RL]
    {
\includegraphics[width=0.40\linewidth]{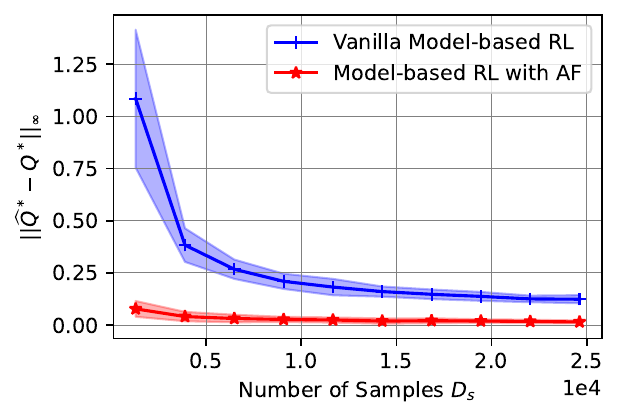}
\label{Performance Evaluation_Q_error}
}
    \subfigure[Model-free RL]
    {
\includegraphics[width=0.40\linewidth]{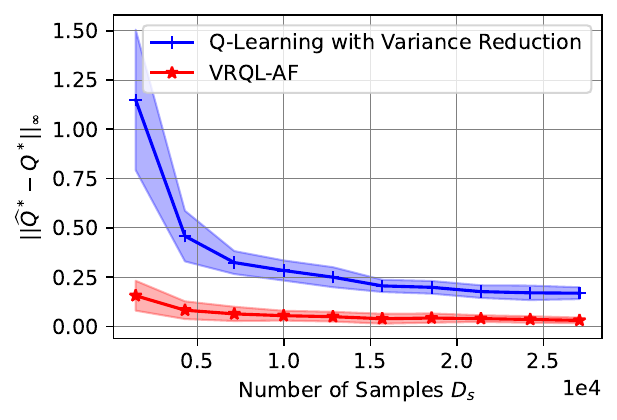}
        \label{Decompose_case1}
    }
    
    \caption{Performance on Perfectly Factorizable MDPs}
    \label{Show1}
    
\end{figure}

The second task involves MDPs with imperfectly factorizable transition kernels. Here, the state consists of 4 substates, each taking one of 5 possible values. To investigate the trade-off between sample complexity and estimation error, we evaluate three factorization schemes: (1) full factorization with \( K_\omega = 4 \), (2) partial factorization with \( K_\omega = 2 \), and (3) no factorization with (\( K_\omega = 1 \)), corresponding to vanilla RL. We conduct 200 trials to compare their performance with different sample amounts.
As shown in Figure \ref{Show2}, our model-based and model-free RL algorithms exhibit distinct convergence behaviors under these different factorization levels. The results indicate two key breaking points $(D_1, \mathcal{E}_1)$ and $(D_2, \mathcal{E}_2)$ in performance trade-offs (illustrated in Figure \ref{Imperfectly Factorizable MDP2}), which inform the optimal factorization choice depending on the sample availability. Specifically, in the small sample range with $D_s\leq D_1$, the full factorization scheme (\( K_\omega = 4 \)) converges quickly but has a higher asymptotic error, making it suitable when the samples are limited or the required accuracy is not high.
In the intermediate sample range with $D_1<D_s\leq D_2$, the partial factorization scheme (\( K_\omega = 2 \)) provides a balanced trade-off, achieving moderate convergence speed and a lower asymptotic error. This scheme is advantageous when a compromise between sample efficiency and accuracy is needed.
In the large sample range with $D_s>D_2$, the no factorization scheme (\( K_\omega = 1 \)) results in the smallest asymptotic error but converges slowly. This approach is best suited when very high precision is required, and a large sample size is available.
Therefore, it highlights the benefits of selecting an appropriate factorization level to match the sample size and desired accuracy. The flexibility in choosing among different factorization strategies enables the optimization of performance based on specific requirements, making our approach adaptable to various settings.

\begin{figure}[tb]
    \centering
    \subfigure[Model-based RL]
    {
\includegraphics[width=0.40\linewidth]{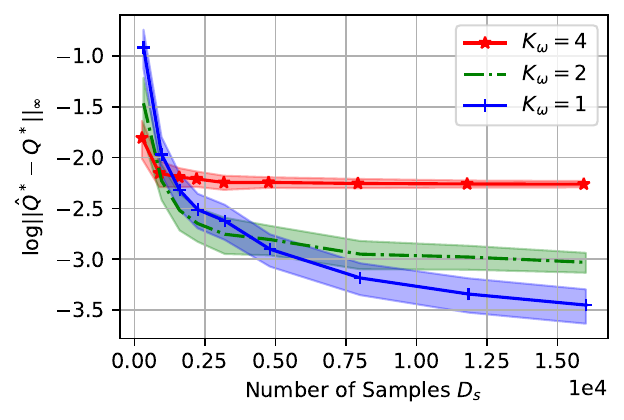}
\label{Imperfectly Factorizable MDP1}
}
    \subfigure[Model-free RL]
    {
\includegraphics[width=0.40\linewidth]{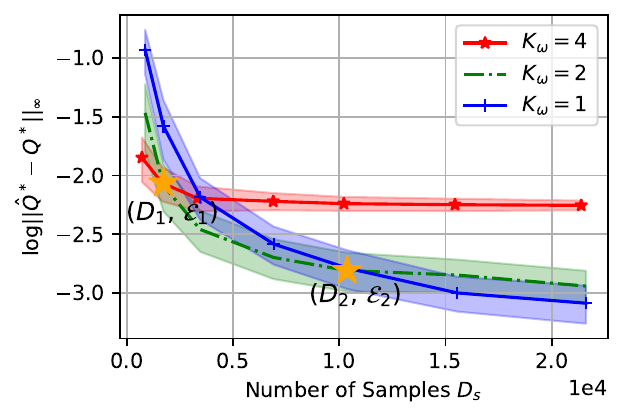}
        \label{Imperfectly Factorizable MDP2}
    }
    
    \caption{Performance on Imperfectly Factorizable MDPs}
    \label{Show2}
    
\end{figure}

\subsection{Wind Farm Storage Control Problem}
We next evaluate the performance of our proposed approach on the wind farm storage control problem introduced in Section \ref{An Illustrative Example}. As depicted in Figure \ref{storage control model}, this problem involves managing energy storage systems to mitigate mismatches between variable wind power generation and demand, thereby minimizing penalty costs.

The control actions consist of charging and discharging decisions influenced by real-time wind generation and unit penalty prices. Constraints include storage capacity limits, prohibiting simultaneous charging and discharging, and lossy energy dynamics. For a detailed model description, please refer to Appendix \ref{detail_storage control}. The total state-action space in this problem is $10^4$.

\begin{figure}[t]
\centerline{\includegraphics[width=0.63\linewidth]{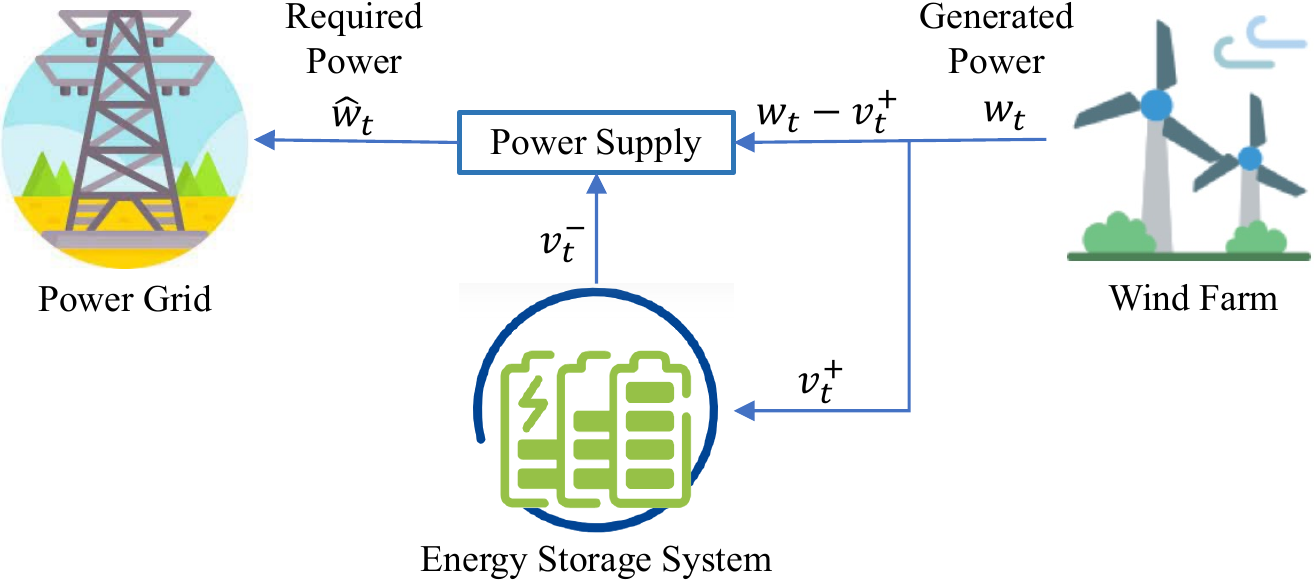}}
\caption{Wind Farm-equipped Storage Control.}
\label{storage control model}

\end{figure}

\begin{figure}
    \centering
    \subfigure[Convergence of Q-function Estimation]
    {
\includegraphics[width=0.40\linewidth]{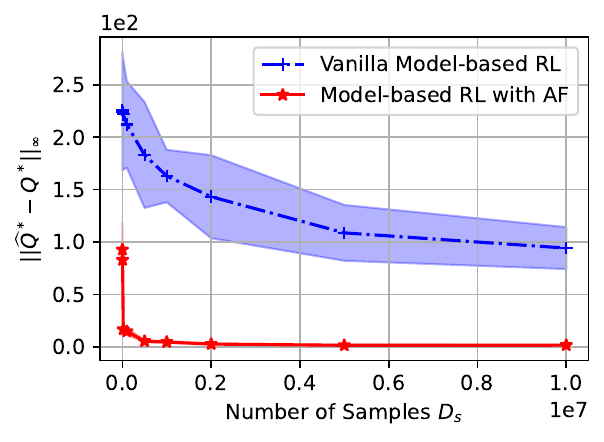}
        \label{storage_control_q_error}
    }
    \subfigure[Economic Performance]
    {
\includegraphics[width=0.38\linewidth]{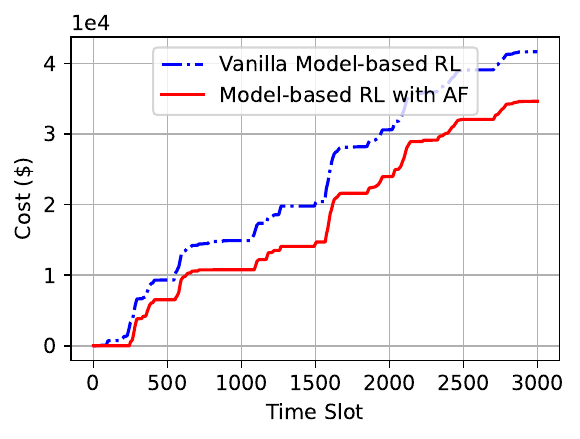}
        \label{storage_control_cost}
    }
    
    \caption{Performance on Storage Control}
    \label{Performance for Storage Control}
    
\end{figure}

Figure \ref{Performance for Storage Control} illustrates the performance of our approach, where we decompose the MDP into three components as described earlier. Specifically, Figure \ref{storage_control_q_error} demonstrates that our approximate factorization-based algorithm converges significantly faster than the vanilla model-based approach by leveraging the model's structural properties. Additionally, our method exhibits considerably lower variance, indicating more stable performance for different sample conditions.
Figure \ref{storage_control_cost} presents the economic benefits of our algorithm using a sample size of 60,000 data points (typically taking the wind farm a year to collect), evaluating cumulative system penalty costs over 3,000 time slots. The results show that our approach reduces costs by 19.3$\%$ compared to the vanilla model-based method, highlighting its enhanced effectiveness in managing complex problems with large-scale state spaces.

\section{Conclusion}
\label{Sec_Conclusion}
Our work introduces a comprehensive framework for approximate factorization in MDPs, effectively addressing the curse of dimensionality that typically hinders scalability in large-scale RL problems. By decomposing MDPs into smaller and manageable components, our approach significantly enhances learning efficiency. We develop a synchronous sampling method to optimize the sampling process across multiple components and propose a corresponding model-based RL algorithm that fully leverages the factorization structure. Additionally, we design variance-reduced Q-learning algorithms incorporating a factored empirical Bellman operator, which facilitates efficient learning in an online setting.

Our theoretical findings demonstrate significant improvements in sample complexity, surpassing existing bounds for both vanilla and factored MDPs. Numerical simulations on synthetic and real-world high-dimensional problems confirm the practical advantages of our factorization-based approach, indicating its potential to transform scalability in RL. This framework not only provides a versatile tool for large-scale RL but also opens up promising directions for future work, including exploring other forms of factorization with provable guarantees and adapting the approach to varied RL environments.

\bibliographystyle{apalike}
\bibliography{arxiv}

\begin{thebibliography}{}

\bibitem[Agarwal et~al., 2019]{agarwal2019reinforcement}
Agarwal, A., Jiang, N., Kakade, S.~M., and Sun, W. (2019).
\newblock Reinforcement learning: Theory and algorithms.
\newblock {\em CS Dept., UW Seattle, Seattle, WA, USA, Tech. Rep}, 32.

\bibitem[Agarwal et~al., 2020]{agarwal2020model}
Agarwal, A., Kakade, S., and Yang, L.~F. (2020).
\newblock Model-based reinforcement learning with a generative model is minimax optimal.
\newblock In {\em Conference on Learning Theory}, pages 67--83. PMLR.

\bibitem[Azar et~al., 2012]{azar2012sample}
Azar, M.~G., Munos, R., and Kappen, H.~J. (2012).
\newblock On the sample complexity of reinforcement learning with a generative model.
\newblock In {\em Proceedings of the 29th International Coference on International Conference on Machine Learning}, pages 1707--1714.

\bibitem[Azar et~al., 2017]{azar2017minimax}
Azar, M.~G., Osband, I., and Munos, R. (2017).
\newblock Minimax regret bounds for reinforcement learning.
\newblock In {\em International Conference on Machine Learning}, pages 263--272. PMLR.

\bibitem[Banach, 1922]{banach1922operations}
Banach, S. (1922).
\newblock Sur les op{\'e}rations dans les ensembles abstraits et leur application aux {\'e}quations int{\'e}grales.
\newblock {\em Fund. math}, 3(1):133--181.

\bibitem[Bertsekas and Tsitsiklis, 1996]{bertsekas1996neuro}
Bertsekas, D. and Tsitsiklis, J.~N. (1996).
\newblock {\em Neuro-dynamic programming}.
\newblock Athena Scientific.

\bibitem[Bhandari et~al., 2018]{bhandari2018finite}
Bhandari, J., Russo, D., and Singal, R. (2018).
\newblock A finite time analysis of temporal difference learning with linear function approximation.
\newblock In {\em Conference on learning theory}, pages 1691--1692. PMLR.

\bibitem[Boutilier et~al., 1999]{boutilier1999decision}
Boutilier, C., Dean, T., and Hanks, S. (1999).
\newblock Decision-theoretic planning: Structural assumptions and computational leverage.
\newblock {\em Journal of Artificial Intelligence Research}, 11:1--94.

\bibitem[Boutilier et~al., 1995]{boutilier1995exploiting}
Boutilier, C., Dearden, R., Goldszmidt, M., et~al. (1995).
\newblock Exploiting structure in policy construction.
\newblock In {\em IJCAI}, volume~14, pages 1104--1113.

\bibitem[{California ISO}, 2021]{caiso}
{California ISO} (2021).
\newblock {Electricity Price Data}.
\newblock \url{https://www.energyonline.com/Data/}.
\newblock Online; accessed on 20 December 2022.

\bibitem[Campion et~al., 2018]{campion2018uav}
Campion, M., Ranganathan, P., and Faruque, S. (2018).
\newblock Uav swarm communication and control architectures: a review.
\newblock {\em Journal of Unmanned Vehicle Systems}, 7(2):93--106.

\bibitem[Charpentier et~al., 2021]{charpentier2021reinforcement}
Charpentier, A., Elie, R., and Remlinger, C. (2021).
\newblock Reinforcement learning in economics and finance.
\newblock {\em Computational Economics}, pages 1--38.

\bibitem[Chen et~al., 2020]{chen2020efficient}
Chen, X., Hu, J., Li, L., and Wang, L. (2020).
\newblock Efficient reinforcement learning in factored {MDPs} with application to constrained {RL}.
\newblock In {\em International Conference on Learning Representations}.

\bibitem[Chen et~al., 2022]{chen2022reinforcement}
Chen, X., Qu, G., Tang, Y., Low, S., and Li, N. (2022).
\newblock Reinforcement learning for selective key applications in power systems: Recent advances and future challenges.
\newblock {\em IEEE Transactions on Smart Grid}, 13(4):2935--2958.

\bibitem[Chen et~al., 2023]{chen2023target}
Chen, Z., Clarke, J.-P., and Maguluri, S.~T. (2023).
\newblock Target network and truncation overcome the deadly triad in {$Q$}-learning.
\newblock {\em SIAM Journal on Mathematics of Data Science}, 5(4):1078--1101.

\bibitem[Chen et~al., 2024a]{chen2024lyapunov}
Chen, Z., Maguluri, S.~T., Shakkottai, S., and Shanmugam, K. (2024a).
\newblock {A Lyapunov theory for finite-sample guarantees of Markovian stochastic approximation}.
\newblock {\em Operations Research}, 72(4):1352--1367.

\bibitem[Chen et~al., 2024b]{chen2023two}
Chen, Z., Zhang, K., Mazumdar, E., Ozdaglar, A., and Wierman, A. (2024b).
\newblock Two-timescale {$Q$}-learning with function approximation in zero-sum stochastic games.
\newblock {\em The 25th ACM Conference on Economics and Computation}.

\bibitem[Dai et~al., 2018]{dai2018sbeed}
Dai, B., Shaw, A., Li, L., Xiao, L., He, N., Liu, Z., Chen, J., and Song, L. (2018).
\newblock Sbeed: Convergent reinforcement learning with nonlinear function approximation.
\newblock In {\em International Conference on Machine Learning}, pages 1125--1134. PMLR.

\bibitem[Erd{\H{o}}s and Hajnal, 1966]{erdHos1966chromatic}
Erd{\H{o}}s, P. and Hajnal, A. (1966).
\newblock On chromatic number of graphs and set-systems.
\newblock {\em Acta Math. Acad. Sci. Hungar}, 17(61-99):1.

\bibitem[Even-Dar et~al., 2003]{even2003learning}
Even-Dar, E., Mansour, Y., and Bartlett, P. (2003).
\newblock Learning rates for q-learning.
\newblock {\em Journal of machine learning Research}, 5(1).

\bibitem[Fan et~al., 2020]{fan2020theoretical}
Fan, J., Wang, Z., Xie, Y., and Yang, Z. (2020).
\newblock {A theoretical analysis of deep Q-learning}.
\newblock In {\em Learning for dynamics and control}, pages 486--489. PMLR.

\bibitem[Gary and Johnson, 1979]{gary1979computers}
Gary, M.~R. and Johnson, D.~S. (1979).
\newblock Computers and intractability: A guide to the theory of np-completeness.

\bibitem[Gheshlaghi~Azar et~al., 2013]{gheshlaghi2013minimax}
Gheshlaghi~Azar, M., Munos, R., and Kappen, H.~J. (2013).
\newblock Minimax {PAC} bounds on the sample complexity of reinforcement learning with a generative model.
\newblock {\em Machine learning}, 91:325--349.

\bibitem[Goddard and Xu, 2012]{goddard2012s}
Goddard, W. and Xu, H. (2012).
\newblock The s-packing chromatic number of a graph.
\newblock {\em Discussiones Mathematicae Graph Theory}, 32(4):795--806.

\bibitem[Guestrin et~al., 2003]{guestrin2003efficient}
Guestrin, C., Koller, D., Parr, R., and Venkataraman, S. (2003).
\newblock Efficient solution algorithms for factored {MDPs}.
\newblock {\em Journal of Artificial Intelligence Research}, 19:399--468.

\bibitem[Haydari and Y{\i}lmaz, 2020]{haydari2020deep}
Haydari, A. and Y{\i}lmaz, Y. (2020).
\newblock Deep reinforcement learning for intelligent transportation systems: A survey.
\newblock {\em IEEE Transactions on Intelligent Transportation Systems}, 23(1):11--32.

\bibitem[Jaakkola et~al., 1993]{jaakkola1993convergence}
Jaakkola, T., Jordan, M., and Singh, S. (1993).
\newblock Convergence of stochastic iterative dynamic programming algorithms.
\newblock {\em Advances in Neural Information Processing Systems}, 6.

\bibitem[Jensen and Toft, 2011]{jensen2011graph}
Jensen, T.~R. and Toft, B. (2011).
\newblock {\em Graph coloring problems}.
\newblock John Wiley \& Sons.

\bibitem[Jin et~al., 2020]{jin2020provably}
Jin, C., Yang, Z., Wang, Z., and Jordan, M.~I. (2020).
\newblock Provably efficient reinforcement learning with linear function approximation.
\newblock In {\em Conference on Learning Theory}, pages 2137--2143. PMLR.

\bibitem[Kakade, 2003]{kakade2003sample}
Kakade, S.~M. (2003).
\newblock {\em On the sample complexity of reinforcement learning}.
\newblock University of London, University College London (United Kingdom).

\bibitem[Karp, 2010]{karp2010reducibility}
Karp, R.~M. (2010).
\newblock {\em Reducibility among combinatorial problems}.
\newblock Springer.

\bibitem[Kearns et~al., 2002]{kearns2002sparse}
Kearns, M., Mansour, Y., and Ng, A.~Y. (2002).
\newblock A sparse sampling algorithm for near-optimal planning in large {Markov} decision processes.
\newblock {\em Machine learning}, 49:193--208.

\bibitem[Kearns and Singh, 1998]{kearns1998finite}
Kearns, M. and Singh, S. (1998).
\newblock Finite-sample convergence rates for q-learning and indirect algorithms.
\newblock {\em Advances in Neural Information Processing Systems}, 11.

\bibitem[Kellerer et~al., 2004]{kellerer2004multidimensional}
Kellerer, H., Pferschy, U., Pisinger, D., Kellerer, H., Pferschy, U., and Pisinger, D. (2004).
\newblock {\em Multidimensional knapsack problems}.
\newblock Springer.

\bibitem[Khot, 2001]{khot2001improved}
Khot, S. (2001).
\newblock Improved inapproximability results for maxclique, chromatic number and approximate graph coloring.
\newblock In {\em Proceedings 42nd IEEE Symposium on Foundations of Computer Science}, pages 600--609. IEEE.

\bibitem[Kober et~al., 2013]{kober2013reinforcement}
Kober, J., Bagnell, J.~A., and Peters, J. (2013).
\newblock Reinforcement learning in robotics: A survey.
\newblock {\em The International Journal of Robotics Research}, 32(11):1238--1274.

\bibitem[Li et~al., 2024]{li2024q}
Li, G., Cai, C., Chen, Y., Wei, Y., and Chi, Y. (2024).
\newblock Is q-learning minimax optimal? a tight sample complexity analysis.
\newblock {\em Operations Research}, 72(1):222--236.

\bibitem[Li et~al., 2020]{li2020breaking}
Li, G., Wei, Y., Chi, Y., Gu, Y., and Chen, Y. (2020).
\newblock Breaking the sample size barrier in model-based reinforcement learning with a generative model.
\newblock {\em Advances in Neural Information Processing Systems}, 33:12861--12872.

\bibitem[Lin, 1965]{lin1965computer}
Lin, S. (1965).
\newblock Computer solutions of the traveling salesman problem.
\newblock {\em Bell System Technical Journal}, 44(10):2245--2269.

\bibitem[Mnih et~al., 2015]{mnih2015human}
Mnih, V., Kavukcuoglu, K., Silver, D., Rusu, A.~A., Veness, J., Bellemare, M.~G., Graves, A., Riedmiller, M., Fidjeland, A.~K., Ostrovski, G., et~al. (2015).
\newblock Human-level control through deep reinforcement learning.
\newblock {\em nature}, 518(7540):529--533.

\bibitem[Osband and Van~Roy, 2014]{osband2014near}
Osband, I. and Van~Roy, B. (2014).
\newblock Near-optimal reinforcement learning in factored {MDPs}.
\newblock {\em Advances in Neural Information Processing Systems}, 27.

\bibitem[Puterman, 2014]{puterman2014markov}
Puterman, M.~L. (2014).
\newblock {\em {Markov decision processes: Discrete stochastic dynamic programming}}.
\newblock John Wiley \& Sons.

\bibitem[Qu et~al., 2020]{qu2020scalable}
Qu, G., Wierman, A., and Li, N. (2020).
\newblock Scalable reinforcement learning of localized policies for multi-agent networked systems.
\newblock In {\em Learning for Dynamics and Control}, pages 256--266. PMLR.

\bibitem[Schaefer, 1978]{schaefer1978complexity}
Schaefer, T.~J. (1978).
\newblock The complexity of satisfiability problems.
\newblock In {\em Proceedings of the tenth annual ACM symposium on Theory of computing}, pages 216--226.

\bibitem[Shi et~al., 2024]{shi2024curious}
Shi, L., Li, G., Wei, Y., Chen, Y., Geist, M., and Chi, Y. (2024).
\newblock The curious price of distributional robustness in reinforcement learning with a generative model.
\newblock {\em Advances in Neural Information Processing Systems}, 36.

\bibitem[Sidford et~al., 2018]{sidford2018near}
Sidford, A., Wang, M., Wu, X., Yang, L., and Ye, Y. (2018).
\newblock Near-optimal time and sample complexities for solving {Markov} decision processes with a generative model.
\newblock {\em Advances in Neural Information Processing Systems}, 31.

\bibitem[Sidford et~al., 2023]{sidford2023variance}
Sidford, A., Wang, M., Wu, X., and Ye, Y. (2023).
\newblock Variance reduced value iteration and faster algorithms for solving markov decision processes.
\newblock {\em Naval Research Logistics (NRL)}, 70(5):423--442.

\bibitem[Silver et~al., 2017]{silver2017mastering}
Silver, D., Schrittwieser, J., Simonyan, K., Antonoglou, I., Huang, A., Guez, A., Hubert, T., Baker, L., Lai, M., Bolton, A., et~al. (2017).
\newblock Mastering the game of go without human knowledge.
\newblock {\em Nature}, 550(7676):354--359.

\bibitem[Sopena, 1997]{sopena1997chromatic}
Sopena, E. (1997).
\newblock The chromatic number of oriented graphs.
\newblock {\em Journal of Graph Theory}, 25(3):191--205.

\bibitem[Spong et~al., 2020]{spong2020robot}
Spong, M.~W., Hutchinson, S., and Vidyasagar, M. (2020).
\newblock {\em Robot modeling and control}.
\newblock John Wiley \& Sons.

\bibitem[Srikant and Ying, 2019]{srikant2019finite}
Srikant, R. and Ying, L. (2019).
\newblock {Finite-time error bounds for linear stochastic approximation and TD-learning}.
\newblock In {\em Conference on Learning Theory}, pages 2803--2830. PMLR.

\bibitem[Sutton and Barto, 2018]{sutton2018reinforcement}
Sutton, R.~S. and Barto, A.~G. (2018).
\newblock {\em Reinforcement learning: An introduction}.
\newblock MIT press.

\bibitem[Szepesv{\'a}ri, 1997]{szepesvari1997asymptotic}
Szepesv{\'a}ri, C. (1997).
\newblock The asymptotic convergence-rate of q-learning.
\newblock {\em Advances in Neural Information Processing Systems}, 10.

\bibitem[Tian et~al., 2020]{tian2020towards}
Tian, Y., Qian, J., and Sra, S. (2020).
\newblock Towards minimax optimal reinforcement learning in factored markov decision processes.
\newblock {\em Advances in Neural Information Processing Systems}, 33:19896--19907.

\bibitem[Tsitsiklis and Van~Roy, 1996]{tsitsiklis1996analysis}
Tsitsiklis, J. and Van~Roy, B. (1996).
\newblock Analysis of temporal-diffference learning with function approximation.
\newblock {\em Advances in Neural Information Processing Systems}, 9.

\bibitem[Tsitsiklis, 1994]{tsitsiklis1994asynchronous}
Tsitsiklis, J.~N. (1994).
\newblock Asynchronous stochastic approximation and q-learning.
\newblock {\em Machine learning}, 16:185--202.

\bibitem[Vershynin, 2018]{vershynin2018high}
Vershynin, R. (2018).
\newblock {\em High-dimensional probability: An introduction with applications in data science}, volume~47.
\newblock Cambridge university press.

\bibitem[Wainwright, 2019a]{wainwright2019stochastic}
Wainwright, M.~J. (2019a).
\newblock Stochastic approximation with cone-contractive operators: Sharp l-infty bounds for q-learning.
\newblock {\em arXiv preprint arXiv:1905.06265}.

\bibitem[Wainwright, 2019b]{wainwright2019variance}
Wainwright, M.~J. (2019b).
\newblock Variance-reduced q-learning is minimax optimal.
\newblock {\em Preprint arXiv:1906.04697}.

\bibitem[Wang et~al., 2020]{wang2020reinforcement}
Wang, R., Salakhutdinov, R.~R., and Yang, L. (2020).
\newblock Reinforcement learning with general value function approximation: {Provably} efficient approach via bounded {Eluder} dimension.
\newblock {\em Advances in Neural Information Processing Systems}, 33:6123--6135.

\bibitem[Watkins and Dayan, 1992]{watkins1992q}
Watkins, C.~J. and Dayan, P. (1992).
\newblock Q-learning.
\newblock {\em Machine learning}, 8:279--292.

\bibitem[Wei et~al., 2024]{wei2023sample}
Wei, H., Liu, X., Wang, W., and Ying, L. (2024).
\newblock Sample efficient reinforcement learning in mixed systems through augmented samples and its applications to queueing networks.
\newblock {\em Advances in Neural Information Processing Systems}, 36.

\bibitem[Xu and Gu, 2020]{xu2020finite}
Xu, P. and Gu, Q. (2020).
\newblock {A finite-time analysis of Q-learning with neural network function approximation}.
\newblock In {\em International Conference on Machine Learning}, pages 10555--10565. PMLR.

\bibitem[Xu and Tewari, 2020]{xu2020reinforcement}
Xu, Z. and Tewari, A. (2020).
\newblock {Reinforcement learning in factored MDPs: Oracle-efficient algorithms and tighter regret bounds for the non-episodic setting}.
\newblock {\em Advances in Neural Information Processing Systems}, 33:18226--18236.

\bibitem[Yeh et~al., 2023]{yeh2023sustaingym}
Yeh, C., Li, V., Datta, R., Arroyo, J., Christianson, N., Zhang, C., Chen, Y., Hosseini, M.~M., Golmohammadi, A., Shi, Y., et~al. (2023).
\newblock Sustaingym: Reinforcement learning environments for sustainable energy systems.
\newblock In {\em Thirty-seventh Conference on Neural Information Processing Systems Datasets and Benchmarks Track}.

\end{thebibliography}

\newpage
\begin{center}
    {\LARGE\bfseries Appendices}
\end{center}

\appendix

\section {Preliminary Definitions and Lemmas}
In the following, we provide all necessary definition, preliminary lemmas. For self-contained, we provide the proofs to the lemmas according to the notations of this work.
\begin{definition} \label{var_def}
Let $\text{Var}_P(V)\in \mathbb{R}^{|\mathcal{S}||\mathcal{A}|}$ be the value function variance with transition kernel $P$, which is given by:
    \begin{align*}
        \text{Var}_P(V) = P(V)^2 -(PV)^2,
    \end{align*}
where $P\in\mathbb{R}^{|\mathcal{S}||\mathcal{A}|\times|\mathcal{S}|}$ is the transition kernel, and $(V)^2\in \mathbb{R}^{|\mathcal{S}|}$ is the element-wise product of $V$, i.e.,  $(V)^2 = V\circ V$.
\end{definition}

\begin{definition}
\label{def_epsilon}
We define $\Sigma^\pi_{M}(s,a)$ as the variance of discounted reward under policy $\pi$ and MDP $M$ given state $s$ and action $a$, i.e.,
    \begin{align*}
        \Sigma^\pi_{M}(s,a) = \mathbb{E}\left[\left.\left(\sum_{t=0}^\infty \gamma^t r(s_t, a_t)-Q^\pi_{M}(s_0,a_0)\right)^2\right|s_0 = s, a_0 = a\right],
    \end{align*}
    where $Q^\pi_{M}$ denotes the Q-function induced by policy $\pi$ under MDP $M$.
\end{definition}

\begin{lemma}
\label{decomp_lemma1}
(Lemma 2.2 in \citep{agarwal2019reinforcement})
    For any policy $\pi$, it holds that:
    \begin{align*}
        Q^{\pi} - \widehat{Q}^{\pi} = \gamma(I-\gamma \widehat{P}^{\pi})^{-1}(P-\widehat{P})V^{\pi},
    \end{align*}
    where $Q^\pi$ and $\widehat{Q}^\pi$ are Q-functions induced by the policy $\pi$ under transition kernels $P$ and $\widehat{P}$, respectively, and $\widehat{P}^{\pi} \in \mathbb{R}^{|\mathcal{S}||\mathcal{A}|\times|\mathcal{S}||\mathcal{A}|}$ represents the transition matrix of the Markov chain $\{(s_t,a_t)\}$ induced by $\pi$ with transition kernel $\widehat{P}$.
\end{lemma}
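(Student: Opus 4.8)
The plan is to prove this as a standard value-difference (``simulation'') identity, working entirely in the state-action space. First I would introduce the state-action transition matrices $P^\pi, \widehat{P}^\pi \in \mathbb{R}^{|\mathcal{S}||\mathcal{A}| \times |\mathcal{S}||\mathcal{A}|}$ induced by $\pi$ under the kernels $P$ and $\widehat{P}$, i.e. $P^\pi((s,a),(s',a')) = P(s'\mid s,a)\pi(a'\mid s')$ and analogously for $\widehat{P}^\pi$. With this notation both Q-functions satisfy their Bellman fixed-point equations $Q^\pi = r + \gamma P^\pi Q^\pi$ and $\widehat{Q}^\pi = r + \gamma \widehat{P}^\pi \widehat{Q}^\pi$, where $r \in \mathbb{R}^{|\mathcal{S}||\mathcal{A}|}$ is the common reward vector (the two MDPs share $r$ and differ only in the kernel).

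The core algebraic step is to subtract these two equations and then add and subtract the cross term $\gamma \widehat{P}^\pi Q^\pi$:
\begin{align*}
Q^\pi - \widehat{Q}^\pi &= \gamma P^\pi Q^\pi - \gamma \widehat{P}^\pi \widehat{Q}^\pi \\
&= \gamma(P^\pi - \widehat{P}^\pi)Q^\pi + \gamma \widehat{P}^\pi(Q^\pi - \widehat{Q}^\pi).
\end{align*}
Rearranging yields $(I - \gamma \widehat{P}^\pi)(Q^\pi - \widehat{Q}^\pi) = \gamma(P^\pi - \widehat{P}^\pi)Q^\pi$. Since $\widehat{P}^\pi$ is row-stochastic, its $\ell_\infty$ operator norm equals $1$, so for $\gamma \in (0,1)$ the matrix $I - \gamma\widehat{P}^\pi$ is invertible (its Neumann series $\sum_{t\ge 0}\gamma^t (\widehat{P}^\pi)^t$ converges), which I would invoke to write $Q^\pi - \widehat{Q}^\pi = \gamma(I-\gamma\widehat{P}^\pi)^{-1}(P^\pi - \widehat{P}^\pi)Q^\pi$.

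The final step — and the only place where real care is needed — is the bridge identity $(P^\pi - \widehat{P}^\pi)Q^\pi = (P - \widehat{P})V^\pi$, which reconciles the two different matrix ``shapes'': $P,\widehat{P} \in \mathbb{R}^{|\mathcal{S}||\mathcal{A}|\times|\mathcal{S}|}$ act on value functions, whereas $P^\pi,\widehat{P}^\pi$ act on Q-functions. I would verify it entrywise: $(P^\pi Q^\pi)(s,a) = \sum_{s'}P(s'\mid s,a)\sum_{a'}\pi(a'\mid s')Q^\pi(s',a') = \sum_{s'}P(s'\mid s,a)V^\pi(s') = (PV^\pi)(s,a)$, using the definition $V^\pi(s') = \sum_{a'}\pi(a'\mid s')Q^\pi(s',a')$, and identically $\widehat{P}^\pi Q^\pi = \widehat{P}V^\pi$. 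Substituting this into the previous display gives the claim. The main obstacle is purely bookkeeping: one must keep $P$ (value-to-Q) and $P^\pi$ (Q-to-Q) distinct and recognize that the cross term collapses onto $V^\pi$ precisely because it is $Q^\pi$ — not $\widehat{Q}^\pi$ — that is being averaged under $\pi$; conflating the two matrix shapes or using the wrong Q-function in this collapse is the one error to guard against.
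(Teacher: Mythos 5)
Your proof is correct and follows essentially the same route as the paper: both arguments reduce to the identity $Q^\pi - \widehat{Q}^\pi = \gamma(I-\gamma \widehat{P}^{\pi})^{-1}(P^\pi-\widehat{P}^\pi)Q^{\pi}$ followed by the collapse $P^\pi Q^\pi = P V^\pi$, the only cosmetic difference being that the paper obtains this via the resolvent-difference factorization of $(I-\gamma P^\pi)^{-1}r - (I-\gamma\widehat{P}^\pi)^{-1}r$ while you subtract the two Bellman fixed-point equations and rearrange. Your explicit entrywise verification of the bridge identity $(P^\pi - \widehat{P}^\pi)Q^\pi = (P-\widehat{P})V^\pi$ is a welcome addition, as the paper asserts it without proof.
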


\begin{proof}
    Recall that $Q^\pi$ is the unique solution of the Bellman equation:
\begin{align*}
    Q^\pi = r + \gamma P^\pi Q^\pi.
\end{align*}
Since $I-\gamma P^\pi$ is invertible when $0<\gamma<1$, we have
\begin{align}
    Q^\pi = (I-\gamma P^\pi)^{-1}r. \label{lemma a.1-Q_invert}
\end{align}
It follows that
    \begin{align*}
        Q^{\pi} - \widehat{Q}^{\pi} &= (I-\gamma P^\pi)^{-1}r - (I-\gamma \widehat{P}^\pi)^{-1}r\\
        &=(I-\gamma \widehat{P}^\pi)^{-1}((I-\gamma \widehat{P}^\pi)-(I-\gamma P^\pi))(I-\gamma P^\pi)^{-1} r\\
        &=\gamma (I-\gamma \widehat{P}^\pi)^{-1} (P^\pi - \widehat{P}^\pi)(I-\gamma P^\pi)^{-1} r\\
        &=\gamma (I-\gamma \widehat{P}^\pi)^{-1} (P^\pi - \widehat{P}^\pi)Q^\pi \\
        &=\gamma (I-\gamma \widehat{P}^\pi)^{-1}(P - \widehat{P})V^{\pi},
    \end{align*}
    where the fourth equality is due to Eq. \eqref{lemma a.1-Q_invert}. The last equality is because $P^\pi Q^\pi = P V^\pi$.
\end{proof}

\begin{lemma}
\label{decomp_lemma2}
(Lemma 2.5 in \citep{agarwal2019reinforcement})
    For any two optimal Q functions $Q^*$ and $\widehat{Q}^*$, which are induced by the same reward function $r$, but different transition kernels $P$ and $\widehat{P}$. It holds that:
    \begin{align*}
        Q^{*} - \widehat{Q}^{*} \leq \gamma(I-\gamma \widehat{P}^{\pi^*})^{-1}(P-\widehat{P})V^{*},\\
        Q^{*} - \widehat{Q}^{*} \geq \gamma(I-\gamma \widehat{P}^{\widehat{\pi}^*})^{-1}(P-\widehat{P})V^{*},
    \end{align*}
    where $\pi^*$ and $\widehat{\pi}^*$ are the optimal policies induced by Q-value functions $Q^*$ and $\widehat{Q}^*$. The matrix \( \widehat{P}^{\pi^*} \in \mathbb{R}^{|\mathcal{S}||\mathcal{A}| \times |\mathcal{S}||\mathcal{A}|} \) represents the transition matrix of the Markov chain \( \{(s_t, a_t)\} \) induced by the policy \( \pi^* \) under the transition kernel \( \widehat{P} \). Similarly, \( \widehat{P}^{\widehat{\pi}^*} \in \mathbb{R}^{|\mathcal{S}||\mathcal{A}| \times |\mathcal{S}||\mathcal{A}|} \) represents the transition matrix of the Markov chain \( \{(s_t, a_t)\} \) induced by the policy \( \widehat{\pi}^* \) under the transition kernel \( \widehat{P} \). The optimal value function \( V^* \) is induced by \( Q^* \), which satisfies \( V^*(s) = \max_a Q^*(s, a) \) for all states \( s \in \mathcal{S} \).
\end{lemma}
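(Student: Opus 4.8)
The plan is to prove both inequalities from a single one-step Bellman decomposition of $Q^* - \widehat{Q}^*$, combined with a greedy-selection bound on the value gap and the monotonicity of the resolvent $(I-\gamma\widehat{P}^\pi)^{-1}$. First I would record the fixed-point identities $Q^* = r + \gamma P V^*$ and $\widehat{Q}^* = r + \gamma \widehat{P}\widehat{V}^*$, where $V^*(s)=\max_a Q^*(s,a)$ and $\widehat{V}^*(s)=\max_a\widehat{Q}^*(s,a)$. Adding and subtracting $\gamma\widehat{P}V^*$ then gives the exact decomposition
\begin{align*}
    Q^* - \widehat{Q}^* = \gamma(P-\widehat{P})V^* + \gamma\widehat{P}(V^*-\widehat{V}^*),
\end{align*}
which isolates the desired leading term $\gamma(P-\widehat{P})V^*$ and leaves the value-gap term $\gamma\widehat{P}(V^*-\widehat{V}^*)$ to be controlled.

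For the upper bound, I would bound $V^*-\widehat{V}^*$ from above using the optimality of $\pi^*$ for $Q^*$: since $V^*(s)=Q^*(s,\pi^*(s))$ and $\widehat{V}^*(s)\geq\widehat{Q}^*(s,\pi^*(s))$, we obtain the component-wise inequality $V^*(s)-\widehat{V}^*(s) \leq (Q^*-\widehat{Q}^*)(s,\pi^*(s))$. Because $\widehat{P}$ has non-negative entries, applying it preserves this inequality, and since $\widehat{P}^{\pi^*}$ acting on a $Q$-function equals $\widehat{P}$ acting on the $\pi^*$-greedy slice of that function, this yields $\gamma\widehat{P}(V^*-\widehat{V}^*) \leq \gamma\widehat{P}^{\pi^*}(Q^*-\widehat{Q}^*)$. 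Substituting into the decomposition and rearranging gives $(I-\gamma\widehat{P}^{\pi^*})(Q^*-\widehat{Q}^*) \leq \gamma(P-\widehat{P})V^*$. The lower bound follows identically after flipping every inequality: using the greedy $\widehat{\pi}^*$ for $\widehat{Q}^*$, namely $\widehat{V}^*(s)=\widehat{Q}^*(s,\widehat{\pi}^*(s))$ and $V^*(s)\geq Q^*(s,\widehat{\pi}^*(s))$, I get $V^*-\widehat{V}^* \geq (Q^*-\widehat{Q}^*)(\cdot,\widehat{\pi}^*(\cdot))$ and hence $(I-\gamma\widehat{P}^{\widehat{\pi}^*})(Q^*-\widehat{Q}^*) \geq \gamma(P-\widehat{P})V^*$.

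The final step in each case is to left-multiply by the resolvent $(I-\gamma\widehat{P}^\pi)^{-1}$. Here the key fact is that this inverse is entry-wise non-negative, since $(I-\gamma\widehat{P}^\pi)^{-1} = \sum_{t=0}^\infty \gamma^t(\widehat{P}^\pi)^t$ converges (as $\gamma<1$ and $\widehat{P}^\pi$ is row-stochastic) with every term non-negative. Multiplying a component-wise inequality by a non-negative matrix preserves its direction, so the two rearranged inequalities become exactly the claimed bounds. As an alternative, the upper bound alone can be obtained more directly from Lemma \ref{decomp_lemma1} via $Q^*-\widehat{Q}^* \leq Q^{\pi^*}-\widehat{Q}^{\pi^*}$ together with $V^{\pi^*}=V^*$; however, the lower bound as stated carries $V^*$ rather than $V^{\widehat{\pi}^*}$, so the naive route through Lemma \ref{decomp_lemma1} does not produce it, which is precisely the subtlety the self-contained decomposition above resolves.

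The main obstacle I anticipate is exactly this mismatch in the lower bound: a direct appeal to Lemma \ref{decomp_lemma1} with $\pi=\widehat{\pi}^*$ yields the term $(P-\widehat{P})V^{\widehat{\pi}^*}$, and since $P-\widehat{P}$ has indefinite sign one cannot simply replace $V^{\widehat{\pi}^*}$ by the larger $V^*$. The correct handling requires the greedy-selection inequality on $V^*-\widehat{V}^*$ followed by monotonicity of the resolvent, so I would be careful to justify (i) the operator identity relating $\widehat{P}^{\pi}$ acting on a $Q$-function to $\widehat{P}$ acting on its $\pi$-greedy slice, and (ii) the non-negativity of $(I-\gamma\widehat{P}^\pi)^{-1}$, as these are what make the final substitution valid.
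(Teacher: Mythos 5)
Your proof is correct, but it is organized differently from the paper's. The paper proves the two inequalities by two separate arguments: for the upper bound it writes $Q^*-\widehat{Q}^* = Q^{\pi^*}-\widehat{Q}^{\widehat{\pi}^*} \leq Q^{\pi^*}-\widehat{Q}^{\pi^*}$ (optimality of $\widehat{\pi}^*$ under $\widehat{P}$) and then invokes the fixed-policy identity of Lemma \ref{decomp_lemma1} with $\pi=\pi^*$, using $V^{\pi^*}=V^*$; for the lower bound it expands $Q^{\pi^*}-\widehat{Q}^{\widehat{\pi}^*} = \gamma(I-\gamma\widehat{P}^{\widehat{\pi}^*})^{-1}(P^{\pi^*}-\widehat{P}^{\widehat{\pi}^*})Q^{\pi^*}$ via resolvent algebra and then applies the greedy inequality $\widehat{P}^{\widehat{\pi}^*}Q^{\pi^*}\leq \widehat{P}^{\pi^*}Q^{\pi^*}$ inside the entrywise non-negative resolvent. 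You instead derive both inequalities from the single one-step decomposition $Q^*-\widehat{Q}^* = \gamma(P-\widehat{P})V^* + \gamma\widehat{P}(V^*-\widehat{V}^*)$, bound the value gap by the two greedy-selection inequalities, and only then multiply by the non-negative resolvent. The underlying ingredients are identical --- greedy optimality plus monotonicity of $(I-\gamma\widehat{P}^\pi)^{-1}$ --- but your version is self-contained (it never needs Lemma \ref{decomp_lemma1}) and symmetric, with both bounds falling out of the same identity with the inequalities flipped, whereas the paper's version is shorter given that Lemma \ref{decomp_lemma1} is already established, at the cost of treating the two directions by unrelated-looking arguments. Your diagnosis of the subtlety in the lower bound is also exactly right: applying Lemma \ref{decomp_lemma1} with $\pi=\widehat{\pi}^*$ yields $(P-\widehat{P})V^{\widehat{\pi}^*}$ rather than $(P-\widehat{P})V^*$, and since $P-\widehat{P}$ is sign-indefinite one cannot simply substitute the larger value function --- which is precisely why the paper switches to the resolvent manipulation for that direction instead of citing Lemma \ref{decomp_lemma1}.
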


\begin{proof}
    The two conditions can be proved using Lemma \ref{decomp_lemma1}. Specifically, the first inequality can be proved as follows:
    \begin{align*}
        Q^{*} - \widehat{Q}^{*} = Q^{\pi^*} - \widehat{Q}^{\widehat{\pi}*} \leq Q^{\pi^*} - \widehat{Q}^{\pi^*} = \gamma(I-\gamma \widehat{P}^{\pi^*})^{-1}(P-\widehat{P})V^{*},
    \end{align*}
    where the inequality is because $\widehat{Q}^{\widehat{\pi}*}\geq \widehat{Q}^\pi$ for any policy $\pi$. The last equality comes from Lemma \ref{decomp_lemma1}.

    For the second condition, we have:
    \begin{align*}
        Q^{*} - \widehat{Q}^{*} &= Q^{\pi^*} - \widehat{Q}^{\widehat{\pi}^*}\\
        &=Q^{\pi^*} - (I-\gamma\widehat{P}^{\widehat{\pi}^*})^{-1}r\\
        &=(I-\gamma\widehat{P}^{\widehat{\pi}^*})^{-1}(I-\gamma\widehat{P}^{\widehat{\pi}^*})Q^{\pi^*}-(I-\gamma\widehat{P}^{\widehat{\pi}^*})^{-1}(I-\gamma{P}^{{\pi}^*})Q^{\pi^*}\\
        &= (I-\gamma\widehat{P}^{\widehat{\pi}^*})^{-1}((I-\gamma\widehat{P}^{\widehat{\pi}^*})-(I-\gamma{P}^{{\pi}^*}))Q^{\pi^*}\\
        &= \gamma(I-\gamma\widehat{P}^{\widehat{\pi}^*})^{-1}({P}^{{\pi}^*}-\widehat{P}^{\widehat{\pi}^*})Q^{\pi^*}\\
        &\geq  \gamma(I-\gamma\widehat{P}^{\widehat{\pi}^*})^{-1}({P}^{{\pi}^*}-\widehat{P}^{{\pi}^*})Q^{\pi^*}\\
        &= \gamma(I-\gamma\widehat{P}^{\widehat{\pi}^*})^{-1}(P-\widehat{P})V^{*}.
    \end{align*}
    The second equality is because $Q^\pi = (I-\gamma P^{\pi})^{-1}r$, and the inequality is because $\widehat{P}^{\widehat{\pi}^*}Q^{\pi^*}\leq \widehat{P}^{{\pi}^*}Q^{\pi^*}$ due to the optimality of policy ${\pi}^*$ regarding Q-function $Q^{\pi^*}$.
\end{proof}

\begin{lemma}
\label{Lemma 4-basic}
(Adapted from Lemma 5 in \citep{azar2012sample}) Given a MDP $M$ with transition kernel $P$ and reward function $r$, the following identity holds for all policy $\pi$:
    \begin{align*}
        \Sigma_M^\pi = \gamma^2{(1-\gamma^2P^{\pi})^{-1}\text{Var}_P(V_M^\pi)}
    \end{align*}
\end{lemma}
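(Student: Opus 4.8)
The plan is to derive a one-step Bellman-type fixed-point equation for the return variance $\Sigma_M^\pi$ and then invert the resulting linear operator. Write the discounted return as $G = \sum_{t=0}^\infty \gamma^t r(s_t,a_t)$, so that $Q_M^\pi(s,a) = \mathbb{E}[G \mid s_0=s,a_0=a]$ and, by Definition \ref{def_epsilon}, $\Sigma_M^\pi(s,a) = \mathrm{Var}(G \mid s_0=s,a_0=a)$. Since $r \in [0,1]$ and $\gamma \in (0,1)$, we have $0 \le G \le (1-\gamma)^{-1}$ almost surely, so all second moments are finite and the manipulations below are justified.

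First I would peel off the first step. Conditioned on $(s_0,a_0)=(s,a)$ the immediate reward $r(s,a)$ is deterministic, and $G = r(s,a) + \gamma \tilde G$ where $\tilde G = \sum_{t=1}^\infty \gamma^{t-1} r(s_t,a_t)$ is the return of the chain started from $(s_1,a_1)$. Hence $\Sigma_M^\pi(s,a) = \gamma^2\, \mathrm{Var}(\tilde G \mid s,a)$. Applying the law of total variance while conditioning on $(s_1,a_1)$, and using the Markov property together with stationarity to identify $\mathrm{Var}(\tilde G \mid s_1,a_1) = \Sigma_M^\pi(s_1,a_1)$ and $\mathbb{E}[\tilde G \mid s_1,a_1] = Q_M^\pi(s_1,a_1)$, gives
\begin{align*}
\Sigma_M^\pi(s,a) = \gamma^2 (P^\pi \Sigma_M^\pi)(s,a) + \gamma^2\, \mathrm{Var}_{(s_1,a_1)\sim P^\pi(\cdot\mid s,a)}\big(Q_M^\pi(s_1,a_1)\big).
\end{align*}
An equivalent route is to set up a Bellman recursion for the second moment $\mathbb{E}[G^2 \mid s,a]$ and subtract $(Q_M^\pi)^2$ using the Bellman equation $Q_M^\pi = r + \gamma P^\pi Q_M^\pi$; both give the same identity.

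The crux is to reduce the next-step variance term to $\mathrm{Var}_P(V_M^\pi)$ as in Definition \ref{var_def}. Here $s_1 \sim P(\cdot \mid s,a)$ and the next action follows $\pi$; for a stationary deterministic policy $\pi$ we have $a_1=\pi(s_1)$, so $Q_M^\pi(s_1,a_1)=Q_M^\pi(s_1,\pi(s_1))=V_M^\pi(s_1)$, and the variance collapses to $\mathrm{Var}_{s_1\sim P(\cdot\mid s,a)}(V_M^\pi(s_1)) = \mathrm{Var}_P(V_M^\pi)(s,a)$. This is the step that must be handled carefully: it is precisely where the within-state randomness over actions disappears and the value-function variance appears, so writing out the conditioning on $(s_1,a_1)$ explicitly rather than lumping it into $P^\pi$ is essential (for a genuinely randomized $\pi$ an extra expected within-state action variance term would survive). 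Collecting terms yields the vector identity $\Sigma_M^\pi = \gamma^2\, \mathrm{Var}_P(V_M^\pi) + \gamma^2 P^\pi \Sigma_M^\pi$.

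Finally I would solve this linear fixed-point equation. Since $P^\pi$ is row-stochastic, $\|\gamma^2 P^\pi\|_\infty = \gamma^2 < 1$, so $I - \gamma^2 P^\pi$ is invertible with bounded inverse $\sum_{n\ge 0}(\gamma^2 P^\pi)^n$. Rearranging $(I-\gamma^2 P^\pi)\Sigma_M^\pi = \gamma^2\, \mathrm{Var}_P(V_M^\pi)$ then gives $\Sigma_M^\pi = \gamma^2 (I-\gamma^2 P^\pi)^{-1}\mathrm{Var}_P(V_M^\pi)$, which is the claimed identity. The main obstacle is the variance-collapse step of the previous paragraph; the peeling of the first step and the operator inversion are routine once that identity is established.
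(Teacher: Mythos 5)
Your proof is correct and follows essentially the same route as the paper's: both derive the one-step variance Bellman recursion $\Sigma_M^\pi = \gamma^2 P^\pi \Sigma_M^\pi + \gamma^2\,\text{Var}_P(V_M^\pi)$ (the paper by expanding the square and killing the cross term via the tower property, you via the equivalent law of total variance) and then invert $I-\gamma^2 P^\pi$. If anything you are slightly more careful than the paper, which leaves the final operator inversion implicit and silently assumes the step where $\mathrm{Var}\big(Q_M^\pi(s_1,a_1)\big)$ collapses to $\text{Var}_P(V_M^\pi)$ — the deterministic-policy caveat you flag explicitly.
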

\begin{proof}
We start with the definition of $\Sigma_M^\pi(s,a)$: 
\begin{align*}
       & \Sigma^\pi_{M}(s,a) =\mathbb{E}_{P, \pi}\Bigg[\Bigg(\sum_{t=0}^\infty \gamma^t r(s_t, a_t)-Q^\pi_{M}(s_0,a_0)\Bigg)^2\bigg|~s_0 = s, a_0 = a \Bigg]\\
        &=\mathbb{E}_{P, \pi}\left[\left.\left(\sum_{t=1}^\infty \gamma^t r(s_t, a_t) - \gamma Q_M^\pi(s_1, a_1)-(Q^\pi_M(s_0,a_0) - r(s_0,a_0)-\gamma Q_M^\pi(s_1, a_1))\right)^2\right|s_0 = s, a_0 = a\right]\\
        &=\mathbb{E}_{P, \pi}\left[\left.\left(\sum_{t=1}^\infty \gamma^t r(s_t, a_t) - \gamma Q_M^\pi(s_1, a_1)\right)^2\right|s_0 = s, a_0 = a\right]\\
        &\ \ -2\mathbb{E}_{P, \pi}\left[\left.\left(\sum_{t=1}^\infty \gamma^t r(s_t, a_t) - \gamma Q_M^\pi(s_1, a_1)\right)\left(Q^\pi_M(s_0,a_0) - r(s_0,a_0)-\gamma Q_M^\pi(s_1, a_1)\right)\right|s_0 = s, a_0 = a\right]\\
        &\ \ +\mathbb{E}_{P, \pi}\left[\left.\left(Q^\pi_M(s_0,a_0) - r(s_0,a_0)-\gamma Q_M^\pi(s_1, a_1)\right)^2\right|s_0 = s, a_0 = a\right]\\
        &=\gamma^2\mathbb{E}_{P, \pi}\left[\left.\left(\sum_{t=1}^\infty \gamma^{t-1} r(s_t, a_t) -  Q_M^\pi(s_1, a_1)\right)^2\right|s_0 = s, a_0 = a\right]\\
        &\ \ -2\mathbb{E}_{P, \pi}\left[\left.\mathbb{E}\left[\left.\sum_{t=1}^\infty \gamma^t r(s_t, a_t) - \gamma Q_M^\pi(s_1, a_1)\right|s_1, a_1\right]\left(Q^\pi_M(s_0,a_0) - r(s_0,a_0)-\gamma Q_M^\pi(s_1, a_1)\right)\right|s_0 = s, a_0 = a\right]\\
        & \ \ +\mathbb{E}_{P, \pi}\left[\left.\left(Q^\pi_M(s_0,a_0) - r(s_0,a_0)-\gamma Q_M^\pi(s_1, a_1)\right)^2\right|s_0 = s, a_0 = a\right]\\
        &=\gamma^2\mathbb{E}_{P, \pi}\left[\left.\left(\sum_{t=1}^\infty \gamma^{t-1} r(s_t, a_t) -  Q_M^\pi(s_1, a_1)\right)^2\right|s_0 = s, a_0 = a\right]\\
        &\ \ +\mathbb{E}_{P, \pi}\left[\left.\left(Q^\pi_M(s_0,a_0) - r(s_0,a_0)-\gamma Q_M^\pi(s_1, a_1)\right)^2\right|s_0 = s, a_0 = a\right]\\
        &= \gamma^2\mathbb{E}_{P, \pi}\left[\left.\left(\sum_{t=1}^\infty \gamma^{t-1} r(s_t, a_t) -  Q_M^\pi(s_1, a_1)\right)^2\right|s_0 = s, a_0 = a\right]\\
        &\ \ +\gamma^2 \mathbb{E}_{P, \pi}\left[\left.\left(\mathbb{E}_{s_1,a_1 \sim P(\cdot|s_0,a_0)}[Q^\pi_M(s_1,a_1)]- Q_M^\pi(s_1, a_1)\right)^2\right|s_0 = s, a_0 = a\right]\\
        &=  \gamma^2 \sum_{s_1,a_1} P^{\pi}(s_1,a_1 | s,a) \Sigma_M^\pi(s_1,a_1) + \gamma^2 \text{Var}_P(V_M^\pi)(s,a),
    \end{align*}
where the third equality is obtained by dividing the quadratic term; the fourth equality is derived by the law of total expectation; the fifth equality holds due to $\mathbb{E}\left[\left.\sum_{t=1}^\infty \gamma^t r(s_t, a_t) - \gamma Q_M^\pi(s_1, a_1)\right|s_1, a_1\right] = 0$. 
The last equality is derived based on the definitions of $\Sigma^\pi_{M}(s,a)$ and $\text{Var}_P(V_M^\pi)$.
\end{proof}

\begin{lemma}
\label{lemmaA4}
    (Lemma 6 in \citep{wainwright2019variance}) Given two Q-functions $Q^*_{r}$ and $\Tilde{Q}^*$, which are the induced by the same transition kernel $P$, but different reward functions $r$ and $\Tilde{r} = r+\Delta r$. It holds that,
\begin{align*}
    |Q^*-\Tilde{Q}^*| \leq \max\left\{({I}-\gamma {P}^{\pi^*})^{-1}|\Delta r|, ({I}-\gamma {P}^{\Tilde{\pi}^*})^{-1}|\Delta r| \right\},
\end{align*}
where $\pi^*$ and $\Tilde{\pi}^*$ denote the optimal policy induced by $Q^*$ and $\Tilde{Q}^*$, respectively.
\end{lemma}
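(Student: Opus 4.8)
The plan is to mirror the argument used for Lemma \ref{decomp_lemma2}, but with the two differing \emph{reward functions} playing the role that the two differing kernels played there. The starting point is the closed-form value expression established inside the proof of Lemma \ref{decomp_lemma1}: for any policy $\pi$ one has $Q^\pi = (I - \gamma P^\pi)^{-1} r$. Since a greedy-optimal policy induces the optimal Q-function, this gives $Q^* = Q^{\pi^*}$ and $\Tilde{Q}^* = \Tilde{Q}^{\Tilde{\pi}^*}$, where throughout both MDPs share the same kernel $P$ and differ only through $r$ versus $\Tilde{r} = r + \Delta r$. The structural facts I would exploit are the optimality inequalities $Q^{\pi^*} \geq Q^{\pi}$ and $\Tilde{Q}^{\Tilde{\pi}^*} \geq \Tilde{Q}^{\pi}$, valid for every policy $\pi$.

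First I would derive an upper bound by comparing against the policy $\pi^*$ evaluated under the perturbed reward. Using $\Tilde{Q}^{\Tilde{\pi}^*} \geq \Tilde{Q}^{\pi^*}$ together with the fact that $Q^{\pi^*}$ and $\Tilde{Q}^{\pi^*}$ share the \emph{same} policy and kernel and differ only in reward, I obtain
\[
Q^* - \Tilde{Q}^* \;\leq\; Q^{\pi^*} - \Tilde{Q}^{\pi^*} \;=\; (I - \gamma P^{\pi^*})^{-1}(r - \Tilde{r}) \;=\; -(I - \gamma P^{\pi^*})^{-1}\Delta r.
\]
Symmetrically, using $Q^{\pi^*} \geq Q^{\Tilde{\pi}^*}$ and comparing under the common policy $\Tilde{\pi}^*$, I get the matching lower bound
\[
Q^* - \Tilde{Q}^* \;\geq\; Q^{\Tilde{\pi}^*} - \Tilde{Q}^{\Tilde{\pi}^*} \;=\; -(I - \gamma P^{\Tilde{\pi}^*})^{-1}\Delta r,
\]
so that $Q^* - \Tilde{Q}^*$ is sandwiched entrywise between two explicit vectors.

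The last step, which I expect to be the only delicate point, is to convert this signed two-sided bound into the absolute-value bound in the statement. Applying the scalar fact that $x \in [a,b]$ implies $|x| \leq \max\{|a|,|b|\}$ entrywise yields $|Q^* - \Tilde{Q}^*| \leq \max\{ |(I - \gamma P^{\pi^*})^{-1}\Delta r|, |(I - \gamma P^{\Tilde{\pi}^*})^{-1}\Delta r| \}$. To reach the stated form I would then invoke the nonnegativity of the resolvent: since $P^\pi$ is row-stochastic and $\gamma \in (0,1)$, the Neumann series $(I - \gamma P^\pi)^{-1} = \sum_{k \geq 0} \gamma^k (P^\pi)^k$ has nonnegative entries, so that $|(I - \gamma P^\pi)^{-1}\Delta r| \leq (I - \gamma P^\pi)^{-1}|\Delta r|$ holds entrywise for either policy. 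Substituting this majorization into the previous display gives exactly the claimed inequality. Everything apart from this final majorization is a direct transcription of the sandwich computation in Lemma \ref{decomp_lemma2}, so the nonnegativity of $(I - \gamma P^\pi)^{-1}$ is really the only ingredient that needs to be invoked with care.
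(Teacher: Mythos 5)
Your proof is correct, and it reaches the bound by a route that is genuinely different in its mechanics from the paper's, though built from the same two ingredients. The paper works directly with the Bellman fixed-point equations $Q^* = r + \gamma P^{\pi^*}Q^*$ and $\Tilde{Q}^* = \Tilde{r} + \gamma P^{\Tilde{\pi}^*}\Tilde{Q}^*$: subtracting them and using the \emph{local} greedy inequality $P^{\pi^*}\Tilde{Q}^* \leq P^{\Tilde{\pi}^*}\Tilde{Q}^*$ yields the self-referential bound $\max(Q^*-\Tilde{Q}^*,\boldsymbol{0}) \leq |\Delta r| + \gamma P^{\pi^*}\max(Q^*-\Tilde{Q}^*,\boldsymbol{0})$, which is then rearranged; the symmetric bound handles the other sign, and the two positive parts are combined at the end. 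You instead transcribe the sandwich argument of Lemma \ref{decomp_lemma2} to the reward-perturbation setting: the resolvent formula $Q^\pi = (I-\gamma P^\pi)^{-1}r$ together with the \emph{global} optimality inequalities $\Tilde{Q}^{\Tilde{\pi}^*}\geq\Tilde{Q}^{\pi^*}$ and $Q^{\pi^*}\geq Q^{\Tilde{\pi}^*}$ gives an exact two-sided signed bound on $Q^*-\Tilde{Q}^*$, which you then convert to the stated form via the majorization $|(I-\gamma P^\pi)^{-1}\Delta r| \leq (I-\gamma P^\pi)^{-1}|\Delta r|$. Both routes ultimately rest on greedy optimality and the entrywise nonnegativity of the Neumann series $(I-\gamma P^\pi)^{-1}=\sum_{k\ge 0}\gamma^k(P^\pi)^k$ (the paper uses the latter implicitly when it inverts $(I-\gamma P^{\pi^*})$ in the final rearrangement); your version buys tighter signed intermediate bounds at the cost of the extra $|Mv|\leq M|v|$ step, which you correctly identify and justify as the one delicate point. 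As a side remark, the paper's displayed bounds \eqref{lemmaA4_eq1}--\eqref{lemmaA4_eq2} omit the inverse on $(I-\gamma P^{\pi^*})$ and $(I-\gamma P^{\Tilde{\pi}^*})$; this is evidently a typo, and the form you derive is the correct one.
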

\begin{proof}
    This lemma can be proved by showing the following two conditions:
    \begin{align}
        \max(Q^*-\Tilde{Q}^*,\boldsymbol{0})\leq ({I}-\gamma {P}^{\pi^*})|\Delta r|,\label{lemmaA4_eq1}\\
        \max(\Tilde{Q}^*-Q^*,\boldsymbol{0})\leq ({I}-\gamma {P}^{\Tilde{\pi}^*})|\Delta r|.\label{lemmaA4_eq2}
    \end{align}

For condition \eqref{lemmaA4_eq1}, we can prove the following:
\begin{align*}
    Q^*-\Tilde{Q}^* &= r+\gamma P^{\pi^*} Q^* - (r+\Delta r + \gamma P^{\Tilde{\pi}^*} \Tilde{Q}^*) \\
    &\leq |\Delta r| + \gamma P^{\pi^*} (Q^*-\Tilde{Q}^*)\\
    &\leq |\Delta r| + \gamma P^{\pi^*} \max(Q^*-\Tilde{Q}^*,\boldsymbol{0}),
\end{align*}
where the second inequality comes from $P^{\pi^*}\Tilde{Q}^*\leq P^{\Tilde{\pi}^*} \Tilde{Q}^*$.

Since the right-hand-side term is positive in all entries. Thus, we have:
\begin{align*}
    \max(Q^*-\Tilde{Q}^*,\boldsymbol{0})\leq |\Delta r| + \gamma P^{\pi^*} \max(Q^*-\Tilde{Q}^*,\boldsymbol{0}).
\end{align*}

Rearranging the inequality yields Eq. \eqref{lemmaA4_eq1}.

For proving Eq. \eqref{lemmaA4_eq2}, we follow a similar routine and get:
\begin{align*}
    \Tilde{Q}^*-Q^*&= (r+\Delta r + \gamma P^{\Tilde{\pi}^*} \Tilde{Q}^*) -(r+\gamma P^{\pi^*} Q^*)\\
    &\leq |\Delta r|+ \gamma P^{\Tilde{\pi}^*}(\Tilde{Q}^*-Q^*)\\
    &\leq |\Delta r|+\gamma P^{\Tilde{\pi}^*}\max(\Tilde{Q}^*-Q^*,\boldsymbol{0}).
\end{align*}
Hence, we have:
\begin{align*}
    \max(\Tilde{Q}^*-Q^*,\boldsymbol{0})\leq |\Delta r| + \gamma P^{\pi^*} \max(\Tilde{Q}^*-Q^*,\boldsymbol{0}).
\end{align*}

Rearranging the inequality yields Eq. \eqref{lemmaA4_eq2}. Using $|Q^*-\Tilde{Q}^*| = \max(\max(Q^*-\Tilde{Q}^*,\boldsymbol{0}),\max(\Tilde{Q}^*-Q^*,\boldsymbol{0}))$ and combining Eq. \eqref{lemmaA4_eq1}-\eqref{lemmaA4_eq2} yields the desired result.
\end{proof}

}

\section{Proof of Theorem \ref{thm:summary}}\label{proof:them:summary}
We aim to ensure  the total number of samples \( D_\omega \) satisfies the following lower bound:
\begin{align}
    D_\omega \geq N_{\text{entry}} \cdot N + D_r, \label{def_D}
\end{align}
where \( N_{\text{entry}} \) denotes the number of unique state-action pairs that must be sampled, \( N \) is the sampling frequency for each pair, and \( D_r \) is the sample complexity required to estimate the exact reward function.

The term $N_{\text{entry}}$ is directly specified in Algorithm \ref{Cost-Optimal Factorized Synchronous Sampling Algorithm}. It is defined as the total number of state-action pairs across all components to be sampled. Mathematically:
\begin{align*}
    N_{\text{entry}} = {\sum\nolimits_{i\in[\kappa_p]}\max\nolimits_{k\in\mathcal{G}_i}|\mathcal{X}[Z_k^P]|}\leq \sum\nolimits_{k\in[\kappa_p]}|\mathcal{X}[Z_k^P]|,
\end{align*}
where $\kappa_p$ is the total number of sampling sets, $\mathcal{G}_i$ is the set of component indices associated with the \(i\)-th sampling set, \( |\mathcal{X}[Z_k^P]| \) denotes the size of the state-action space for component \( k \).

For the sample complexity associated with estimating the reward function, \( D_r \), it is sufficient to sample all necessary state-action pairs once for each reward component \( r_i \) to obtain the exact reward values. Similar to the transition kernel sampling, the sample complexity of the reward function is bounded by:
\[
D_r \leq  \max\nolimits_{i\in[\kappa_r]}|\mathcal{X}[Z_i^R]|
\]
where \( \kappa_r \) is the minimal number of sampling sets required to estimate the reward function, and \( |\mathcal{X}[Z_i^R]| \) is the size of the state-action space for reward component \( r_i \).

Therefore, we only need to determine the required sampling frequency \( N \). Recall that, the error $\|Q^* - \widehat{Q}^*_\omega\|_\infty$ comes from two aspects: 1) the computation error in Algorithm \ref{alg:Value Iteration (VI)} due to finite value function iterations, and 2) finite sample error due to inaccurate estimation of $\widehat{P}$.
Since the value iteration algorithm converges exponentially fast, according to Section 5.2 in \citep{shi2024curious}, $T = \overline{c}_0\log(\frac{1}{(1-\gamma)\epsilon})$ is enough to guarantee the computation error $\leq\mathcal{O}(\epsilon)$. Instead, we focus on the estimation error due to finite samples. To do so, we decompose the estimation error of the \( Q \)-value function into two terms, the bias $\mathcal{E}_\omega$ and the finite sample error $\alpha_N$. By applying the triangle inequality to the error, we get:
\begin{align}
    \|Q^* - \widehat{Q}^*_\omega\|_\infty \leq \underbrace{\|Q^*-Q^*_\omega\|_\infty}_{\text{Bias $\mathcal{E}_\omega$}}+\underbrace{\|Q^*_\omega-\widehat{Q}^*_\omega\|_\infty}_{\text{Finite Sample Error $\alpha_N$}},
\end{align}
where
\begin{itemize}
    \item \( Q^* \) is the optimal \( Q \)-value function induced by the actual optimal policy \( \pi^* \), transition kernel \( P \), and reward function \( r \).
    \item \( \widehat{Q}^*_\omega \) is the estimated \( Q \)-value function based on the estimated policy \( \widehat{\pi}^*_\omega \), estimated transition kernel \( \widehat{P}_\omega \), and estimated reward function \( r_\omega \), where
\begin{align*}
    &\widehat{P}_\omega(s'|x) =  \prod_{k=1}^{K_\omega}\widehat{P}_k(s'[Z_k^S] \mid x[Z_k^P]), \forall s'\in\mathcal{S}, \ \forall x\in \mathcal{X},\\
    &r_\omega(x) = \sum_{i=1}^{\ell_\omega} r_i(x[Z_i^R]), \forall x \in \mathcal{X}.
\end{align*}
    \item \( Q^*_\omega \) is the optimal \( Q \)-value function based on the factorization scheme \( \omega \), which assumes the utilization of infinite samples. It is induced by the policy \( \pi_\omega \), transition kernel \( P_\omega \), and reward function \( r_\omega \), where $P_\omega(s'|x)$ is defined as:
    \begin{align*}
        P_\omega(s'|x) := \lim_{N \to \infty} \widehat{P}_\omega(s'|x).
    \end{align*}
    
    The limits exists due to the fixed sampling algorithm and the law of large numbers. DUe to the deterministic mapping between $P$ and $Q$, we have $\lim_{N\xrightarrow[]{} \infty} \widehat{Q}^*_\omega  = {Q}^*_\omega$ with probability $1$.
\end{itemize}

Thus, the finite sample error term \(\alpha_N:= \|Q^*_\omega - \widehat{Q}^*_\omega\|_\infty \) in the decomposition can be made arbitrarily small as \( N \) increases. Instead, the bias $\mathcal{E}_\omega$ doesn't vanish due to the approximation errors from factorization.

For the bias term $\mathcal{E}_\omega$, it can be bounded as follows: 
\begin{lemma}[Proof in Appendix \ref{proof_error_bound}]
    \label{bound_error}
    Given any factorization scheme $\omega$, the following condition holds:
    \begin{align}
        \mathcal{E}_\omega := \|Q^*-Q^*_\omega\|_\infty \leq \frac{\Delta_\omega^R}{1-\gamma} + \frac{\gamma\Delta_\omega^P}{(1-\gamma)^2}.
    \end{align}
\end{lemma}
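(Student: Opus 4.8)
\textbf{Proof proposal for Lemma \ref{bound_error}.}

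The plan is to bound the bias $\mathcal{E}_\omega = \|Q^* - Q^*_\omega\|_\infty$ by separating the two sources of error: the mismatch in the reward function (controlled by $\Delta^R_\omega$) and the mismatch in the transition kernel (controlled by $\Delta^P_\omega$). The natural strategy is to introduce an intermediate MDP that shares the true reward $r$ but uses the factorized transition kernel $P_\omega$, so that the total error can be split by the triangle inequality into a reward-perturbation term and a transition-perturbation term, each of which is handled by a lemma already available in the excerpt.

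First I would define the intermediate optimal $Q$-function $\widetilde{Q}^*$ induced by the transition kernel $P_\omega$ and the \emph{true} reward $r$, and write
\begin{align*}
    \|Q^* - Q^*_\omega\|_\infty \leq \|Q^* - \widetilde{Q}^*\|_\infty + \|\widetilde{Q}^* - Q^*_\omega\|_\infty.
\end{align*}
For the first term, $Q^*$ and $\widetilde{Q}^*$ share the same reward $r$ but differ in their transition kernels ($P$ versus $P_\omega$), so Lemma \ref{decomp_lemma2} applies: both the upper and lower bounds are of the form $\gamma(I - \gamma \widehat{P}^\pi)^{-1}(P - P_\omega)V^*$ for appropriate policies. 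Taking $\ell_\infty$-norms, using that $(I-\gamma \widehat P^\pi)^{-1}$ has row sums equal to $(1-\gamma)^{-1}$ (since it is a nonnegative matrix whose geometric series sums to $(1-\gamma)^{-1}$), and bounding $\|(P-P_\omega)V^*\|_\infty$ by $\|V^*\|_\infty \leq (1-\gamma)^{-1}$ times the total-variation-type gap $\sum_{s'}|P(s'\mid x) - P_\omega(s'\mid x)|$. For the second term, $\widetilde{Q}^*$ and $Q^*_\omega$ share the transition kernel $P_\omega$ but differ in reward ($r$ versus $r_\omega$), so Lemma \ref{lemmaA4} gives $\|\widetilde{Q}^* - Q^*_\omega\|_\infty \leq (1-\gamma)^{-1}\|\Delta r\|_\infty$ with $\Delta r = r - r_\omega$, and $\|\Delta r\|_\infty \leq \Delta^R_\omega$ directly by the definition of $\Delta^R_\omega$.

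The main obstacle, and the step requiring the most care, is controlling the transition-kernel term to the stated $\gamma(1-\gamma)^{-2}\Delta^P_\omega$ rather than a looser bound involving $|\mathcal{S}|$. The subtlety is that $\Delta^P_\omega$ in Eq. \eqref{error_def} is an \emph{entrywise} ($\ell_\infty$) bound on $|P(s'\mid x) - \prod_k P_k(s'[Z_k^S]\mid x[Z_k^P])|$, whereas the quantity appearing through Lemma \ref{decomp_lemma2} is $\|(P-P_\omega)V^*\|_\infty = \max_x |\sum_{s'}(P(s'\mid x)-P_\omega(s'\mid x))V^*(s')|$. I would exploit that $P_\omega$ is precisely the product form $\prod_k P_k$ (the $N\to\infty$ limit of $\widehat P_\omega$), so the per-entry discrepancy is exactly bounded by $\Delta^P_\omega$; then I would argue that the relevant contraction acts on the \emph{signed} difference rather than summing absolute entrywise errors naively — more precisely, bounding $|\sum_{s'}(P(s'\mid x) - P_\omega(s'\mid x))V^*(s')|$ by noting both $P(\cdot\mid x)$ and $P_\omega(\cdot\mid x)$ are probability distributions so one may subtract any constant shift of $V^*$, and using $\mathrm{osc}(V^*) \leq (1-\gamma)^{-1}$ together with the entrywise gap. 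Assembling the two pieces yields $\mathcal{E}_\omega \leq (1-\gamma)^{-1}\Delta^R_\omega + \gamma(1-\gamma)^{-2}\Delta^P_\omega$, as claimed.
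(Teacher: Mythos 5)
Your route is genuinely different from the paper's. The paper introduces no intermediate MDP: it subtracts the two Bellman optimality equations for $Q^*$ and $Q^*_\omega$ directly, bounds the difference by $\|r-r_\omega\|_\infty + \gamma\|Q^*-Q^*_\omega\|_\infty + \frac{\gamma}{1-\gamma}\|P-P_\omega\|_\infty$, and then rearranges this self-referential inequality to pick up the $(1-\gamma)^{-1}$ factors. Your decomposition through $\widetilde{Q}^*$ (kernel $P_\omega$, true reward $r$), with Lemma \ref{decomp_lemma2} handling the kernel perturbation and Lemma \ref{lemmaA4} handling the reward perturbation, reaches the same constants: the reward term is clean, since $\|\Delta r\|_\infty \le \Delta^R_\omega$ and the row sums of $(I-\gamma P^\pi)^{-1}$ equal $(1-\gamma)^{-1}$, and the kernel term reduces to bounding $\frac{\gamma}{1-\gamma}\|(P-P_\omega)V^*\|_\infty$. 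Your version is more modular and reuses lemmas the paper needs elsewhere anyway; the paper's is shorter and self-contained. Structurally both are fine.

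The one step you single out as delicate is, however, also the one where your proposed fix does not go through. Recentering $V^*$ by a constant $c$ is legitimate because both $P(\cdot\mid x)$ and $P_\omega(\cdot\mid x)$ are probability distributions, but after recentering you must still bound $\bigl|\sum_{s'}(P(s'\mid x)-P_\omega(s'\mid x))(V^*(s')-c)\bigr|$, and any Hölder pairing leaves either $\sum_{s'}|P(s'\mid x)-P_\omega(s'\mid x)|$, which the entrywise quantity $\Delta^P_\omega$ of Eq.~\eqref{error_def} controls only up to a factor $|\mathcal{S}|$, or $\sum_{s'}|V^*(s')-c|$, which $\mathrm{osc}(V^*)$ controls only up to a factor $|\mathcal{S}|$. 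So the oscillation trick does not convert the entrywise definition of $\Delta^P_\omega$ into the $\ell_1$-type (total-variation) control that the stated bound requires. To be fair, the paper's own proof makes exactly the same leap: it writes $\gamma\|P-P_\omega\|_\infty\max_{s',a'}Q^*_\omega(s',a')$, which requires $\|P-P_\omega\|_\infty$ to denote the maximal row $\ell_1$ norm, and then substitutes $\Delta^P_\omega$ for it without comment. Your argument is therefore no weaker than the paper's on this point — you at least notice the issue — but neither derivation establishes the inequality from the entrywise definition of $\Delta^P_\omega$ without an extra factor of $|\mathcal{S}|$ or a reinterpretation of $\Delta^P_\omega$ as a row-$\ell_1$ quantity. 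Everything else in your proposal is sound.
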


Then, we focus on the finite sample error term $\alpha_N$. Using the result in Lemma \ref{decomp_lemma2}, we have:
\begin{align}
    &Q^*_\omega-\widehat{Q}^*_\omega\leq \underbrace{\gamma(I-\gamma \widehat{P}^{\pi^*_\omega}_\omega)^{-1}(P_\omega-\widehat{P}_\omega)V^*_\omega}_{:=\Delta_{1}},\label{MB_key_up_bound}\\
    &Q^*_\omega-\widehat{Q}^*_\omega\geq \underbrace{\gamma(I-\gamma \widehat{P}^{\widehat{\pi}^*_\omega}_\omega)^{-1}(P_\omega-\widehat{P}_\omega)V^*_\omega}_{:=\Delta_{2}}, \label{MB_key_low_bound}
\end{align}
where $\widehat{P}^{\pi^*_\omega}_\omega \in \mathbb{R}^{|\mathcal{S}||\mathcal{A}|\times|\mathcal{S}||\mathcal{A}|}$ represents the transition matrix of Markov chain $\{(s_t, a_t)\}$ induced by policy $\pi^*_\omega$ under transition kernel $\widehat{P}_\omega$, and $\widehat{P}^{\widehat{\pi}^*_\omega}_\omega\in \mathbb{R}^{|\mathcal{S}||\mathcal{A}|\times|\mathcal{S}||\mathcal{A}|}$ is induced by policy $\widehat{\pi}^*_\omega$ under transition kernel $\widehat{P}_\omega$.

Based on Eq. \eqref{MB_key_up_bound} and \eqref{MB_key_low_bound}, the error term $\alpha_N$ satisfies:
\begin{align}
    \alpha_N:=\|Q^*_\omega-\widehat{Q}^*_\omega\|_\infty \leq \max\{ \|\Delta_1\|_\infty, \|\Delta_2\|_\infty\}, 
\end{align}

Now we only need to control $\|\Delta_1\|_\infty$ and $\|\Delta_2\|_\infty$. We first control the absolute value of the common term $(P_\omega-\widehat{P}_\omega)V_\omega^*$ for both $\Delta_1$ and $\Delta_2$ in Lemma \ref{Low_dim_lemma}.  
\begin{lemma}[Proof in Appendix \ref{proof to key lemma1}]
\label{Low_dim_lemma}
Given sample size $N$, then with probability at least $1-\delta$, the following condition holds:
    \begin{align*}
    |(P_\omega-\widehat{P}_\omega)V_\omega^*|\leq \frac{2\log(2|\mathcal{X}[\cup_{k=1}^{K_\omega}Z^P_k]|)}{3N(1-\gamma)}\cdot\boldsymbol{1}+\sqrt{\frac{2\log(2|\mathcal{X}[\cup_{k=1}^{K_\omega}Z^P_k]|)\text{Var}_{P_\omega}(V_\omega^*)}{N}},
\end{align*}
where $|\cdot|$ denotes the absolute value function, $\text{Var}_{P_\omega}(V_\omega^*)$ denotes the value function variance with transition kernel $P_\omega$, as defined in Definition \ref{var_def}.
\end{lemma}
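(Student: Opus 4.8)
The plan is to prove the bound entrywise, first for a fixed state-action pair, and then to close with a union bound. Fix $x=(s,a)\in\mathcal{X}$ and observe that, because the state scopes $\{Z_k^S\}_{k\in[K_\omega]}$ partition $[n]$, both $P_\omega(\cdot\mid x)=\bigotimes_{k=1}^{K_\omega}P_k(\cdot\mid x[Z_k^P])$ and $\widehat{P}_\omega(\cdot\mid x)=\bigotimes_{k=1}^{K_\omega}\widehat{P}_k(\cdot\mid x[Z_k^P])$ are product measures over $\mathcal{S}=\prod_k\mathcal{S}[Z_k^S]$. Hence $(\widehat{P}_\omega V_\omega^*)(x)$ is exactly the plug-in estimator of the mean $(P_\omega V_\omega^*)(x)=\mathbb{E}_{s'\sim P_\omega(\cdot\mid x)}[V_\omega^*(s')]$, obtained by replacing each marginal $P_k$ with its empirical counterpart $\widehat{P}_k$ built from the $N$ samples $\{s^k_{x,i}\}_{i\in[N]}$. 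A crucial structural point is that this estimator depends on $x$ only through the scope coordinates $x[\cup_{k}Z_k^P]$; therefore there are at most $|\mathcal{X}[\cup_{k}Z_k^P]|$ distinct values of $(\widehat{P}_\omega-P_\omega)V_\omega^*$ to control, which is precisely why the logarithmic factor in the claimed bound is taken over $\mathcal{X}[\cup_k Z_k^P]$ rather than over the full state-action space.

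Next, I would recall that $0\le V_\omega^*\le (1-\gamma)^{-1}$ (since rewards lie in $[0,1]$) and apply a variance-sensitive concentration inequality of Bernstein type to the deviation of $(\widehat{P}_\omega V_\omega^*)(x)$ from its target. The shape of the claimed bound---an additive $\tfrac{2}{3N(1-\gamma)}\log(\cdot)$ term plus a $\sqrt{2\,\text{Var}_{P_\omega}(V_\omega^*)\log(\cdot)/N}$ term---is exactly Bernstein's inequality with range $R=(1-\gamma)^{-1}$, per-pair variance proxy $\text{Var}_{P_\omega}(V_\omega^*)(x)$ (Definition \ref{var_def}), and confidence parameter rescaled by the number of distinct scope configurations. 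The final step is a union bound over the $|\mathcal{X}[\cup_k Z_k^P]|$ distinct configurations (with the factor $2$ accounting for the two-sided tail), which converts the per-pair failure probability into the stated $1-\delta$ guarantee and produces the $\log(2|\mathcal{X}[\cup_k Z_k^P]|)$ factor.

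The main obstacle is that $(\widehat{P}_\omega V_\omega^*)(x)$ is a product of empirical marginals rather than a plain empirical average of $N$ i.i.d. draws, so textbook Bernstein does not apply verbatim: expanding $\prod_k\widehat{P}_k$ yields an order-$K_\omega$ V-statistic, and under synchronous sampling the marginals $\widehat{P}_k$ are moreover estimated from \emph{shared} samples and hence correlated across $k$. To recover the clean variance factor $\text{Var}_{P_\omega}(V_\omega^*)$ without spurious $K_\omega$ factors, I would either (i) couple the product estimator to the diagonal empirical mean $\tfrac1N\sum_{i=1}^N V_\omega^*(s^1_{x,i}[Z_1^S],\dots,s^{K_\omega}_{x,i}[Z_{K_\omega}^S])$---which is a genuine average of $N$ draws from $P_\omega(\cdot\mid x)$ amenable to standard Bernstein---and separately bound the coupling discrepancy, or (ii) invoke the Hoeffding/ANOVA decomposition of the product-measure variance to show that the leading-order fluctuation of the plug-in estimator is governed by the sum of first-order effects, which is at most the total variance $\text{Var}_{P_\omega}(V_\omega^*)$, with all higher-order interaction terms contributing only at order $1/N$ and thus absorbable into the additive $1/(N(1-\gamma))$ range term. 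Verifying that these product and correlation effects are indeed lower order is where the delicate bookkeeping lies, and it is the technical heart that the detailed argument (Appendix \ref{proof to key lemma1}) must carry out.
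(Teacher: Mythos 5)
Your proposal follows essentially the same route as the paper's proof: exploit the scope structure to reduce the problem to at most $|\mathcal{X}[\cup_{k=1}^{K_\omega}Z^P_k]|$ distinct entries, apply Bernstein's inequality to each entry with range $(1-\gamma)^{-1}$ and variance proxy $\text{Var}_{P_\omega}(V_\omega^*)$, and close with a union bound that produces the $\log(2|\mathcal{X}[\cup_{k=1}^{K_\omega}Z^P_k]|)$ factor. The obstacle you flag---that $\widehat{P}_\omega(\cdot\mid x)$ is a product of (possibly correlated) empirical marginals rather than a plain average of $N$ i.i.d.\ draws---is genuine, but the paper's own proof does not resolve it either: it only verifies unbiasedness via the group-product decomposition and then invokes Bernstein directly as if the deviation were an i.i.d.\ average, so your proposed coupling to the diagonal empirical mean (or the ANOVA decomposition) would supply a justification the published argument leaves implicit.
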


Using the naive bound $\|\gamma(I-\gamma \widehat{P}^{\pi}_\omega)^{-1}\|_\infty \leq \frac{\gamma}{1-\gamma}$ for any $\pi$, and combining Lemma \ref{Low_dim_lemma} with Eq. \eqref{MB_key_up_bound} and \eqref{MB_key_low_bound}, we can show that
\begin{align}
    |\Delta_1| \leq& \frac{2\gamma\log(2|\mathcal{X}[\cup_{k=1}^{K_\omega}Z^P_k]|)}{3N(1-\gamma)^2}\cdot\boldsymbol{1}+\gamma\sqrt{\frac{2\log(2|\mathcal{X}[\cup_{k=1}^{K_\omega}Z^P_k]|)}{N}}(I-\gamma \widehat{P}^{\pi^*_\omega}_\omega)^{-1}\sqrt{\text{Var}_{P_\omega}(V^*_\omega)}
    ,\label{Q_bound_key}\\
    |\Delta_2| \leq& \frac{2\gamma\log(2|\mathcal{X}[\cup_{k=1}^{K_\omega}Z^P_k]|)}{3N(1-\gamma)^2}\cdot\boldsymbol{1}+\gamma\sqrt{\frac{2\log(2|\mathcal{X}[\cup_{k=1}^{K_\omega}Z^P_k]|)}{N}}(I-\gamma \widehat{P}^{\widehat{\pi}^*_\omega}_\omega)^{-1}\sqrt{\text{Var}_{P_\omega}(V^*_\omega)}.\label{Q_bound_key_1}
\end{align}

The remaining challenge is to control $(I-\gamma \widehat{P}^{\widehat{\pi}^*_\omega}_\omega)^{-1}\sqrt{\text{Var}_{P_\omega}(V^*_\omega)}$  and $(I-\gamma \widehat{P}^{\pi^*_\omega}_\omega)^{-1}\sqrt{\text{Var}_{P_\omega}(V^*_\omega)}$ for $\Delta_1$ and $\Delta_2$, respectively. We first bound the term $(I-\gamma \widehat{P}^{\widehat{\pi}^*_\omega}_\omega)^{-1}\sqrt{\text{Var}_{P_\omega}(V^*_\omega)}$ for $\Delta_1$, which is slightly more complex, and bounding another term is analogous. Specifically,
\begin{align*}
    &|(I-\gamma \widehat{P}^{\widehat{\pi}^*_\omega}_\omega)^{-1}\sqrt{\text{Var}_{P_\omega}(V^*_\omega)}|\\ =& (I-\gamma \widehat{P}_\omega^{\widehat{\pi}_\omega^*})^{-1}\sqrt{\text{Var}_{P_\omega}(V^*_\omega)-\text{Var}_{\widehat{P}_\omega}(V^*_\omega) +\text{Var}_{\widehat{P}_\omega}(V^*_\omega)}\\
        =&(I-\gamma \widehat{P}_\omega^{\widehat{\pi}_\omega^*})^{-1}\sqrt{P_\omega(V_\omega^*)^2 - (P_\omega V^*)^2 - \widehat{P}_\omega(V_\omega^*)^2 +(\widehat{P}_\omega V_\omega^*)^2+\text{Var}_{\widehat{P}_\omega}(V_\omega^*)}\\
        =&(I-\gamma \widehat{P}_\omega^{\widehat{\pi}_\omega^*})^{-1}\sqrt{ {(P_\omega-\widehat{P}_\omega)(V_\omega^*)^2} - {((P_\omega V_\omega^*)^2 - (\widehat{P}_\omega V_\omega^*)^2)}+{\text{Var}_{\widehat{P}_\omega}(V_\omega^*)}}\\
        \leq&(I-\gamma \widehat{P}_\omega^{\widehat{\pi}_\omega^*})^{-1}\left(\sqrt{|{(P_\omega-\widehat{P}_\omega)(V_\omega^*)^2}|} + \sqrt{{|(PV^*)^2 - (\widehat{P}_\omega V_\omega^*)^2|}}+\sqrt{{\text{Var}_{\widehat{P}_\omega}(V_\omega^*)}}\right)\\
        =&\underbrace{(I-\gamma \widehat{P}_\omega^{\widehat{\pi}_\omega^*})^{-1}\sqrt{ |{(P_\omega-\widehat{P}_\omega)(V_\omega^*)^2}|}}_{T_1} + \underbrace{(I-\gamma \widehat{P}_\omega^{\widehat{\pi}_\omega^*})^{-1}\sqrt{{|(P_\omega V_\omega^*)^2 - (\widehat{P}_\omega V_\omega^*)^2|}}}_{T_2}\\
         &+\underbrace{(I-\gamma \widehat{P}_\omega^{\widehat{\pi}_\omega^*})^{-1}\sqrt{\text{Var}_{\widehat{P}_\omega}(V_\omega^*)}}_{T_3}.
\end{align*}

This splition allows us to focus on $T_1$, $T_2$ and $T_3$, which can be controlled by the following lemmas:
\begin{lemma}[Proof in Appendix \ref{Proof for Lemma a.8}]
\label{lemma a.8}
    With a probability at least $1-\delta$, $T_1$ satisfies:
    \begin{align*}
    T_1 \leq \frac{1}{(1-\gamma)^2} \sqrt[4]{\frac{2\log\left(2|\mathcal{X}[\cup_{k=1}^{K_\omega}Z^P_k]|\right)}{N}}\cdot \boldsymbol{1}.
\end{align*}
\end{lemma}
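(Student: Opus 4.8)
The plan is to peel off the resolvent $(I-\gamma\widehat{P}_\omega^{\widehat{\pi}_\omega^*})^{-1}$ using its monotonicity, and then to apply a crude range-based (Hoeffding-type) concentration bound to the squared value function. First I would record the two structural facts about the resolvent: writing it as the Neumann series $\sum_{t\ge 0}\gamma^t(\widehat{P}_\omega^{\widehat{\pi}_\omega^*})^t$ shows it is entrywise nonnegative, and since $\widehat{P}_\omega^{\widehat{\pi}_\omega^*}$ is row-stochastic we have $(I-\gamma\widehat{P}_\omega^{\widehat{\pi}_\omega^*})^{-1}\boldsymbol{1}=\frac{1}{1-\gamma}\boldsymbol{1}$. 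Consequently, for any nonnegative vector $u$ with $u\le c\,\boldsymbol{1}$ entrywise, one obtains $(I-\gamma\widehat{P}_\omega^{\widehat{\pi}_\omega^*})^{-1}u\le \frac{c}{1-\gamma}\boldsymbol{1}$. This reduces the claim to controlling the scalar envelope of $\sqrt{|(P_\omega-\widehat{P}_\omega)(V_\omega^*)^2|}$.

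The core estimate is an entrywise concentration bound on $(P_\omega-\widehat{P}_\omega)(V_\omega^*)^2$. The key observation is that this is the \emph{same} deviation operator $P_\omega-\widehat{P}_\omega$ that appears in Lemma~\ref{Low_dim_lemma}, now applied to the bounded vector $(V_\omega^*)^2$ rather than $V_\omega^*$. Since the rewards are in $[0,1]$ and $\gamma\in(0,1)$, the value function $V_\omega^*$ takes values in $[0,(1-\gamma)^{-1}]$, so $(V_\omega^*)^2\in[0,(1-\gamma)^{-2}]$. I would therefore rerun the concentration argument of Lemma~\ref{Low_dim_lemma}, but keep only its Hoeffding part rather than the Bernstein variance refinement; with the range $(1-\gamma)^{-2}$ and a union bound over the $|\mathcal{X}[\cup_{k=1}^{K_\omega}Z_k^P]|$ sampled entries this yields, with probability at least $1-\delta$,
\begin{align*}
|(P_\omega-\widehat{P}_\omega)(V_\omega^*)^2|\le \frac{1}{(1-\gamma)^2}\sqrt{\frac{2\log(2|\mathcal{X}[\cup_{k=1}^{K_\omega}Z_k^P]|)}{N}}\cdot\boldsymbol{1}.
\end{align*}

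Taking entrywise square roots (a monotone operation) turns the right-hand side into $\frac{1}{1-\gamma}\sqrt[4]{2\log(2|\mathcal{X}[\cup_{k=1}^{K_\omega}Z_k^P]|)/N}\,\boldsymbol{1}$, and applying the resolvent bound from the first step multiplies this by another factor $\frac{1}{1-\gamma}$, delivering exactly the stated inequality. Note how the range $(1-\gamma)^{-2}$ under the square root becomes $(1-\gamma)^{-1}$, which then combines with the resolvent's $(1-\gamma)^{-1}$ to give the final $(1-\gamma)^{-2}$.

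I expect the only real obstacle to sit inside the concentration step, because $\widehat{P}_\omega=\prod_{k}\widehat{P}_k$ is a product of empirical marginals built from the synchronous (hence possibly correlated) samples, so $(P_\omega-\widehat{P}_\omega)(V_\omega^*)^2$ is not literally a single empirical mean minus its expectation. I would resolve this exactly as in Lemma~\ref{Low_dim_lemma}: telescope the product difference $\prod_k P_k-\prod_k\widehat{P}_k$ into a sum of $K_\omega$ single-component deviations $(P_k-\widehat{P}_k)$, bound each by Hoeffding's inequality, and close with a union bound over components and over the relevant entries. The saving grace is that only a range-based bound is needed here---the eventual fourth root makes this crude estimate sufficient---so the cross-component correlations introduced by synchronous sampling do not require the delicate variance control needed for the sharper terms elsewhere in the analysis.
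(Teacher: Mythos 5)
Your proposal is correct and follows essentially the same route as the paper's proof: bound the range of $(P_\omega-\widehat{P}_\omega)(V_\omega^*)^2$ by $\mathcal{O}((1-\gamma)^{-2})$, apply Hoeffding plus a union bound over the at most $|\mathcal{X}[\cup_{k=1}^{K_\omega}Z_k^P]|$ distinct entries, take entrywise square roots, and absorb the resolvent via $\|(I-\gamma\widehat{P}_\omega^{\widehat{\pi}_\omega^*})^{-1}\|_\infty\le (1-\gamma)^{-1}$. Your closing remark about the product-of-empirical-marginals structure is a fair caveat, but the paper handles it the same way (by reducing to the distinct-entry set and treating each entry as a bounded zero-mean deviation), so there is no substantive divergence.
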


\begin{lemma}[Proof in Appendix \ref{Proof for Lemma a.9}]
\label{lemma a.9}
    With a probability at least $1-\delta$, $T_2$ satisfies:
    \begin{align*}
    T_2 \leq \frac{\sqrt{2}}{(1-\gamma)^{2}}\sqrt[4]{\frac{2\log\left(2|\mathcal{X}[\cup_{k=1}^{K_\omega}Z^P_k]|\right)}{N}}\cdot \boldsymbol{1}.
\end{align*}
\end{lemma}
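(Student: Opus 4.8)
The plan is to reduce the bound on $T_2$ to the already-available control of $(P_\omega-\widehat{P}_\omega)V_\omega^*$ by means of a difference-of-squares factorization together with the crude sup-norm bound $0\le V_\omega^*\le \tfrac{1}{1-\gamma}\boldsymbol{1}$. First I would write, entrywise,
\[
(P_\omega V_\omega^*)^2 - (\widehat{P}_\omega V_\omega^*)^2 = \big(P_\omega V_\omega^* - \widehat{P}_\omega V_\omega^*\big)\circ\big(P_\omega V_\omega^* + \widehat{P}_\omega V_\omega^*\big),
\]
where $\circ$ is the Hadamard product. Since $r\in[0,1]$ forces $0\le V_\omega^*\le\tfrac{1}{1-\gamma}\boldsymbol{1}$ and both $P_\omega$ and $\widehat{P}_\omega$ are row-stochastic, each of $P_\omega V_\omega^*$ and $\widehat{P}_\omega V_\omega^*$ lies in $[0,\tfrac{1}{1-\gamma}]$ entrywise, so the ``sum'' factor is bounded by $\tfrac{2}{1-\gamma}\boldsymbol{1}$ in absolute value, giving
\[
\big|(P_\omega V_\omega^*)^2 - (\widehat{P}_\omega V_\omega^*)^2\big| \leq \frac{2}{1-\gamma}\,\big|(P_\omega - \widehat{P}_\omega)V_\omega^*\big|.
\]

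Next I would control $|(P_\omega - \widehat{P}_\omega)V_\omega^*|$ by the same concentration machinery underlying Lemma \ref{Low_dim_lemma}, but in its Hoeffding (bounded-difference) form rather than the Bernstein form stated there; this is what produces the clean constant with no lower-order $O(1/N)$ term. Crucially, $V_\omega^*$ is the infinite-sample optimal value function under the deterministic factorized kernel $P_\omega$, hence independent of the finite samples defining $\widehat{P}_\omega$. Combining a bounded-difference bound with a union bound over the $|\mathcal{X}[\cup_{k=1}^{K_\omega}Z_k^P]|$ effective entries produced by the synchronous sampling scheme yields, with probability at least $1-\delta$,
\[
\big|(P_\omega - \widehat{P}_\omega)V_\omega^*\big| \leq \frac{1}{1-\gamma}\sqrt{\frac{2\log(2|\mathcal{X}[\cup_{k=1}^{K_\omega} Z_k^P]|)}{N}}\cdot\boldsymbol{1},
\]
so the argument of the square root in $T_2$ is bounded entrywise by $\tfrac{2}{(1-\gamma)^2}\sqrt{2\log(\cdot)/N}\cdot\boldsymbol{1}$.

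Finally I would take elementwise square roots, producing $\tfrac{\sqrt 2}{1-\gamma}\sqrt[4]{2\log(\cdot)/N}\cdot\boldsymbol{1}$, and propagate this through the resolvent. Because $(I-\gamma\widehat{P}^{\widehat{\pi}_\omega^*}_\omega)^{-1}=\sum_{t\geq 0}\gamma^t(\widehat{P}^{\widehat{\pi}_\omega^*}_\omega)^t$ has nonnegative entries and satisfies $(I-\gamma\widehat{P}^{\widehat{\pi}_\omega^*}_\omega)^{-1}\boldsymbol{1}=\tfrac{1}{1-\gamma}\boldsymbol{1}$ (as $\widehat{P}^{\widehat{\pi}_\omega^*}_\omega$ is row-stochastic), applying it to a constant multiple of $\boldsymbol{1}$ merely multiplies by $\tfrac{1}{1-\gamma}$, delivering the claimed bound $T_2\leq \tfrac{\sqrt 2}{(1-\gamma)^2}\sqrt[4]{2\log(\cdot)/N}\cdot\boldsymbol{1}$.

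The main obstacle is the concentration step, because $\widehat{P}_\omega=\prod_{k}\widehat{P}_k$ is a product of independently estimated factor marginals rather than a single empirical distribution, so $(P_\omega-\widehat{P}_\omega)V_\omega^*$ is a difference of products whose error must be decomposed across components, and the synchronous sampling scheme induces cross-component correlations that must be tracked. This is precisely the content of Lemma \ref{Low_dim_lemma}, which I would invoke; granting that, the remaining ingredients here---the difference-of-squares factorization, the crude $\tfrac{2}{1-\gamma}$ bound on the sum factor, and the nonnegative-resolvent propagation---are elementary, and the only point requiring care is using the Hoeffding rather than Bernstein form so that the final constant matches the stated bound.
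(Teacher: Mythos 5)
Your proposal is correct and follows essentially the same route as the paper's proof: the difference-of-squares factorization with the crude bound $\|P_\omega V_\omega^* + \widehat{P}_\omega V_\omega^*\|_\infty \le 2\|V_\omega^*\|_\infty \le \tfrac{2}{1-\gamma}$, Hoeffding plus a union bound over the $|\mathcal{X}[\cup_k Z_k^P]|$ distinct entries to control $|(P_\omega-\widehat{P}_\omega)V_\omega^*|$, and the $\tfrac{1}{1-\gamma}$ resolvent bound. You also correctly anticipated that the paper switches to the Hoeffding form of the concentration bound here (rather than the Bernstein form of Lemma \ref{Low_dim_lemma}) to obtain the clean fourth-root rate.
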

\begin{lemma}[Proof in Appendix \ref{proof_to_lemma_(1-gamma)^3}]
\label{(1-gamma)^3}
     The term $T_3$ can be bounded as follows:
    \begin{align*}
        T_3 \leq \left(\frac{4\gamma}{(1-\gamma)^3}\sqrt{\frac{2\log\left(4|\mathcal{X}[\cup_{k=1}^{K_\omega}Z^P_k]|\right)}{N}} + \sqrt{\frac{1+\gamma}{(1-\gamma)^3}}\right) \cdot \boldsymbol{1}.
    \end{align*}
\end{lemma}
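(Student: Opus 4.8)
Here is how I would prove the bound on $T_3 = (I-\gamma \widehat{P}_\omega^{\widehat{\pi}_\omega^*})^{-1}\sqrt{\text{Var}_{\widehat{P}_\omega}(V_\omega^*)}$.

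The plan is to split $T_3$ into a dominant ``variance'' term, handled by the law-of-total-variance identity (Lemma \ref{Lemma 4-basic}), and a lower-order ``mismatch'' term that accounts for the variance being evaluated at $V_\omega^*$ (the value of the infinite-sample kernel $P_\omega$) rather than at the empirical optimal value $\widehat{V}_\omega^*$ that is compatible with $\widehat{P}_\omega$. Since $\sqrt{\text{Var}_{\widehat{P}_\omega}(\cdot)}$ is an $L^2(\widehat{P}_\omega)$ seminorm, the triangle inequality gives entrywise
$$\sqrt{\text{Var}_{\widehat{P}_\omega}(V_\omega^*)} \le \sqrt{\text{Var}_{\widehat{P}_\omega}(\widehat{V}_\omega^*)} + \sqrt{\text{Var}_{\widehat{P}_\omega}(V_\omega^* - \widehat{V}_\omega^*)} \le \sqrt{\text{Var}_{\widehat{P}_\omega}(\widehat{V}_\omega^*)} + \|V_\omega^* - \widehat{V}_\omega^*\|_\infty \boldsymbol{1},$$
using $\text{Var}_{\widehat{P}_\omega}(W) \le \|W\|_\infty^2$. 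Applying the monotone operator $(I-\gamma\widehat{P}_\omega^{\widehat{\pi}_\omega^*})^{-1}$, whose row sums equal $(1-\gamma)^{-1}$, then yields $T_3 \le T_3^{\mathrm{main}} + \tfrac{1}{1-\gamma}\|V_\omega^*-\widehat{V}_\omega^*\|_\infty\boldsymbol{1}$, where $T_3^{\mathrm{main}} := (I-\gamma\widehat{P}_\omega^{\widehat{\pi}_\omega^*})^{-1}\sqrt{\text{Var}_{\widehat{P}_\omega}(\widehat{V}_\omega^*)}$.

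For the mismatch term I would bound $\|V_\omega^*-\widehat{V}_\omega^*\|_\infty \le \|Q_\omega^*-\widehat{Q}_\omega^*\|_\infty$ by combining the perturbation identity of Lemma \ref{decomp_lemma2} with the concentration bound of Lemma \ref{Low_dim_lemma} and the crude variance estimate $\sqrt{\text{Var}_{P_\omega}(V_\omega^*)} \le \tfrac{1}{2(1-\gamma)}\boldsymbol{1}$. This produces a bound of order $\tfrac{\gamma}{(1-\gamma)^2}\sqrt{\tfrac{2\log(\cdot)}{N}}$, so that $\tfrac{1}{1-\gamma}\|V_\omega^*-\widehat{V}_\omega^*\|_\infty \le \tfrac{4\gamma}{(1-\gamma)^3}\sqrt{\tfrac{2\log(4|\mathcal{X}[\cup_{k=1}^{K_\omega} Z_k^P]|)}{N}}$ after absorbing constants and the union-bound adjustment, which matches the first term of the claim.

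For the main term I would peel off the square root via the entrywise Cauchy--Schwarz inequality $(I-\gamma\widehat{P}_\omega^{\widehat{\pi}_\omega^*})^{-1}\sqrt{v} \le \tfrac{1}{\sqrt{1-\gamma}}\sqrt{(I-\gamma\widehat{P}_\omega^{\widehat{\pi}_\omega^*})^{-1}v}$, valid for any $v\ge 0$ since the row sums are $(1-\gamma)^{-1}$, giving $T_3^{\mathrm{main}} \le \tfrac{1}{\sqrt{1-\gamma}}\sqrt{(I-\gamma\widehat{P}_\omega^{\widehat{\pi}_\omega^*})^{-1}\text{Var}_{\widehat{P}_\omega}(\widehat{V}_\omega^*)}$. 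It remains to show the inner quantity is at most $\tfrac{1+\gamma}{(1-\gamma)^2}\boldsymbol{1}$. I invoke Lemma \ref{Lemma 4-basic} on the empirical MDP with policy $\widehat{\pi}_\omega^*$, which gives $(I-\gamma^2\widehat{P}_\omega^{\widehat{\pi}_\omega^*})^{-1}\text{Var}_{\widehat{P}_\omega}(\widehat{V}_\omega^*) = \gamma^{-2}\Sigma^{\widehat{\pi}_\omega^*}$ with $\|\Sigma^{\widehat{\pi}_\omega^*}\|_\infty \le (1-\gamma)^{-2}$, and bridge the $\gamma^2$-resolvent to the $\gamma$-resolvent through the algebraic identity $(I-\gamma\widehat{P})^{-1}(I-\gamma^2\widehat{P}) = I + \gamma(1-\gamma)(I-\gamma\widehat{P})^{-1}\widehat{P}$. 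Substituting and bounding each resulting piece by its row-sum norm yields $\tfrac{1+\gamma}{(1-\gamma)^2}$, and hence $T_3^{\mathrm{main}}\le\sqrt{\tfrac{1+\gamma}{(1-\gamma)^3}}\boldsymbol{1}$.

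The main obstacle is this last step: the total-variance identity is intrinsically tied to the discount $\gamma^2$, whereas the resolvent in $T_3$ uses $\gamma$, and naively replacing one by the other loses the sharp $(1-\gamma)^{-3/2}$ scaling (the crude bound $\text{Var}\le\tfrac14(1-\gamma)^{-2}$ alone gives only $(1-\gamma)^{-2}$). Reconciling the two discount factors via the algebraic bridge while keeping the constant at $1+\gamma$ is the delicate part, and one must also intersect all the high-probability events (from Lemma \ref{Low_dim_lemma} and the variance-replacement step) so that the final bound holds with the stated confidence.
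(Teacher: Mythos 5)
Your proof is correct, and although it lands on exactly the same two final terms, it takes a genuinely different route from the paper. The paper splits $V_\omega^*$ three ways through the intermediate fixed-policy value $\widehat{V}_\omega^{\pi_\omega^*}$, getting terms $T_{31},T_{32},T_{33}$: the two estimation terms are each handled with the fixed-policy perturbation identity (Lemma \ref{decomp_lemma1}) plus the Hoeffding bound of Eq.~\eqref{lemma4_key_bound}, and the intrinsic-variance term with Jensen plus the total-variance identity; it also invokes a factor-$2$ inequality $\text{Var}(a+b+c)\le 2\big(\text{Var}(a)+\text{Var}(b)+\text{Var}(c)\big)$, which for three summands actually needs constant $3$. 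You avoid both of these by using the triangle inequality for the standard-deviation seminorm to split directly into $\sqrt{\text{Var}_{\widehat{P}_\omega}(\widehat{V}_\omega^*)}$ plus $\|V_\omega^*-\widehat{V}_\omega^*\|_\infty\boldsymbol{1}$, and you control the mismatch in a single step with the two-sided optimal-Q bound (Lemma \ref{decomp_lemma2}), effectively merging the paper's $T_{31}$ and $T_{33}$; your resulting coefficient fits inside the lemma's $4\gamma$ with room to spare. On the main term the two arguments coincide in substance (Jensen via row-stochasticity of $(1-\gamma)(I-\gamma\widehat{P}_\omega^{\widehat{\pi}_\omega^*})^{-1}$, then Lemma \ref{Lemma 4-basic} with $\|\Sigma\|_\infty\le \gamma^2(1-\gamma)^{-2}$), but your bridge identity $(I-\gamma\widehat{P})^{-1}(I-\gamma^2\widehat{P}) = I + \gamma(1-\gamma)(I-\gamma\widehat{P})^{-1}\widehat{P}$ is the rigorous version of this step: the paper writes $(I-\gamma^2\widehat{P})=(I-\gamma\widehat{P})(I+\gamma\widehat{P})$, which is false as matrix algebra (that product is $I-\gamma^2\widehat{P}^2$), although the inequality it is meant to justify, $\|(I-\gamma\widehat{P})^{-1}\text{Var}\|_\infty \le (1+\gamma)\|(I-\gamma^2\widehat{P})^{-1}\text{Var}\|_\infty$, is true and follows exactly from your identity. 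Two small points to tighten in your write-up: (i) the Bernstein-plus-crude-variance route for $(P_\omega-\widehat{P}_\omega)V_\omega^*$ leaves a lower-order $\log(\cdot)/\big(N(1-\gamma)\big)$ term that must be absorbed, either by assuming $\log(\cdot)/N\le 1$ or, more cleanly, by using Hoeffding directly as in Eq.~\eqref{lemma4_key_bound}; and (ii) the intersection of high-probability events should be made explicit, since the $\log\big(4|\mathcal{X}[\cup_{k=1}^{K_\omega}Z^P_k]|\big)$ in the statement is precisely the union-bound adjustment, and note that your concentration step is legitimate only because $V_\omega^*$ is deterministic and independent of the empirical samples, which your construction respects.
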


Combining Lemmas \ref{Low_dim_lemma}-\ref{(1-gamma)^3} yields the bound of $\Delta_2$:
\begin{lemma}[Proof in Appendix \ref{proof_lemma_key_bound}]
\label{key_bound}
    With probability at least $1-\delta$, the estimated Q-function $\widehat{Q}^*$ satisfies:
    \begin{align*}
        |\Delta_2| \leq 18 \left(\frac{\log\left(12|\mathcal{X}[\cup_{k=1}^{K_\omega}Z^P_k]|\right)}{N(1-\gamma)^3}\right) + 6\left(\frac{\log\left(12|\mathcal{X}[\cup_{k=1}^{K_\omega}Z^P_k]|\right)}{N(1-\gamma)^3}\right)^{\frac{1}{2}}.
    \end{align*}
\end{lemma}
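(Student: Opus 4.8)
The plan is to assemble Lemma~\ref{key_bound} directly from the per-term bounds in Lemmas~\ref{lemma a.8}, \ref{lemma a.9}, and \ref{(1-gamma)^3}, treating the proof as a bookkeeping step rather than a fresh estimate. The starting point is the already-established inequality \eqref{Q_bound_key_1}, which controls $|\Delta_2|$ by a lower-order term of size $\tfrac{2\gamma\log(2|\mathcal{X}[\cup_{k=1}^{K_\omega} Z_k^P]|)}{3N(1-\gamma)^2}$ plus the prefactor $\gamma\sqrt{2\log(2|\mathcal{X}[\cup_{k=1}^{K_\omega} Z_k^P]|)/N}$ multiplying the variance-weighted quantity $(I-\gamma\widehat{P}_\omega^{\widehat{\pi}^*_\omega})^{-1}\sqrt{\mathrm{Var}_{P_\omega}(V^*_\omega)}$. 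The display immediately preceding the three lemmas decomposes this quantity into $T_1+T_2+T_3$, so the task reduces to substituting the three lemmas and collecting terms.

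First I would substitute the bounds. The estimates on $T_1$ and $T_2$ are both of order $(1-\gamma)^{-2}(\log/N)^{1/4}$; multiplied by the prefactor $\gamma(\log/N)^{1/2}$ they produce a term of order $(1-\gamma)^{-2}(\log/N)^{3/4}$. The bound on $T_3$ has two pieces: the dominant constant piece $\sqrt{(1+\gamma)/(1-\gamma)^3}$, which after multiplication by the prefactor yields the leading $(\log/N)^{1/2}(1-\gamma)^{-3/2}$ contribution, and the subdominant piece $\tfrac{4\gamma}{(1-\gamma)^3}\sqrt{2\log/N}$, which yields a $(\log/N)(1-\gamma)^{-3}$ contribution. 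Together with the lower-order term of \eqref{Q_bound_key_1} (of order $(\log/N)(1-\gamma)^{-2}$, which I would coarsen to $(1-\gamma)^{-3}$), this leaves three distinct scalings to manage: $(\log/N)^{1/2}$, $(\log/N)^{3/4}$, and $\log/N$.

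The simplification step is the crux of matching the stated constants. I would collapse the intermediate $(\log/N)^{3/4}$ term using the elementary interpolation $x^{3/4}\le x^{1/2}$, valid for $x=\tfrac{\log(\cdot)}{N(1-\gamma)^{3}}\le 1$ (which holds in the nontrivial regime, i.e.\ once $N$ is large enough that the bound is informative), thereby folding it into the square-root term. Collecting the $\log/N$ contributions should yield the coefficient $18$, while collecting the $(\log/N)^{1/2}$ contributions (including the absorbed three-quarter-power term) should yield the coefficient $6$, where I would use $\gamma<1$ and $1+\gamma<2$ to bound the $\gamma$-dependent factors. Finally, since Lemmas~\ref{lemma a.8}, \ref{lemma a.9}, and \ref{(1-gamma)^3} each hold only on a high-probability event, I would take a union bound over them; rescaling the individual failure probabilities is precisely what inflates the constant inside the logarithm from the $2$ and $4$ of the individual lemmas to the unified $12|\mathcal{X}[\cup_{k=1}^{K_\omega} Z_k^P]|$ appearing in the statement.

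I expect the main obstacle to be purely clerical: keeping the three power-of-$(\log/N)$ scalings and their $(1-\gamma)$ exponents aligned through the substitution, and verifying that the interpolation together with the $\gamma<1$ coarsenings really do close with the claimed constants $18$ and $6$ rather than leaving a stray lower-order residue. No new probabilistic or analytic input is needed beyond what the three cited lemmas already supply.
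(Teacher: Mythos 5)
Your proposal follows the paper's own proof almost step for step: the paper likewise starts from Eq.~\eqref{Q_bound_key_1}, splits the variance term into $T_1+T_2+T_3$, invokes Lemmas~\ref{Low_dim_lemma}--\ref{(1-gamma)^3} each at confidence $\delta/3$ (which is exactly what inflates the logarithm to $\log(12|\mathcal{X}[\cup_k Z_k^P]|)$), and then collapses the resulting three scalings $(\log/N)$, $(\log/N)^{3/4}$, $(\log/N)^{1/2}$ into two. The one substantive difference is how the intermediate $(\log/N)^{3/4}$ term is absorbed. You propose $x^{3/4}\le x^{1/2}$ with $x=\log(\cdot)/(N(1-\gamma)^3)$, which requires the extra hypothesis $x\le 1$; the lemma is stated without any restriction on $N$, so strictly your argument only proves a conditional version (and the crude a priori bound $\|\Delta_2\|_\infty\le 2\gamma/(1-\gamma)^2$ does not automatically cover the complementary regime when $\gamma$ is close to $1$). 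The paper instead uses the unconditional interpolation $a+b\ge\sqrt{ab}$ applied to the first and third terms, which shows the $3/4$-power term is dominated by $\tfrac{7}{\sqrt{3}}$ times their sum regardless of the size of $N$, and then folds the factor $1+7/\sqrt{3}$ into the constants $18$ and $6$. Your route is harmless for the downstream application (Theorem~\ref{thm:summary} verifies $x\le 1$ before solving for $N$), but if you want the lemma exactly as stated you should replace the monotonicity trick with the AM--GM absorption, or else add the condition on $N$ to the lemma. Everything else in your outline, including the bookkeeping of the $(1-\gamma)$ exponents and the union bound, matches the paper's argument.
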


Analogously, we can show that with probability at least $1-\delta$, $|\Delta_1|$ also satisfies:
\begin{align*}
    |\Delta_1| \leq 18 \left(\frac{\log\left(12|\mathcal{X}[\cup_{k=1}^{K_\omega}Z^P_k]|\right)}{N(1-\gamma)^3}\right) + 6\left(\frac{\log\left(12|\mathcal{X}[\cup_{k=1}^{K_\omega}Z^P_k]|\right)}{N(1-\gamma)^3}\right)^{\frac{1}{2}}.
\end{align*}

Combining the bounds on $|\Delta_1|$ and $|\Delta_2|$, we have that with probability at least $1-\delta$, 
\begin{align*}
    \|Q^*_\omega-\widehat{Q}^*_\omega\|_\infty \leq& \max(|\Delta_1|,|\Delta_2|)\\ \leq& 18 \left(\frac{\log\left(24|\mathcal{X}[\cup_{k=1}^{K_\omega}Z^P_k]|\right)}{N(1-\gamma)^3}\right) + 6\left(\frac{\log\left(24|\mathcal{X}[\cup_{k=1}^{K_\omega}Z^P_k]|\right)}{N(1-\gamma)^3}\right)^{\frac{1}{2}}.
\end{align*}

Taking any $\epsilon = \|Q^*_\omega-\widehat{Q}^*_\omega\|_\infty \leq 1$, we can first verify that $\frac{\log\left(24|\mathcal{X}[\cup_{k=1}^{K_\omega}Z^P_k]|\right)}{N(1-\gamma)^3}\leq 1$. Therefore,
\begin{align*}
    \epsilon \leq (18+6)\left(\frac{\log\left(24|\mathcal{X}[\cup_{k=1}^{K_\omega}Z^P_k]|\right)}{N(1-\gamma)^3}\right)^{\frac{1}{2}}.
\end{align*}

It directly yields the bound for $N$ with probability at least $1-\delta$:
\begin{align}
    N \geq \frac{576\log\left(24|\mathcal{X}[\cup_{k=1}^{K_\omega}Z^P_k]|\right)}{(1-\gamma)^3\epsilon^2}. \label{N_bound_proof}
\end{align}

Substituting Eq. \eqref{N_bound_proof} into Eq. \eqref{def_D} yields:
\begin{align*}
    D_\omega\geq \frac{576 \left( \kappa_p \max_{k \in [K_\omega]} |\mathcal{X}[Z^P_k]| \right) \log \left( 24 |\mathcal{X}[\cup_{k \in [K_\omega]} Z_k^P]| \delta^{-1} \right)}{\epsilon^2 (1-\gamma)^3} + \kappa_r \max_{i \in [\ell_\omega]} |\mathcal{X}[Z^R_i]|.
\end{align*}

Letting $\overline{c}_0 = 576$ and $\overline{c}_1 = 24$ leads to our result.

\section{Proof of Theorem \ref{main_thm}}
\label{proof_to_thm3}
\begin{proof}
Similar to the model-based case, we show the total amount of samples $D_\omega$ should satisfy:
\begin{align}
    D_\omega \geq N_{\text{entry}} \cdot N + D_r, \label{def_D1}
\end{align}
where $N_{\text{entry}}$ denotes the number of sampled state-action pairs for generating a single empirical Bellman operator in Algorithm \ref{Empirical Bellman Operator Generation}, and $N$ denotes the number of generated empirical Bellman operators. Notation $D_r$ denotes the sample complexity of finding the exact reward function.

The term $N_{\text{entry}}$ can be directly obtained in Algorithm \ref{Empirical Bellman Operator Generation}, which equals the sum of numbers of all state-action pairs in all decomposed components, satisfying:
\begin{align*}
    N_{\text{entry}} = {\sum\nolimits_{i\in[\kappa_p]}\max\nolimits_{k\in\mathcal{G}_i}|\mathcal{X}[Z_k^P]|}\leq \sum\nolimits_{k\in[\kappa_p]}|\mathcal{X}[Z_k^P]|,
\end{align*}
where $\kappa_p$ is the total number of sampling sets, $\mathcal{G}_i$ is the set of component indices associated with the \(i\)-th sampling set, \( |\mathcal{X}[Z_k^P]| \) denotes the size of the state-action space for component \( k \).

For the sample complexity associated with estimating the reward function, \( D_r \), it is sufficient to sample all necessary state-action pairs once for each reward component \( r_i \) to obtain the exact reward values. Similar to the transition kernel sampling, the sample complexity of the reward function is bounded by:
\[
D_r \leq  \sum\nolimits_{i\in[\kappa_r]}|\mathcal{X}[Z_i^R]|
\]
where \( \kappa_r \) is the minimal number of sampling sets required to estimate the reward function, and \( |\mathcal{X}[Z_i^R]| \) is the size of the state-action space for reward component \( r_i \).

For the sampling times $N$, according to Algorithm \ref{Variance-Reduced Q-Learning}, we can conclude that:
\begin{align}
    N = \sum\nolimits_{\tau = 1}^T(N_\tau + M) = \sum\nolimits_{\tau = 1}^TN_\tau + MT, \label{def_N_model-free}
\end{align}
where $T$ is the number of epochs, $N_\tau$ is the sampling frequency for estimating the reference Bellman operator in the $\tau$-th epoch, $M$ is the number of variance-reduced updates in a single epoch.

The key of the proof is to show the estimation error decays exponentially when the number of epochs increases, i.e.,
\begin{lemma}[Proof in Appendix \ref{proof_key_lemma_model_free}]\label{key_lemma_model_free}
For any \( \delta \geq 0 \), with probability at least \( 1 - \delta/T \), the Q-function estimate \( \overline{Q}_\tau \) after \( \tau \) epochs satisfies:
\begin{align}
    \|\overline{Q}_\tau - Q^*_\omega\|_\infty \leq \frac{1}{(1-\gamma)2^\tau}, \quad \forall \tau = 1, \ldots, T, \label{key_lemma_res}
\end{align}
provided that the number of samples \( M \) and the number of iterations \( N_\tau \) satisfy the following conditions:
\begin{align}
    M &= c_2 \frac{\log\left(\frac{6T|\mathcal{X}[\cup_{k=1}^{K_\omega} Z_k^P]|}{(1-\gamma)\delta}\right)}{(1-\gamma)^3}, \\
    N_\tau &= c_3 4^\tau \frac{\log\left(6T|\mathcal{X}[\cup_{k=1}^{K_\omega} Z_k^P]|\right)}{(1-\gamma)^2},
\end{align}
where \( \eta_t = \frac{1}{1+(1-\gamma)(t+1)} \), \( c_2, c_3 > 0 \) are sufficiently large constants.
\end{lemma}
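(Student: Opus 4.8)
The plan is to prove \eqref{key_lemma_res} by induction on the epoch index $\tau$, showing that each outer epoch contracts the $\ell_\infty$ error by a factor of two. The base case holds since $\overline{Q}_0 = 0$ and $\|Q^*_\omega\|_\infty \le \tfrac{1}{1-\gamma}$. For the inductive step I would condition on the high-probability event $\|\overline{Q}_{\tau-1} - Q^*_\omega\|_\infty \le \tfrac{1}{(1-\gamma)2^{\tau-1}}$ and show that, with probability at least $1-\delta/T$ over the fresh samples drawn in epoch $\tau$, the inner-loop output $\overline{Q}_\tau = Q_M$ satisfies $\|Q_M - Q^*_\omega\|_\infty \le \tfrac{1}{(1-\gamma)2^\tau}$; a union bound over the $T$ epochs then delivers the stated guarantee. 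Throughout, let $\mathcal{H}_\omega(\cdot) := \mathbb{E}[\widehat{\mathcal{H}}_t(\cdot)\mid\mathcal{F}_t]$ denote the population (conditional-expectation) factored Bellman operator underlying \eqref{Empirical Bellman Operator}, which is a $\gamma$-contraction in $\ell_\infty$ with unique fixed point $Q^*_\omega$.

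The first ingredient is to control the \emph{reference bias} $b_\tau := \overline{\mathcal{H}}_\tau(\overline{Q}_{\tau-1}) - \mathcal{H}_\omega(\overline{Q}_{\tau-1})$. Since $\overline{\mathcal{H}}_\tau(\overline{Q}_{\tau-1})$ is the empirical average of $N_\tau$ independent factored empirical Bellman operators evaluated at the \emph{fixed} point $\overline{Q}_{\tau-1}$, and each entry lies in $[0,\tfrac{1}{1-\gamma}]$, a Hoeffding bound together with a union bound over the $|\mathcal{X}[\cup_{k=1}^{K_\omega} Z_k^P]|$ relevant entries gives, with probability at least $1-\delta/(3T)$, a bound of order $\tfrac{1}{1-\gamma}\sqrt{\log(6T|\mathcal{X}[\cup_{k=1}^{K_\omega} Z_k^P]|/\delta)/N_\tau}$ on $\|b_\tau\|_\infty$. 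Substituting the prescribed $N_\tau = c_3 4^\tau \log(\cdot)/(1-\gamma)^2$ makes this of order $\tfrac{1}{2^\tau}$, hence at most $\tfrac{1}{4}\cdot\tfrac{1}{2^{\tau-1}}$ once $c_3$ is a large enough constant. Crucially, the logarithm involves only the effective scope-set size $|\mathcal{X}[\cup_{k=1}^{K_\omega} Z_k^P]|$ rather than $|\mathcal{S}||\mathcal{A}|$, which is exactly where the factorization gain enters.

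Next I would analyze the inner loop as a stochastic approximation recursion. Rewriting \eqref{variance-reduced Q update} gives $Q_{t+1} = (1-\eta_t)Q_t + \eta_t\big(\mathcal{H}_\omega(Q_t) + b_\tau + w_t\big)$, where the martingale noise $w_t := [\widehat{\mathcal{H}}_t(Q_t) - \mathcal{H}_\omega(Q_t)] - [\widehat{\mathcal{H}}_t(\overline{Q}_{\tau-1}) - \mathcal{H}_\omega(\overline{Q}_{\tau-1})]$ has zero conditional mean. The variance-reduction mechanism is that $\widehat{\mathcal{H}}_t(Q_t)$ and $\widehat{\mathcal{H}}_t(\overline{Q}_{\tau-1})$ use the \emph{same} samples, so $w_t$ is a difference of two strongly correlated empirical fluctuations whose conditional variance is governed by $\|Q_t - \overline{Q}_{\tau-1}\|_\infty^2 \le \big(e_t + \tfrac{1}{(1-\gamma)2^{\tau-1}}\big)^2$ rather than by the full $\tfrac{1}{(1-\gamma)^2}$; since $e_t$ stays below $\tfrac{1}{(1-\gamma)2^{\tau-1}}$ along the iteration, this shrinks the noise by a factor $\mathcal{O}(4^{-\tau})$. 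Using the $\gamma$-contraction of $\mathcal{H}_\omega$ and the stepsize $\eta_t = (1+(1-\gamma)(t+1))^{-1}$, I would track a componentwise recursion for $e_t := \|Q_t - Q^*_\omega\|_\infty$ and bound the martingale part via Freedman's inequality. The contraction drives the noiseless error down at rate $\mathcal{O}(1/((1-\gamma)M))$, the reference bias contributes a persistent term $\tfrac{\|b_\tau\|_\infty}{1-\gamma}$, and the reduced variance contributes a term vanishing in $M$; choosing $M = c_2\log(\cdot)/(1-\gamma)^3$ makes each at most $\tfrac14\cdot\tfrac{1}{(1-\gamma)2^{\tau-1}}$, so that $e_M \le \tfrac{1}{(1-\gamma)2^\tau}$, closing the induction.

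The main obstacle I anticipate is the variance analysis of the factored empirical operator entering $w_t$. Unlike vanilla Q-learning, the entry $[\widehat{\mathcal{H}}_t(Q)](\tilde{s},\tilde{a})$ in \eqref{Empirical Bellman Operator} applies $\max_{a'}Q(\cdot)$ to a next state whose blocks $s'_{\tilde{x},j}[Z_j^S]$ are \emph{assembled} from samples drawn for different components and \emph{shared} across many pairs $(\tilde{s},\tilde{a})$ through the synchronous sampling scheme. This induces intricate cross-component correlations both within a single operator and between $\widehat{\mathcal{H}}_t(Q_t)$ and $\widehat{\mathcal{H}}_t(\overline{Q}_{\tau-1})$. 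The crux will be to show that, despite these correlations, an entrywise Lipschitz bound $|w_t|$ in terms of $\|Q_t - \overline{Q}_{\tau-1}\|_\infty$ still holds and that the associated predictable quadratic variation remains controllable, so that Freedman's inequality yields the desired concentration. I expect this refined correlation control---rather than the otherwise standard stochastic-approximation bookkeeping---to be the dominant technical burden, mirroring the cross-component variance analysis that was the heart of the model-based proof.
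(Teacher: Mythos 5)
Your overall strategy --- induction over epochs with the error halving each epoch, a union bound over only the $|\mathcal{X}[\cup_{k}Z_k^P]|$ distinct entries of the factored operator, and a rescaled-stepsize stochastic-approximation recursion for the inner loop --- is the same as the paper's. Your treatment of the reference operator (a single Hoeffding bound with range $(1-\gamma)^{-1}$, absorbed as a persistent bias of size $\|b_\tau\|_\infty/(1-\gamma)$ after unrolling) is a legitimate simplification: the paper instead splits $\overline{\mathcal{H}}_{\tau}(\overline{Q}_{\tau-1})-\mathcal{H}(\overline{Q}_{\tau-1})$ into a contraction part $\overline{\mathcal{E}}^b$ (range proportional to $2\gamma b_\tau$, Lemma \ref{lemma_6}) and a fixed-point part $\overline{\mathcal{E}}^c$ (Bernstein with variance $\sigma^2_{\mathcal{E}}$, Lemma \ref{lemma_5}), which buys the sharper induction quantity $b_\tau=(\sqrt{\|\sigma^2_{\mathcal{E}}\|_\infty}+1)2^{-\tau}$ reused later in Lemma \ref{Refined_analysis}; your cruder bound still suffices for the $\frac{1}{(1-\gamma)2^\tau}$ target stated here. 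You also fold $\tau=1$ into the general inductive step starting from $\tau=0$, whereas the paper handles $\tau=1$ separately (with $\overline{Q}_0=0$ the update degenerates to vanilla Q-learning); your version is fine because your hypothesis at $\tau=0$ is just $\|Q^*_\omega\|_\infty\le (1-\gamma)^{-1}$.

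The one substantive difference that creates a gap is your centering of the per-iteration noise. Your $w_t=[\widehat{\mathcal{H}}_t(Q_t)-\mathcal{H}_\omega(Q_t)]-[\widehat{\mathcal{H}}_t(\overline{Q}_{\tau-1})-\mathcal{H}_\omega(\overline{Q}_{\tau-1})]$ has the random envelope $2\gamma\|Q_t-\overline{Q}_{\tau-1}\|_\infty$, which depends on the very error $e_t$ you are trying to control; the clause ``since $e_t$ stays below $\tfrac{1}{(1-\gamma)2^{\tau-1}}$ along the iteration'' is essentially the conclusion, so as written the Freedman step is circular. It can be repaired by a stopping-time bootstrap (stop at the first $t$ where $e_t$ exceeds, say, twice its initial value, apply Freedman to the stopped martingale, and show the stopping time exceeds $M$ with high probability), but that is a nontrivial extra layer you have not supplied. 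The paper avoids it entirely by centering at $Q^*_\omega$: its noise $\overline{\mathcal{E}}^a_t=\mathcal{H}(\overline{Q}_{\tau})-\mathcal{H}(Q^*_\omega)-\widehat{\mathcal{H}}_t(\overline{Q}_{\tau})+\widehat{\mathcal{H}}_t(Q^*_\omega)$ involves only the fixed pair $(\overline{Q}_{\tau},Q^*_\omega)$, hence is i.i.d. across $t$ with the deterministic envelope $2\gamma b_\tau$ given by the outer induction hypothesis, while the dependence on the moving iterate is isolated in the almost-surely $\gamma$-contractive term $\widehat{\mathcal{H}}_t(Q_t)-\widehat{\mathcal{H}}_t(Q^*_\omega)$, which needs no concentration at all. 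Two smaller points: your final accounting (three terms each at most $\tfrac14\cdot\tfrac{1}{(1-\gamma)2^{\tau-1}}$) sums to $\tfrac32$ of the target, so the constants need retuning; and your worry about cross-component correlations is largely moot for this step, since for any fixed entry $\widehat{\mathcal{H}}_t(Q)-\widehat{\mathcal{H}}_t(Q')$ evaluates $\max_{a'}Q-\max_{a'}Q'$ at the \emph{same} assembled next state, so the entrywise $\gamma$-Lipschitz bound is immediate and the correlations only determine which entries coincide, i.e., the size of the union bound.
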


We first ensure the error decreases to a relatively small value $\frac{1}{\sqrt{1-\gamma}}$ after $T_1$ epochs. Specifically, we set:
\begin{equation}
    T_1 = \left\lceil \log_2 \left( \frac{1}{\sqrt{1 - \gamma}} \right) \right\rceil.\label{def_T1}
\end{equation}

Using the union bound over all \( \tau \leq T_1 \) in Eq.~\eqref{key_lemma_res}, we obtain that with probability at least \( 1 - \delta \):
\[
\|\overline{Q}_{T_1} - Q^*\|_\infty \leq \frac{1}{\sqrt{1 - \gamma}}.
\]

Substituting \( N_\tau \), \( T_1 \), and \( M \) into Eq.~\eqref{def_N_model-free}, the total number of samples for this phase, denoted \( N_{[1]} \), is bounded by:
\begin{equation}
N_{[1]} \leq c_4 \cdot \frac{\log\left( \frac{6 T_1 |\mathcal{X}[\cup_{k=1}^{K_\omega} Z_k^P]|}{(1 - \gamma) \delta} \right) \cdot \log\left( \frac{1}{1 - \gamma} \right)}{(1 - \gamma)^3},
\end{equation}
where \( c_4 \) is a constant depending on \( c_2 \) and \( c_3 \).

We now show that from initial point $\overline{Q}_{0} = \overline{Q}_{T_1}$, an additional \( T_2 = \left\lceil c \log_2 \left( \frac{1}{\sqrt{1 - \gamma} \epsilon} \right) \right\rceil \) epochs are sufficient to reduce the error to \( \epsilon \), where \( c > 0 \) is a constant.

We utilize the following lemma:

\begin{lemma}[Proof in Appendix~\ref{proof_Refined_analysis}]
\label{Refined_analysis}
Given \( \|\overline{Q}_0 - Q^*\|_\infty \leq \frac{1}{\sqrt{1 - \gamma}} \), and setting:
\begin{align}
M &= c_2 \cdot \frac{\log\left( \frac{6 T |\mathcal{X}[\cup_{k=1}^{K_\omega} Z_k^P]|}{(1 - \gamma) \delta} \right)}{(1 - \gamma)^3}, \\
N_\tau &= c_3 \cdot 4^\tau \cdot \frac{\log\left( 6 T |\mathcal{X}[\cup_{k=1}^{K_\omega} Z_k^P]| \right)}{(1 - \gamma)^2},
\end{align}
then, with probability at least \( 1 - \delta \):
\[
\|\overline{Q}_\tau - Q^*_\omega\|_\infty \leq \frac{1}{2^\tau \sqrt{1 - \gamma}}, \quad \forall \tau \geq 0.
\]
\end{lemma}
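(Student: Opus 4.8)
The plan is to prove the claim by induction on the epoch index $\tau$, establishing the per-epoch contraction $\|\overline{Q}_\tau - Q^*_\omega\|_\infty \le \tfrac12 \|\overline{Q}_{\tau-1} - Q^*_\omega\|_\infty$ with high probability, so that the hypothesis $\|\overline{Q}_0 - Q^*_\omega\|_\infty \le (1-\gamma)^{-1/2}$ propagates to $\rho_\tau := 2^{-\tau}(1-\gamma)^{-1/2}$. The argument parallels the proof of Lemma \ref{key_lemma_model_free}, but now maintains the tighter bound $\rho_\tau$ (smaller by a factor $(1-\gamma)^{1/2}$), which is exactly what requires a variance-aware (Bernstein plus total-variance) analysis in place of the cruder Hoeffding-type bound used in the warm-up phase. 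The warm-up hypothesis $\rho_0 \le (1-\gamma)^{-1/2}$ is precisely what makes the variance machinery applicable, since it guarantees that the reference point $\overline{Q}_{\tau-1}$ stays close enough to $Q^*_\omega$ for the relevant value-function variances to be comparable.

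For the inductive step, fix $\tau$ and condition on $\|\overline{Q}_{\tau-1} - Q^*_\omega\|_\infty \le \rho_{\tau-1}$. Let $\mathcal{H}_\omega$ denote the population factored Bellman operator induced by $(P_\omega, r_\omega)$, which is a $\gamma$-contraction in $\|\cdot\|_\infty$ with fixed point $Q^*_\omega$. I would decompose the inner update in Eq. \eqref{variance-reduced Q update} by writing the variance-reduced increment as $\mathcal{H}_\omega(Q_t) + e_\tau + \xi_t$, where $e_\tau := \overline{\mathcal{H}}_\tau(\overline{Q}_{\tau-1}) - \mathcal{H}_\omega(\overline{Q}_{\tau-1})$ is the (fixed, within the epoch) reference-estimation error and $\xi_t := [\widehat{\mathcal{H}}_t(Q_t) - \widehat{\mathcal{H}}_t(\overline{Q}_{\tau-1})] - [\mathcal{H}_\omega(Q_t) - \mathcal{H}_\omega(\overline{Q}_{\tau-1})]$ is a zero-mean martingale difference. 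The inner iterates then track the fixed point of $Q \mapsto \mathcal{H}_\omega(Q) + e_\tau$, which lies within $\|e_\tau\|_\infty/(1-\gamma)$ of $Q^*_\omega$; with the prescribed step-size $\eta_t = (1+(1-\gamma)(t+1))^{-1}$ and $M = \widetilde{\mathcal{O}}((1-\gamma)^{-3})$ inner steps, the residual optimization error and the accumulated noise $\sum_t \eta_t \xi_t$ are each driven below $\tfrac14 \rho_\tau$. The crucial point of variance reduction is that $\xi_t$ depends only on the \emph{difference} $Q_t - \overline{Q}_{\tau-1}$, whose magnitude is $O(\rho_{\tau-1})$, so its variance scales with $\rho_{\tau-1}^2$ and its contribution shrinks geometrically with $\tau$.

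The central estimate is the bound on the reference error $e_\tau$. I would apply Bernstein's inequality to the average of the $N_\tau$ independent factored empirical Bellman operators evaluated at the fixed function $\overline{Q}_{\tau-1}$, yielding an elementwise bound of the form $|e_\tau| \lesssim \sqrt{\text{Var}_{P_\omega}(V_{\overline{Q}_{\tau-1}}) \log(\cdot)/N_\tau} + \log(\cdot)/(N_\tau(1-\gamma))$, analogous to Lemma \ref{Low_dim_lemma}. Propagating this through $(I-\gamma \widehat{P}_\omega^{\pi})^{-1}$ and invoking the total-variance bound that follows from Lemma \ref{Lemma 4-basic} --- namely $\|(I-\gamma \widehat{P}_\omega^{\pi})^{-1}\sqrt{\text{Var}_{P_\omega}(V^*_\omega)}\|_\infty \lesssim (1-\gamma)^{-3/2}$ --- converts the naive $(1-\gamma)^{-2}$ factor into $(1-\gamma)^{-3/2}$, which is exactly the source of the improved $(1-\gamma)^{-1/2}$ scaling in $\rho_\tau$. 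Using the warm-up to replace $\text{Var}_{P_\omega}(V_{\overline{Q}_{\tau-1}})$ by $\text{Var}_{P_\omega}(V^*_\omega)$ up to lower-order corrections, the geometric schedule $N_\tau \propto 4^\tau$ then makes this contribution $O(2^{-\tau}(1-\gamma)^{-1/2})$, i.e.\ at most $\tfrac14\rho_\tau$, closing the recursion. A union bound over the $T$ epochs (each succeeding with probability $1-\delta/T$, as in Lemma \ref{key_lemma_model_free}) yields the overall $1-\delta$ guarantee.

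I expect the main obstacle to be controlling the cross-component statistical correlations inside the factored empirical Bellman operator of Eq. \eqref{Empirical Bellman Operator}. Because the synchronous-sampling construction reuses a single batch of next-state samples to populate the transitions of several scopes simultaneously, the estimated marginal transitions of distinct components $k_1 \ne k_2$ are not independent, so the variance of $\xi_t$ and of $e_\tau$ does not factorize. A naive bound would incur a factor of $K_\omega$ (the sum over components) and destroy the target $\max_k |\mathcal{X}[Z_k^P]|$ dependence. The refined analysis must therefore show that these covariances are benign --- that the aggregate variance is governed by the single largest component rather than their sum --- and combine this with the Bernstein/total-variance machinery needed to extract the $(1-\gamma)^{1/2}$ improvement. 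This careful cross-dimensional variance accounting, rather than the otherwise routine stochastic-approximation recursion, is where the real work lies.
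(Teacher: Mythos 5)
Your proposal is correct and follows essentially the same route as the paper's proof: induction with per-epoch halving, a decomposition into the inner loop's convergence to the fixed point $\widetilde{Q}$ of the perturbed operator $Q \mapsto \mathcal{H}(Q) + e_\tau$ (the paper's Lemma~\ref{lemma_reference_B1}) plus the distance $\|\widetilde{Q} - Q^*_\omega\|_\infty$ controlled by an elementwise Bernstein bound on the reference error propagated through $(I-\gamma P^\pi_\omega)^{-1}$ with the total-variance bound $\lesssim (1-\gamma)^{-3/2}$ (Lemma~\ref{lemma_reference_B2}, via the reward-perturbation comparison of Lemma~\ref{lemmaA4}). The cross-component correlation issue you flag as the main obstacle is handled in the paper simply by treating each row of the factored estimator as a single bounded random variable and union-bounding over the $|\mathcal{X}[\cup_k Z_k^P]|$ distinct rows, so it does not require additional covariance accounting.
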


The proof of Lemma~\ref{Refined_analysis} follows a similar routine to that of Lemma~\ref{key_lemma_model_free} but leverages the better initial estimate \( \overline{Q}_0 \). This lemma indicates that starting from an initial error of \( \frac{1}{\sqrt{1 - \gamma}} \), the variance-reduced iteration halves the error at each step.

After \( T_2 = \left\lceil c' \log_2 \left( \frac{1}{\sqrt{1 - \gamma} \epsilon} \right) \right\rceil \) epochs (with \( c' > 0 \)), the error reduces to \( \epsilon \). The total number of samples for this phase, denoted \( N_{[2]} \), is bounded by:
\begin{equation}
N_{[2]} \leq c_5 \cdot \frac{\log\left( \frac{6 T_1 |\mathcal{X}[\cup_{k=1}^{K_\omega} Z_k^P]|}{(1 - \gamma) \delta} \right) \cdot \log\left( \frac{1}{1 - \gamma} \right)}{(1 - \gamma)^3 \epsilon^2},
\end{equation}
where \( c_5 \) is a sufficiently large constant.

Combining the samples from both phases, the total number of samples required is:
\[
N = N_{[1]} + N_{[2]}.
\]

Substituting back into Eq.~\eqref{def_D1}, we conclude that the total sample complexity \( D_\omega \) satisfies the desired bound:
\[
D_\omega \geq N_{\text{entry}} \cdot (N_{[1]} + N_{[2]}) + D_r.
\]

This completes the proof.

\end{proof}

\section{Proof for Auxiliary Lemmas}
\subsection{Proof for Lemma \ref{bound_error}}
\label{proof_error_bound}
\begin{proof}
We first write the Bellman equation of $Q^*$ and $Q^*_\omega$ as follows:
\begin{align}
    &Q^*(s,a) = r(s,a) + \gamma\sum_{s'}P(s'|s,a)\max_{a'}Q^*(s',a'),\label{Bellman_Q^*}\\
    &Q_\omega^*(s,a) = r_\omega(s,a) + \gamma\sum_{s'}P_\omega(s'|s,a)\max_{a'}Q_\omega^*(s',a').\label{Bellman_Q^*_omega}
\end{align}

Subtracting Eq. \eqref{Bellman_Q^*} with \eqref{Bellman_Q^*_omega} yields:
\begin{align*}
    Q^*(s,a)-Q_\omega^*(s,a) &= r(s,a)-r_\omega(s,a)+\gamma\sum_{s'}(P(s'|s,a)\max_{a'}Q^*(s',a')-P_\omega(s'|s,a)\max_{a'}Q_\omega^*(s',a'))\\
    &= r(s,a)-r_\omega(s,a)+\gamma\sum_{s'}(P(s'|s,a)(\max_{a'}Q^*(s',a')-\max_{a'}Q_\omega^*(s',a'))\\
    &\quad +(P(s'|s,a)-P_\omega(s'|s,a))\max_{a'}Q_\omega^*(s',a'))
\end{align*}

Taking absolute value on both side yields:
\begin{align}
    |Q^*(s,a)-Q_\omega^*(s,a)| \leq &\|r-r_\omega\|_\infty + \gamma\sum_{s'}(P(s'|s,a)(\max_{a'}Q^*(s',a')-\max_{a'}Q_\omega^*(s',a'))\notag\\
    &+ \gamma\|P-P_\omega\|_\infty \max_{s',a'}Q^*_\omega(s',a')\notag\\
    \leq &\|r-r_\omega\|_\infty + \gamma\|Q^*-Q^*_\omega\|_\infty + \gamma\|P-P_\omega\|_\infty \max_{s',a'}Q^*_\omega(s',a'),
\end{align}
where the last inequality comes from $|\max_a Q^*(s,a)-\max_a Q^*_\omega(s,a)|\leq \max_a |Q^*(s,a)-Q^*_\omega(s,a)|\leq \|Q^*-Q^*_\omega\|_\infty$.
Due to $Q^*_\omega(s',a')\leq \frac{1}{1-\gamma}$ for any state-action pair $(s',a')$, we have:
\begin{align}
    \|Q^*-Q_\omega^*\|_\infty &= \max_{s,a}|Q^*(s,a)-Q_\omega^*(s,a)|\notag\\
    &\leq \|r-r_\omega\|_\infty + \gamma\|Q^*-Q^*_\omega\|_\infty + \frac{\gamma}{1-\gamma}\|P-P_\omega\|_\infty. \label{proof_lemma_Q_omega_bound}
\end{align}

Standard mathematical manipulation on Eq. \eqref{proof_lemma_Q_omega_bound} yields:
\begin{align}
    \|Q^*-Q_\omega^*\|_\infty\leq \frac{\|r-r_\omega\|_\infty}{1-\gamma} + \frac{\gamma\|P-P_\omega\|_\infty}{(1-\gamma)^2}.
\end{align}

Applying the definitions of approximation errors, we have:
\begin{align}
    \|Q^*-Q_\omega^*\|_\infty\leq \frac{\Delta_\omega^R}{1-\gamma} + \frac{\gamma\Delta_\omega^P}{(1-\gamma)^2}.
\end{align}

This concludes our proof.
\end{proof}

\subsection{Proof for Lemma \ref{Low_dim_lemma}}
\label{proof to key lemma1}
\begin{proof}
    We leverage the structure of the factorized transition kernel and show that the vector \((P_\omega - \widehat{P}_\omega) V_\omega^*\) contains multiple identical entries. By identifying the distinct entries, we can focus our analysis on a subset of state-action pairs.
    
    Recall that \((P_\omega - \widehat{P}_\omega)V_\omega^* \in \mathbb{R}^{|\mathcal{S}||\mathcal{A}| \times 1}\) represents the difference between the actual and estimated reward vector.
    Due to the factorized form of the transition kernel \( P_\omega \), the estimated transition probability \( \widehat{P}(s' \mid x) \) is given by:
\begin{align*}
        \widehat{P}(s' \mid x) = \prod_{k=1}^{K_\omega} \widehat{P}_k(s'[Z_k^S] \mid x[Z_k^P]),
    \end{align*}
where each \( \widehat{P}_k(s'[Z_k^S] \mid x[Z_k^P]) \) depends only on the subset \( Z_k^P \) of the state-action pair \( x \).
    As a result, only the state-action components \( x[Z_k^P] \) for \( k \in [K_\omega] \) determine the transition probabilities. Thus, there are at most \( |\mathcal{X}[\cup_{k=1}^{K_\omega} Z_k^P]| \) distinct rows in the matrix \( (P_\omega - \widehat{P}_\omega) \). 

    We use $\mathcal{X}^*$ to denote a set of state-action pairs indicating distinct entries, satisfying:
    \begin{align}
        \mathcal{X}^* = \left\{ x \in \mathcal{X} \ \bigg| \ \forall x', x'' \in \mathcal{X}^*, \ x' \neq x'' \Rightarrow \exists \ k \in [K_\omega] \ \text{such that} \ x'[Z_k^P] \neq x''[Z_k^P] \right\}. \label{def_X^*}
    \end{align}

    For any $x\in \mathcal{X}^*$, we can easily verify that the estimation $\widehat{P}(s' \mid x)$ is unbiased:
    \begin{align*}
        &\mathbb{E}[\widehat{P}(s' \mid x)-{P}(s' \mid x)]\\
        =& \mathbb{E}\left[\prod\nolimits_{i\in[\kappa_p]}(\prod\nolimits_{k\in\mathcal{G}_i}\widehat{P}_k(s'[Z_k^S] \mid x[Z_k^P]))-\prod\nolimits_{i\in[\kappa_p]}(\prod\nolimits_{k\in\mathcal{G}_i}{P}_k(s'[Z_k^S] \mid x[Z_k^P]))\right]\\
        =& \mathbb{E}\left[\prod\nolimits_{i\in[\kappa_p]}\widehat{P}(s'[\cup_{k\in\mathcal{G}_i}Z_k^S] \mid x[\cup_{k\in\mathcal{G}_i}Z_k^P])-\prod\nolimits_{i\in[\kappa_p]}{P}(s'[\cup_{k\in\mathcal{G}_i}Z_k^S] \mid x[\cup_{k\in\mathcal{G}_i}Z_k^P])\right]\\
        =&\prod\nolimits_{i\in[\kappa_p]} \mathbb{E}[\widehat{P}(s'[\cup_{k\in\mathcal{G}_i}Z_k^S] \mid x[\cup_{k\in\mathcal{G}_i}Z_k^P])]-\prod\nolimits_{i\in[\kappa_p]}{P}(s'[\cup_{k\in\mathcal{G}_i}Z_k^S] \mid x[\cup_{k\in\mathcal{G}_i}Z_k^P])\\
        =&0.
    \end{align*}

The third equality is due to the independent sampling of different sampling sets, and the last equality is due to $\mathbb{E}[\widehat{P}(s'[\cup_{k\in\mathcal{G}_i}Z_k^S] \mid x[\cup_{k\in\mathcal{G}_i}Z_k^P])]={P}(s'[\cup_{k\in\mathcal{G}_i}Z_k^S] \mid x[\cup_{k\in\mathcal{G}_i}Z_k^P])$ within each sampling set due to the law of large numbers.
    
    Also, since $\boldsymbol{0}\leq  V^*_\omega \leq \frac{1}{1-\gamma}\cdot \boldsymbol{1}$, we have $\|(P_\omega-\widehat{P}_\omega)V_\omega^*\|_\infty\leq \frac{1}{1-\gamma}$. Combining with the Bernstein's inequality \citep{vershynin2018high} with sample size $N$ yields that:
\begin{align}
    P\left(|(P_\omega(\cdot|x)-\widehat{P}_\omega(\cdot|x))V^*_{\omega}|\geq t\right) \leq 2 \exp\left(-\frac{\frac{N^2}{2}t^2}{N\text{Var}_{P_\omega}(V^*_\omega)_{(x)}+N\frac{t}{3(1-\gamma)}}\right), \label{lemma2_eq1}
\end{align}
where $\text{Var}_{P_\omega}(V^*_\omega)_{(x)}$ denotes the entry of $\text{Var}_{P_\omega}(V^*_\omega)$ corresponding to $x$.

Letting the right-hand-side of Eq. \eqref{lemma2_eq1} equal $\frac{\delta}{|\mathcal{X}[\cup_{k=1}^{K_\omega}Z^P_k]|}$ leads to the following inequality:
\begin{align*}
    \frac{\delta}{|\mathcal{X}[\cup_{k=1}^{K_\omega}Z^P_k]|} = 2 \exp\left(-\frac{\frac{N^2}{2}t^2}{N\text{Var}_{P_\omega}(V_\omega^*)_{(x)}+N\frac{t}{3(1-\gamma)}}\right).
\end{align*}

Rearranging this term yields that
\begin{align*}
    t \leq \frac{2\log(|\mathcal{X}[\cup_{k=1}^{K_\omega}Z^P_k]|)}{3N(1-\gamma)} + {\sqrt{\frac{2\log(|\mathcal{X}[\cup_{k=1}^{K_\omega}Z^P_k]|)\text{Var}_{P_\omega}(V^*_\omega)_{(x)}}{N}}}.
\end{align*}

Taking the union bound across all state-action pairs $x\in\mathcal{X}^*$, and using the identical entry property yields that, with probability at least $1-\delta$:
\begin{align*}
    \left|(P_\omega-\widehat{P}_\omega)V^*\right|\leq& {\sqrt{\frac{2\log(|\mathcal{X}[\cup_{k=1}^{K_\omega}Z^P_k]|)\text{Var}_{P_\omega}(V_\omega^*)}{N}}}+\frac{2\log(|\mathcal{X}[\cup_{k=1}^{K_\omega}Z^P_k]|)}{3N(1-\gamma)}\cdot\boldsymbol{1}.
\end{align*}

This concludes our proof.
\end{proof}

\subsection{Proof for Lemma \ref{lemma a.8}}
\label{Proof for Lemma a.8}
\begin{proof}
    We have that 
    \begin{align*}
        &\|(P_\omega-\widehat{P}_\omega)(V_\omega^*)^2\|_\infty = \left\|(\prod\nolimits_{k=1}^{K_\omega} {P}_k(s'[Z_k^S] \mid x[Z_k^P])-\prod\nolimits_{k=1}^{K_\omega} \widehat{P}_k(s'[Z_k^S] \mid x[Z_k^P]))(V_\omega^*)^2\right\|_\infty.
    \end{align*}
    
    Following the same routine in the proof of Lemma \ref{Low_dim_lemma}, we known $(P_\omega-\widehat{P}_\omega)(V_\omega^*)^2$ contains at most $|\mathcal{X}[\cup_{k=1}^{K_\omega}Z^P_k]|$ distinct entries.

    Meanwhile, for each single entry of $(P_\omega-\widehat{P}_\omega)(V_\omega^*)^2$, denoted by $(P_\omega(\cdot|x)-\widehat{P}_\omega(\cdot|x))(V_\omega^*)^2$, we can show the following conditions:
    \begin{align*}
        &\mathbb{E}((P_\omega(\cdot|x)-\widehat{P}_\omega(\cdot|x))(V_\omega^*)^2) = 0, \forall x \in \mathcal{X},\\
        &\|(P_\omega-\widehat{P}_\omega)(V_\omega^*)^2\|_\infty \leq \|\widehat{P}_\omega(V_\omega^*)^2\|_\infty + \|P_\omega(V_\omega^*)^2\|_\infty\leq \frac{2}{(1-\gamma)^2}.
    \end{align*}
    
    With $N$ \emph{i.i.d.} samples for estimating $(P_\omega(\cdot|x)-\widehat{P}_\omega(\cdot|x))(V_\omega^*)^2\in\mathbb{R}$, we apply the standard Hoeffding's inequality to $(P_\omega(\cdot|x)-\widehat{P}_\omega(\cdot|x))(V_\omega^*)^2$ as follows:
     \begin{align*}
         P\left(|(P_\omega(\cdot|x)-\widehat{P}_\omega(\cdot|x))(V_\omega^*)^2|\geq \epsilon\right) \leq 2 \exp\left(-\frac{2N \epsilon^2}{(\frac{2}{(1-\gamma)^2})^2 \cdot N}\right) = 2 \exp\left(-\frac{(1-\gamma)^4N \epsilon^2}{2}\right), \forall x \in \mathcal{X},
     \end{align*}

     Letting the right-hand-side term be $\frac{\delta}{|\mathcal{X}[\cup_{k=1}^{K_\omega}Z^P_k]|}$ and applying the union bound across all distinct state-action pair $x\in\mathcal{X}^*$ (with definition in Eq. \eqref{def_X^*}) yield:
     \begin{align*}
         \|(P_\omega-\widehat{P}_\omega)(V_\omega^*)^2\|_\infty \leq \frac{1}{(1-\gamma)^2}\cdot\sqrt{\frac{2\log\left(2|\mathcal{X}[\cup_{k=1}^{K_\omega}Z^P_k]|\right)}{N}}.
     \end{align*}

     Thus, $T_1$ satisfies:
     \begin{align*}
         T_1 &= (I-\gamma \widehat{P}^{\widehat{\pi}^*_\omega}_\omega)^{-1}\sqrt{ {|(P_\omega-\widehat{P}_\omega)(V_\omega^*)^2|}}\\
         &\leq\|(I-\gamma \widehat{P}_\omega^{\widehat{\pi}_\omega^*})^{-1}\|_\infty \sqrt{ {\left\|(P_\omega-\widehat{P}_\omega)(V_\omega^*)^2\right\|_\infty}} \cdot \boldsymbol{1}\\
         &\leq \frac{1}{(1-\gamma)^2} \sqrt[4]{\frac{2\log\left(2|\mathcal{X}[\cup_{k=1}^{K_\omega}Z^P_k]|\right)}{N}}\cdot \boldsymbol{1}.
     \end{align*}

     This concludes our proof.
\end{proof}

\subsection{Proof for Lemma \ref{lemma a.9}}
\label{Proof for Lemma a.9}
\begin{proof}
We first bound $\|(P_\omega V_\omega^*)^2 - (\widehat{P}_\omega V_\omega^*)^2\|_\infty$ as follows:
    \begin{align*}
        \|(P_\omega V_\omega^*)^2 - (\widehat{P}_\omega V_\omega^*)^2\|_\infty &= \|(P_\omega V_\omega^*+\widehat{P}_\omega V_\omega^*) (P_\omega V_\omega^*-\widehat{P}_\omega V_\omega^*)\|_\infty\\
        &\leq \|P_\omega V_\omega^*+\widehat{P}_\omega V_\omega^*\|_\infty\|P_\omega V_\omega^*-\widehat{P}_\omega V_\omega^*\|_\infty\\
        &\leq 2\|V_\omega^*\|_\infty\|(P_\omega-\widehat{P}_\omega)V_\omega^*\|_\infty.
    \end{align*}

Applying the Hoeffding's inequality to $\|(P_\omega-\widehat{P}_\omega)V_\omega^*\|_\infty$ yields that, with probability at least $1-\delta$:
\begin{align}
    \|(P_\omega-\widehat{P}_\omega)V^*_\omega\|_\infty &\leq \frac{1}{1-\gamma}\sqrt{\frac{2\log\left(2|\mathcal{X}[\cup_{k=1}^{K_\omega}Z^P_k]|\right)}{N}}. \label{lemma4_key_bound}
\end{align}

Therefore, $T_2$ satisfies:
\begin{align*}
    T_2 &= (I-\gamma \widehat{P}_\omega^{\widehat{\pi}_\omega^*})^{-1}\sqrt{{|(P_\omega V_\omega^*)^2 - (\widehat{P}_\omega V_\omega^*)^2|}} \\
    &\leq \|(I-\gamma \widehat{P}_\omega^{\widehat{\pi}_\omega^*})^{-1}\|_\infty\sqrt{\left\|{(P_\omega V_\omega^*)^2 - (\widehat{P}_\omega V_\omega^*)^2}\right\|_\infty}\cdot \boldsymbol{1}  \\
    &\leq \frac{1}{1-\gamma}\sqrt{2\|V_\omega^*\|_\infty\|(P_\omega-\widehat{P}_\omega)V_\omega^*\|_\infty}\cdot \boldsymbol{1}  \\
    &\leq \frac{\sqrt{2}}{(1-\gamma)^{2}}\sqrt[4]{\frac{2\log\left(2|\mathcal{X}[\cup_{k=1}^{K_\omega}Z^P_k]|\right)}{N}}\cdot \boldsymbol{1}.
\end{align*}
\end{proof}

\subsection{Proof for Lemma \ref{(1-gamma)^3}}
\label{proof_to_lemma_(1-gamma)^3}
\begin{proof}
We decompose $T_3$ as follows:
    \begin{align*}
        T_3 &= (I-\gamma \widehat{P}_\omega^{\widehat{\pi}^*_\omega})^{-1}\sqrt{\text{Var}_{\widehat{P}_\omega}(V_\omega^*)}\\
        &= (I-\gamma \widehat{P}_\omega^{\widehat{\pi}^*_\omega})^{-1}\sqrt{\text{Var}_{\widehat{P}_\omega}(V_\omega^* -\widehat{V}_\omega^{\pi_\omega^*} + \widehat{V}_\omega^{\pi_\omega^*} -\widehat{V}_\omega^{\widehat{\pi}_\omega^*} + \widehat{V}_\omega^{\widehat{\pi}_\omega^*})}\\
    &\leq (I-\gamma \widehat{P}_\omega^{\widehat{\pi}^*_\omega})^{-1}\sqrt{2\text{Var}_{\widehat{P}_\omega}(V_\omega^* -\widehat{V}_\omega^{\pi^*_\omega}) + 2\text{Var}_{\widehat{P}_\omega}(\widehat{V}_\omega^{\widehat{\pi}^*_\omega}) + 2\text{Var}_{\widehat{P}_\omega}(\widehat{V}_\omega^{\widehat{\pi}^*_\omega}-\widehat{V}_\omega^{{\pi}^*_\omega})}\\
    &\leq \underbrace{(I-\gamma \widehat{P}_\omega^{\widehat{\pi}^*_\omega})^{-1}\sqrt{2\text{Var}_{\widehat{P}_\omega}(V_\omega^* -\widehat{V}_\omega^{\pi^*_\omega})}}_{T_{31}} + \underbrace{(I-\gamma \widehat{P}_\omega^{\widehat{\pi}^*_\omega})^{-1}\sqrt{2\text{Var}_{\widehat{P}_\omega}(\widehat{V}_\omega^{\widehat{\pi}^*_\omega})}}_{T_{32}} \\
    & \ \ + \underbrace{(I-\gamma \widehat{P}_\omega^{\widehat{\pi}^*_\omega})^{-1}\sqrt{2\text{Var}_{\widehat{P}_\omega}(\widehat{V}_\omega^{\widehat{\pi}^*_\omega}-\widehat{V}_\omega^{{\pi}^*_\omega})}}_{T_{33}}.
    \end{align*}

Then, we bound $T_{31}$, $T_{32}$ and $T_{33}$ separately. 

\textbf{Step 1. Bounding $T_{31}$:}

For $T_{31}$, the following condition holds:
\begin{align*}
    T_{31} &= (I-\gamma \widehat{P}_\omega^{\widehat{\pi}^*_\omega})^{-1}\sqrt{2\text{Var}_{\widehat{P}_\omega}(V_\omega^* -\widehat{V}_\omega^{\pi^*_\omega})}\\
    & \leq \|(I-\gamma \widehat{P}_\omega^{\widehat{\pi}^*_\omega})^{-1}\|_{\infty}\sqrt{2\|\text{Var}_{\widehat{P}_\omega}(V_\omega^* -\widehat{V}_\omega^{\pi^*_\omega})\|_\infty}\cdot \boldsymbol{1}\\
    & \leq \frac{\sqrt{2}}{1-\gamma}\sqrt{\|V^*_\omega-\widehat{V}_\omega^{\pi^*_\omega}\|^2_\infty}\cdot \boldsymbol{1}\\
    &\leq \frac{\sqrt{2}}{1-\gamma}\sqrt{\|Q_\omega^*-\widehat{Q}_\omega^{\pi^*_\omega}\|^2_\infty}\cdot \boldsymbol{1}.
\end{align*}

Applying Lemma \ref{decomp_lemma1} yields:
    \begin{align*}
        \|Q^*_\omega-\widehat{Q}_\omega^{\pi^*_\omega}\|^2_\infty & = \|\gamma (I-\gamma \widehat{P}_\omega^{{\pi}^*_\omega})^{-1}(P_\omega - \widehat{P}_\omega)V^{*}_\omega\|^2_\infty\\
        &\leq \frac{\gamma^2}{(1-\gamma)^2}\|(P_\omega - \widehat{P}_\omega)V_\omega^{*}\|^2_\infty.
    \end{align*}

Applying Eq. \eqref{lemma4_key_bound}, we have that, with probability at least $1-\delta$, the term $\|Q^*_\omega-\widehat{Q}_\omega^{\pi^*_\omega}\|^2_\infty$ satisfies:
    \begin{align}
        \|Q^*_\omega-\widehat{Q}_\omega^{\pi^*_\omega}\|^2_\infty&\leq \frac{\gamma^2}{(1-\gamma)^4}\left(\sqrt{\frac{2\log\left(2|\mathcal{X}[\cup_{k=1}^{K_\omega}Z^P_k]|\right)}{N}}\right)^2\notag\\
        &=\frac{{2\gamma^2\log\left(2|\mathcal{X}[\cup_{k=1}^{K_\omega}Z^P_k]|\right)}}{N(1-\gamma)^4}.\label{Q_gap_lemma5}
    \end{align}

Therefore, with probability at least $1-\delta$, $T_{31}$ satisfies that:
\begin{align*}
    T_{31} \leq \frac{2\gamma}{(1-\gamma)^3}\sqrt{\frac{2\log\left(2|\mathcal{X}[\cup_{k=1}^{K_\omega}Z^P_k]|\right)}{N}}\cdot \boldsymbol{1}.
\end{align*}
\end{proof}

\textbf{Step 2. Bounding $T_{32}$:}

    Note that, $(1-\gamma)(I-\gamma \widehat{P}_\omega^{\widehat{\pi}^*_\omega})^{-1}$ is a matrix of probability with each row being a probability distribution. For a positive vector $\boldsymbol{v}$ and distribution $\nu$, Jensen's inequality implies that $\nu \sqrt{\boldsymbol{v}}\leq \sqrt{\nu\cdot \boldsymbol{v}}$ (inequality of expectation). This implies:
    \begin{align}
         \|T_{32}\|_\infty &= \left\|(I-\gamma \widehat{P}_\omega^{\widehat{\pi}^*_\omega})^{-1} \sqrt{\text{Var}_{\widehat{P}_\omega}(\widehat{V}_\omega^{\widehat{\pi}^*_\omega})}\right\|_\infty \notag\\ &= \frac{1}{1-\gamma}\left\|(1-\gamma)(I-\gamma \widehat{P}_\omega^{\widehat{\pi}^*_\omega})^{-1} \sqrt{{\text{Var}_{\widehat{P}_\omega}(\widehat{V}_\omega^{\widehat{\pi}^*_\omega})}}\right\|_\infty \notag\\
         &\leq \sqrt{\left\|\frac{1}{1-\gamma}(I-\gamma \widehat{P}_\omega^{\widehat{\pi}^*_\omega})^{-1} {{\text{Var}_{\widehat{P}_\omega}(\widehat{V}_\omega^{\widehat{\pi}^*_\omega})}}\right\|_\infty}.\label{a.6_bound1}
    \end{align}

Also, we can reformulate $\|(I-\gamma \widehat{P}_\omega^{\widehat{\pi}^*_\omega})^{-1}{\text{Var}_{\widehat{P}}(\widehat{V}_\omega^{\widehat{\pi}^*_\omega})}\|_\infty$ as follows:
\begin{align*}
     \|(I-\gamma \widehat{P}_\omega^{\widehat{\pi}^*_\omega})^{-1}{\text{Var}_{\widehat{P}_\omega}(\widehat{V}_\omega^{\widehat{\pi}^*_\omega})}\|_\infty &=  \|(I-\gamma \widehat{P}_\omega^{\widehat{\pi}^*_\omega})^{-1}(I-\gamma^2 \widehat{P}_\omega^{\widehat{\pi}_\omega^*})(I-\gamma^2 \widehat{P}_\omega^{\widehat{\pi}_\omega^*})^{-1}{\text{Var}_{\widehat{P}_\omega}(\widehat{V}_\omega^{\widehat{\pi}^*_\omega})}\|_\infty\\
     &= \|(I-\gamma \widehat{P}_\omega^{\widehat{\pi}^*_\omega})^{-1}(I-\gamma \widehat{P}_\omega^{\widehat{\pi}^*_\omega})(I+\gamma \widehat{P}_\omega^{\widehat{\pi}^*_\omega}) (I-\gamma^2 \widehat{P}_\omega^{\widehat{\pi}_\omega^*})^{-1}{\text{Var}_{\widehat{P}_\omega}(\widehat{V}_\omega^{\widehat{\pi}^*_\omega})}\|_\infty\\
     &=\|(I+\gamma \widehat{P}_\omega^{\widehat{\pi}^*_\omega})(I-\gamma^2 \widehat{P}_\omega^{\widehat{\pi}_\omega^*})^{-1}{\text{Var}_{\widehat{P}_\omega}(\widehat{V}_\omega^{\widehat{\pi}^*_\omega})}\|_\infty\\
     &\leq\|(I+\gamma \widehat{P}_\omega^{\widehat{\pi}^*_\omega})\|_\infty\|(I-\gamma^2 \widehat{P}_\omega^{\widehat{\pi}_\omega^*})^{-1}{\text{Var}_{\widehat{P}_\omega}(\widehat{V}_\omega^{\widehat{\pi}^*_\omega})}\|_\infty\\
     &\leq (1+\gamma) \|(I-\gamma^2 \widehat{P}_\omega^{\widehat{\pi}_\omega^*})^{-1}{\text{Var}_{\widehat{P}_\omega}(\widehat{V}_\omega^{\widehat{\pi}^*_\omega})}\|_{\infty}.
\end{align*}

Thus, we have
\begin{align*}
    \left\|(I-\gamma \widehat{P}^\pi)^{-1} \sqrt{{\text{Var}_{\widehat{P}_\omega}(\widehat{V}_\omega^{\widehat{\pi}^*_\omega})}}\right\|_\infty &\leq \sqrt{\left\|\frac{1+\gamma}{1-\gamma}(I-\gamma^2 \widehat{P}_\omega^{\widehat{\pi}_\omega^*})^{-1} {{\text{Var}_{\widehat{P}_\omega}(\widehat{V}_\omega^{\widehat{\pi}^*_\omega})}}\right\|_\infty}.
\end{align*}

We now connect this result with the definition of $\Sigma_M^\pi$ in Lemma \ref{Lemma 4-basic}:
\begin{align*}
    \Sigma_{{M}}^{\widehat{\pi}^*_\omega} = \gamma^2(1-\gamma^2\widehat{P}_\omega^{\widehat{\pi}^*_\omega})^{-1}{\text{Var}_{\widehat{P}_\omega}(\widehat{V}_\omega^{\widehat{\pi}^*_\omega})},
\end{align*}
where the transition kernel of MDP $M$ is $\widehat{P}_\omega$.
Therefore,
\begin{align*}
    (I-\gamma^2 \widehat{P}_\omega^{\widehat{\pi}_\omega^*})^{-1}{{\text{Var}_{\widehat{P}_\omega}(\widehat{V}_\omega^{\widehat{\pi}_\omega^*})}} = \frac{\Sigma_M^{\widehat{\pi}_\omega^*}}{\gamma^2} \leq \frac{1}{(1-\gamma)^2}\cdot\boldsymbol{1},
\end{align*}
where the inequality comes from $\|\Sigma_M^\pi\|_\infty\leq \frac{\gamma^2}{(1-\gamma)^2}$ for any policy $\pi$, which can be easily verified according to Definition \ref{def_epsilon}:
\begin{align*}
    \|\Sigma^\pi_{M}\|_\infty &= \max_{(s,a)}\left\|\mathbb{E}\left[\left.\left(\sum_{t=0}^\infty \gamma^t r(s_t, a_t)-Q^\pi_{M}(s_0,a_0)\right)^2\right|s_0 = s, a_0 = a\right]\right\|_\infty\\
    &\leq \max_{(s,a)}\mathbb{E}\left[\left.\left\|\sum_{t=0}^\infty \gamma^t r(s_t, a_t)-Q^\pi_{M}(s_0,a_0)\right\|_\infty^2\right|s_0 = s, a_0 = a\right]\\
    &\leq \frac{\gamma^2}{(1-\gamma)^2}.
\end{align*}

Substituting this condition into Eq. \eqref{a.6_bound1} yields:
\begin{align*}
    T_{32} \leq \sqrt{\frac{1+\gamma}{(1-\gamma)^3}}\cdot \boldsymbol{1}.
\end{align*}

\textbf{Step 3. Bounding $T_{33}$:}
The term $T_{33}$ satisfies:
\begin{align*}
    T_{33} &= (I-\gamma \widehat{P}_\omega^{\widehat{\pi}^*_\omega})^{-1}\sqrt{2\text{Var}_{\widehat{P}_\omega}(\widehat{V}_\omega^{\widehat{\pi}^*_\omega}-\widehat{V}_\omega^{{\pi}^*_\omega})}\\
    &\leq \|(I-\gamma \widehat{P}_\omega^{\widehat{\pi}^*_\omega})^{-1}\|_{\infty}\sqrt{2\|\text{Var}_{\widehat{P}_\omega}(\widehat{V}_\omega^{\widehat{\pi}^*_\omega}-\widehat{V}_\omega^{{\pi}^*_\omega})\|_\infty}\cdot \boldsymbol{1}\\
    & \leq \frac{\sqrt{2}}{1-\gamma}\sqrt{\|\widehat{V}_\omega^{\widehat{\pi}^*_\omega}-\widehat{V}_\omega^{{\pi}^*_\omega}\|^2_\infty}\cdot \boldsymbol{1}\\
    &\leq \frac{\sqrt{2}}{1-\gamma}\sqrt{\|\widehat{Q}_\omega^{*}-\widehat{Q}_\omega^{{\pi}^*_\omega}\|^2_\infty}\cdot \boldsymbol{1}.
\end{align*}

Similar to the bound of $T_{31}$ in Eq. \eqref{Q_gap_lemma5}, we have that with probability at least $1-\delta$,
\begin{align*}
        \|\widehat{Q}_\omega^{*}-\widehat{Q}_\omega^{{\pi}^*_\omega}\|^2_\infty\leq \frac{{2\gamma^2\log\left(2|\mathcal{X}[\cup_{k=1}^{K_\omega}Z^P_k]|\right)}}{N(1-\gamma)^4}.
    \end{align*}

Thus, with probability at least $1-\delta$, $T_{33}$ satisfies:
\begin{align*}
    T_{33} \leq \frac{2\gamma}{(1-\gamma)^3}\sqrt{\frac{2\log\left(2|\mathcal{X}[\cup_{k=1}^{K_\omega}Z^P_k]|\right)}{N}}\cdot \boldsymbol{1}.
\end{align*}

\textbf{Step 4. Combining the Results:}

Combining the upper bounds for $T_{31}$, $T_{32}$ and $T_{33}$, we can bound $T_3$ that, with probability at least $1-\delta$,
\begin{align*}
    T_3 \leq \Bigg(\frac{4\gamma}{(1-\gamma)^3}\sqrt{\frac{2\log\left(4|\mathcal{X}[\cup_{k=1}^{K_\omega}Z^P_k]|\right)}{N}} + \sqrt{\frac{1+\gamma}{(1-\gamma)^3}}\Bigg) \cdot \boldsymbol{1}.
\end{align*}

\subsection{Proof for Lemma \ref{key_bound}}
\label{proof_lemma_key_bound}
\begin{proof}
    Taking $\delta$ to be $\frac{\delta}{3}$, and applying Lemma \ref{Low_dim_lemma}-\ref{(1-gamma)^3} yields:
\begin{align}
    |\Delta_2| \leq& \frac{3}{(1-\gamma)^3}\left(\frac{\log\left(12|\mathcal{X}[\cup_{k=1}^{K_\omega}Z^P_k]|\right)}{N}\right) + \frac{7}{(1-\gamma)^2}\left(\frac{\log\left(12|\mathcal{X}[\cup_{k=1}^{K_\omega}Z^P_k]|\right)}{N}\right)^{\frac{3}{4}}\notag\\
    &+ \sqrt{\frac{2}{(1-\gamma)^3}}\left(\frac{\log\left(12|\mathcal{X}[\cup_{k=1}^{K_\omega}Z^P_k]|\right)}{N}\right)^{1/2}.\label{lemma_6-eq1}
\end{align}

Applying Cauchy–Schwarz inequality yields:
\begin{align*}
    &\frac{3}{(1-\gamma)^3}\left(\frac{\log\left(12|\mathcal{X}[\cup_{k=1}^{K_\omega}Z^P_k]|\right)}{N}\right) + \sqrt{\frac{2}{(1-\gamma)^3}}\left(\frac{\log\left(12|\mathcal{X}[\cup_{k=1}^{K_\omega}Z^P_k]|\right)}{N}\right)^{1/2}\\
    \geq & \sqrt{\frac{3}{(1-\gamma)^3}\left(\frac{\log\left(12|\mathcal{X}[\cup_{k=1}^{K_\omega}Z^P_k]|\right)}{N}\right) \cdot \sqrt{\frac{2}{(1-\gamma)^3}}\left(\frac{\log\left(12|\mathcal{X}[\cup_{k=1}^{K_\omega}Z^P_k]|\right)}{N}\right)^{1/2}}\\
    = & \sqrt{3}(1-\gamma)^{-\frac{9}{4}}\left(\frac{\log\left(12|\mathcal{X}[\cup_{k=1}^{K_\omega}Z^P_k]|\right)}{N}\right)^{\frac{3}{4}}\\
    \geq &\frac{\sqrt{3}}{(1-\gamma)^2}\left(\frac{\log\left(12|\mathcal{X}[\cup_{k=1}^{K_\omega}Z^P_k]|\right)}{N}\right)^{\frac{3}{4}}.
\end{align*}

Applying this condition to Eq. \eqref{lemma_6-eq1} yields:
\begin{align*}
    \|Q_\omega^*-\widehat{Q}_\omega^*\|_\infty \leq& \frac{3(1+\frac{7}{\sqrt{3}})}{(1-\gamma)^3}\left(\frac{\log\left(12|\mathcal{X}[\cup_{k=1}^{K_\omega}Z^P_k]|\right)}{N}\right) + \frac{\sqrt{2}(1+\frac{7}{\sqrt{3}})}{\sqrt{(1-\gamma)^3}}\left(\frac{\log\left(12|\mathcal{X}[\cup_{k=1}^{K_\omega}Z^P_k]|\right)}{N}\right)^{1/2}\\
    \leq& 18 \left(\frac{\log\left(12|\mathcal{X}[\cup_{k=1}^{K_\omega}Z^P_k]|\right)}{N(1-\gamma)^3}\right) + 6\left(\frac{\log\left(12|\mathcal{X}[\cup_{k=1}^{K_\omega}Z^P_k]|\right)}{N(1-\gamma)^3}\right)^{\frac{1}{2}}.
\end{align*}

This concludes our proof.

\end{proof}

\subsection{Proof for Lemma \ref{key_lemma_model_free}}
\label{proof_key_lemma_model_free}

We prove this lemma by induction. We first show that the base case (\(\tau = 1\)) satisfies Eq. \eqref{key_lemma_res}, and then prove the inductive condition when $\tau \geq 2$.

\textbf{Step 1. Showing Base Case with \(\tau = 1\):}

Due to the initialization, we have \(\overline{Q}_{\tau-1}(s,a) = \overline{Q}_{0}(s,a) = 0\) for any state-action pair \((s, a)\). Therefore, for \(\tau = 1\), both the empirical and reference Bellman operators equal to the immediate reward:
\[
\widehat{\mathcal{H}}_t(\overline{Q}_{\tau-1})_{s,a} = r(s,a) \quad \text{and} \quad \overline{\mathcal{H}}_\tau(\overline{Q}_{\tau-1})_{s,a} = r(s,a), \quad \forall (s,a).
\]
Given this, the variance-reduced update in Eq. \eqref{variance-reduced Q update} simplifies to the standard Q-learning update:
\[
Q_t = (1 - \eta_{t-1})Q_{t-1} + \eta_{t-1} \widehat{\mathcal{H}}_{t-1}(Q_{t-1}), \quad \forall t = 1, \ldots, M.
\]

Let \(\Delta_t = Q_t - Q^*_\omega\) denote the estimation error at iteration \(t\), where \(Q^*_\omega\) is the unique fixed-point solution of the Bellman equation \(\mathcal{H}(Q) = Q\). Then, we have:
\begin{align}
    \Delta_t &= (1 - \eta_{t-1})\Delta_{t-1} + \eta_{t-1}(\widehat{\mathcal{H}}_{t-1}(Q^*_\omega + \Delta_{t-1}) - \mathcal{H}(Q^*_\omega)) \notag \\
    &= (1 - \eta_{t-1})\Delta_{t-1} + \eta_{t-1}(\widehat{\mathcal{H}}_{t-1}(Q^*_\omega + \Delta_{t-1}) - \widehat{\mathcal{H}}_{t-1}(Q^*_\omega) + \widehat{\mathcal{H}}_{t-1}(Q^*_\omega) - \mathcal{H}(Q^*_\omega)) \notag \\
    &= (1 - \eta_{t-1})\Delta_{t-1} + \eta_{t-1}\underbrace{(\widehat{\mathcal{H}}_{t-1}(Q^*_\omega + \Delta_{t-1}) - \widehat{\mathcal{H}}_{t-1}(Q^*_\omega))}_{\text{Contractive Error: } \mathcal{W}_{t-1}(\Delta_{t-1})} + \eta_{t-1}\underbrace{(\widehat{\mathcal{H}}_{t-1}(Q^*_\omega) - \mathcal{H}(Q^*_\omega))}_{\text{Random Error: } \mathcal{E}_{t-1}}. \label{Iteration}
\end{align}

Observe that the error iteration \(\Delta_t\) consists of three components: the decaying term \((1 - \eta_{t-1})\Delta_{t-1}\), the contractive error term \(\mathcal{W}_{t-1}(\Delta_{t-1})\), and the random error \(\mathcal{E}_{t-1}\).

The contractive error \(\mathcal{W}_{t-1}(\Delta_{t-1})\) depends on both \(\Delta_{t-1}\) and the empirical Bellman operator \(\widehat{\mathcal{H}}_{t-1}\), and it is bounded due to the \(\gamma\)-contractiveness of \(\widehat{\mathcal{H}}_{t-1}\):
\begin{align}
    \|\mathcal{W}_{t-1}(\Delta_{t-1})\|_\infty \leq \gamma \|\Delta_{t-1}\|_\infty. \label{contraction_W_t}
\end{align}

The random error \(\mathcal{E}_{t-1}\) is \emph{i.i.d.} for different values of \(t\).
By applying the iteration in Eq. \eqref{Iteration} and using the contraction condition in Eq. \eqref{contraction_W_t}, we can express \(\Delta_t\) fully as follows:

\begin{lemma}[Proof in Appendix \ref{proof_lemma_2}]
\label{lemma_2}
For any \(t \geq 1\), the estimation error \(\Delta_t\) is bounded above and below by:
\begin{align}
        \Delta_t &\leq \prod_{k=0}^{t-1}(1-(1-\gamma)\eta_k)\|\Delta_0\|_\infty \boldsymbol{1}+\gamma\eta_{t-1}\|P_{t-1}\|_\infty\boldsymbol{1}+\gamma\sum_{i=1}^{t-2}\left(\left(\prod_{j=i+1}^{t-1}(1-(1-\gamma)\eta_j)\right)\eta_i\|P_i\|_\infty\right) \boldsymbol{1}+P_t,\label{upper1}\\
        \Delta_t &\geq -\prod_{k=0}^{t-1}(1-(1-\gamma)\eta_k)\|\Delta_0\|_\infty \boldsymbol{1}-\gamma\eta_{t-1}\|P_{t-1}\|_\infty\boldsymbol{1}-\gamma\sum_{i=1}^{t-2}\left(\left(\prod_{j=i+1}^{t-1}(1-(1-\gamma)\eta_j)\right)\eta_i\|P_i\|_\infty\right) \boldsymbol{1}+P_t,\label{lower1}
    \end{align}
where \(P_t\) represents a discounted sum of the random error \(\mathcal{E}_t\), defined as:
\begin{equation}
    P_t =
    \begin{cases}
        \boldsymbol{0} & \text{if } t = 0, \\
        \sum_{k=0}^{t-1}\left(\left(\prod_{j=k+1}^{t-1}(1 - \eta_j)\right)\eta_k\mathcal{E}_k\right) & \text{if } t \geq 1.
    \end{cases} \label{eq-Pt-1}
\end{equation}
\end{lemma}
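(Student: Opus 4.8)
The plan is to prove both bounds simultaneously by isolating the accumulated stochastic noise $P_t$ from a deterministically contracting remainder, and then unrolling a single scalar recursion. First I would record that the noise process defined in Eq.~\eqref{eq-Pt-1} satisfies the one-step recursion $P_t = (1-\eta_{t-1})P_{t-1} + \eta_{t-1}\mathcal{E}_{t-1}$, with $P_0 = \boldsymbol{0}$; this follows by peeling off the $k = t-1$ summand (whose trailing product is empty, hence $1$) and factoring $(1-\eta_{t-1})$ out of the remaining terms. Subtracting this from the error recursion in Eq.~\eqref{Iteration} and setting $\widetilde{\Delta}_t := \Delta_t - P_t$, the random error $\eta_{t-1}\mathcal{E}_{t-1}$ cancels exactly, leaving the noise-free recursion $\widetilde{\Delta}_t = (1-\eta_{t-1})\widetilde{\Delta}_{t-1} + \eta_{t-1}\mathcal{W}_{t-1}(\Delta_{t-1})$, with $\widetilde{\Delta}_0 = \Delta_0$.

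Next I would pass to the sup-norm scalar sequence $a_t := \|\widetilde{\Delta}_t\|_\infty$. Taking entrywise absolute values in the recursion above and invoking the contraction bound \eqref{contraction_W_t}, namely $\|\mathcal{W}_{t-1}(\Delta_{t-1})\|_\infty \le \gamma\|\Delta_{t-1}\|_\infty$, gives $a_t \le (1-\eta_{t-1})a_{t-1} + \gamma\eta_{t-1}\|\Delta_{t-1}\|_\infty$. The crucial bookkeeping step is to re-express the full error via the triangle inequality as $\|\Delta_{t-1}\|_\infty \le \|\widetilde{\Delta}_{t-1}\|_\infty + \|P_{t-1}\|_\infty = a_{t-1} + \|P_{t-1}\|_\infty$, which closes the recursion on $a_t$ alone:
\[
a_t \le \bigl(1-(1-\gamma)\eta_{t-1}\bigr)a_{t-1} + \gamma\,\eta_{t-1}\,\|P_{t-1}\|_\infty .
\]

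Then I would solve this linear scalar recursion by induction on $t$ (equivalently, by telescoping), obtaining
\[
a_t \le \prod_{k=0}^{t-1}\bigl(1-(1-\gamma)\eta_k\bigr)\,a_0 \;+\; \gamma\sum_{i=0}^{t-1}\Bigl(\prod_{j=i+1}^{t-1}\bigl(1-(1-\gamma)\eta_j\bigr)\Bigr)\eta_i\,\|P_i\|_\infty .
\]
Using $a_0 = \|\Delta_0\|_\infty$, dropping the vanishing $i=0$ summand (since $P_0 = \boldsymbol{0}$), and separating the $i = t-1$ summand (again an empty trailing product equal to $1$) reproduces exactly the three $\boldsymbol{1}$-scaled terms of \eqref{upper1}. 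Finally, since $\Delta_t = \widetilde{\Delta}_t + P_t$ and $|\widetilde{\Delta}_t| \le a_t\boldsymbol{1}$ entrywise, adding $P_t$ to $\pm a_t\boldsymbol{1}$ yields the upper bound \eqref{upper1} and the lower bound \eqref{lower1} at once.

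I expect the main obstacle to be the second step: the contractive error $\mathcal{W}_{t-1}$ is a nonlinear (though $\gamma$-Lipschitz) map controlled only in terms of the \emph{full} error $\|\Delta_{t-1}\|_\infty$ rather than the centered quantity $a_{t-1}$, so the clean split into noise plus contracting remainder does not immediately produce a self-contained recursion. Resolving this requires reintroducing $\|P_{t-1}\|_\infty$ through the triangle inequality, which is precisely what generates the cross terms $\gamma\,\eta_i\|P_i\|_\infty$ appearing in the statement; some care is also needed with the empty-product conventions at the two recursion endpoints so that the index ranges align with \eqref{upper1}–\eqref{lower1} exactly.
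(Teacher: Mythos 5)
Your proof is correct and follows essentially the same route as the paper's: the paper runs a direct vector induction tracking the coefficients $a_t$, $b_t$, and $P_t$, and your centered quantity $\widetilde{\Delta}_t = \Delta_t - P_t$ with its scalar recursion is exactly the combination $a_t\|\Delta_0\|_\infty + b_t$ that the paper propagates, obtained via the same contraction bound and the same triangle-inequality reintroduction of $\|P_{t-1}\|_\infty$. Your packaging is marginally cleaner in that the two-sided bound falls out of $|\widetilde{\Delta}_t|\le a_t\boldsymbol{1}$ in one step rather than being argued separately for the upper and lower inequalities.
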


To effectively manage the product term \(\prod_{k=0}^{t-1}(1 - (1 - \gamma)\eta_k)\), we apply a step size defined as \(\eta_k = \frac{1}{1 + (1 - \gamma)(k+1)}\). Under this choice of step size, we can demonstrate that the expression \(1 - (1 - \gamma)\eta_k\) satisfies:
\begin{align*}
    1 - (1 - \gamma)\eta_k &= 1 - \frac{1 - \gamma}{1 + (1 - \gamma)(k+1)} \\
    &= \frac{1 + (1 - \gamma)k}{1 + (1 - \gamma)(k+1)} \\
    &= \frac{\eta_k}{\eta_{k-1}},\quad \forall k\geq 1.
\end{align*}

Using this result, we can simplify the product-form coefficients in Eq. \eqref{upper1} as follows:
\begin{align}
    &\prod_{k=0}^{t-1}(1 - (1 - \gamma)\eta_k) = (1-(1-\gamma)\eta_0)\prod_{k=1}^{t-1}\frac{\eta_k}{\eta_{k-1}}= \eta_{t-1}, \quad \forall t, \label{coef_1_value} \\
    &\left(\prod_{j=i+1}^{t-1}(1 - (1 - \gamma)\eta_j)\right)\eta_i = \left(\prod_{j=i+1}^{t-1}\frac{\eta_j}{\eta_{j-1}}\right)\eta_i = \eta_{t-1}, \quad \forall t. \label{coef_2_value}
\end{align}

Substituting Eq. \eqref{coef_1_value} and \eqref{coef_2_value} into the bounds in Eq. \eqref{upper1} and \eqref{lower1}, we obtain:
\begin{align}
    \|\Delta_t\|_\infty \leq \eta_{t-1}\|\Delta_0\|_\infty + \gamma \eta_{t-1} \sum_{k=0}^{t-1} \|P_k\|_\infty + \|P_t\|_\infty. \label{Delta_t_bound}
\end{align}

To complete the proof, we need to control \(\|P_k\|_\infty\) for each \(k\). The following lemma provides the necessary bound:
\begin{lemma}[Proof in Appendix \ref{proof_lemma_3}]
\label{lemma_3}
    Given the step size \(\eta_k = \frac{1}{1 + (1 - \gamma)k}\), for any \(k \geq 1\), with probability at least \(1 - \delta\), the weighted error sum \(P_k\) satisfies:
    \begin{align}
        \|P_k\|_\infty \leq \frac{2}{3(1-\gamma)(1+(1 - \gamma)k)} \log\left(\frac{2|\mathcal{X}[\cup_{k=1}^{K_\omega} Z_k^P]|}{\delta}\right) + \frac{2\sqrt{2\|\sigma^2_{\mathcal{E}}\|_\infty \log\left(\frac{2|\mathcal{X}[\cup_{k=1}^{K_\omega} Z_k^P]|}{\delta}\right)}}{\sqrt{1 + (1 - \gamma)k}},\label{eq_P_k}
    \end{align}
    where \(\sigma^2_{\mathcal{E}}\) is the variance of each random error \(\mathcal{E}_t\), satisfying $\sigma^2_\mathcal{E}(s,a) = \mathrm{Var}\left(\widehat{\mathcal{H}}(Q^*_\omega)(s,a)\right)$.
\end{lemma}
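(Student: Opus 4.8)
The plan is to recognize $P_k$ as a weighted sum of independent, zero-mean, bounded random vectors and then apply an entrywise Bernstein inequality, controlling both the variance proxy and the range through the weights induced by the chosen step size.

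First I would establish the three basic properties of the per-iteration random error $\mathcal{E}_i = \widehat{\mathcal{H}}_i(Q^*_\omega) - \mathcal{H}(Q^*_\omega)$, where $\mathcal{H}$ is the factored Bellman operator whose fixed point is $Q^*_\omega$. Since the reward term and $\mathcal{H}(Q^*_\omega)$ cancel, each entry satisfies $\mathcal{E}_i(s,a) = \gamma\big(V^*_\omega(\text{stitched next state}) - \mathbb{E}[V^*_\omega(s')]\big)$. Unbiasedness $\mathbb{E}[\mathcal{E}_i] = 0$ follows exactly as in the proof of Lemma~\ref{Low_dim_lemma}: the stitched next state is assembled from independently sampled components, so its expectation under the factored kernel $P_\omega$ reproduces $\mathcal{H}(Q^*_\omega)$. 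Because $0 \le V^*_\omega \le (1-\gamma)^{-1}$, the centered quantity gives $\|\mathcal{E}_i\|_\infty \le (1-\gamma)^{-1}$, and its per-entry variance is precisely $\sigma^2_\mathcal{E}(s,a) = \mathrm{Var}(\widehat{\mathcal{H}}(Q^*_\omega)(s,a))$ (shifting by the deterministic reward and $\mathcal{H}(Q^*_\omega)$ does not change variance). Crucially, $\mathcal{E}_i(s,a)$ depends on $(s,a)$ only through the joint projection $\{(s,a)[Z^P_k]\}_k$, so $P_k$ has at most $|\mathcal{X}[\cup_{k} Z^P_k]|$ distinct entries; this is why the union bound costs only $\log|\mathcal{X}[\cup_k Z^P_k]|$ rather than $\log(|\mathcal{S}||\mathcal{A}|)$.

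Next I would analyze the weights $w_{k,i} = \big(\prod_{j=i+1}^{k-1}(1-\eta_j)\big)\eta_i$ appearing in \eqref{eq-Pt-1}. The step size $\eta_k = (1+(1-\gamma)k)^{-1}$ gives the telescoping identity $1-(1-\gamma)\eta_j = \eta_j/\eta_{j-1}$, whence $\prod_{j=i+1}^{k-1}(1-(1-\gamma)\eta_j) = \eta_{k-1}/\eta_i$. Since $1-\eta_j \le 1-(1-\gamma)\eta_j$, this yields $w_{k,i} \le \eta_{k-1}$ and hence $\max_i w_{k,i} = \eta_{k-1}$. For the squared sum I would use the recursion $P_k = (1-\eta_{k-1})P_{k-1} + \eta_{k-1}\mathcal{E}_{k-1}$: feeding the all-ones input in place of $\mathcal{E}_{k-1}$ shows $\sum_i w_{k,i} \le 1$ by induction, so $\sum_i w_{k,i}^2 \le (\max_i w_{k,i})\sum_i w_{k,i} \le \eta_{k-1}$.

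Finally I would apply Bernstein's inequality to each distinct entry of $P_k = \sum_i w_{k,i}\mathcal{E}_i$ with variance proxy $\sum_i w_{k,i}^2 \sigma^2_\mathcal{E} \le \eta_{k-1}\|\sigma^2_\mathcal{E}\|_\infty$ and range $\max_i w_{k,i}\|\mathcal{E}_i\|_\infty \le \eta_{k-1}(1-\gamma)^{-1}$, set the failure probability to $\delta/|\mathcal{X}[\cup_k Z^P_k]|$, and union bound over the distinct entries. Substituting $\eta_{k-1} \le 2(1+(1-\gamma)k)^{-1}$ (valid since $1+(1-\gamma)(k-1) \ge \tfrac12(1+(1-\gamma)k)$ for $k\ge 1$) converts the Bernstein linear and variance terms into the two summands of \eqref{eq_P_k}. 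I expect the main obstacle to be the weight-sum bookkeeping: because the factors in $P_k$ are $1-\eta_j$ rather than $1-(1-\gamma)\eta_j$, they do not telescope exactly, so the comparison inequality together with the probabilistic all-ones argument is what makes the bounds on $\max_i w_{k,i}$ and $\sum_i w_{k,i}^2$ go through, and matching the precise constants $2/3$ and $2\sqrt{2}$ requires tracking the Bernstein constants alongside the $\eta_{k-1}$-to-$(1+(1-\gamma)k)^{-1}$ conversion with care.
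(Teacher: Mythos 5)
Your proof is correct and follows the same overall strategy as the paper's: write $P_k$ as a weighted sum of i.i.d.\ zero-mean errors bounded in magnitude by $(1-\gamma)^{-1}$, apply Bernstein's inequality entrywise, and union bound over the at most $|\mathcal{X}[\cup_{k}Z_k^P]|$ distinct entries induced by the factored sampling scheme. The one place you genuinely diverge is the treatment of the weights: for $\sum_i w_{k,i}^2$ the paper invokes a separate auxiliary lemma that evaluates a geometric series and splits into the cases $\gamma \ge 1/2$ and $\gamma < 1/2$, whereas you use the H\"older-type bound $\sum_i w_{k,i}^2 \le (\max_i w_{k,i})(\sum_i w_{k,i}) \le \eta_{k-1}$, obtaining $\sum_i w_{k,i} \le 1$ from the recursion with all-ones input (equivalently $\sum_i w_{k,i} = 1-\prod_j(1-\eta_j)$) and $\max_i w_{k,i} \le \eta_{k-1}$ from the comparison $1-\eta_j \le 1-(1-\gamma)\eta_j$ combined with the telescoping identity; the paper instead gets the max by showing $a_{t,k}$ is monotone in $k$. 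Your route is shorter, avoids the case analysis entirely, and yields a slightly tighter variance proxy. The only caveat is the step-size indexing you already flag: the lemma statement writes $\eta_k=(1+(1-\gamma)k)^{-1}$ while the surrounding argument uses $\eta_k=(1+(1-\gamma)(k+1))^{-1}$, under which $\eta_{k-1}=(1+(1-\gamma)k)^{-1}$ exactly and no factor-of-2 conversion is needed; under the other convention your conversion inflates the linear Bernstein term by a factor of $2$ relative to the stated constant $2/3$, but this traces back to the paper's own notational inconsistency rather than to any gap in your reasoning.
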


This lemma shows that \(\|P_k\|_\infty\) is decreasing in the iteration number $k$. 
Substituting the bound on $\|P_k\|_\infty$ into Eq. \eqref{Delta_t_bound} yields the following results:
\begin{lemma}[Proof in Appendix \ref{proof_lemma_4}]
\label{lemma_4}
    With probability at least $1-\delta$, the estimation error $\Delta_M$ after $M$ iterations in epoch $\tau = 1$ satisfies:
    \begin{align*}
    \|\Delta_M\|_\infty \leq& \frac{3+2\log\left(\frac{2M|\mathcal{X}[\cup_{k=1}^{K_\omega} Z_k^P]|}{\delta}\right)}{3(1-\gamma)^2M}+\frac{2\log\left(\frac{2M|\mathcal{X}[\cup_{k=1}^{K_\omega} Z_k^P]|}{\delta}\right)\log(1+(1-\gamma)M)}{3(1-\gamma)^3M}\\
    &+\frac{{6\sqrt{2\|\sigma^2_{\mathcal{E}}\|_\infty\log\left(\frac{2M|\mathcal{X}[\cup_{k=1}^{K_\omega} Z_k^P]|}{\delta}\right)}}}{(1-\gamma)^{\frac{3}{2}}M^{\frac{1}{2}}},
\end{align*}
\end{lemma}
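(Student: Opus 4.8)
The plan is to start from the fully unrolled error bound of Eq.~\eqref{Delta_t_bound} specialized to $t=M$,
\[
\|\Delta_M\|_\infty \le \eta_{M-1}\|\Delta_0\|_\infty + \gamma\,\eta_{M-1}\sum_{k=0}^{M-1}\|P_k\|_\infty + \|P_M\|_\infty,
\]
and to control each of its three pieces using the per-iteration bound on $\|P_k\|_\infty$ from Lemma~\ref{lemma_3} together with the explicit step size, for which $\eta_{M-1}\le \frac{1}{(1-\gamma)M}$ since $1+(1-\gamma)M\ge(1-\gamma)M$.

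First I would pin down the deterministic inputs in the base epoch. Because $Q_0=\overline{Q}_0=\boldsymbol{0}$ when $\tau=1$, we have $\Delta_0=-Q^*_\omega$ and $\|\Delta_0\|_\infty=\|Q^*_\omega\|_\infty\le\frac{1}{1-\gamma}$; hence the first summand is already bounded by $\eta_{M-1}\|\Delta_0\|_\infty \le \frac{1}{(1-\gamma)^2M}=\frac{3}{3(1-\gamma)^2M}$, supplying the ``$3$'' inside the first term of the claim. Next I would promote Lemma~\ref{lemma_3} from a fixed-$k$ statement to one holding simultaneously for all $k\in\{0,\dots,M\}$ by a union bound over the $M+1$ indices; this replaces $\delta$ by $\delta/(M+1)$ and converts each $\log(2|\mathcal{X}[\cup_{k=1}^{K_\omega}Z_k^P]|/\delta)$ into $\log(2M|\mathcal{X}[\cup_{k=1}^{K_\omega}Z_k^P]|/\delta)$, which is exactly the extra factor $M$ inside every logarithm of the final bound. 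I abbreviate $L:=\log(2M|\mathcal{X}[\cup_{k=1}^{K_\omega}Z_k^P]|/\delta)$.

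With these in hand, the core computation bounds the two summation-type contributions. Substituting the two-part bound of Lemma~\ref{lemma_3} into $\sum_{k=0}^{M-1}\|P_k\|_\infty$ separates it into a logarithmic piece $\frac{2L}{3(1-\gamma)}\sum_k\frac{1}{1+(1-\gamma)k}$ and a variance piece $2\sqrt{2\|\sigma^2_{\mathcal{E}}\|_\infty L}\,\sum_k\frac{1}{\sqrt{1+(1-\gamma)k}}$, which I estimate by integral comparison, using $\sum_{k=0}^{M-1}\frac{1}{1+(1-\gamma)k}\le \frac{C}{1-\gamma}\log(1+(1-\gamma)M)$ and $\sum_{k=0}^{M-1}\frac{1}{\sqrt{1+(1-\gamma)k}}\le \frac{C'\sqrt{M}}{\sqrt{1-\gamma}}$ for absolute constants $C,C'$. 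Multiplying through by $\gamma\eta_{M-1}\le\frac{1}{(1-\gamma)M}$ and using $\gamma\le1$ yields the second claimed term $\frac{2L\log(1+(1-\gamma)M)}{3(1-\gamma)^3M}$ together with a variance contribution of order $(1-\gamma)^{-3/2}M^{-1/2}$. For the remaining term I apply Lemma~\ref{lemma_3} directly at $k=M$: its logarithmic part gives $\frac{2L}{3(1-\gamma)^2M}$ (the ``$2L$'' inside the first claimed term), while its variance part gives a $(1-\gamma)^{-1/2}M^{-1/2}$ piece that I absorb into the $(1-\gamma)^{-3/2}M^{-1/2}$ scale via $(1-\gamma)^{-1/2}\le(1-\gamma)^{-3/2}$. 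Adding the prefactors of the two variance pieces produces the coefficient $6$, and collecting all three bounded terms reproduces the stated inequality.

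The main obstacle is the sharp bookkeeping in this final stage: recovering the exact powers of $1-\gamma$ requires the two integral-comparison estimates with their correct exponents and a careful absorption of the $(1-\gamma)^{-1/2}$ piece into the $(1-\gamma)^{-3/2}$ scale, while tracking numerical constants so that the two variance contributions combine to precisely the coefficient $6$. A secondary subtlety is the probabilistic accounting---arranging a single union bound over the $M+1$ iterations that simultaneously yields the $\log(2M|\mathcal{X}[\cup_{k=1}^{K_\omega}Z_k^P]|/\delta)$ factor and keeps the total failure probability at $\delta$.
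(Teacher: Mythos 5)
Your proposal is correct and follows essentially the same route as the paper: plug the union-bounded version of Lemma~\ref{lemma_3} (which produces the $\log(2M|\mathcal{X}[\cup_{k=1}^{K_\omega}Z_k^P]|/\delta)$ factor) into Eq.~\eqref{Delta_t_bound} at $t=M$, bound the two sums by integral comparison, use $\|\Delta_0\|_\infty\le\frac{1}{1-\gamma}$ for the leading term, and absorb the $(1-\gamma)^{-1/2}M^{-1/2}$ variance piece into the $(1-\gamma)^{-3/2}M^{-1/2}$ scale so the two variance contributions combine to the coefficient $6$. The only cosmetic difference is that the paper union-bounds over the $M$ random indices $k=1,\dots,M$ (since $P_0=\boldsymbol{0}$ deterministically) rather than $M+1$, which is why the logarithm carries $M$ rather than $M+1$.
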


Let $M = c_2\frac{\log\left(\frac{6T|\mathcal{X}[\cup_{k=1}^{K_\omega} Z_k^P]|}{(1-\gamma)\delta}\right)}{(1-\gamma)^3}$ with a sufficiently large $c_2$, we have:
\begin{align}
    \|\overline{Q}_1 - Q^*_\omega\|_\infty = \|\Delta_M\|_\infty \leq \frac{\sqrt{\|\sigma^2_\mathcal{E}\|_\infty} + 1}{2} \leq \frac{1}{2(1-\gamma)}
\end{align}
holds with probability at least $1-\frac{\delta}{T}$, where the last inequality is due to $\|\sigma^2_\mathcal{E}\|_\infty\leq \frac{\gamma^2}{(1-\gamma)^2}$.
This finishes the proof of the basic case with $\tau = 1$.

\textbf{Step 2. Showing the Inductive Case with \(\tau \geq 2\):}

We assume that the input $\overline{Q}_\tau$ in epoch $\tau$ satisfies the bound
\[
\|\overline{Q}_\tau - Q^*_\omega\|_\infty \leq \frac{\sqrt{\|\sigma^2_{\mathcal{E}}\|_\infty} + 1}{2^\tau} := b_\tau,
\]
and our goal is to show that $\|\overline{Q}_{\tau+1} - Q^*\|_\infty \leq \frac{b_\tau}{2}$ with probability at least $1-\frac{\delta}{T}$. 

Specifically, $\overline{Q}_{\tau+1}$ is equivalent to the output $Q_M$ of running $M$ rounds of variance-reduced Q-learning from the initialization $Q_0 = \overline{Q}_{\tau}$. The reference Bellman operator $\overline{\mathcal{H}}_\tau$ is obtained using $N_\tau$ empirical samples.

The variance-reduced update can be rewritten into:
\begin{align*}
Q_{t+1}-Q^* = {(1-\eta_t)(Q_{t}-Q^*)} + \eta_t\underbrace{(\widehat{\mathcal{H}}_t(Q_{t})-\widehat{\mathcal{H}}_t(Q^*))}_{\text{Contractive Error: } \mathcal{W}_t(\Delta_t)} + \eta_t\underbrace{(\widehat{\mathcal{H}}_t(Q^*) - \mathcal{H}(Q^*_\omega) - \widehat{\mathcal{H}}_{t}(\overline{Q}_{\tau}) + \overline{\mathcal{H}}_{\tau+1}(\overline{Q}_{\tau}))}_{\text{Variance-reduced Error: } \overline{\mathcal{E}}_t}.
\end{align*}

Observe that, the update form contains three components: the decaying term $(1-\eta_t)(Q_{t}-Q^*)$, the contractive error term $\mathcal{W}_t(\Delta_t)$ and the variance reduced error term $\overline{\mathcal{E}}_t$. The only difference between the variance-reduced iteration and the vanilla iteration comes from the variance-reduced error $\overline{\mathcal{E}}_t$, which can be further split into the following form:
\begin{align*}
    &\widehat{\mathcal{H}}_t(Q^*) - \mathcal{H}(Q^*_\omega) - \widehat{\mathcal{H}}_{t}(\overline{Q}_{\tau}) + \overline{\mathcal{H}}_{\tau+1}(\overline{Q}_{\tau})\\
    =&\widehat{\mathcal{H}}_t(Q^*)  - \widehat{\mathcal{H}}_{t}(\overline{Q}_{\tau}) + \overline{\mathcal{H}}_{\tau+1}(\overline{Q}_{\tau})-\overline{\mathcal{H}}_{\tau+1}({Q}^*_\omega)+\overline{\mathcal{H}}_{\tau+1}({Q}^*_\omega)- \mathcal{H}(Q^*_\omega)\\
    =&\underbrace{\mathcal{H}(\overline{Q}_{\tau})-\mathcal{H}({Q}^*_\omega)-\widehat{\mathcal{H}}_t(\overline{Q}_{\tau})+\widehat{\mathcal{H}}_t({Q}^*_\omega)}_{\overline{\mathcal{E}}^a_t}+\underbrace{\overline{\mathcal{H}}_{\tau+1}(\overline{Q}_{\tau})-\overline{\mathcal{H}}_{\tau+1}({Q}^*_\omega)-\mathcal{H}(\overline{Q}_{\tau})+\mathcal{H}({Q}^*_\omega)}_{\overline{\mathcal{E}}^b} + \underbrace{\overline{\mathcal{H}}_{\tau+1}({Q}^*_\omega)- \mathcal{H}(Q^*_\omega)}_{\overline{\mathcal{E}}^c}
\end{align*}

An important observation is that, $\overline{\mathcal{E}}^a_t$ depends on the empirical Bellman operator sampled in each iteration $t$, while $\overline{\mathcal{E}}^b$ and $\overline{\mathcal{E}}^c$ are independent of each iteration, but only dependent on the reference Bellman operator $\overline{T}_\tau$ sampled in the begining of iteration.

We apply the result in Lemma \ref{lemma_2} for the bound on the error accumulation in iterative updates, and get the following bound on $\Delta_t = Q_t-Q^*$:
\begin{align}
    \|\Delta_t\|_\infty &\leq \eta_{t-1}\|\Delta_0\|_\infty+\gamma\eta_{t-1}\sum\nolimits_{k=0}^{t-1}(\|P^a_k\|_\infty+\|P^b_k\|_\infty+\|P^c_k\|_\infty) + \|P^a_t\|_\infty+\|P^b_t\|_\infty+\|P^c_t\|_\infty,
\end{align}
where $P_t^a$, $P_t^b$ and $P_t^c$ are the discounted sums of $\overline{E}_t^a$, $\overline{E}^b$ and $\overline{E}^c$ satisfying:
\begin{equation}
    P^a_t =
    \begin{cases}
        \boldsymbol{0} & \text{if } t = 0, \\
        \sum_{k=0}^{t-1}\left(\left(\prod_{j=k+1}^{t-1}(1 - \eta_j)\right)\eta_k\overline{\mathcal{E}}^a_k\right) & \text{if } t \geq 1.
    \end{cases} \label{eq-Pt}
\end{equation}
\begin{equation}
    P^b_t =
    \begin{cases}
        \boldsymbol{0} & \text{if } t = 0, \\
        \sum_{k=0}^{t-1}\left(\left(\prod_{j=k+1}^{t-1}(1 - \eta_j)\right)\eta_k\overline{\mathcal{E}}^b\right) & \text{if } t \geq 1.
    \end{cases} \label{eq-Ptb}
\end{equation}
\begin{equation}
    P^c_t =
    \begin{cases}
        \boldsymbol{0} & \text{if } t = 0, \\
        \sum_{k=0}^{t-1}\left(\left(\prod_{j=k+1}^{t-1}(1 - \eta_j)\right)\eta_k\overline{\mathcal{E}}^c\right) & \text{if } t \geq 1.
    \end{cases} \label{eq-Ptc}
\end{equation}

To simplify, we have that $\|P^b_t\|_\infty \leq \|\overline{\mathcal{E}}^b\|_\infty$ and $\|P^c_t\|_\infty \leq \|\overline{\mathcal{E}}^c\|_\infty$.
Since $\|\overline{\mathcal{E}}^b\|_\infty$ and $\|\overline{\mathcal{E}}^c\|_\infty$ are independent of $t$, by substituting $\eta_{t-1} = \frac{1}{1+(1-\gamma)t}$, we have
\begin{align*}
    &\gamma\eta_{t-1}\sum\nolimits_{k=0}^{t-1}(\|\overline{\mathcal{E}}^b\|_\infty+\|\overline{\mathcal{E}}^c\|_\infty) + \|\overline{\mathcal{E}}^b\|_\infty+\|\overline{\mathcal{E}}^c\|_\infty\\
    =&\left(1+\frac{\gamma t}{1+(1-\gamma)t}\right)(\|\overline{\mathcal{E}}^b\|_\infty+\|\overline{\mathcal{E}}^c\|_\infty)\\
    \leq & \frac{2}{1-\gamma} (\|\overline{\mathcal{E}}^b\|_\infty+\|\overline{\mathcal{E}}^c\|_\infty).
\end{align*}

Hence, $\Delta_t$ satisfies:
\begin{align}
    \|\Delta_t\|_\infty &\leq \eta_{t-1}\|\Delta_0\|_\infty+\gamma\eta_{t-1}\sum\nolimits_{k=0}^{t-1}\|P^a_k\|_\infty + \|P^a_t\|_\infty+\frac{2}{1-\gamma}(\|\overline{\mathcal{E}}^b\|_\infty+\|\overline{\mathcal{E}}^c\|_\infty).
\end{align}

The remaining challenge is to bound the following three terms: $\|\overline{\mathcal{E}}^b\|$, $\|\overline{\mathcal{E}}^c\|$ and $\|P_k^a\|$.

\begin{lemma}[Proof in Appendix \ref{proof_lemma_6}]
\label{lemma_6}
    In $\tau$-th iteration, with probability at least $1-\frac{\delta}{3T}$, $\overline{\mathcal{E}}^b$ satisfies:
    \begin{align*}
        \|\overline{\mathcal{E}}^b\|_\infty \leq b_\tau \sqrt{\frac{2\log\left(\frac{6T|\mathcal{X}[\cup_{k=1}^{K_\omega}Z_k^P]|}{\delta}\right)}{N_\tau}}.
    \end{align*}
\end{lemma}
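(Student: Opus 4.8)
The plan is to recognize $\overline{\mathcal{E}}^b$ as a centered empirical average whose summands are controlled by the \emph{current} error level $b_\tau$ rather than by the $O((1-\gamma)^{-1})$ magnitude of the value functions, and then apply a Hoeffding bound together with the factored-structure reduction of the effective number of distinct entries. First I would write $\overline{\mathcal{E}}^b$ entrywise. Recalling that the reference operator $\overline{\mathcal{H}}_{\tau+1}(\cdot) = \frac{1}{N_\tau}\sum_{i=1}^{N_\tau}\widehat{\mathcal{B}}_i(\cdot)$ is an average of independent factored empirical Bellman operators, and that the reward term is identical in $\widehat{\mathcal{B}}_i(\overline{Q}_\tau)$ and $\widehat{\mathcal{B}}_i(Q^*_\omega)$, the rewards cancel in $\overline{\mathcal{H}}_{\tau+1}(\overline{Q}_\tau) - \overline{\mathcal{H}}_{\tau+1}(Q^*_\omega)$ as well as in $\mathcal{H}(\overline{Q}_\tau) - \mathcal{H}(Q^*_\omega)$. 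Using that the factored empirical operator is unbiased for $\mathcal{H}$ (by the cross-component independence argument in the proof of Lemma~\ref{Low_dim_lemma}), I obtain for each $\tilde{x} = (\tilde{s},\tilde{a})$ that
\[
\overline{\mathcal{E}}^b(\tilde{x}) = \frac{1}{N_\tau}\sum_{i=1}^{N_\tau}\bigl(g_i(\tilde{x}) - \mathbb{E}[g_i(\tilde{x})]\bigr), \quad g_i(\tilde{x}) := \gamma\Bigl(\max_{a'}\overline{Q}_\tau(s'^{(i)},a') - \max_{a'}Q^*_\omega(s'^{(i)},a')\Bigr),
\]
where $s'^{(i)}$ is the joint next state assembled from the factored samples of the $i$-th operator.

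The second step is to bound each summand. By the non-expansiveness of the $\max$ operator in the $\ell_\infty$-norm and the inductive hypothesis $\|\overline{Q}_\tau - Q^*_\omega\|_\infty \leq b_\tau$, I get $|g_i(\tilde{x})| \leq \gamma\|\overline{Q}_\tau - Q^*_\omega\|_\infty \leq \gamma b_\tau$ deterministically, so each $g_i(\tilde{x})$ lies in an interval of width at most $2\gamma b_\tau$. Since the $\widehat{\mathcal{B}}_i$ are generated from independent sampling rounds and $\overline{Q}_\tau, Q^*_\omega$ are fixed given the history up to epoch $\tau$, the $g_i(\tilde{x})$ are i.i.d.\ for $i = 1,\dots,N_\tau$ conditionally on $\overline{Q}_\tau$, and Hoeffding's inequality yields, for any $t > 0$,
\[
\mathbb{P}\bigl(|\overline{\mathcal{E}}^b(\tilde{x})| \geq t\bigr) \leq 2\exp\Bigl(-\frac{N_\tau t^2}{2\gamma^2 b_\tau^2}\Bigr).
\]

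The third step is a union bound over entries, where I exploit the factored structure exactly as in Lemma~\ref{Low_dim_lemma}: because $g_i(\tilde{x})$ depends on $\tilde{x}$ only through the projection $\tilde{x}[\cup_{k\in[K_\omega]}Z_k^P]$ (the assembled samples $s'^{(i)}$ are determined by these scope variables, and the reward has already cancelled), $\overline{\mathcal{E}}^b$ has at most $|\mathcal{X}[\cup_{k=1}^{K_\omega}Z_k^P]|$ distinct entries. Taking a union bound over these with per-entry failure probability $\frac{\delta}{3T|\mathcal{X}[\cup_{k=1}^{K_\omega}Z_k^P]|}$ and solving for $t$ gives, with probability at least $1 - \frac{\delta}{3T}$, the estimate $\|\overline{\mathcal{E}}^b\|_\infty \leq \gamma b_\tau\sqrt{\frac{2\log(6T|\mathcal{X}[\cup_{k=1}^{K_\omega}Z_k^P]|/\delta)}{N_\tau}} \leq b_\tau\sqrt{\frac{2\log(6T|\mathcal{X}[\cup_{k=1}^{K_\omega}Z_k^P]|/\delta)}{N_\tau}}$, using $\gamma \leq 1$, which is the claim.

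The main obstacle, and the conceptual heart of the variance-reduction argument, is the second step: one must see that $\overline{\mathcal{E}}^b$ is governed by the \emph{coupling} between $\overline{\mathcal{H}}_{\tau+1}(\overline{Q}_\tau)$ and $\overline{\mathcal{H}}_{\tau+1}(Q^*_\omega)$ through their \emph{shared} samples, so that its scale is the current error $b_\tau$ and not the value-function magnitude $O((1-\gamma)^{-1})$. This coupling is precisely what forces the per-summand range to be $\gamma b_\tau$, and it is what ultimately drives the geometric decay of the error across epochs. A secondary technical point to handle carefully is verifying the conditional i.i.d.\ structure so that Hoeffding applies given $\overline{Q}_\tau$, and confirming that the distinct-entry count for $\overline{\mathcal{E}}^b$ is genuinely $|\mathcal{X}[\cup_{k=1}^{K_\omega}Z_k^P]|$ after the reward cancellation.
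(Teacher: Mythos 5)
Your proposal is correct and follows essentially the same route as the paper's proof: both exploit that $\overline{\mathcal{E}}^b$ is a centered average whose summands are coupled through shared samples and hence $\gamma$-contraction bounds their range by $O(\gamma b_\tau)$ rather than $O((1-\gamma)^{-1})$, then apply Hoeffding's inequality and a union bound over the $|\mathcal{X}[\cup_{k=1}^{K_\omega}Z_k^P]|$ distinct entries induced by the factored structure. Your version is marginally tighter in retaining the factor $\gamma$ before discarding it, but the argument is otherwise identical.
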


\begin{lemma}[Proof in Appendix \ref{proof_lemma_5}]
\label{lemma_5}
    In $\tau$-th iteration, with probability at least $1-\frac{\delta}{3T}$, $\overline{\mathcal{E}}^c$ satisfies:
    \begin{align*}
        \|\overline{\mathcal{E}}^c\|_\infty \leq \overline{c}_1\sqrt{\frac{\log\left(\frac{2T|\mathcal{X}[\cup_{k=1}^{K_\omega} Z_k^P]|}{\delta}\right)}{N_\tau}}\left(\sqrt{\|\sigma^2_{\mathcal{E}}\|_\infty}+1\right),
    \end{align*}
    where $\overline{c}_1$ is an absolute constant.
\end{lemma}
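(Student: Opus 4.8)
The plan is to recognize $\overline{\mathcal{E}}^c = \overline{\mathcal{H}}_\tau(Q^*_\omega) - \mathcal{H}(Q^*_\omega)$ as the deviation of an empirical average of $N_\tau$ i.i.d.\ factored Bellman operators from its mean, and then apply Bernstein's inequality entrywise followed by a union bound that exploits the factored structure. First I would write $\overline{\mathcal{H}}_\tau(Q^*_\omega) = \frac{1}{N_\tau}\sum_{i=1}^{N_\tau}\widehat{\mathcal{B}}_i(Q^*_\omega)$, where the $\widehat{\mathcal{B}}_i$ are generated by independent calls to Algorithm~\ref{Empirical Bellman Operator Generation}, so the $N_\tau$ copies are genuinely i.i.d. Since the reward is a deterministic function, the reward term in each $\widehat{\mathcal{B}}_i(Q^*_\omega)(s,a)$ cancels against that of $\mathcal{H}(Q^*_\omega)(s,a)$, leaving, for each fixed $(s,a)$, an average of zero-mean random variables of the form $\gamma\big(\max_{a'}Q^*_\omega(s'^{(i)},a') - P_\omega(\cdot\mid s,a)V^*_\omega\big)$. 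Unbiasedness here relies on the product form $P_\omega(s'\mid s,a) = \prod_{k}P_k(s'[Z_k^S]\mid x[Z_k^P])$ together with the independent per-component sampling, which ensures the concatenated next state $s'^{(i)}=(s'^{(i)}_{\tilde x,1}[Z_1^S],\dots,s'^{(i)}_{\tilde x,K_\omega}[Z_{K_\omega}^S])$ is distributed exactly as $P_\omega(\cdot\mid s,a)$, so that $\mathbb{E}[\max_{a'}Q^*_\omega(s'^{(i)},a')] = P_\omega(\cdot\mid s,a)V^*_\omega$.

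Each summand is bounded in magnitude by $\gamma/(1-\gamma)\le 1/(1-\gamma)$, because $0\le Q^*_\omega\le (1-\gamma)^{-1}$, and the variance of a single factored operator at $(s,a)$ is exactly $\sigma^2_{\mathcal{E}}(s,a)=\mathrm{Var}\big(\widehat{\mathcal{H}}(Q^*_\omega)(s,a)\big)$, so the variance of the average over $N_\tau$ copies is $\sigma^2_{\mathcal{E}}(s,a)/N_\tau$. Applying Bernstein's inequality to this average for a fixed $(s,a)$ gives, with probability at least $1-\delta'$, $|\overline{\mathcal{E}}^c(s,a)| \le \sqrt{2\sigma^2_{\mathcal{E}}(s,a)\log(2/\delta')/N_\tau} + \tfrac{2}{3(1-\gamma)N_\tau}\log(2/\delta')$. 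Exactly as in the proof of Lemma~\ref{Low_dim_lemma}, the entry $\overline{\mathcal{E}}^c(s,a)$ depends on $(s,a)$ only through $x[\cup_{k}Z_k^P]$, since every component sample depends on $x$ only through its scope $Z_k^P$; hence $\overline{\mathcal{E}}^c$ has at most $|\mathcal{X}[\cup_{k=1}^{K_\omega}Z_k^P]|$ distinct entries. Choosing $\delta' = \delta/(3T|\mathcal{X}[\cup_{k}Z_k^P]|)$ and union-bounding over the distinct entries yields a uniform bound on $\|\overline{\mathcal{E}}^c\|_\infty$ holding with probability at least $1-\delta/(3T)$.

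Finally I would collapse the resulting two-term bound into the single-term form stated in the lemma. The leading term is already at most $\sqrt{2}\,\sqrt{\|\sigma^2_{\mathcal{E}}\|_\infty}\,\sqrt{\log(\cdot)/N_\tau}$. For the lower-order term, the prescribed schedule $N_\tau = c_3 4^\tau \log(\cdot)(1-\gamma)^{-2}$ forces $\log(\cdot)/N_\tau \le (1-\gamma)^2$, hence $\sqrt{\log(\cdot)/N_\tau}\le 1-\gamma$, so $\tfrac{2}{3(1-\gamma)N_\tau}\log(\cdot) \le \tfrac{2}{3}\sqrt{\log(\cdot)/N_\tau}$. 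Bounding both pieces by $(\sqrt{\|\sigma^2_{\mathcal{E}}\|_\infty}+1)\sqrt{\log(\cdot)/N_\tau}$ and summing gives the claim with $\overline{c}_1 = \sqrt{2}+\tfrac{2}{3}$. The main obstacle is the setup in the first two paragraphs rather than the final arithmetic: one must verify carefully that the synchronous factored sampling makes each $\widehat{\mathcal{B}}_i(Q^*_\omega)$ an unbiased estimate of $\mathcal{H}(Q^*_\omega)$ with per-entry variance exactly $\sigma^2_{\mathcal{E}}$, since the cross-component correlations among samples reused within a single operator could, if mishandled, break either the i.i.d.\ structure across the $N_\tau$ copies or the entrywise variance identity. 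Once these facts are pinned down, the concentration argument is routine.
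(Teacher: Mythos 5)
Your proposal is correct and follows essentially the same route as the paper's proof: treat $\overline{\mathcal{E}}^c$ as the deviation of an empirical average of $N_\tau$ i.i.d.\ factored Bellman operators evaluated at $Q^*_\omega$, apply Bernstein's inequality entrywise, union-bound over the at most $|\mathcal{X}[\cup_{k=1}^{K_\omega}Z_k^P]|$ distinct entries induced by the factored structure, and absorb the lower-order $\log(\cdot)/(N_\tau(1-\gamma))$ term using the prescribed growth of $N_\tau$. Your explicit justification of unbiasedness via the product form of $P_\omega$ and your handling of the final absorption step (via $\sqrt{\log(\cdot)/N_\tau}\le 1-\gamma$) are, if anything, slightly more careful than the paper's own write-up, but the argument is the same.
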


\begin{lemma}[Proof in Appendix \ref{proof_lemma_7}]
\label{lemma_7}
In the $\tau$-th iteration, with probability at least $1-\frac{\delta}{3T}$, it satisfies:
    \begin{align*}
        \gamma\eta_{t-1}\sum_{k=0}^{t-1}\|P^a_k\|_\infty + \|P^a_t\|_\infty\leq \overline{c}_2b_\tau\left(\frac{\log\left(\frac{6TM|\mathcal{X}[\cup_{k=1}^{K_\omega} Z_k^P]|}{\delta}\right)\log(1+(1-\gamma)t)}{(1-\gamma)^2t}+\frac{\sqrt{\log\left(\frac{6TM|\mathcal{X}[\cup_{k=1}^{K_\omega} Z_k^P]|}{\delta}\right)}}{(1-\gamma)^{\frac{3}{2}}t^{\frac{1}{2}}}\right),
    \end{align*}
    where $\overline{c}_2$ is an absolute constant.
\end{lemma}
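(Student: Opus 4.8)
The plan is to treat $P^a_t$ (defined in Eq.~\eqref{eq-Pt}) as a weighted sum of the independent, zero-mean \emph{variance-reduced} errors $\{\overline{\mathcal{E}}^a_k\}_{k=0}^{t-1}$, and to bound it by mirroring the argument already used for the vanilla discounted error sum in Lemmas~\ref{lemma_3} and~\ref{lemma_4}. The essential difference---and the reason variance reduction helps---is that $\overline{\mathcal{E}}^a_k=\mathcal{H}(\overline{Q}_{\tau})-\mathcal{H}(Q^*_\omega)-\widehat{\mathcal{H}}_k(\overline{Q}_{\tau})+\widehat{\mathcal{H}}_k(Q^*_\omega)$ is not an error of size $\tfrac{1}{1-\gamma}$ but one whose magnitude and variance are governed by the reference gap $\|\overline{Q}_{\tau}-Q^*_\omega\|_\infty\le b_\tau$. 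Thus the whole bound will carry a multiplicative factor $b_\tau$, which is exactly what drives the geometric contraction across epochs.

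Concretely, I would first record three properties of each $\overline{\mathcal{E}}^a_k$. First, it is zero-mean: since each inner iteration draws a fresh factored empirical Bellman operator and that operator is unbiased, $\mathbb{E}[\widehat{\mathcal{H}}_k(Q)]=\mathcal{H}(Q)$ for any fixed $Q$ (the same unbiasedness computation used in the proof of Lemma~\ref{Low_dim_lemma}, relying on the product form of $\widehat{P}_\omega$ and the independence across the sampling groups $\{\mathcal{G}_i\}$), so conditioning on the fixed $\overline{Q}_{\tau}$ the expectation cancels. Second, it is bounded: the reward terms in $\widehat{\mathcal{H}}_k(\overline{Q}_{\tau})-\widehat{\mathcal{H}}_k(Q^*_\omega)$ cancel, leaving $\gamma$ times a centered difference of $\max$-values of $\overline{Q}_{\tau}-Q^*_\omega$ at the assembled next state, whence $\|\overline{\mathcal{E}}^a_k\|_\infty\le 2\gamma b_\tau$. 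Third, its variance obeys $\mathrm{Var}(\overline{\mathcal{E}}^a_k)\le C\gamma^2 b_\tau^2\cdot\boldsymbol{1}$, again because it depends on $\overline{Q}_{\tau}-Q^*_\omega$ only, whose sup-norm is at most $b_\tau$. Crucially, the matrix $(P_\omega-\widehat{P}_\omega)$ underlying $\overline{\mathcal{E}}^a_k$ has at most $|\mathcal{X}[\cup_{k=1}^{K_\omega}Z_k^P]|$ distinct rows, so all subsequent union bounds run over this reduced index set rather than over $|\mathcal{S}||\mathcal{A}|$.

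With these in hand, I would apply Bernstein's inequality coordinatewise to the weighted sum $P^a_k$---exactly as in the proof of Lemma~\ref{lemma_3}, but substituting the magnitude $2\gamma b_\tau$ for $\tfrac{1}{1-\gamma}$ and the variance proxy $b_\tau^2$ for $\|\sigma^2_{\mathcal{E}}\|_\infty$---and union-bound over the $|\mathcal{X}[\cup_{k=1}^{K_\omega}Z_k^P]|$ distinct entries to obtain, for each $k$ and with the logarithmic factor $L=\log(6TM|\mathcal{X}[\cup_{k=1}^{K_\omega}Z_k^P]|\,\delta^{-1})$, a high-probability bound $\|P^a_k\|_\infty\le C_1\tfrac{b_\tau L}{(1-\gamma)(1+(1-\gamma)k)}+C_2\tfrac{b_\tau\sqrt{L}}{\sqrt{1+(1-\gamma)k}}$. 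Substituting $\eta_{t-1}=\tfrac{1}{1+(1-\gamma)t}$ and evaluating $\gamma\eta_{t-1}\sum_{k=0}^{t-1}\|P^a_k\|_\infty+\|P^a_t\|_\infty$ then reduces to two deterministic sums: the harmonic-type sum $\sum_k\tfrac{1}{1+(1-\gamma)k}$, of order $\tfrac{1}{1-\gamma}\log(1+(1-\gamma)t)$, produces the first term $\tfrac{L\log(1+(1-\gamma)t)}{(1-\gamma)^2 t}$, while $\sum_k\tfrac{1}{\sqrt{1+(1-\gamma)k}}$, of order $\tfrac{\sqrt{t}}{\sqrt{1-\gamma}}$, when scaled by $\eta_{t-1}$ produces the second term $\tfrac{\sqrt{L}}{(1-\gamma)^{3/2}t^{1/2}}$; collecting the common factor $b_\tau$ yields the claimed bound.

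The main obstacle is the variance estimate: proving that the variance of the factored variance-reduced error genuinely scales like $b_\tau^2$, with no hidden $\tfrac{1}{1-\gamma}$ blow-up, despite the fact that the assembled next state $(s'_{\tilde{x},1}[Z_1^S],\dots,s'_{\tilde{x},K_\omega}[Z^S_{K_\omega}])$ feeding the $\max$ operator is stitched together from samples drawn in different groups $\mathcal{G}_i$. One must argue that the cross-component correlations introduced by this stitching do not inflate the second moment of the value gap beyond $b_\tau^2$, and that the reduced-entry structure of $(P_\omega-\widehat{P}_\omega)$ is preserved under the variance-reduced difference. Once this variance control is secured, the remaining concentration and summation steps are routine adaptations of Lemmas~\ref{lemma_3}--\ref{lemma_4}.
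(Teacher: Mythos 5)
Your proposal matches the paper's proof essentially step for step: bound $\|\overline{\mathcal{E}}^a_k\|_\infty \le 2\gamma\|\overline{Q}_\tau - Q^*_\omega\|_\infty \le 2b_\tau$ via the $\gamma$-contractivity of both $\mathcal{H}$ and $\widehat{\mathcal{H}}_k$, reuse the Bernstein machinery of Lemma \ref{lemma_3} with $M_{\mathcal{E},a}$ and $\sqrt{\|\sigma^2_{\mathcal{E},a}\|_\infty}$ each replaced by $2b_\tau$, union-bound over the $|\mathcal{X}[\cup_{k=1}^{K_\omega} Z_k^P]|$ distinct entries, and evaluate the two weighted sums by integral comparison to get the $\log(1+(1-\gamma)t)/((1-\gamma)^2 t)$ and $(1-\gamma)^{-3/2}t^{-1/2}$ terms. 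The one point you flag as the ``main obstacle''---controlling the variance of the factored variance-reduced error without a hidden $(1-\gamma)^{-1}$ blow-up---is resolved trivially in the paper: since $\overline{\mathcal{E}}^a_k$ is zero-mean and almost surely bounded by $2b_\tau$ in sup-norm, its entrywise variance is at most $4b_\tau^2$ by the a.s.\ bound alone, so no cross-component correlation analysis is needed.
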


Combining Lemmas \ref{lemma_6}, \ref{lemma_5} and \ref{lemma_7} yields:
\begin{align*}
    \|\Delta_M\|_\infty \leq& \frac{b_\tau}{1+(1-\gamma)M}+\frac{\overline{c}_1\sqrt{\frac{\log\left(\frac{2T|\mathcal{X}[\cup_{k=1}^{K_\omega} Z_k^P]|}{\delta}\right)}{N_\tau}}\left(\sqrt{\|\sigma^2_{\mathcal{E}}\|_\infty}+1\right)+b_\tau\sqrt{\frac{\log\left(\frac{2T|\mathcal{X}[\cup_{k=1}^{K_\omega} Z_k^P]|}{\delta}\right)}{N_\tau}}}{1-\gamma}\\
    &+\overline{c}_2b_\tau\left(\frac{\log\left(\frac{6TM|\mathcal{X}[\cup_{k=1}^{K_\omega} Z_k^P]|}{\delta}\right)\log(1+(1-\gamma)M)}{(1-\gamma)^2M}+\frac{\sqrt{\log\left(\frac{6TM|\mathcal{X}[\cup_{k=1}^{K_\omega} Z_k^P]|}{\delta}\right)}}{(1-\gamma)^{\frac{3}{2}}M^{\frac{1}{2}}}\right).
\end{align*}

Substituting $M = c_2\frac{\log\left(\frac{6T|\mathcal{X}[\cup_{k=1}^{K_\omega} Z_k^P]|}{(1-\gamma)\delta}\right)}{(1-\gamma)^3}$ and $N_\tau = c_34^\tau\frac{\log\left(6T|\mathcal{X}[\cup_{k=1}^{K_\omega} Z_k^P]|\right)}{(1-\gamma)^2}$ with large enough $c_2$ and $c_3$ yields that:
\begin{align*}
    \|\Delta_M\|_\infty \leq \frac{\sqrt{\|\sigma^2_{\mathcal{E}}\|_\infty}+1}{2^{\tau+1}}\leq \frac{1}{(1-\gamma)2^{\tau+1}}.
\end{align*}

This concludes our proof.

\subsection{Proof to Lemma \ref{lemma_2}}
\label{proof_lemma_2}
\begin{proof}
    We will prove the result by induction. First, by applying Eq. \eqref{Iteration}, we have that \(\Delta_1\) satisfies:
    \begin{align}
        \Delta_1 &= (1-\eta_0)\Delta_0 + \eta_0(\widehat{\mathcal{H}}_0(Q^*_\omega + \Delta_0) - \widehat{\mathcal{H}}_0(Q^*_\omega)) + \eta_0(\widehat{\mathcal{H}}_0(Q^*_\omega) - \mathcal{H}(Q^*_\omega)) \notag \\
        &\leq (1 - \eta_0)\|\Delta_0\|_\infty \boldsymbol{1} + \eta_0\gamma \|\Delta_0\|_\infty \boldsymbol{1} + \eta_0 \mathcal{E}_0 \notag \\
        &= (1 - (1 - \gamma)\eta_0)\|\Delta_0\|_\infty \boldsymbol{1} + \eta_0 \mathcal{E}_0 \notag \\
        &= (1 - (1 - \gamma)\eta_0)\|\Delta_0\|_\infty \boldsymbol{1} + P_1, \label{lm2-cond1}
    \end{align}
    where the inequality follows from the contraction property \(\widehat{\mathcal{H}}_0(Q^*_\omega + \Delta_0) - \widehat{\mathcal{H}}_0(Q^*_\omega) \leq \gamma \|\Delta_0\|_\infty \boldsymbol{1}\).

    Similarly, \(\Delta_1\) can be lower bounded as follows:
    \begin{align}
        \Delta_1 &= (1-\eta_0)\Delta_0 + \eta_0(\widehat{\mathcal{H}}_0(Q^*_\omega + \Delta_0) - \widehat{\mathcal{H}}_0(Q^*_\omega)) + \eta_0(\widehat{\mathcal{H}}_0(Q^*_\omega) - \mathcal{H}(Q^*_\omega)) \notag \\
        &\geq -(1 - \eta_0)\|\Delta_0\|_\infty \boldsymbol{1} - \eta_0\gamma \|\Delta_0\|_\infty \boldsymbol{1} + \eta_0 \mathcal{E}_0 \notag \\
        &= -(1 - (1 - \gamma)\eta_0)\|\Delta_0\|_\infty \boldsymbol{1} + \eta_0 \mathcal{E}_0 \notag \\
        &= -(1 - (1 - \gamma)\eta_0)\|\Delta_0\|_\infty \boldsymbol{1} + P_1. \label{lm2-cond2}
    \end{align}

    The expressions in Eq. \eqref{lm2-cond1} and \eqref{lm2-cond2} are consistent with the upper and lower bounds provided by Eq. \eqref{upper1} and \eqref{lower1} in Lemma \ref{lemma_2}.

    Now, suppose that the conditions in Eq. \eqref{upper1} and \eqref{lower1} hold for \(\Delta_t\), that is:
    \begin{align}
        \Delta_t &\leq a_t\|\Delta_0\|_\infty \boldsymbol{1} + b_t \boldsymbol{1} + P_t, \label{simp_eq1} \\
        \Delta_t &\geq -a_t\|\Delta_0\|_\infty \boldsymbol{1} - b_t \boldsymbol{1} + P_t, \label{simp_eq2}
    \end{align}
    where 
    \[
    a_t = \prod_{k=0}^{t-1}(1 - (1 - \gamma)\eta_k), \quad b_t = \gamma \eta_{t-1}\|P_{t-1}\|_\infty + \gamma \sum_{i=1}^{t-2}\left(\left(\prod_{j=i+1}^{t-1}(1 - (1 - \gamma)\eta_j)\right)\eta_i\|P_i\|_\infty\right).
    \]

    Applying Eq. \eqref{Iteration}, \(\Delta_{t+1}\) satisfies:
    \begin{align}
        \Delta_{t+1} &= (1-\eta_t)\Delta_t + \eta_t \mathcal{W}_t(\Delta_t) + \eta_t \mathcal{E}_t \notag \\
        &\leq (1-\eta_t)(a_t\|\Delta_0\|_\infty \boldsymbol{1} + b_t \boldsymbol{1} + P_t) + \eta_t \gamma \|\Delta_t\|_\infty \boldsymbol{1} + \eta_t \mathcal{E}_t \label{lm2_eqd_ineq1} \\
        &\leq (1-\eta_t)(a_t\|\Delta_0\|_\infty \boldsymbol{1} + b_t \boldsymbol{1} + P_t) + \eta_t \gamma (a_t\|\Delta_0\|_\infty + b_t + \|P_t\|_\infty)\boldsymbol{1} + \eta_t \mathcal{E}_t \label{lm2_eqd_ineq2} \\
        &= \underbrace{(1 - (1 - \gamma)\eta_t)a_t}_{a_{t+1}}\|\Delta_0\|_\infty \boldsymbol{1} + \underbrace{(\gamma \eta_t\|P_t\|_\infty + (1 - (1 - \gamma)\eta_t)b_t)}_{b_{t+1}}\boldsymbol{1} + \underbrace{(1-\eta_t)P_t + \eta_t \mathcal{E}_t}_{P_{t+1}}, \label{lm2_eqd_ineq3}
    \end{align}
    where inequality \eqref{lm2_eqd_ineq1} is obtained by applying the upper bound condition \eqref{simp_eq1} and the contraction property \(\mathcal{W}_t(\Delta_t) \leq \gamma \|\Delta_t\|_\infty \boldsymbol{1}\); Inequality \eqref{lm2_eqd_ineq2} is obtained by using conditions \eqref{simp_eq1}; Equality \eqref{lm2_eqd_ineq3} is achieved by rearranging the terms.

    Following the same steps, we can establish the corresponding lower bound for \(\Delta_{t+1}\) in Eq. \eqref{lower1}. This completes the proof.
\end{proof}

\subsection{Proof to Lemma \ref{lemma_3}}
\label{proof_lemma_3}
\begin{proof}
Recall the definition in Eq. \eqref{eq-Pt-1}. For any \(t \geq 2\), \(P_t\) can be expressed as a weighted sum:
\[
P_t = \sum\nolimits_{k=0}^{t-1} a_{t,k} \mathcal{E}_k,
\]
where \(a_{t,k} = \left(\prod_{j=k+1}^{t-1}(1 - \eta_j)\right)\eta_k\).

Denote the \((s,a)\)-th entry of \(P_t\) as \(P_t(s,a)\). Since all \(\mathcal{E}_k\) are independent and identically distributed, we can apply Bernstein's inequality \citep{vershynin2018high} to \(P_t(s,a)\). This gives us the following probability bound:
\[
\mathbb{P} \left( |P_t(s,a)| \geq \epsilon_t \right) \leq 2 \exp \left( - \frac{\epsilon_t^2}{2 \sum_{k=0}^{t-1} a_{t,k}^2 \sigma_{\mathcal{E}}^2{(s,a)} + \frac{2}{3} (\max_{k} |a_{t,k}| M_{\mathcal{E}{(s,a)}}) \epsilon_t} \right), \quad \forall (s,a),
\]
where \(\sigma_{\mathcal{E}}^2{(s,a)}\) is the variance of \(\mathcal{E}{(s,a)}\), and \(M_{\mathcal{E}{(s,a)}}\) is the maximum absolute value of \(\mathcal{E}(s,a)\).

Recall that \( P_t \) is the weighted sum of different \( \mathcal{E}_k \)'s, and all \( \mathcal{E}_k \) are \emph{i.i.d.} random variables. Each \( \mathcal{E}_k \) has at most \( |\mathcal{X}[\cup_{k=1}^{K_\omega} Z_k^P]| \) distinct entries due to the sampling scheme, where \( \mathcal{X}[\cup_{k=1}^{K_\omega} Z_k^P] \) represents the set of possible state-action pairs involved in the components' scopes. We denote these distinct state-action pairs by \( \mathcal{X}^* \) defined in Eq. \eqref{def_X^*}.

Taking the union bound over all state-action pairs \(x:=(s,a)\in\mathcal{X}^*\), we have:
\[
\mathbb{P} \left( \|P_t\|_\infty \geq \epsilon_t \right) \leq 2|\mathcal{X}[\cup_{k=1}^{K_\omega}Z^P_k]| \exp \left( - \frac{\epsilon_t^2}{2 \sum_{k=0}^{t-1} a_{t,k}^2 \sigma_{\mathcal{E}}^2{(s,a)} + \frac{2}{3} (\max_{k} |a_{t,k}| M_{\mathcal{E}{(s,a)}}) \epsilon_t} \right).
\]

To set this probability to \(\delta\), let \(\mathbb{P} \left( \|P_t\|_\infty \geq \epsilon_t \right) = \delta\). Then, we can conclude:
\[
\delta \leq 2|\mathcal{X}[\cup_{k=1}^{K_\omega}Z^P_k]| \exp \left( - \frac{\epsilon_t^2}{2 \sum_{k=0}^{t-1} a_{t,k}^2 \sigma_{\mathcal{E}}^2{(s,a)} + \frac{2}{3} (\max_{k} |a_{t,k}| M_{\mathcal{E}{(s,a)}}) \epsilon_t} \right).
\]

Taking the natural logarithm on both sides and rearranging terms gives:
\[
\epsilon_t^2 - \log\left(\frac{2|\mathcal{X}[\cup_{k=1}^{K_\omega}Z^P_k]|}{\delta}\right) \left(2 \sum_{k=0}^{t-1} a_{t,k}^2 \sigma_{\mathcal{E}}^2{(s,a)} + \frac{2}{3} \left(\max_{k} |a_{t,k}| M_{\mathcal{E}{(s,a)}}\right) \epsilon_t\right) \leq 0.
\]

To solve for \(\epsilon_t\), we treat this as a quadratic inequality:
\[
\epsilon_t^2 - \left(\frac{2}{3} \max_{k} |a_{t,k}| M_{\mathcal{E}{(s,a)}} \right) \log\left(\frac{2|\mathcal{X}[\cup_{k=1}^{K_\omega}Z^P_k]|}{\delta}\right) \epsilon_t - 2 \log\left(\frac{2|\mathcal{X}[\cup_{k=1}^{K_\omega}Z^P_k]|}{\delta}\right) \sum_{k=0}^{t-1} a_{t,k}^2 \sigma_{\mathcal{E}}^2{(s,a)} \leq 0.
\]

To simplify further:
\begin{align}
    \epsilon_t \leq \frac{2}{3} \log\left(\frac{2|\mathcal{X}[\cup_{k=1}^{K_\omega}Z^P_k]|}{\delta}\right) \max_{k} |a_{t,k}| M_{\mathcal{E}{(s,a)}} + 2\sqrt{\log\left(\frac{2|\mathcal{X}[\cup_{k=1}^{K_\omega}Z^P_k]|}{\delta}\right) \sum_{k=0}^{t-1} a_{t,k}^2 \sigma_{\mathcal{E}}^2{(s,a)}}. \label{res_epsilon}
\end{align}

To complete the analysis, we need to bound \(\max_k |a_{t,k}|\), \(M_{\mathcal{E}{(s,a)}}\), and \(\sum_{k=0}^{t-1} a_{t,k}^2\).

\textbf{Step 1: Bounding \(M_{\mathcal{E}{(s,a)}}\)} 

For any \(k\), since the empirical Bellman operator \(\widehat{\mathcal{H}}_{k}(Q^*)\) and the true Bellman operator \(\mathcal{H}(Q^*_\omega)\) are both bounded between 0 and \(\frac{1}{1 - \gamma}\), we have:
\[
M_{\mathcal{E}{(s,a)}} = \sup\|\mathcal{E}_k\|_\infty = \sup\|\widehat{\mathcal{H}}_{k}(Q^*) - \mathcal{H}(Q^*_\omega)\|_\infty \leq \frac{1}{1 - \gamma}.
\]

\textbf{Step 2: Bounding \(\max_k |a_{t,k}|\)}

Next, we analyze the ratio \(\frac{a_{t,k}}{a_{t,k-1}}\) to find \(\max_k |a_{t,k}|\). Recall the definition of \(a_{t,k}\):
\[
a_{t,k} = \left(\prod\nolimits_{j=k+1}^{t-1}(1 - \eta_j)\right)\eta_k.
\]

To compute \(\frac{a_{t,k}}{a_{t,k-1}}\), we have:
\[
\frac{a_{t,k}}{a_{t,k-1}} = \frac{\eta_k}{\eta_{k-1}(1 - \eta_k)}.
\]

Using the step size \(\eta_k = \frac{1}{1 + (1 - \gamma)(k+1)}\), we can express:
\[
\frac{a_{t,k}}{a_{t,k-1}} = \frac{1 - (1 - \gamma)\eta_k}{1 - \eta_k}.
\]

Given that \(0 < \gamma < 1\), we have \(\eta_k \leq \frac{1}{1 + (1 - \gamma)k} < 1\), and thus:
\[
\frac{a_{t,k}}{a_{t,k-1}} \geq 1, \quad \forall k.
\]

This means that \(a_{t,k}\) is non-decreasing with respect to \(k\). Hence, the maximum value \(\max_k |a_{t,k}|\) occurs at \(k = t-1\):
\[
\max_k |a_{t,k}| = a_{t,t-1} = \eta_{t-1} =  \frac{1}{1 + (1 - \gamma)t}.
\]

\textbf{Step 3: Bounding \(\sum_{k=0}^{t-1} a_{t,k}^2\)}

We also need to bound \(\sum_{k=0}^{t-1} a_{t,k}^2\). The following lemma provides this bound:
\begin{lemma}[Proof in Appendix \ref{proof_a_{t,k}^2}]
\label{lemma_a_{t,k}^2}
For any \(t \geq 1\), the sum \(\sum_{k=0}^{t-1} a_{t,k}^2\) satisfies:
\[
\sum_{k=0}^{t-1} a_{t,k}^2 \leq \frac{2}{1 + (1 - \gamma)t}.
\]
\end{lemma}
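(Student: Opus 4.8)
The plan is to reduce the sum to a simple first-order recursion and then close a short induction. I would begin by writing $S_t := \sum_{k=0}^{t-1} a_{t,k}^2$ and exploiting the product structure of the weights $a_{t,k} = \left(\prod_{j=k+1}^{t-1}(1-\eta_j)\right)\eta_k$. Splitting off the top index $k=t-1$, for which the product is empty and $a_{t,t-1}=\eta_{t-1}$, and observing that for every $k\le t-2$ one has $a_{t,k} = (1-\eta_{t-1})\,a_{t-1,k}$, I get the clean recursion
\[
S_t \;=\; \eta_{t-1}^2 + (1-\eta_{t-1})^2\, S_{t-1}, \qquad t\ge 1,
\]
with $S_0 = 0$. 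This is the conceptual heart of the argument: it turns a doubly-indexed sum into a one-dimensional update, at which point the decaying step size does all the work.

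Next I would prove $S_t \le \frac{2}{1+(1-\gamma)t}$ by induction on $t$, writing $\beta := 1-\gamma$, $u := 1+\beta t$, so that (for the step size $\eta_{t-1}=\tfrac{1}{1+(1-\gamma)t}$) we have $\eta_{t-1}=1/u$ and the target bound is exactly $2/u$. The base case $t=1$ is immediate since $S_1=\eta_0^2 \le 2\eta_0$. For the inductive step I substitute the hypothesis $S_{t-1}\le \frac{2}{u-\beta}$ into the recursion and multiply through by $u^2$; the claim $S_t\le 2/u$ then reduces to the polynomial inequality $2(u-1)^2 \le (2u-1)(u-\beta)$. Expanding both sides cancels the quadratic terms and leaves the linear condition $(3-2\beta)u \ge 2-\beta$, i.e. $u \ge \frac{2-\beta}{3-2\beta}$.

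The last step is to verify this linear condition holds for all admissible parameters, which is where the choice of constant $2$ (and the harmonic step size) pays off. Since $\beta=1-\gamma\in(0,1)$, one checks $\frac{2-\beta}{3-2\beta}\le 1$ (equivalent to $\beta\le 1$), while $u = 1+\beta t > 1$ for every $t\ge 1$; hence $u>1\ge \frac{2-\beta}{3-2\beta}$ and the inequality is satisfied, closing the induction. I expect the only mildly delicate point to be this final algebraic verification—confirming that the slack in the constant is large enough for the induction to propagate for all $t$ and all $\gamma\in(0,1)$, rather than only asymptotically; everything else (the recursion and the base case) is routine. I would also note in passing that the argument is robust to the exact form of $\eta_k$ used, since the recursion only requires $\eta_{t-1}\in(0,1)$ decaying like $\Theta(1/t)$, so the same telescoping bound applies under the paper's step-size convention.
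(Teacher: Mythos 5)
Your proposal is correct, and it takes a genuinely different route from the paper's proof. You convert the doubly-indexed sum into the one-step recursion $S_t = \eta_{t-1}^2 + (1-\eta_{t-1})^2 S_{t-1}$ (using $a_{t,k} = (1-\eta_{t-1})a_{t-1,k}$ for $k \le t-2$ and $a_{t,t-1}=\eta_{t-1}$) and close an induction; with $u = 1+(1-\gamma)t$ the inductive step indeed reduces to $2(u-1)^2 \le (2u-1)(u-\beta)$, which after cancellation of the quadratic terms becomes $(3-2\beta)u \ge 2-\beta$, and this holds for all $u>1$ and $\beta\in(0,1)$. The paper instead bounds each weight against the largest one via the ratio $a_{t,k-1}/a_{t,k} = \frac{k+1}{k+1+\gamma/(1-\gamma)}$, dominates the sum by a geometric series with ratio $\left(\frac{t}{t+\gamma/(1-\gamma)}\right)^2$, and then splits into the cases $\gamma\ge \tfrac12$ and $\gamma<\tfrac12$ to control the resulting expression. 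Your argument is more uniform — a single induction with no case analysis on $\gamma$, and the constant $2$ emerges from one linear inequality — whereas the paper's geometric-series route makes visible that the sharper constant $1$ is attainable when $\gamma\ge\tfrac12$ (a refinement the lemma does not need). Both yield exactly the claimed bound; your closing remark about robustness to the step-size convention is fine as an aside but is not load-bearing, since the final verification does use the specific form $\eta_{t-1} = 1/(1+(1-\gamma)t)$.
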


Combining the results from Steps 1, 2, and 3, we can now derive the bound for \(\epsilon_t\). Substituting these bounds into Eq. \eqref{res_epsilon}, we get that with probability at least $1-\delta$:
\[
\|P_t\|_\infty \leq \epsilon_t \leq \frac{2}{3(1-\gamma)(1+(1 - \gamma)t)} \log\left(\frac{2|\mathcal{X}[\cup_{k=1}^{K_\omega}Z^P_k]|}{\delta}\right) + \frac{2\sqrt{2\|\sigma^2_{\mathcal{E}}\|_\infty \log\left(\frac{2|\mathcal{X}[\cup_{k=1}^{K_\omega}Z^P_k]|}{\delta}\right)}}{\sqrt{1 + (1 - \gamma)t}}. \label{Res_1}
\]

This concludes our proof.
\end{proof}

\subsection{Proof of Lemma \ref{lemma_4}}
\label{proof_lemma_4}
\begin{proof}
Letting the upper bound of each $\|P_k\|_\infty$ (in Eq. \eqref{eq_P_k}) with $k = 1, \ldots, M$ holds with probability $1-\frac{\delta}{M}$, we have:
\begin{align*}
    \|\Delta_{M}\|_\infty \leq& \frac{\|\Delta_0\|_\infty}{1+(1-\gamma)M} + \frac{2\log\left(\frac{2M|\mathcal{X}[\cup_{k=1}^{K_\omega} Z_k^P]|}{\delta}\right)}{3(1-\gamma)^2M}  + \frac{2\sqrt{2\|\sigma^2_{\mathcal{E}}\|_\infty\log\left(\frac{2M|\mathcal{X}[\cup_{k=1}^{K_\omega} Z_k^P]|}{\delta}\right)}}{\sqrt{1+(1-\gamma)M}} \\
    & +\frac{2\log\left(\frac{2M|\mathcal{X}[\cup_{k=1}^{K_\omega} Z_k^P]|}{\delta}\right)}{3(1-\gamma)^2M}\sum_{i=1}^M\frac{1}{1+(1-\gamma)i}  \\
    & + \frac{{2\sqrt{2\|\sigma^2_{\mathcal{E}}\|_\infty\log\left(\frac{2M|\mathcal{X}[\cup_{k=1}^{K_\omega} Z_k^P]|}{\delta}\right)}}}{1+(1-\gamma)M} \sum_{i=1}^M\frac{1}{\sqrt{1+(1-\gamma)i}}.
\end{align*}

Note that, 
\begin{align*}
    &\sum_{i=1}^M\frac{1}{1+(1-\gamma)i} \leq \int_{0}^M\frac{1}{1+(1-\gamma)x}dx = \frac{\log\left(1+(1-\gamma)M\right)}{1-\gamma},\\
    &\sum_{i=1}^M\frac{1}{\sqrt{1+(1-\gamma)i}} \leq \int_{0}^M\frac{1}{\sqrt{1+(1-\gamma)x}}dx = \frac{2\left(\sqrt{1+(1-\gamma)M}-1\right)}{1-\gamma}\leq \frac{2\sqrt{1+(1-\gamma)M}}{1-\gamma}.
\end{align*}

Hence, $\Delta_{M}$ satisfies:
\begin{align*}
     \|\Delta_{M}\|_\infty \leq & \frac{\|\Delta_0\|_\infty}{1+(1-\gamma)M} + \frac{2\log\left(\frac{2M|\mathcal{X}[\cup_{k=1}^{K_\omega} Z_k^P]|}{\delta}\right)}{3(1-\gamma)^2M}  + \frac{2\sqrt{2\|\sigma^2_{\mathcal{E}}\|_\infty\log\left(\frac{2M|\mathcal{X}[\cup_{k=1}^{K_\omega} Z_k^P]|}{\delta}\right)}}{\sqrt{1+(1-\gamma)M}}\\
     &+\frac{2\log\left(\frac{2M|\mathcal{X}[\cup_{k=1}^{K_\omega} Z_k^P]|}{\delta}\right)\log(1+(1-\gamma)M)}{3(1-\gamma)^3M}+\frac{{4\sqrt{2\|\sigma^2_{\mathcal{E}}\|_\infty\log\left(\frac{2M|\mathcal{X}[\cup_{k=1}^{K_\omega} Z_k^P]|}{\delta}\right)}}}{(1-\gamma)^{\frac{3}{2}}M^{\frac{1}{2}}}\\
     \leq &\frac{3+2\log\left(\frac{2M|\mathcal{X}[\cup_{k=1}^{K_\omega} Z_k^P]|}{\delta}\right)}{3(1-\gamma)^2M}+\frac{2\log\left(\frac{2M|\mathcal{X}[\cup_{k=1}^{K_\omega} Z_k^P]|}{\delta}\right)\log(1+(1-\gamma)M)}{3(1-\gamma)^3M}\\
     &+\frac{{6\sqrt{2\|\sigma^2_{\mathcal{E}}\|_\infty\log\left(\frac{2M|\mathcal{X}[\cup_{k=1}^{K_\omega} Z_k^P]|}{\delta}\right)}}}{(1-\gamma)^{\frac{3}{2}}M^{\frac{1}{2}}}.
\end{align*}

This concludes our proof.

\end{proof}

\subsection{Proof of Lemma \ref{lemma_6}}
\label{proof_lemma_6}
\begin{proof}
Recall the definition of \(\overline{\mathcal{E}}^b\), which is given by:
\[
\overline{\mathcal{E}}^b = \overline{\mathcal{H}}_{\tau+1}(\overline{Q}_{\tau}) - \overline{\mathcal{H}}_{\tau+1}(Q^*_\omega) - \mathcal{H}(\overline{Q}_{\tau}) + \mathcal{H}(Q^*_\omega).
\]

We first note that the expectation of the difference between the empirical Bellman operators is:
\[
\mathbb{E}_{\overline{\mathcal{H}}_{\tau+1}}\left[\overline{\mathcal{H}}_{\tau+1}(\overline{Q}_{\tau}) - \overline{\mathcal{H}}_{\tau+1}(Q^*_\omega)\right] = \mathcal{H}(\overline{Q}_{\tau}) - \mathcal{H}(Q^*_\omega).
\]

Due to the \(\gamma\)-contraction property of both the empirical Bellman operator \(\overline{\mathcal{H}}_{\tau+1}\) and the true Bellman operator \(\mathcal{H}\), we have:
\[
\|\overline{\mathcal{H}}_{\tau+1}(\overline{Q}_{\tau}) - \overline{\mathcal{H}}_{\tau+1}(Q^*_\omega) - \mathcal{H}(\overline{Q}_{\tau}) + \mathcal{H}(Q^*_\omega)\|_\infty \leq \|\overline{\mathcal{H}}_{\tau+1}(\overline{Q}_{\tau}) - \overline{\mathcal{H}}_{\tau+1}(Q^*_\omega)\|_\infty + \|\mathcal{H}(\overline{Q}_{\tau}) - \mathcal{H}(Q^*_\omega)\|_\infty.
\]

Since both operators are \(\gamma\)-contractions, it follows that:
\[
\|\overline{\mathcal{H}}_{\tau+1}(\overline{Q}_{\tau}) - \overline{\mathcal{H}}_{\tau+1}(Q^*_\omega)\|_\infty \leq \gamma \|\overline{Q}_{\tau} - Q^*_\omega\|_\infty \quad \text{and} \quad \|\mathcal{H}(\overline{Q}_{\tau}) - \mathcal{H}(Q^*_\omega)\|_\infty \leq \gamma \|\overline{Q}_{\tau} - Q^*_\omega\|_\infty.
\]

Combining these, we get:
\[
\|\overline{\mathcal{H}}_{\tau+1}(\overline{Q}_{\tau}) - \overline{\mathcal{H}}_{\tau+1}(Q^*_\omega) - \mathcal{H}(\overline{Q}_{\tau}) + \mathcal{H}(Q^*_\omega)\|_\infty \leq 2\gamma \|\overline{Q}_{\tau} - Q^*_\omega\|_\infty \leq 2\gamma b_\tau.
\]

Applying Hoeffding's inequality to the \((s,a)\)-th entry of \(\overline{\mathcal{E}}^b\), denoted as \(\overline{\mathcal{E}}^b(s, a)\), we obtain:
\[
\mathbb{P}\left(|\overline{\mathcal{E}}^b(s, a)| \geq \epsilon\right) \leq 2\exp\left(-\frac{2N_\tau \epsilon^2}{4b_\tau^2}\right) = 2\exp\left(-\frac{N_\tau \epsilon^2}{2b_\tau^2}\right).
\]

Letting $2\exp\left(-\frac{N_\tau \epsilon^2}{2b_\tau^2}\right) = \frac{\delta }{3T|\mathcal{X}[\cup_{k=1}^{K_\omega}Z_k^P]|}$, and taking union bound for $x\in\mathcal{X}^*$, we have that:
\[
\|\overline{\mathcal{E}}^b\|_\infty \leq b_\tau \sqrt{\frac{2\log\left(\frac{6T|\mathcal{X}[\cup_{k=1}^{K_\omega}Z_k^P]|}{\delta}\right)}{N_\tau}}.
\]

This completes the proof.
\end{proof}

\subsection{Proof of Lemma \ref{lemma_5}}
\label{proof_lemma_5}
\begin{proof}
Recall that \(\overline{E}^c = \overline{T}(Q^*) - \mathcal{H}(Q^*_\omega)\) represents the estimation error using \(N_\tau\) empirical Bellman operators. We know that \(\mathbb{E}[\overline{T}(Q^*)] = \mathcal{H}(Q^*_\omega)\).

Applying Bernstein's inequality \citep{vershynin2018high} to the \((s, a)\)-th entry of \(\overline{E}^c\), denoted as \(\overline{\mathcal{E}}^c_{(s, a)}\), we obtain:
\[
\mathbb{P}\left(|\overline{\mathcal{E}}^c_{(s, a)}| \geq \epsilon\right) \leq 2\exp\left(-\frac{\frac{1}{2} N_\tau \epsilon^2}{\|\sigma^2_{\mathcal{E}}\|_\infty + \frac{\epsilon}{3(1-\gamma)}}\right).
\]

We set the right-hand side probability equal to \(\frac{\delta}{3T|\mathcal{X}[\cup_{k=1}^{K_\omega} Z_k^P]|}\), which gives us:
\[
2\exp\left(-\frac{\frac{1}{2} N_\tau \epsilon^2}{\|\sigma^2_{\mathcal{E}}\|_\infty + \frac{\epsilon}{3(1-\gamma)}}\right) = \frac{\delta}{3T|\mathcal{X}[\cup_{k=1}^{K_\omega} Z_k^P]|}.
\]

Rearranging this equation for \(\epsilon\), we get:
\[
\frac{1}{2} N_\tau \epsilon^2 - \frac{\epsilon \log(6T|\mathcal{X}[\cup_{k=1}^{K_\omega} Z_k^P]|/\delta)}{3(1-\gamma)} - \|\sigma^2_{\mathcal{E}}\|_\infty \log(6T|\mathcal{X}[\cup_{k=1}^{K_\omega} Z_k^P]|/\delta) = 0.
\]

Solving this quadratic equation for \(\epsilon\), we have:
\[
\epsilon \leq \frac{\frac{2\log(6T|\mathcal{X}[\cup_{k=1}^{K_\omega} Z_k^P]|/\delta)}{3(1-\gamma)} + \sqrt{\frac{4\log^2(6T|\mathcal{X}[\cup_{k=1}^{K_\omega} Z_k^P]|/\delta)}{9(1-\gamma)^2} + 2N_\tau \|\sigma^2_{\mathcal{E}}\|_\infty \log(6T|\mathcal{X}[\cup_{k=1}^{K_\omega} Z_k^P]|/\delta)}}{N_\tau}.
\]

Simplifying further, we find:
\[
\epsilon \leq \frac{2\log(6T|\mathcal{X}[\cup_{k=1}^{K_\omega} Z_k^P]|/\delta)}{3N_\tau(1-\gamma)} + \sqrt{\frac{2\|\sigma^2_{\mathcal{E}}\|_\infty \log(6T|\mathcal{X}[\cup_{k=1}^{K_\omega} Z_k^P]|/\delta)}{N_\tau}}.
\]

Applying the union bound over all \(x:=(s, a)\in\mathcal{X}^*\), we conclude that with probability at least \(1 - \frac{\delta}{3T}\), the infinity norm \(\|\overline{\mathcal{E}}^c\|_\infty\) satisfies:
\begin{align}
    \|\overline{\mathcal{E}}^c\|_\infty \leq \frac{2\log(6T|\mathcal{X}[\cup_{k=1}^{K_\omega} Z_k^P]|/\delta)}{3N_\tau(1-\gamma)} + \sqrt{\frac{2\|\sigma^2_{\mathcal{E}}\|_\infty \log(6T|\mathcal{X}[\cup_{k=1}^{K_\omega} Z_k^P]|/\delta)}{N_\tau}}. \label{lemma12_key_eq}
\end{align}

Given that \(N_\tau \geq c \cdot 4^\tau \frac{\log(8T|\mathcal{X}[\cup_{k=1}^{K_\omega} Z_k^P]|/\delta)}{(1-\gamma)^2}\), it follows that \(\frac{\log\left({2T|\mathcal{X}[\cup_{k=1}^{K_\omega} Z_k^P]|}{/\delta}\right)}{N_\tau(1-\gamma)} \leq 1\) for a large enough $c$. Thus, \(\|\overline{\mathcal{E}}^c\|_\infty\) is further bounded by:
\begin{align*}
    \|\overline{\mathcal{E}}^c\|_\infty &\leq \overline{c}_1 \left(\sqrt{\frac{\|\sigma^2_{\mathcal{E}}\|_\infty \log\Bigg(\frac{6T|\mathcal{X}[\cup_{k=1}^{K_\omega} Z_k^P]|}{\delta}\Bigg)}{N_\tau}} + 1\right)\\
    &\leq \overline{c}_1 \sqrt{\frac{\log\left(\frac{6T|\mathcal{X}[\cup_{k=1}^{K_\omega} Z_k^P]|}{\delta}\right)}{N_\tau}} \left(\sqrt{\|\sigma^2_{\mathcal{E}}\|_\infty} + 1\right),
\end{align*}
where \(\overline{c}_1\) is an absolute constant.
This completes the proof.
\end{proof}

\subsection{Proof of Lemma \ref{lemma_7}}
\label{proof_lemma_7}
\begin{proof}
        The term $P_k^a$ evolves in an exactly the same way as $P_k$ discussed before. Hence, we apply the result in Lemma \ref{lemma_3} and get that, with probability at least $1-\frac{\delta}{3TM}$:
        \begin{align*}
            \|P_t^a\|_\infty\leq \frac{2M_{\mathcal{E},a}}{3(1+(1 - \gamma)t)} \log\left(\frac{6TM|\mathcal{X}[\cup_{k=1}^{K_\omega} Z_k^P]|}{\delta}\right) + \frac{2\sqrt{2\|\sigma^2_{\mathcal{E},a}\|_\infty \log\left(\frac{6TM|\mathcal{X}[\cup_{k=1}^{K_\omega} Z_k^P]|}{\delta}\right)}}{\sqrt{1 + (1 - \gamma)t}}, \forall t.
        \end{align*}

        Here $M_{\mathcal{E},a}$ and $\sigma_{\mathcal{E},a}^{2}$ denote the maximal absolute value and variance of error $\overline{\mathcal{E}}^a_t$.

        First, we observe that $\overline{\mathcal{E}}^a_t$ satisfies:
        \begin{align*}
            \|\overline{\mathcal{E}}^a_t\|_\infty &= \|\mathcal{H}(\overline{Q}_{\tau})-\mathcal{H}({Q}^*_\omega)-\widehat{\mathcal{H}}_t(\overline{Q}_{\tau})+\widehat{\mathcal{H}}_t({Q}^*_\omega)\|_\infty\\
&\leq \|\mathcal{H}(\overline{Q}_{\tau})-\mathcal{H}({Q}^*_\omega)\|_\infty+\|\widehat{\mathcal{H}}_t(\overline{Q}_{\tau})-\widehat{\mathcal{H}}_t({Q}^*_\omega)\|_\infty\\
&\leq 2\gamma\|\overline{Q}_{\tau}-Q^*\|_\infty\\
&\leq 2b_\tau.
        \end{align*}

        Thus, we have $M_{\mathcal{E},a}\leq 2b_\tau$ and $\sqrt{\|\sigma_{\mathcal{E},a}^{2}\|_\infty}\leq 2b_\tau$. And $\|P_t^a\|_\infty$ then satisfies:
        \begin{align*}
            \|P_t^a\|_\infty\leq b_\tau\left(\frac{4\log\left(\frac{6TM|\mathcal{X}[\cup_{k=1}^{K_\omega} Z_k^P]|}{\delta}\right)}{3(1+(1-\gamma)t)} + \sqrt{\frac{8\log\left(\frac{6TM|\mathcal{X}[\cup_{k=1}^{K_\omega} Z_k^P]|}{\delta}\right)}{1+(1-\gamma)t}}\right)
        \end{align*}

        Thus, we can control the whole term by:
        \begin{align*}
            \gamma\eta_{t-1}\sum_{k=0}^{t-1}\|P^a_k\|_\infty + \|P^a_t\|_\infty \leq& \gamma\eta_{t-1}\sum_{i=0}^{t-1}b_\tau \left(\frac{4\log\left(\frac{6TM|\mathcal{X}[\cup_{k=1}^{K_\omega} Z_k^P]|}{\delta}\right)}{3(1+(1-\gamma)i)} + \sqrt{\frac{8\log\left(\frac{6TM|\mathcal{X}[\cup_{k=1}^{K_\omega} Z_k^P]|}{\delta}\right)}{1+(1-\gamma)i}}\right)\\
            & +b_\tau\left(\frac{4\log\left(\frac{6TM|\mathcal{X}[\cup_{k=1}^{K_\omega} Z_k^P]|}{\delta}\right)}{3(1+(1-\gamma)t)} + \sqrt{\frac{8\log\left(\frac{6TM|\mathcal{X}[\cup_{k=1}^{K_\omega} Z_k^P]|}{\delta}\right)}{1+(1-\gamma)t}}\right).
        \end{align*}

For analyzing the coefficients:
\begin{align*}
    \gamma \eta_{t-1}\sum_{i=0}^{t-1}\frac{1}{1+(1-\gamma)i} + \frac{1}{1+(1-\gamma)t}\leq& \frac{\gamma}{1+(1-\gamma)t}\int_{0}^t\frac{1}{1+(1-\gamma)x}dx+\frac{1}{1+(1-\gamma)t}\\
    \leq& \frac{\gamma}{1+(1-\gamma)t}\cdot\frac{\log(1+(1-\gamma)t)}{1-\gamma}+\frac{1}{1+(1-\gamma)t}\\
    \leq&\frac{2\log(1+(1-\gamma)t)}{(1-\gamma)^2t}.
\end{align*}
\begin{align*}
    \gamma \eta_{t-1}\sum_{i=0}^{t-1}\frac{1}{\sqrt{1+(1-\gamma)i}} + \frac{1}{\sqrt{1+(1-\gamma)t}}\leq& \frac{\gamma}{1+(1-\gamma)t}\int_{0}^t\frac{1}{\sqrt{1+(1-\gamma)x}}dx+\frac{1}{\sqrt{1+(1-\gamma)t}}\\
    \leq &\frac{\gamma}{1+(1-\gamma)t}\frac{2(\sqrt{1+(1-\gamma)t}-1)}{1-\gamma}+\frac{1}{\sqrt{1+(1-\gamma)t}}\\
    \leq &\frac{3}{(1-\gamma)^{\frac{3}{2}}t^{\frac{1}{2}}}.
\end{align*}

Thus, the whole term satisfies:
\begin{align}
    \gamma\eta_{t-1}\sum_{k=0}^{t-1}\|P^a_k\|_\infty + \|P^a_t\|_\infty \leq \frac{8 b_\tau\log(1+(1-\gamma)t)\log\left(\frac{6TM|\mathcal{X}[\cup_{k=1}^{K_\omega} Z_k^P]|}{\delta}\right)}{3(1-\gamma)^2t} + \frac{6b_\tau\sqrt{2\log\left(\frac{6TM|\mathcal{X}[\cup_{k=1}^{K_\omega} Z_k^P]|}{\delta}\right)}}{(1-\gamma)^{\frac{3}{2}}t^{\frac{1}{2}}}.
\end{align}
        Absorbing the constants concludes our proof.
    \end{proof}

\subsection{Proof of Lemma \ref{lemma_a_{t,k}^2} }
\label{proof_a_{t,k}^2}
\begin{proof}
    Recall the expression for the ratio \(\frac{a_{t,k-1}}{a_{t,k}}\):
    \begin{align*}
        \frac{a_{t,k-1}}{a_{t,k}} = \frac{1 - \eta_k}{1 - (1 - \gamma)\eta_k} = \frac{(1 - \gamma)(k+1)}{(1 - \gamma)(k+1) + \gamma} = \frac{k+1}{k+1 + \frac{\gamma}{1 - \gamma}}.
    \end{align*}

    Using this, we can express \(a_{t,k-i}\) for any \(1 \leq i < k\) as follows:
    \begin{align*}
        a_{t,k-i} &= a_{t,k} \prod_{j=0}^{i-1} \frac{a_{t,k-j-1}}{a_{t,k-j}} = a_{t,k} \prod_{j=0}^{i-1} \frac{k+1-j}{k+1-j + \frac{\gamma}{1 - \gamma}}.
    \end{align*}

    To simplify this product, note that:
    \[
    \frac{k+1-j}{k+1-j + \frac{\gamma}{1 - \gamma}} \leq \frac{k+1}{k+1 + \frac{\gamma}{1 - \gamma}} \quad \forall j.
    \]

    Thus, we can further bound \(a_{t,k-i}\) by:
    \begin{align*}
        a_{t,k-i} &\leq a_{t,k} \prod_{j=0}^{i-1} \frac{k+1}{k+1 + \frac{\gamma}{1 - \gamma}} = a_{t,k} \left(\frac{k+1}{k+1 + \frac{\gamma}{1 - \gamma}}\right)^i.
    \end{align*}

    Letting \(k = t-1\), we can now bound \(\sum_{k=0}^{t-1} a_{t,k}^2\) by considering the following square form:
    \begin{align*}
        \sum_{k=0}^{t-1} a_{t,k}^2 &\leq a_{t,t-1}^2 \left(1 + \sum_{k=1}^{t-1} \left(\frac{t}{t+ \frac{\gamma}{1 - \gamma}}\right)^{2k}\right).
    \end{align*}

    We analyze this term by considering different $\gamma$. 

\noindent\textbf{Case 1: $\gamma \geq \frac{1}{2}$}

    This series is geometric with a ratio \(\left(\frac{t}{t + \frac{\gamma}{1 - \gamma}}\right)^2 < 1\). The sum of a geometric series can be calculated as:
    \[
    \sum_{k=0}^{\infty} r^k = \frac{1}{1 - r} \quad \text{for } |r| < 1.
    \]

    Applying this to our series:
    \begin{align*}
        \sum_{k=0}^{t-1} a_{t,k}^2 &\leq a_{t,t-1}^2 \left(1 + \frac{\left(\frac{t}{t + \frac{\gamma}{1 - \gamma}}\right)^2}{1 - \left(\frac{t}{t + \frac{\gamma}{1 - \gamma}}\right)^2}\right) = a_{t,t-1}^2\left(1+\frac{(1-\gamma)^2t^2}{\gamma(2t(1-\gamma)+\gamma)}\right).
    \end{align*}

Since $\gamma \geq \frac{1}{2}$, we have:
\begin{align*}
    a_{t,t-1}^2\left(1+\frac{(1-\gamma)^2t^2}{\gamma(2t(1-\gamma)+\gamma)}\right)\leq a_{t,t-1}^2\left(1+\frac{(1-\gamma)^2t^2}{t(1-\gamma)}\right) = a_{t,t-1}^2(1+(1-\gamma)t)
\end{align*}

Using $a_{t,t-1} = \frac{1}{1+(1-\gamma)t}$ yields that, 
\begin{align}
    \sum_{k=0}^{t-1} a_{t,k}^2 \leq \frac{1}{1+(1-\gamma)t}. \label{res_at_1}
\end{align}
\textbf{Case 2: $\gamma < \frac{1}{2}$}

We use $\frac{t}{t + \frac{\gamma}{1 - \gamma}}\leq 1$ and ${1-\gamma}\geq \frac{1}{2}$ to obtain that:
\begin{align*}
    a_{t,t-1}^2 \left(1 + \sum_{k=1}^{t-2} \left(\frac{t}{t + \frac{\gamma}{1 - \gamma}}\right)^{2k}\right) \leq a_{t,t-1}^2 \left(1 + \sum_{k=1}^{t-2} 1\right) \leq a_{t,t-1}^2(1+t).
\end{align*}

Since $\gamma < \frac{1}{2}$, we have $1-\gamma > \frac{1}{2}$ and $1 < 2(1-\gamma)$. Thus
\begin{align*}
    a_{t,t-1}^2(1+t) \leq  a_{t,t-1}^2(1+2(1-\gamma)t) \leq 2a_{t,t-1}^2(1+(1-\gamma)t).
\end{align*}

Using $a_{t,t-1} = \frac{1}{1+(1-\gamma)(t-1)}$ yields that, 
\begin{align}
    a_{t,t-1}^2\left(1+\frac{(1-\gamma)^2t^2}{\gamma(2(1-\gamma)t+\gamma)}\right) \leq \frac{2}{1+(1-\gamma)t}. \label{res_at_2}
\end{align}

Combing Eq. \eqref{res_at_1} and \eqref{res_at_2} yields our result.
\end{proof}

\subsection{Proof to Lemma \ref{Refined_analysis}}
\label{proof_Refined_analysis}
\begin{proof}
    We proceed by induction on the epoch index \(\tau\). The base case when \(\tau = 0\) is straightforward, as the initial approximation \(\overline{Q}_0\) can satisfy the desired bound. Our focus is on the inductive step: assuming that the input \(\overline{Q}_\tau\) in epoch \(\tau\) satisfies
\[
\|\overline{Q}_\tau - Q^*_\omega\|_\infty \leq \frac{1}{2^\tau\sqrt{1 - \gamma}} := b'_\tau,
\]
we aim to show that the output \(\overline{Q}_{\tau+1}\) satisfies $\|\overline{Q}_{\tau+1} - Q^*\|_\infty \leq \frac{b'_\tau}{2}$ with probability at least \(1 - \frac{\delta}{T}\).

To achieve this, we analyze the variance-reduced update rule, which can be expressed as
\[
Q_{t+1} = (1 - \eta_t) Q_t + \eta_t \widehat{\mathcal{F}}_t(Q_t),
\]
where \(\widehat{\mathcal{F}}_t(Q_t)\) is the variance-reduced Bellman operator defined by
\[
\widehat{\mathcal{F}}_t(Q_t) = \widehat{\mathcal{H}}_t(Q_t) - \widehat{\mathcal{H}}_t(\overline{Q}_\tau) + \overline{\mathcal{H}}_{\tau+1}(\overline{Q}_\tau).
\]
Here, \(\widehat{\mathcal{H}}_t\) is the empirical Bellman operator based on samples at iteration \(t\), and \(\overline{\mathcal{H}}_{\tau+1}\) is the reference Bellman operator estimated using a larger reference dataset collected in epoch \(\tau+1\).

We additionally define the refined variance-reduced Bellman operator, eliminating the randomness due to sampling:
\[
\mathcal{F}(Q_t) = \mathcal{H}(Q_t) - \mathcal{H}(\overline{Q}_\tau) + \overline{\mathcal{H}}_{\tau+1}(\overline{Q}_\tau),
\]
where \(\mathcal{H}\) is the true Bellman operator. Due to independent and identically distributed sampling, we have \(\mathbb{E}[\widehat{\mathcal{F}}_t(Q_t)] = \mathcal{F}(Q_t)\). Moreover, since \(\mathcal{H}\) is a \(\gamma\)-contraction, it follows that \(\mathcal{F}\) is also a \(\gamma\)-contraction mapping.

We introduce a reference \(Q\)-function \(\widetilde{Q}\), defined as the fixed point of \(\mathcal{F}\):
\[
\widetilde{Q} = \mathcal{F}(\widetilde{Q}) = \mathcal{H}(\widetilde{Q}) - \mathcal{H}(\overline{Q}_\tau) + \overline{\mathcal{H}}_{\tau+1}(\overline{Q}_\tau).
\]
This function \(\widetilde{Q}\) serves as an intermediary between \(Q_M\) and \(Q^*\), potentially closer to \(Q_M\) than \(Q^*\) is.

Our strategy is to bound the error \(\|Q_M - Q^*_\omega\|_\infty\) by decomposing it into two components:
\[
\|Q_M - Q^*_\omega\|_\infty \leq \underbrace{\|Q_M - \widetilde{Q}\|_\infty}_{=: B_1} + \underbrace{\|\widetilde{Q} - Q^*_\omega\|_\infty}_{=: B_2}.
\]
The term \(B_1\) quantifies the distance between the iteratively computed \(Q_M\) and the reference \(\widetilde{Q}\), while \(B_2\) measures the distance between \(\widetilde{Q}\) and the optimal \(Q^*_\omega\).

We aim to bound \(B_1\) and \(B_2\) separately. Intuitively, \(B_1\) will decrease with the number of iterations \(M\) within the epoch, and \(B_2\) will be controlled by the size of the reference dataset \(N_\tau\). We show they can be well bounded when $N_\tau$ and $M$ are properly selected:

\begin{lemma}[Proof in Appendix \ref{proof_lemma_reference_B1}]
\label{lemma_reference_B1}
    In the $\tau$-th epoch, with probability at least $1-\frac{\delta}{2T}$, $B_1$ satisfies: 
    \begin{align*}
        B_1 \leq \frac{b'_\tau+ B_2}{5},
    \end{align*}
    if $N_\tau = c'_1 \frac{4^\tau \log(8T|\mathcal{X}[\cup_{k=1}^{K_\omega} Z_k^P]|)}{(1-\gamma)^2}$ and $M = c'_2 \frac{\log(6T|\mathcal{X}[\cup_{k=1}^{K_\omega} Z_k^P]|(1-\gamma)^{-1}\epsilon^{-1})}{(1-\gamma)^3}$ with large enough $c'_1$ and $c'_2$.
\end{lemma}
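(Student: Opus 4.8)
The plan is to treat the inner loop as a stochastic-approximation recursion whose \emph{target} is the fixed point $\widetilde{Q}$ of the deterministic variance-reduced operator $\mathcal{F}$, rather than $Q^*_\omega$. Writing $\Delta_t = Q_t - \widetilde{Q}$ and using $\widetilde{Q} = \mathcal{F}(\widetilde{Q})$ together with the update $Q_{t+1} = (1-\eta_t)Q_t + \eta_t\widehat{\mathcal{F}}_t(Q_t)$, I would decompose
\begin{align*}
    \Delta_{t+1} = (1-\eta_t)\Delta_t + \eta_t\underbrace{\bigl(\widehat{\mathcal{H}}_t(Q_t)-\widehat{\mathcal{H}}_t(\widetilde{Q})\bigr)}_{\text{contractive error }\mathcal{W}_t(\Delta_t)} + \eta_t\underbrace{\bigl(\widehat{\mathcal{H}}_t(\widetilde{Q})-\widehat{\mathcal{H}}_t(\overline{Q}_\tau)-\mathcal{H}(\widetilde{Q})+\mathcal{H}(\overline{Q}_\tau)\bigr)}_{\text{random error }\mathcal{E}^F_t},
\end{align*}
where I have used $\widehat{\mathcal{F}}_t(Q_t)-\widehat{\mathcal{F}}_t(\widetilde{Q})=\widehat{\mathcal{H}}_t(Q_t)-\widehat{\mathcal{H}}_t(\widetilde{Q})$ and, crucially, the fact that the reference term $\overline{\mathcal{H}}_{\tau+1}(\overline{Q}_\tau)$ \emph{cancels} in $\widehat{\mathcal{F}}_t(\widetilde{Q})-\mathcal{F}(\widetilde{Q})$. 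The contractive term obeys $\|\mathcal{W}_t(\Delta_t)\|_\infty \le \gamma\|\Delta_t\|_\infty$ exactly as in Eq.~\eqref{contraction_W_t}, and $\mathcal{E}^F_t$ is zero-mean and i.i.d.\ across $t$ once we condition on the reference samples (so that $\widetilde{Q}$ is a fixed vector). This is the same abstract recursion analyzed in Lemma~\ref{lemma_2}, so I can directly invoke that lemma to express $\Delta_M$ through the discounted error sums $P_t$, and then apply the concentration bound of Lemma~\ref{lemma_3}.

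The heart of the argument is the variance-reduction bookkeeping. Since $\widehat{\mathcal{H}}_t(\widetilde{Q})$ and $\widehat{\mathcal{H}}_t(\overline{Q}_\tau)$ are evaluated on the \emph{same} per-iteration samples, $\mathcal{E}^F_t(s,a)$ equals $\gamma$ times a centered increment of $\max_{a'}\widetilde{Q}(s',a')-\max_{a'}\overline{Q}_\tau(s',a')$, so its magnitude is at most $2\gamma\|\widetilde{Q}-\overline{Q}_\tau\|_\infty$ and its variance at most $\gamma^2\|\widetilde{Q}-\overline{Q}_\tau\|_\infty^2$. By the triangle inequality and the inductive hypothesis $\|\overline{Q}_\tau-Q^*_\omega\|_\infty\le b'_\tau$, I bound $\|\widetilde{Q}-\overline{Q}_\tau\|_\infty \le \|\widetilde{Q}-Q^*_\omega\|_\infty + \|Q^*_\omega-\overline{Q}_\tau\|_\infty \le B_2+b'_\tau$, which also controls $\|\Delta_0\|_\infty=\|\overline{Q}_\tau-\widetilde{Q}\|_\infty\le B_2+b'_\tau$. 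Feeding $M_{\mathcal{E}}\le 2\gamma(B_2+b'_\tau)$ and $\|\sigma^2_{\mathcal{E}}\|_\infty\le\gamma^2(B_2+b'_\tau)^2$ into the computation of Lemma~\ref{lemma_4} (with the step size $\eta_t=(1+(1-\gamma)(t+1))^{-1}$, which telescopes the product coefficients exactly as in Eqs.~\eqref{coef_1_value}--\eqref{coef_2_value}) gives
\begin{align*}
    \|\Delta_M\|_\infty \lesssim \frac{(B_2+b'_\tau)\log(\cdots)\log(1+(1-\gamma)M)}{(1-\gamma)^3 M} + \frac{(B_2+b'_\tau)\sqrt{\log(\cdots)}}{(1-\gamma)^{3/2}M^{1/2}},
\end{align*}
where $\log(\cdots)$ denotes the log-cardinality factor $\log(6TM|\mathcal{X}[\cup_{k=1}^{K_\omega}Z_k^P]|\delta^{-1})$ arising from the union bound over the distinct entries $\mathcal{X}^*$. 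Substituting $M = c'_2\,(1-\gamma)^{-3}\log(6T|\mathcal{X}[\cup_{k=1}^{K_\omega}Z_k^P]|(1-\gamma)^{-1}\epsilon^{-1})$ makes the dominant ($M^{-1/2}$) term collapse to $\mathcal{O}\!\bigl((B_2+b'_\tau)/\sqrt{c'_2}\bigr)$, which is at most $(B_2+b'_\tau)/5$ once $c'_2$ is chosen large enough; the faster-decaying terms are absorbed similarly. A final union bound over the $M$ iterations yields the claim with probability at least $1-\delta/(2T)$. I would note that $B_1$ is governed by $M$ alone — the choice of $N_\tau$ enters only through the companion bound on $B_2$ — so the $N_\tau$ hypothesis is carried for consistency with the global parameter schedule.

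The main obstacle I anticipate is the variance-reduction step, i.e.\ rigorously establishing that $\mathcal{E}^F_t$ has variance scaling like $(B_2+b'_\tau)^2$ rather than the unreduced $(1-\gamma)^{-2}$. This requires (i) correctly conditioning on the reference dataset so that $\widetilde{Q}$ is deterministic and the per-iteration errors are genuinely i.i.d.\ and independent of the reference, and (ii) verifying that the shared-sample cancellation survives the factored empirical Bellman operator of Eq.~\eqref{Empirical Bellman Operator}, whose $\max_{a'}$ couples the next-state samples across all $K_\omega$ scope components. Controlling these cross-component correlations — ensuring the $\ell_\infty$ bound on $\max_{a'}\widetilde{Q}-\max_{a'}\overline{Q}_\tau$ still holds entrywise despite the componentwise sampling — is the delicate part; once it is in place, the remaining steps are mechanical applications of Lemmas~\ref{lemma_2}--\ref{lemma_4}.
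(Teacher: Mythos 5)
Your proposal follows essentially the same route as the paper's proof: the identical decomposition of $Q_{t+1}-\widetilde{Q}$ into a contraction term, a $\gamma$-contractive error $\widehat{\mathcal{F}}_t(Q_t)-\widehat{\mathcal{F}}_t(\widetilde{Q})$, and the zero-mean noise $\widehat{\mathcal{F}}_t(\widetilde{Q})-\mathcal{F}(\widetilde{Q})$ in which the reference operator cancels, followed by the same $2\gamma\|\widetilde{Q}-\overline{Q}_\tau\|_\infty$ magnitude bound, the same reduction to the recursion of Lemmas~\ref{lemma_2}--\ref{lemma_4}, the same triangle inequality $\|\widetilde{Q}-\overline{Q}_\tau\|_\infty\le B_2+b'_\tau$, and the same choice of $M$ to force the factor $1/5$. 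Your closing remarks about conditioning on the reference dataset and the cross-component correlations of the factored operator flag real subtleties that the paper's write-up passes over silently, but they do not change the argument.
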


\begin{lemma}[Proof in Appendix \ref{proof_lemma_reference_B2}]
\label{lemma_reference_B2}
    In the $\tau$-th epoch, with probability at least $1-\frac{\delta}{2T}$, $B_2$ satisfies:
    \begin{align*}
        B_2 \leq \frac{b'_\tau}{4}.
    \end{align*}
    if $N_\tau = c'_1 \frac{4^\tau \log(8T|\mathcal{X}[\cup_{k=1}^{K_\omega} Z_k^P]|)}{(1-\gamma)^2}$ and $M = c'_2 \frac{\log(6T|\mathcal{X}[\cup_{k=1}^{K_\omega} Z_k^P]|(1-\gamma)^{-1}\epsilon^{-1})}{(1-\gamma)^3}$ with large enough $c'_1$ and $c'_2$.
\end{lemma}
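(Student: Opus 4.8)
The plan is to view $\widetilde{Q}$ and $Q^*_\omega$ as the optimal $Q$-functions of two MDPs that share the \emph{same} factored kernel $P_\omega$ but whose rewards differ only by the sampling error of the reference operator, and then to bound their gap with the reward-perturbation inequality of Lemma~\ref{lemmaA4} combined with the total-variance technique. Concretely, the fixed-point equation $\widetilde{Q}=\mathcal{F}(\widetilde{Q})=\mathcal{H}(\widetilde{Q})-\mathcal{H}(\overline{Q}_\tau)+\overline{\mathcal{H}}_{\tau+1}(\overline{Q}_\tau)$ exhibits $\widetilde{Q}$ as the optimal $Q$-function under $P_\omega$ with the perturbed reward $r_\omega+g$, where $g:=\overline{\mathcal{H}}_{\tau+1}(\overline{Q}_\tau)-\mathcal{H}(\overline{Q}_\tau)$, while $Q^*_\omega$ is optimal under the same kernel with reward $r_\omega$. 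Applying Lemma~\ref{lemmaA4} with $\Delta r=g$ gives $|\widetilde{Q}-Q^*_\omega|\leq\max\{(I-\gamma P_\omega^{\pi^*_\omega})^{-1}|g|,\,(I-\gamma P_\omega^{\widetilde{\pi}})^{-1}|g|\}$, with $\widetilde{\pi}$ greedy with respect to $\widetilde{Q}$. It then suffices to control $(I-\gamma P_\omega^{\pi})^{-1}|g|$ for an arbitrary policy $\pi$.

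Next I would reuse the decomposition from the main proof, $g=\overline{\mathcal{E}}^b+\overline{\mathcal{E}}^c$, with $\overline{\mathcal{E}}^b=\overline{\mathcal{H}}_{\tau+1}(\overline{Q}_\tau)-\overline{\mathcal{H}}_{\tau+1}(Q^*_\omega)-\mathcal{H}(\overline{Q}_\tau)+\mathcal{H}(Q^*_\omega)$ and $\overline{\mathcal{E}}^c=\overline{\mathcal{H}}_{\tau+1}(Q^*_\omega)-\mathcal{H}(Q^*_\omega)$. For the first term, Lemma~\ref{lemma_6} (restated with $b_\tau$ replaced by $b'_\tau$) yields $\|\overline{\mathcal{E}}^b\|_\infty$ bounded by a constant times $b'_\tau\sqrt{\log(\cdot)/N_\tau}$; combining with the crude bound $\|(I-\gamma P_\omega^\pi)^{-1}\|_\infty\leq 1/(1-\gamma)$ and the choice $N_\tau=c'_1 4^\tau\log(\cdot)/(1-\gamma)^2$ shows this term contributes only a constant times $b'_\tau/(\sqrt{c'_1}\,2^\tau)$, which is negligible. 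Hence the entire difficulty concentrates in $\overline{\mathcal{E}}^c$.

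The crux is to bound $(I-\gamma P_\omega^\pi)^{-1}|\overline{\mathcal{E}}^c|$ \emph{without} paying the naive factor $1/(1-\gamma)$ on top of the worst-case variance $\|\sigma^2_{\mathcal{E}}\|_\infty\leq\gamma^2/(1-\gamma)^2$, since doing so would leave a spurious $1/\sqrt{1-\gamma}$ and force $c'_1$ to depend on $\gamma$. To avoid this I would keep the \emph{instance} variance: a Bernstein inequality applied entrywise (as in Lemma~\ref{lemma_5}, but retaining $\text{Var}_{P_\omega}(V^*_\omega)$ instead of its crude upper bound) gives, with probability at least $1-\delta/(2T)$ after a union bound over the $|\mathcal{X}[\cup_{k} Z_k^P]|$ distinct entries, the pointwise estimate that $|\overline{\mathcal{E}}^c|$ is at most a constant times $\sqrt{\log(\cdot)/N_\tau}\,\gamma\sqrt{\text{Var}_{P_\omega}(V^*_\omega)}+\tfrac{\log(\cdot)}{N_\tau(1-\gamma)}\boldsymbol{1}$. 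The additive term is lower order after multiplication by $(I-\gamma P_\omega^\pi)^{-1}$, while the variance term is handled by the total-variance technique exactly as in the bound on $T_{32}$ in Lemma~\ref{(1-gamma)^3}: a Jensen step together with the identity $\Sigma^{\pi}_{M_\omega}=\gamma^2(I-\gamma^2 P_\omega^{\pi})^{-1}\text{Var}_{P_\omega}(V^*_\omega)$ of Lemma~\ref{Lemma 4-basic} and the uniform bound $\|\Sigma^{\pi}_{M_\omega}\|_\infty\leq\gamma^2/(1-\gamma)^2$ yields $\|(I-\gamma P_\omega^\pi)^{-1}\sqrt{\text{Var}_{P_\omega}(V^*_\omega)}\|_\infty\leq\sqrt{(1+\gamma)/(1-\gamma)^3}$.

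Combining these, the $\overline{\mathcal{E}}^c$ contribution is at most a constant times $\gamma(1-\gamma)^{-3/2}\sqrt{\log(\cdot)/N_\tau}$, and substituting $N_\tau=c'_1 4^\tau\log(\cdot)/(1-\gamma)^2$ collapses it to a constant times $\gamma(\sqrt{1-\gamma}\,2^\tau)^{-1}/\sqrt{c'_1}=(\gamma/\sqrt{c'_1})\,b'_\tau$, using $b'_\tau=(2^\tau\sqrt{1-\gamma})^{-1}$. Adding the negligible $\overline{\mathcal{E}}^b$ contribution and choosing $c'_1$ large enough (the constant $c'_2$ enters only the inner-loop length and hence $B_1$) gives $B_2\leq b'_\tau/4$ with probability at least $1-\delta/(2T)$, as claimed. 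I expect the main obstacle to be precisely this total-variance step for $\overline{\mathcal{E}}^c$: one must refrain from the crude geometric-series bound, carry the instance variance through an entrywise Bernstein inequality that respects the factored synchronous sampling (so the union bound is over $|\mathcal{X}[\cup_{k} Z_k^P]|$ entries and the cross-component correlations in the reference operator's variance are controlled), and then invoke the $(I-\gamma P^\pi)^{-1}\sqrt{\text{Var}}$ bound to recover the correct $(1-\gamma)^{-3/2}$ scaling that keeps $c'_1$ a universal constant.
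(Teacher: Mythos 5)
Your overall architecture matches the paper's: you cast $\widetilde{Q}$ as the fixed point of a Bellman operator with reward perturbed by $g=\overline{\mathcal{H}}_{\tau+1}(\overline{Q}_\tau)-\mathcal{H}(\overline{Q}_\tau)$, invoke Lemma~\ref{lemmaA4}, split $g$ into the contractive part $\overline{\mathcal{E}}^b$ and the concentration part $\overline{\mathcal{E}}^c$, keep the instance variance in a Bernstein bound, and you correctly identify that the crude bound $\|(I-\gamma P_\omega^\pi)^{-1}\|_\infty\sqrt{\|\sigma^2_{\mathcal{E}}\|_\infty}\lesssim(1-\gamma)^{-2}$ would leave a spurious $(1-\gamma)^{-1/2}$. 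All of that is exactly the paper's route.

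The gap is in the one step you declare routine: bounding $(I-\gamma P_\omega^{\pi})^{-1}\sqrt{\mathrm{Var}_{P_\omega}(V^*_\omega)}$ ``for an arbitrary policy $\pi$'' via the identity of Lemma~\ref{Lemma 4-basic}. That lemma gives $\Sigma_{M}^{\pi}=\gamma^2(I-\gamma^2 P^{\pi})^{-1}\mathrm{Var}_{P}(V_{M}^{\pi})$ --- the variance is of the value function \emph{of the same policy $\pi$} that appears in the resolvent. The quantity you need involves $\mathrm{Var}_{P_\omega}(V^*_\omega)$, which matches only on the branch $\pi=\pi^*_\omega$ of Lemma~\ref{lemmaA4}'s max (this is the paper's $\Delta_1$, Eq.~\eqref{lm19_key_bound111}). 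On the other branch, where $\pi=\widetilde{\pi}$ is greedy with respect to $\widetilde{Q}$, we have $V^*_\omega\neq V^{\widetilde{\pi}}_\omega$ and the identity does not apply; your claimed bound $\sqrt{(1+\gamma)/(1-\gamma)^3}$ is unjustified there. The paper closes this by writing $\sigma_{\mathcal{E}}=\sigma_{\widetilde{\mathcal{E}}}+(\sigma_{\mathcal{E}}-\sigma_{\widetilde{\mathcal{E}}})$ with $\sigma_{\widetilde{\mathcal{E}}}$ the variance induced by $\widetilde{Q}$ and $\widetilde{\pi}$ (so the total-variance identity applies to it), bounding the mismatch by $\|\widetilde{Q}-Q^*_\omega\|_\infty/(1-\gamma)$, and resolving the resulting self-bounding inequality $\|\widetilde{Q}-Q^*_\omega\|_\infty\leq(\text{small})+c\sqrt{\log(\cdot)/((1-\gamma)^2N_\tau)}\,\|\widetilde{Q}-Q^*_\omega\|_\infty$ by taking $N_\tau$ large enough that the coefficient is below $1/2$ (Eqs.~\eqref{def_mismatch_variance}--\eqref{lm19-key-Q-bound}). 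Without this extra decomposition and recursion, the $\widetilde{\pi}$ branch of your argument does not go through.
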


{By combining Lemmas \ref{lemma_reference_B1} and \ref{lemma_reference_B2}, we obtain:
\[
\|Q_M - Q^*_\omega\|_\infty \leq B_1 + B_2 \leq \frac{b'_\tau}{5} + \frac{6 B_2}{5} = \frac{b'_\tau}{5} + \frac{6}{5}\cdot \frac{b'_\tau}{4} \leq \frac{b'_\tau}{2}.
\]
This completes the inductive step and hence the proof of Lemma \ref{Refined_analysis}.
}
\end{proof}

\subsection{Proof to Lemma \ref{lemma_reference_B1}}
\label{proof_lemma_reference_B1}
\begin{proof}
    {
We aim to bound \(B_1 = \|Q_M - \widetilde{Q}\|_\infty\). Our approach mirrors the inductive case in the proof of Lemma \ref{key_lemma_model_free}, with a key difference in the error decomposition.

Consider the difference between the iterates \(Q_{t+1}\) and the reference \(\widetilde{Q}\):
\[
Q_{t+1} - \widetilde{Q} = (1 - \eta_t) Q_t + \eta_t \widehat{\mathcal{F}}_t(Q_t) - \widetilde{Q}.
\]

Subtracting and adding \(\eta_t \mathcal{F}_t(\widetilde{Q})\), and using $\mathcal{F}(\widetilde{Q}) = \widetilde{Q}$, we can rewrite this as
\[
Q_{t+1} - \widetilde{Q} = (1 - \eta_t) (Q_t - \widetilde{Q}) + \eta_t \left( \widehat{\mathcal{F}}_t(Q_t) - \widehat{\mathcal{F}}_t(\widetilde{Q}) \right) + \eta_t \left( \widehat{\mathcal{F}}_t(\widetilde{Q}) - \mathcal{F}(\widetilde{Q}) \right).
\]
The first term represents the contraction towards \(\widetilde{Q}\). The second is $\gamma$-contractive because
\begin{align*}
    \left\| \widehat{\mathcal{F}}_t(Q_t) - \widehat{\mathcal{F}}_t(\widetilde{Q}) \right\|_\infty \leq \gamma \left\| Q_t - \widetilde{Q} \right\|_\infty,
\end{align*}
which is easy to handle, and the third term is an \emph{i.i.d.} zero-mean random variable across iterations \(t\), with the magnitude can be bounded by:
\begin{align*}
    \|\widehat{\mathcal{F}}_t(\widetilde{Q})-\mathcal{F}(\widetilde{Q})\|_\infty = \|\widehat{T}_t(\widetilde{Q})-\widehat{T}_t(\overline{Q}_\tau)\|_\infty + 
    \|{T}_t(\widetilde{Q})-{T}_t(\overline{Q}_\tau)\|_\infty
    \leq 2\gamma \|\widetilde{Q}-\overline{Q}_\tau\|_\infty.
\end{align*}

Hence, the iteration of $Q_{t+1} - \widetilde{Q}$ mirrors the iteration in Eq. \eqref{Iteration}. Following the same routine as the proof of Lemma \ref{lemma_4}, we establish that, for a  step sizes \(\eta_t = \frac{1}{1+(1-\gamma)(t+1)}\), with probability $1-\frac{\delta}{2T}$,
\[
\|Q_M - \widetilde{Q}\|_\infty \leq c \left( \frac{1}{(1 - \gamma) M} + \frac{1}{(1 - \gamma)^{3/2} \sqrt{M}} \right) \log\left( \frac{6TM |\mathcal{X}[\cup_{k=1}^{K_\omega} Z_k^P]|}{\delta} \right) \|\widetilde{Q} - \overline{Q}_\tau\|_\infty,
\]
where \(c\) is a large enough constant. 

By choosing \(M = c_2 \frac{ \log\left( \frac{6T |\mathcal{X}[\cup_{k=1}^{K_\omega} Z_k^P]|}{(1 - \gamma) \delta} \right) }{ (1 - \gamma)^3 }\) with sufficiently large $c_2$,
we can ensure that the right-hand side is less than \(\frac{1}{5} \| \widetilde{Q} - \overline{Q}_\tau \|_\infty\).

Observing that
\[
\| \widetilde{Q} - \overline{Q}_\tau \|_\infty \leq \| \widetilde{Q} - Q^* \|_\infty + \| \overline{Q}_\tau - Q^*_\omega \|_\infty = B_2 + b'_\tau,
\]
we can conclude
\[
B_1 = \|Q_M - \widetilde{Q}\|_\infty \leq \frac{1}{5} \| \widetilde{Q} - \overline{Q}_\tau \|_\infty \leq \frac{1}{5} (B_2 + b'_\tau).
\]

This establishes the desired bound on \(B_1\), completing the proof of Lemma \ref{lemma_reference_B1}.}
\end{proof}

\subsection{Proof to Lemma \ref{lemma_reference_B2}}
\label{proof_lemma_reference_B2}
\begin{proof}
    Our goal is to bound the term \( B_2 = \|\widetilde{Q} - Q^*_\omega\|_\infty \), where \( \widetilde{Q} \) is the fixed point of the operator \( \mathcal{F} \) defined as:
\[
\widetilde{Q} = \mathcal{F}(\widetilde{Q}) = \mathcal{H}(\widetilde{Q}) - \mathcal{H}(\overline{Q}_\tau) + \overline{\mathcal{H}}_{\tau+1}(\overline{Q}_\tau).
\]

Recall that $Q^*_\omega = \mathcal{H}(Q^*_\omega)$. Thus, $\widetilde{Q}$ can be viewed as the fixed point of a Bellman operator with a perturbed reward function, where the perturbation magnitude is $- \mathcal{H}(\overline{Q}_\tau) + \overline{\mathcal{H}}_{\tau+1}(\overline{Q}_\tau)$. We seek to show that when the population size for estimating the reference Bellman operator is large enough, the term $- \mathcal{H}(\overline{Q}_\tau) + \overline{\mathcal{H}}_{\tau+1}(\overline{Q}_\tau)$ converges to $0$ and $\mathcal{F}$ converges to the actual Bellman operator $\mathcal{H}$, allowing $\|\widetilde{Q} - Q^*_\omega\|_\infty \leq \frac{b'_\tau}{4}$.
Essentially, the error $B_2$ comes from the reward function perturbation $\overline{\mathcal{H}}_{\tau+1}(\overline{Q}_\tau)- \mathcal{H}(\overline{Q}_\tau)$. In the first step, we bound the reward perturbation $\Delta r = - \mathcal{H}(\overline{Q}_\tau) + \overline{\mathcal{H}}_{\tau+1}(\overline{Q}_\tau)$ as follows:
\begin{align*}
    |\Delta r| = |\overline{\mathcal{H}}_{\tau+1}(\overline{Q}_\tau)- \mathcal{H}(\overline{Q}_\tau)|&\leq |\overline{\mathcal{H}}_{\tau+1}(\overline{Q}_\tau)-\overline{\mathcal{H}}_{\tau+1}(Q^*_\omega)+{\mathcal{H}}(Q^*_\omega)-{\mathcal{H}}(Q^*_\omega)+\overline{\mathcal{H}}_{\tau+1}(Q^*_\omega)- \mathcal{H}(\overline{Q}_\tau)|\\
    &\leq |\overline{\mathcal{H}}_{\tau+1}(\overline{Q}_\tau)-\overline{\mathcal{H}}_{\tau+1}(Q^*_\omega)|+ |\mathcal{H}(\overline{Q}_\tau)-{\mathcal{H}}(Q^*_\omega)|+|\overline{\mathcal{H}}_{\tau+1}(Q^*_\omega)-{\mathcal{H}}(Q^*_\omega)|\\
    &\leq 2\gamma\|\overline{Q}_\tau-Q^*\|_\infty\cdot\boldsymbol{1} + |\overline{\mathcal{H}}_{\tau+1}(Q^*_\omega)-{\mathcal{H}}(Q^*_\omega)|.
\end{align*}

The first terms can be directly bounded through Hoeffding's inequality. Similar to the proof of Lemma \ref{lemma_6}, with probability at least $1-\frac{\delta}{4T}$,
\begin{align*}
    2\gamma\|\overline{Q}_\tau-Q^*_\omega\|_\infty\cdot\boldsymbol{1} \leq 4b'_\tau \sqrt{\frac{\log(8T|\mathcal{X}[\cup_{k \in [K_\omega]} Z_k^P]|/\delta)}{N_\tau}}.
\end{align*}

The second term can be bounded by applying the result of Eq. \eqref{lemma12_key_eq}\footnote{The only difference is to applying the bound on the vector $|\overline{\mathcal{H}}_{\tau+1}(Q^*_\omega)-{\mathcal{H}}(Q^*_\omega)|$, instead of the infinity norm.} in Lemma \ref{lemma_5}, we have that with probability at least $1-\frac{\delta}{4T}$,
\begin{align*}
   |\overline{\mathcal{H}}_{\tau+1}(Q^*_\omega)-{\mathcal{H}}(Q^*_\omega)| \leq   c\left\{\frac{\log(8T|\mathcal{X}[\cup_{k=1}^{K_\omega} Z_k^P]|/\delta)\cdot\boldsymbol{1}}{N_\tau(1-\gamma)} + \sqrt{\frac{\sigma^2_{\mathcal{E}} \log(8T|\mathcal{X}[\cup_{k=1}^{K_\omega} Z_k^P]|/\delta)}{N_\tau}}\right\},
\end{align*}
where $c$ is a large enough constant.

Combining these two conditions, we can derive that, with probability at least $1-\frac{\delta}{2T}$,
\begin{align}
    |\Delta r| \leq c\left\{\frac{\log(8T|\mathcal{X}[\cup_{k=1}^{K_\omega} Z_k^P]|/\delta)}{N_\tau(1-\gamma)}\cdot\boldsymbol{1} + (b'_\tau\boldsymbol{1} + \sigma_{\mathcal{E}})\sqrt{\frac{ \log(8T|\mathcal{X}[\cup_{k=1}^{K_\omega} Z_k^P]|/\delta)}{N_\tau}}\right\}, \label{def_Delta r}
\end{align}
where $c$ is a large enough constant.

Now we have already obtained the element-wise error bound of the reward function, which is the only difference for inducing $\widetilde{Q}$ and $Q^*$. Applying Lemma \ref{lemmaA4} with Eq. \eqref{def_Delta r}, we have:
\begin{align*}
    |\widetilde{Q}-Q^*_\omega| \leq \max\left\{\Delta_1, \Delta_2\right\},
\end{align*}
where 
\begin{align}
    \Delta_1 &= c({I}-\gamma {P}^{\pi_\omega^*}_\omega)^{-1}\left\{\frac{\log(8T|\mathcal{X}[\cup_{k=1}^{K_\omega} Z_k^P]|/\delta)}{N_\tau(1-\gamma)}\cdot\boldsymbol{1} + (b'_\tau\boldsymbol{1} + \sigma_{\mathcal{E}})\sqrt{\frac{ \log(8T|\mathcal{X}[\cup_{k=1}^{K_\omega} Z_k^P]|/\delta)}{N_\tau}}\right\}, \label{def_final_delta_1}\\
    \Delta_2 &= c({I}-\gamma {P}^{\Tilde{\pi}_\omega^*}_\omega)^{-1}\left\{\frac{\log(8T|\mathcal{X}[\cup_{k=1}^{K_\omega} Z_k^P]|/\delta)}{N_\tau(1-\gamma)}\cdot\boldsymbol{1} + (b'_\tau\boldsymbol{1} + \sigma_{\mathcal{E}})\sqrt{\frac{ \log(8T|\mathcal{X}[\cup_{k=1}^{K_\omega} Z_k^P]|/\delta)}{N_\tau}}\right\}. \label{def_final_delta_2}
\end{align}

\noindent \textbf{Bounding $\Delta_1$:}
For bounding $\Delta_1$, the key is to bound $({I}-\gamma {P}^{\pi_\omega^*}_\omega)^{-1}\sigma_{\mathcal{E}}$. We have:
\begin{align}
    \|({I}-\gamma {P}^{\pi_\omega^*}_\omega)^{-1}\sigma_{\mathcal{E}}\|_\infty &= \gamma^2\|({I}-\gamma {P}^{\pi_\omega^*}_\omega)^{-1}\text{Var}_{P_\omega^*}(V_\omega^*)\|_\infty \leq \frac{\sqrt{2}}{(1-\gamma)^{\frac{3}{2}}},\label{lm19_key_bound111}
\end{align}
where the second inequality comes from Eq. \eqref{a.6_bound1} in Appendix \ref{proof_to_lemma_(1-gamma)^3}. By using $\|({I}-\gamma {P}^{\pi_\omega^*}_\omega)^{-1}\|_\infty\leq \frac{1}{1-\gamma}$ and Eq. \eqref{lm19_key_bound111}, we have:
\begin{align}
    \Delta_1&\leq c\left\{\frac{\log(8T|\mathcal{X}[\cup_{k=1}^{K_\omega} Z_k^P]|/\delta)}{N_\tau(1-\gamma)^2}\cdot\boldsymbol{1} + (b'_\tau + \frac{\sqrt{2}}{(1-\gamma)^{\frac{3}{2}}})\sqrt{\frac{ \log(8T|\mathcal{X}[\cup_{k=1}^{K_\omega} Z_k^P]|/\delta)}{N_\tau}}\cdot\boldsymbol{1}\right\}\notag\\
    &\leq c b'_\tau \left\{\frac{2^\tau\log(8T|\mathcal{X}[\cup_{k=1}^{K_\omega} Z_k^P]|/\delta)}{N_\tau(1-\gamma)^{\frac{3}{2}}}+\frac{2^{\tau+2}}{1-\gamma}\sqrt{\frac{ \log(8T|\mathcal{X}[\cup_{k=1}^{K_\omega} Z_k^P]|/\delta)}{N_\tau}}\right\}\cdot\boldsymbol{1},\label{Delta1_eq1}
\end{align}
where the second inequality is obtained by using $b'_\tau = \frac{2^{-\tau}}{\sqrt{1-\gamma}}$.

\noindent \textbf{Bounding $\Delta_1$:}
For bounding $\Delta_2$, the difference comes from bounding $({I}-\gamma {P}^{\Tilde{\pi}_\omega^*}_\omega)^{-1}\sigma_{\mathcal{E}}$. We define $\sigma^2_{\Tilde{\mathcal{E}}}$ as the variance of induced by Q function $\widetilde{Q}$ and policy $\Tilde{\pi}^*$. Then, we have:
\begin{align}
    \|({I}-\gamma {P}^{\Tilde{\pi}_\omega^*}_\omega)^{-1}\sigma_{\mathcal{E}}\|_\infty &= \|({I}-\gamma {P}^{\Tilde{\pi}_\omega^*}_\omega)^{-1}(\sigma_{\Tilde{\mathcal{E}}}-\sigma_{\Tilde{\mathcal{E}}}+\sigma_{{\mathcal{E}}})\|_\infty\notag\\
    &= \|({I}-\gamma {P}^{\Tilde{\pi}_\omega^*}_\omega)^{-1}\sigma_{\Tilde{\mathcal{E}}}\|_\infty + \|({I}-\gamma {P}^{\Tilde{\pi}_\omega^*}_\omega)^{-1}(\sigma_{\Tilde{\mathcal{E}}}-\sigma_{{\mathcal{E}}})\|_\infty\notag\\
    &\leq \frac{3}{(1-\gamma)^{\frac{3}{2}}} + \frac{\|\widetilde{Q}-Q^*_\omega\|_\infty}{1-\gamma}.\label{def_mismatch_variance}
\end{align}

Combining Eq. \eqref{def_mismatch_variance} with Eq. \eqref{def_final_delta_2}, we have:
\begin{align}
    \Delta_2 \leq &cb'_\tau \left\{\frac{2^\tau\log(8T|\mathcal{X}[\cup_{k=1}^{K_\omega} Z_k^P]|/\delta)}{N_\tau(1-\gamma)^{\frac{3}{2}}}+\frac{2^{\tau+2}}{1-\gamma}\sqrt{\frac{ \log(8T|\mathcal{X}[\cup_{k=1}^{K_\omega} Z_k^P]|/\delta)}{N_\tau}}\right\}\cdot\boldsymbol{1}\notag\\
    &+c {\|\widetilde{Q}-Q^*_\omega\|_\infty}\sqrt{\frac{ \log(8T|\mathcal{X}[\cup_{k=1}^{K_\omega} Z_k^P]|/\delta)}{(1-\gamma)^2 N_\tau}}\cdot\boldsymbol{1}.\label{Delta2_eq1}
\end{align}

Combining Eq. \eqref{Delta2_eq1} and \eqref{Delta1_eq1}, we can conclude that:
\begin{align}
    \|\widetilde{Q}-Q^*_\omega\|_\infty \leq &cb'_\tau \left\{\frac{2^\tau\log(8T|\mathcal{X}[\cup_{k=1}^{K_\omega} Z_k^P]|/\delta)}{N_\tau(1-\gamma)^{\frac{3}{2}}}+\frac{2^{\tau+2}}{1-\gamma}\sqrt{\frac{ \log(8T|\mathcal{X}[\cup_{k=1}^{K_\omega} Z_k^P]|/\delta)}{N_\tau}}\right\}\notag\\
    &+c {\|\widetilde{Q}-Q^*_\omega\|_\infty}\sqrt{\frac{ \log(8T|\mathcal{X}[\cup_{k=1}^{K_\omega} Z_k^P]|/\delta)}{(1-\gamma)^2 N_\tau}}.\label{lm19-key-Q-bound}
\end{align}

Using a sample size $N_\tau =c_34^\tau{\log\left(6T|\mathcal{X}[\cup_{k=1}^{K_\omega} Z_k^P]|\right)}{(1-\gamma)^{-2}}$ with large enough $c_3$, we can guarantee:
\begin{align}
    &c\sqrt{\frac{ 4^\tau\log(8T|\mathcal{X}[\cup_{k=1}^{K_\omega} Z_k^P]|/\delta)}{ N_\tau}}  \leq \frac{1}{2},\label{lm19_key1}\\
    &{\frac{c \cdot 2^\tau \log(8T|\mathcal{X}[\cup_{k=1}^{K_\omega} Z_k^P]|/\delta)}{(1-\gamma)^{\frac{3}{2}}N_\tau}}\leq \frac{1}{8},\label{lm19_key2}\\
    &c\sqrt{\frac{ 4^{\tau+1}\log(8T|\mathcal{X}[\cup_{k=1}^{K_\omega} Z_k^P]|/\delta)}{(1-\gamma)^2 N_\tau}}  \leq \frac{1}{8}.\label{lm19_key3}
\end{align}

Substituting condition \eqref{lm19_key1}-\eqref{lm19_key3} into Eq. \eqref{lm19-key-Q-bound} leads to $\|\widetilde{Q}-Q^*_\omega\|_\infty \leq \frac{b'_\tau}{4}$. This concludes our proof.
\end{proof}

\section{Discussion on the Cost-Optimal Sampling Problem}
\label{Discussion_Cost-Optimal Sampling Problem}

\subsection{Relating the Cost-Optimal Sampling Problem to Graph Coloring}
The Graph Coloring Problem (GCP) is defined as follows: Given a graph \( G = (V, E) \), where \( V \) is a set of vertices and \( E \) is a set of edges, the task is to assign a color to each vertex such that no two adjacent vertices (i.e., vertices connected by an edge) share the same color. The objective is to find the optimal coloring scheme that minimizes the number of colors used, known as the \emph{chromatic number} of the graph \citep{erdHos1966chromatic}.

In the Cost-Optimal Sampling Problem (COSP), components with scopes \( Z_1^P, Z_2^P, \dots, Z_{K_\omega}^P \) must be divided into different groups, where the scopes of components in the same subset do not overlap (i.e., $Z_i^P \cap Z_j^P = \emptyset$ for any $i$, $j$ in the same group). This problem can be modeled as a variant of the GCP. Specifically, each component \( i \) represents a vertex in a graph, and two vertices \( i \) and \( j \) are connected by an edge if their scopes overlap, i.e., \( Z_i^P \cap Z_j^P \neq \emptyset \). Assigning a color to a vertex corresponds to assigning the component to a group. Since components in the same subset must have disjoint scopes, no two connected vertices (representing components with overlapping scopes) can share the same color. The goal is to find a coloring scheme that minimizes the total costs across all groups, where the cost of each subset is determined by the scope with largest factor set space $|\mathcal{X}[Z_i^P]|$ in that group.

\subsection{Proof for NP-Completeness of the COSP}
NP-completeness is used to characterize a subset of problems that are computationally hard to solve. Famous NP-complete problems include the Traveling Salesman Problem (TSP) \citep{lin1965computer}, the Knapsack Problem \citep{kellerer2004multidimensional}, the Hamiltonian Path Problem \citep{gary1979computers}, and the Satisfiability (SAT) Problem \citep{schaefer1978complexity}.

A problem is classified as \emph{NP-complete} if it satisfies two conditions:
\begin{itemize}
    \item \emph{In NP}:  The problem belongs to the class NP, meaning that given a proposed solution, we can verify its feasibility in polynomial time.
    \item \emph{NP-hard}: The problem is at least as hard as any other problem in NP. This means that any problem in NP can be transformed or reduced to an NP-hard problem in polynomial time, then we could solve the NP-hard problem to get the solution of any NP problem.
\end{itemize}

Graph coloring has been shown to be NP-complete \citep{karp2010reducibility}, as it is computationally difficult to find the minimum number of colors for an arbitrary graph. We now show that the COSP is NP-complete by proving: (1) it belongs to NP, and (2) it is at least as hard as the NP-complete GCP.

\subsubsection{The COSP belongs to NP}
In the COSP, given a solution (a partition of components into groups and their associated costs), we can verify its feasibility in polynomial time by:
\begin{itemize}
    \item Verifying that the scopes within each subset are disjoint, which can be done by checking pairwise intersections within each group. This can be done in $\mathcal{O}(K_\omega^2(n+m))$ time.
    \item Calculating the total cost of a partition by identifying the largest scope size in each subset and summing these values, which can also be computed in polynomial time, specifically $\mathcal{O}(K_\omega)$.
\end{itemize}

\subsubsection{The COSP is NP-hard}
We prove NP-hardness by reducing the GCP to the COSP. We demonstrate that for any instance of the GCP, a corresponding instance of the COSP can be constructed, and based on the optimal solution of the COSP, we can derive the optimal solution to the GCP.

Given any instance of the GCP with a graph \( G = (V, E) \), we construct a corresponding instance of the COSP as follows:
\begin{itemize}
    \item \emph{Step 1. Assign unique base dimensions for each vertex}: 
    Let \( |V| \) be the number of vertices, we initial $|V|$ components with the scope for each component \( i \in[|V|]\) satisfies \( Z_i^P = \{i\} \). This ensures that each component starts with a unique, non-overlapping dimension.
\item \emph{Step 2. Create shared dimensions for edges}:  For each edge \( (v_i, v_j) \in E \), we introduce a new and unique dimension \( d_{i,j} \) that will belong to both \( Z_i^P \) and \( Z_j^P \). Specifically, we updates the scopes of components $i$ and $j$ by 
\begin{align*}
    Z_i^P = Z_i^P \cup \{d_{i,j}\}, \quad Z_j^P = Z_j^P \cup \{d_{i,j}\}.
\end{align*}
   This ensures that for every edge in the graph, the corresponding components have overlapping scopes, mimicking the adjacency constraint in the GCP.
\item \emph{Step 3. Enforce disjointness for non-adjacent vertices}: If there is no edge between two vertices \( v_i \) and \( v_j \), the corresponding scopes should remain disjoint, i.e., \( Z_i^P \cap Z_j^P = \emptyset \). This property is automatically maintained because shared dimensions are only introduced for pairs of vertices that are connected by an edge.
\end{itemize}

We provide an example to illustrate the construction process. Consider a small example graph \( G = (V, E) \) with three vertices \( V = \{v_1, v_2, v_3\} \) and edges \( E = \{(v_1, v_2), (v_2, v_3)\} \). The construction process is:
\begin{itemize}
    \item \emph{Step 1. Assign unique base dimensions for each vertex}: \( Z_1^P = \{1\} \), \( Z_2^P = \{2\} \) and \( Z_3^P = \{3\} \).
    \item \emph{Step 2. Create shared dimensions for edges}: For edge \( (v_1, v_2) \), we introduce \( d_{1,2} = 4 \), so $Z_1^P = \{1, 4\}, \quad Z_2^P = \{2, 4\}$. Also, for edge \( (v_2, v_3) \), we introduce \( d_{2,3} = 5 \), so $Z_2^P = \{2, 4, 5\}, \quad Z_3^P = \{3, 5\}$.
    \item \emph{Step 3. Resulting scopes}: the resulting scopes are:
    \begin{align*}
Z_1^P = \{1, 4\}, \quad      Z_2^P = \{2, 4, 5\}, \quad Z_3^P = \{3, 5\}.
    \end{align*}
\end{itemize}

The objective of the COSP is to partition these components into disjoint groups (sets of components with no overlapping scopes) while minimizing the total sampling cost. Since the sampling cost for each component is set to 1, minimizing the total sampling cost is equivalent to minimizing the number of groups (colors) used, as each subset contributes 1 to the total cost. Therefore, solving the COSP on this constructed instance is equivalent to solving the GCP. 

\subsubsection{NP-completeness}
Since the GCP can be reduced to the COSP in polynomial time, and solving the COSP provides a solution to the GCP, the COSP is NP-hard. Furthermore, since the COSP belongs to NP, it is NP-complete.

\section{Detailed Model of Wind Farm-equipped Storage Control}
\label{detail_storage control}

\subsection{System Model}
Consider a wind farm tasked with supplying power to the grid under an electricity supply contract. The predetermined supply quantity of wind power at any given time $t$ is denoted by $\widehat{w}_t$. This value is established based on wind power forecasts, which can be made from a minute to a day in advance, either by the wind farm itself or the Independent System Operator (ISO). As time progresses, the actual wind power generation $w_t$ is sequentially disclosed in real-time. The wind farm is equipped with a storage system, allowing it to either store wind energy, denoted as $u^{+}_{t}$, or release stored energy, denoted as $u^{-}_t$ into the grid at any time $t$. As a result, the total power delivered to the grid, represented by $g_t$, is determined by
\begin{align*}
    g_t = w_t +v^-_t -v^+_t.
\end{align*}

When the mismatch exists between commitment $\widehat{w}_t$ and the delivered power $g_t$, the wind farm will be charged with a penalty cost $c_t(\widehat{w}_t, g_t)$ as follows:
\begin{align*}
    c_t(\widehat{w}_t, g_t) = p^+_t\max(g_t-\widehat{w}_t,0) + p^-_t\max(\widehat{w}_t-g_t,0),
\end{align*}
where $p_t^+$ and $p_t^-$ denote the unit penalty costs for wind generation surplus and shortage at time $t$, respectively.

The wind farm targets to minimize the accumulated mismatch penalty costs across all $T$ time slots by reasonably controling the storage system.
Mathematically, the storage control problem can be formulated as follows:
\begin{align}
    \textbf{(P1)}\quad\min_{v_t^+, v_t^-, \forall t} \quad&\sum\nolimits_{t = 1}^{T}c_t(\widehat{w}_t, g_t)\\
    s.t.\quad & g_t = w_t +v^-_t -v^+_t, \forall t,\label{g_t}\\
    & SoC_1 = \frac{C}{2}, \label{SoC_1}\\
    & SoC_{t+1} = SoC_t + \eta^+v^+_t - \eta^-v^-_t, \forall t,\label{SoC_{t+1}}\\
    & \eta^+v_t^+ \leq C-SoC_t, \forall t,\label{v_t^+}\\
    & \eta^-v_t^- \leq SoC_t, \forall t,\label{v_t^-}\\
    & v_t^+ \leq w_t, \forall t, \label{eta_t^+  w_t}\\
    & v_t^+, v_t^- \geq 0, \forall t,\label{v_t^+, v_t^-}\\
    & v_t^+v_t^- = 0, \forall t.\label{v_t^+v_t^-}
\end{align}

In the optimization, the decision variables at time $t$ include:
\begin{itemize}
    \item $v_t^+$: generated wind power which is charged to the energy storage;
    \item $v_t^-$: discharged energy from the storage to the grid;
\end{itemize}

And the other functions, latent variables, and parameters include:
\begin{itemize}
\item $c_t(\cdot)$: total penalty cost at time $t$;
\item $\widehat{w}_t$: committed wind power supply at time $t$;
\item $w_t$: wind power generation at time $t$;
\item $T$: total duration of storage control decisions;
    \item $p^+_t$, $p^-_t$: unit grid penalty prices for power generation shortage and surplus at time $t$, respectively;
    \item $g_t$: actual supplied energy at time $t$;
    \item $SoC_t$: state-of-charge (SOC) of storage at time $t$;
    \item $C$: energy storage capacity;
    \item $\eta^+$, $\eta^-$: charging and discharging efficiencies of storage;
\end{itemize}

Constraint \eqref{g_t} represents the delivered power; constraints \eqref{SoC_1} and \eqref{SoC_{t+1}} characterize the dynamics of storage; and constraints \eqref{v_t^+} and \eqref{v_t^-} denote the storage capacity limits. Constraint \eqref{eta_t^+  w_t} and \eqref{v_t^+, v_t^-} show the upper and lower limits of storage control actions, and constraint \eqref{v_t^+v_t^-} indicates that the storage cannot be charged and discharged simultaneously.

Due to the inherent stochasticity of penalty prices and renewable energy production, it is impractical to obtain the optimal future storage control decisions.
Therefore, in practice, we often consider sequential storage control.
Specifically, at each time $t$, we determine the current storage control actions $v_t^+, v_t^-, c_t$ based on the currently available information.
Consequently, we establish the following online storage control problem at time $t$ as follows:
\begin{align}
    \textbf{(P2)}\quad\min_{v_t^+, v_t^-} \quad&c_t(\widehat{w}_t, g_t) + \sum_{\tau = t+1}^{\infty}\gamma^{\tau-t}\mathbb{E}(c_{\tau}(\widehat{w}_{\tau}, g_{\tau}))\\
    s.t.\quad & g_t = w_t +v^-_t -v^+_t, \label{p2_c1}\\
    & SoC_{{t}+1} = SoC_{t} + \eta^+v^+_{t} - \eta^-v^-_{t}, \label{SOC_dynamic_p2}\\
    & \eta^+v_t^+ \leq C-SoC_t, \label{eta^+v_t^+_single}\\
    & \eta^-v_t^- \leq SoC_t, \\
    & v_t^+ \leq w_t, \\
    & v_t^+, v_t^- \geq 0, \\
    & v_t^+v_t^- = 0.  \label{NoSimuCD_single}
\end{align}

\subsection{Markov Decision Process Modeling}
We highlight that problem \textbf{(P2)} can be transformed into MDP in the following manner:

\noindent\textbf{Markov Decision Process} $(\mathcal{S},\mathcal{A},\mathcal{P},\mathcal{R})$:
\begin{itemize}
    \item States $\mathcal{S}$: Any state $s\in \mathcal{S}$ is composed of the penalty prices $p^+, p^-$, the committed and real wind power generation $\widehat{w}$ and $w$, and state-of-charge $SoC$. Formally, ${s} = (p^+, p^-, \widehat{w}, w, SoC)$;
    \item Actions $\mathcal{A}$: Any action $a \in \mathcal{A}$ is composed of the charge amount $v^+$ and discharge amount $v^-$. Formally, $a = (v^+, v^-)$;
    \item Transition probability $\mathcal{P}$: $\mathcal{P}$ is the transiting probability matrix $\mathcal{P}_{a} = \{\textbf{Pr}(s_{t+1} = s'|s_t = s, a_t = a), \forall s, s' \in \mathcal{S}, a \in \mathcal{A} \}$, which includes the probability of transiting from state $s$ to $s'$ with action $a$ for all $s$, $s'$ and $a$;
    \item Reward $\mathcal{R}$: $\mathcal{R}$ is the immediate reward (penalty in our case) after transiting from state $s$ to state $s'$ due to action $a$, i.e., $\mathcal{R} = \{r(s, a),\forall s, a\}$. Specifically, in the one-shot decision problem, the penalty $r(s,a)$ equals the negative of the penalty, i.e.,
    \begin{align*}
        r(s,a) = p^+\max(w-\widehat{w},0) + p^-\max(\widehat{w}-w,0).\notag
    \end{align*}
\end{itemize}

We can observe that, for the storage control problem, the state space $\mathcal{S}$ and the action space $\mathcal{A}$ are known. The reward $\mathcal{R}$ is also known once the state $s$ and action $a$ are decided. The only unknown comes from the transition probability $\mathcal{P}$. 
However, some important observations for $\mathcal{P}$ can simplify the problem. Specifically, we can divide the state variables ${s} = (p^+, p^-, \widehat{w}, w, SoC)$ into one deterministic state and several random states. The deterministic state is $SoC$, which can be determined following Eq. \eqref{SOC_dynamic_p2} without any uncertainty. And the random states include $p^+, p^-, \widehat{w}, w$, which are fully random\footnote{Note that, the committed wind power $\widehat{w}$ is essentially not random. However, only the generation mismatch $w-\widehat{w}$ exists in our problem, and such mismatch can be regarded as a random variable.}. 

In our numerical study, we assume $p = p^+=p^-$, and observe that only $\Delta = \widehat{w}-w$ exists in the reward function. Therefore, the state can be rewritten as $\mathcal{S} = (p,\Delta w, SoC)$. For the factorization scheme $\omega$, we use the following factorization to the transition kernel:
\begin{align*}
    \widehat{P}(p_{t+1}, \Delta_{t+1}, SoC_{t+1}|p_{t}, \Delta_{t}, SoC_{t}, a_t) = \widehat{P}(p_{t+1}|p_t)\widehat{P}(\Delta_{t+1}|\Delta_{t})\widehat{P}(SoC_{t+1}|SoC_{t},a_t).
\end{align*}

\subsection{Parameter Settings}
In the numerical study, we utilized the California aggregate wind power generation dataset from CAISO \citep{caiso} containing predicted and real wind power generation data with a 5-minute resolution spanning from January 2020 to December 2020. The penalty price equals the average electricity price of CASIO \citep{caiso} with the matching resolution and periods.
We set $C = 500$ kWh, $\gamma = 0.9$. The discretization levels of $p$, $\Delta w$ and $SoC$ are set to be $8$, $8$, $50$, respectively. The action set includes $3$ discretized choices: charging (discharging) to satisfy $100\%$, $50\%$, and $0\%$ of energy mismatches, respectively. 
\end{document}